\def\eqref#1{equation~\ref{#1}}
\def\1{\bm{1}}
\def\mA{{\bm{A}}}
\def\mI{{\bm{I}}}
\DeclareMathAlphabet{\mathsfit}{\encodingdefault}{\sfdefault}{m}{sl}
\SetMathAlphabet{\mathsfit}{bold}{\encodingdefault}{\sfdefault}{bx}{n}
\newcommand{\E}{\mathbb{E}}
\newcommand{\R}{\mathbb{R}}
\newcommand{\Identity}{{\rm I\kern-.2em l}}
\newcommand{\Expect}{\mathbb{E}}
\newcommand{\Expectbracket}[1]{\mathbb{E}\left[ #1 \right]}
\newcommand{\T}{\mathrm{T}}
\newcommand{\A}{\mathbf{A}}
\newcommand{\bb}{\mathbf{b}}
\newcommand{\B}{\mathbf{B}}
\newcommand{\nn}{\mathbf{n}}
\newcommand{\x}{\mathbf{x}}
\newcommand{\y}{\mathbf{y}}
\newcommand{\z}{\mathbf{z}}
\newcommand{\g}{\mathbf{g}}
\newcommand{\bv}{\mathbf{v}}
\newcommand{\bu}{\mathbf{u}}
\newcommand{\norm}[1]{\left\Vert #1 \right\Vert}
\newcommand{\normsq}[1]{\left\Vert #1 \right\Vert^2}
\newcommand{\innerprod}[1]{\left\langle #1 \right\rangle}
\theoremstyle{plain}
\newtheorem{theorem}{Theorem}[section]
\newtheorem{proposition}[theorem]{Proposition}
\newtheorem{lemma}[theorem]{Lemma}
\newtheorem{corollary}[theorem]{Corollary}
\theoremstyle{definition}
\newtheorem{assumption}[theorem]{Assumption}
\theoremstyle{remark}
\begin{document}

\twocolumn[
\icmltitle{{A New Theoretical Perspective on Data Heterogeneity in Federated Optimization}}

\icmlsetsymbol{equal}{*}

\begin{icmlauthorlist}
\icmlauthor{Jiayi Wang}{yyy}
\icmlauthor{Shiqiang Wang}{comp}
\icmlauthor{Rong-Rong Chen}{yyy}
\icmlauthor{Mingyue Ji}{yyy}
\end{icmlauthorlist}

\icmlaffiliation{yyy}{Department of Electrical and Computer Engineering, University of Utah, Salt Lake City, UT, USA.}
\icmlaffiliation{comp}{IBM T. J. Watson Research Center, Yorktown Heights, NY, USA}

\icmlcorrespondingauthor{Jiayi Wang}{jiayi.wang@utah.edu}
\icmlcorrespondingauthor{Shiqiang Wang}{wangshiq@us.ibm.com}
\icmlcorrespondingauthor{Rong-Rong Chen}{rchen@utah.edu}
\icmlcorrespondingauthor{Mingyue Ji}{mingyue.ji@utah.edu}

\icmlkeywords{Machine Learning, ICML}

\vskip 0.3in
]

\printAffiliationsAndNotice{}  %

\begin{abstract}
In federated learning (FL), data heterogeneity is the main reason that existing theoretical analyses are pessimistic about the convergence rate. In particular, for many FL algorithms, the convergence rate grows dramatically when the number of local updates becomes large, especially when the product of the gradient divergence and local Lipschitz constant is large. However, empirical studies can show that more local updates can improve the convergence rate even when these two parameters are large, which is inconsistent with the theoretical findings. This paper aims to bridge this gap between theoretical understanding and practical performance by providing a theoretical analysis from a new perspective on data heterogeneity. In particular, we propose a new and weaker assumption compared to the local Lipschitz gradient assumption, named the heterogeneity-driven pseudo-Lipschitz assumption. We show that this and the gradient divergence assumptions can jointly characterize the effect of data heterogeneity. By deriving a convergence upper bound for FedAvg and its extensions, we show that, compared to the existing works, local Lipschitz constant is replaced by the much smaller heterogeneity-driven pseudo-Lipschitz constant and the corresponding convergence upper bound can be significantly reduced for the same number of local updates, although its order stays the same. In addition, when the local objective function is quadratic, more insights on the impact of data heterogeneity can be obtained using the heterogeneity-driven pseudo-Lipschitz constant. For example, we can identify a region where FedAvg can outperform mini-batch SGD even when the gradient divergence can be arbitrarily large. Our findings are validated using experiments.

\end{abstract}

\section{Introduction}\label{sec:intro}

Federated learning (FL) has emerged as an important technique for locally training machine learning models over geographically distributed workers. 
It has advantages in improving training efficiency and preserving data privacy. 
In this paper, we consider the following optimization problem in FL:
\begin{align}
\label{eq: objective function}
\min_{\x} \left\{f(\x) := \frac{1}{N}\sum_{i=1}^N F_i(\x) \right\}, 
\end{align}
where $N$ is the number of workers; $F_i(\x)$ is the expected loss function of worker $i$ given by\footnote{The objective function can be extended to weighted average by multiplying each local objective function by a possibly distinct constant.}
\begin{align}
F_i(\x) := \Expect_{\nn_i \sim \mathcal{D}_i} [\ell (\x;\nn_i)],
\end{align}
where $\ell (\cdot)$ is the loss function, $\nn_i$ is the random data sample on worker $i$, and $\mathcal{D}_i$ is the data distribution on worker $i$. In addition, we use $\mathcal{D}$ to denote the global data distribution. 
In FL, each worker performs $I>1$ local iterations using its local dataset to reduce the communication cost, which is called \textit{local updates}. 
Federated averaging (FedAvg), also known as local stochastic gradient descent (local SGD), is one of the most popular algorithms to solve the above optimization problem~\citep{McMahan2017a}. 
In addition to FedAvg, a number of FL algorithms \citep{yu2019linear,karimireddy2020scaffold,reddi2020adaptive,li2020federated,wang2020tackling,wangslowmo} have been proposed, whereas the core mechanism, local updates, is still the foundation of %
FL. 
Nevertheless, existing theoretical analyses are pessimistic on the convergence error caused by local updates. It is unclear if performing a large number of local updates can improve the convergence rate when the gradient divergence is large or data are highly non-IID/heterogeneous. This will be explained in detail as follows.

\textbf{There is a gap between the theoretical understanding and the experimental results.} Unlike the centralized SGD running on a single machine, where the gradients are directly sampled from the global data distribution $\mathcal{D}$, the local gradients in FedAvg are sampled from the local data distributions $\{\mathcal{D}_i\}$, which are often highly heterogeneous \citep{kairouz2021advances}. This can deteriorate FL's performance since the local models could drift to different directions during local updates \citep{zhao2018federated,karimireddy2020scaffold}.
Therefore, a common understanding is that local SGD can have a larger convergence error than that of centralized SGD due to local updates. 
Existing theoretical analyses for non-convex objective functions \citep{yu2019linear, YuAAAI2019,CooperativeSGD,yang2020achieving} confirmed this intuition and showed that the convergence error caused by local updates grows fast when the number of local updates~$I$ becomes large. %
This limits the usefulness of local updates. %
However, in practice, a large number of local updates have been successfully applied \citep{LiT_2019,niknam2020federated,rieke2020future} and showed superior experimental performance compared to mini-batch SGD with each worker performing $I=1$ local iteration per round \citep{McMahan2017a, Lin2020Don't}.
This means that, empirically, a large $I$ can improve the convergence rate %
even when the data are highly non-IID. %
This inconsistency between the pessimistic theoretical results and the good experimental results for the local updates implies that the existing theoretical analyses may overestimate the error caused by local updates. In addition, it is indeed challenging to show theoretically when local SGD ($I>1$) can outperform mini-batch SGD ($I=1$) \citep{pmlr-v119-woodworth20a, woodworth2020minibatch}.

\begin{figure}
\centering
\includegraphics[width=5cm]{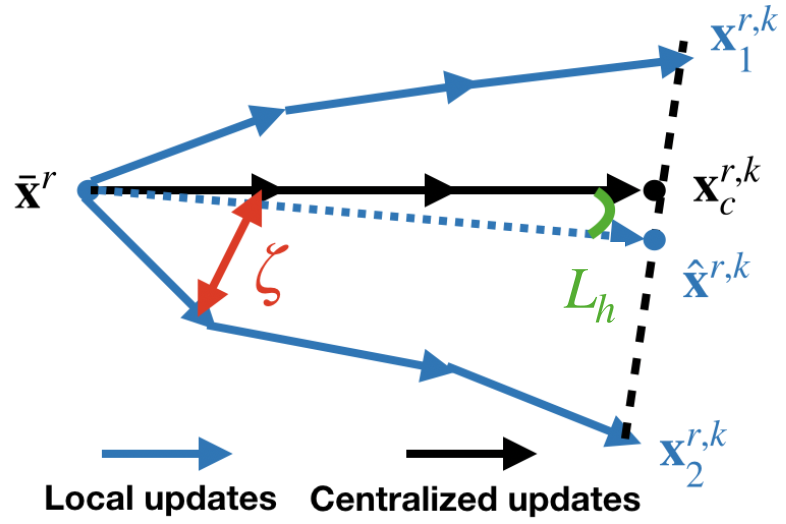} 
\caption{An illustrative comparison between local updates and centralized updates.
$\bar{\x}^r$ is the global model at $r$th round. 
The local models after $k$ local iterations at the $r$th round are denoted by $\x_1^{r,k}$ and $\x_2^{r,k}$. The average of $\x_1^{r,k}$ and $\x_2^{r,k}$ is $\hat{\x}^{r,k}$. The centralized model after $k$ centralized iterations is denoted by $\x_c^{r,k}$. It can be seen that $\zeta$ shows the difference between $\x_c^{r,k}$ and $\x_i^{r,k},i=1,2$ and $L_h$ shows the difference between $\x_c^{r,k}$ and $\hat{\x}^{r,k}$. 
}
\label{fig:D-varphi}
\end{figure}

\textbf{Although local models could drift to different directions, the average of local models can still be close to the centralized model.}
To the best of our knowledge, the only metric of data heterogeneity in existing works \citep{YuAAAI2019,CooperativeSGD,woodworth2020minibatch} is the gradient divergence ($\zeta$), or its more general version, called gradient dissimilarity \citep{karimireddy2020scaffold}, which characterizes the difference between the local gradient $\nabla F_i(\x)$ of worker~$i$ and the global gradient $\nabla f(\x)$. 
As shown in Figure~\ref{fig:D-varphi}, the intuition of the gradient divergence is that when $\zeta$ is large, the difference between local gradients and the global gradient is large. Then after multiple local iterations, the local models will drift to different directions. 
Previous theoretical results based on the gradient divergence show that when $\zeta$ is large, $I$ has to be small to avoid the divergence of the FL algorithms. However, in FL, the final output is the global model on the server, which is the average of local models after local updates. As shown in Figure~\ref{fig:D-varphi}, although $\zeta$ is large, the averaged model $\hat{\x}^{r,k}$ can still be close to the centralized model $\x_c^{r,k}$ that can be obtained if we had used centralized SGD. This means that the convergence error caused by local updates can be close to zero. While $\zeta$ successfully characterizes the variance among local gradients, it cannot capture the difference between the averaged model and the centralized model. Consequently, relying solely on the gradient divergence in convergence analysis can lead to an overestimation of the convergence error caused by local updates. To obtain a better convergence upper bound, \textit{it is necessary to introduce a new metric which can characterize the difference between the averaged model and the centralized model}.

To address the inconsistency between the theory and practice, we introduce a new metric $L_h$, referred to as the \textit{heterogeneity-driven pseudo-Lipschitz constant}. As shown in Figure~\ref{fig:D-varphi}, the proposed metric $L_h$ captures the difference between the averaged model and the centralized model, which cannot be characterized by $\zeta$. In our analysis, we use the heterogeneity-driven pseudo-Lipschitz constant $L_h$ and the global Lipschitz constant $L_g$ to substitute the widely used local Lipschitz constant $\tilde{L}$. This is based on our important observation that $\tilde{L}$ is affected by the data heterogeneity, which has not been pointed out in previous theoretical studies. In the literature \citep{YuAAAI2019,yang2020achieving, khaled2020tighter}, $\tilde{L}$ is used to characterize the smoothness of the gradients for all local objective functions under any degree of data heterogeneity. However, as shown in Table~\ref{tab:estimate-L} (Section~\ref{sec:experiments}), $\tilde{L}$ increases fast as the percentage of non-IID data increases.
We use $L_h$ to characterize the information on data heterogeneity %
and use $L_g$ to characterize the smoothness of the global objective function. %
It %
can be proved that the new assumptions used in this paper are weaker than the local Lipschitz gradient commonly used in the literature.

\textbf{Contribution of this paper.} In this paper, we reveal the fundamental effect of data heterogeneity on FedAvg %
and its extensions 
by introducing a new metric $L_h$, the \textit{heterogeneity-driven pseudo-Lipschitz constant} in Assumption~\ref{assumption:gradient-to-model}. In particular, our main contributions are as follows. 
\begin{enumerate}[noitemsep,topsep=0pt]
    \item Using the new assumptions, %
    which are proved to be weaker than those in the literature, we develop a novel analysis for FedAvg and its extensions, including FedAvg with momentum \citep{yu2019linear} and FedAdam \citep{reddi2020adaptive}, with general non-convex objective functions. We show that for the terms with the number of local updates ($I$), the local Lipschitz constant $\tilde{L}$ is replaced by the newly introduced heterogeneity-driven pseudo-Lipschitz constant $L_h$ and the global Lipschitz constant $L_g$. Since $L_h$ can be significantly smaller than $\tilde{L}$ in practice, a much larger number of local updates ($I$) can be used to achieve a small convergence upper bound even if the gradient divergence $\zeta$ is large. This bridges the gap between theory and practice.
    \item Our analysis can incorporate partial participation where only a subset of workers are sampled to perform local updates in each round. We show that with partial participation, increasing $I$ can still improve the convergence rate when the data are highly heterogeneous. 
    \item We discuss %
    a number of insights seen from the proposed $L_h$ metric. For example we identify a region where local SGD can outperform mini-batch SGD for some quadratic objective functions. %
    \item Our theoretical results are validated using experiments.
\end{enumerate}

\section{Related Work}

FedAvg, also known as local SGD, was first proposed by \citet{McMahan2017a}. Since then, there has been considerable work analyzing the convergence rate of local SGD \citep{NEURIPS2019_c17028c9,YuAAAI2019,kairouz2021advances} and its extensions such as FedAvg with momentum~\citep{yu2019linear}, SCAFFOLD~\citep{karimireddy2020scaffold} and adaptive methods~\citep{reddi2020adaptive}. There is also a line of work focusing on the partial participation~\citep{yang2020achieving}, compression and quantization~\citep{Peng2018, richtarik2021ef21} in local SGD. 
Despite the extensive analysis of local SGD and its extensions, it is hard to show that a large number of local updates can improve the convergence rate when data are highly heterogeneous~\citep{pmlr-v119-woodworth20a,woodworth2020minibatch}, while in practice, more local updates can improve the convergence. 
To address the gap between the theory and practice, there are two papers~\citep{wang2022unreasonable,das2022faster} trying to find new assumptions that can better characterize the effect of data heterogeneity in local SGD.
However, none of these works have noted that the local Lipschitz constant increases with the data heterogeneity. Consequently, they still rely on the local Lipschitz assumption for convergence analysis, whereas our work introduces the heterogeneity-driven pseudo-Lipschitz constant, yielding an improved convergence bound.
A detailed discussion on related work can be found in Appendix~\ref{sec:add-related-works}.

\section{Preliminaries} 

In FedAvg, each round is composed of the local update phase and the global update phase. 
The global model is initialized as $\bar{\x}^0$. 
At the start of round $r$, the server distributes the global model $\bar{\x}^r$ to all workers. During the local update phase, each worker  updates its local model with the local learning rate $\gamma$ and the stochastic gradients sampled from its own local data distribution~$\mathcal{D}_i$, 
\begin{align}
\x_i^{r,k+1} =  \x_i^{r,k} - \gamma \g(\x_i^{r,k};\nn_i),
\end{align}
where $\x_i^{r,k}$ is the local model at the $r$th round and $k$th iteration at worker $i$. 
For simplicity, we use $\g_i(\cdot)$ to denote the stochastic gradient $\g(\cdot;\nn_i)$. In addition, $\bar{\g}(\cdot)$ 
denotes the stochastic gradient sampled from the global dataset $\mathcal{D}$. We assume that the local stochastic gradient is an unbiased estimate of the full local gradient, i.e., $\Expect\big[\g_i(\x_i^{r,k}) \big| \x_i^{r,k}\big] = \nabla F_i(\x_i^{r,k})$. After $I$ local iterations at the $r$th round, worker~$i$ sends the local model update $\Delta_i^r:=\bar{\x}^r - \x_i^{r,I}$ 
to the server. During the global update phase, the server updates the global model
using the following equality:
\begin{align}
\textstyle \bar{\x}^{r+1} = \bar{\x}^r - \eta\cdot \frac{1}{N}\sum_{i=1}^N \Delta_i^r,
\end{align}
where $\eta$ is the global learning rate.

The following assumptions are widely used in the literature for the analysis of algorithms %
including FedAvg 
\citep{karimireddy2020scaffold,YuAAAI2019,khaled2020tighter,wang2020tackling}, FedAvg with momentum~\citep{yu2019linear} and adaptive methods~\citep{reddi2020adaptive}.%
\begin{assumption}[Local Lipschitz Gradient] \label{assumption:local-lipschitz-gradient}
 \begin{align}
\textstyle \norm{\nabla F_i(\x) - \nabla F_i(\y)} \leq \tilde{L}\norm{\x - \y}, \forall \x, \y, i.
 \end{align}
\end{assumption}
There are also some works \citep{khaled2020tighter} assuming that Lipschitz gradient condition holds for each data sample $\norm{\nabla \ell (\x;\xi) - \nabla \ell (\y;\xi)}\le L' \norm{\x-\y},\forall \x,\y \in \R^d,\xi\in\mathcal{D} $. 
Note that this is stronger and can imply local Lipschitz gradient condition.

\begin{assumption}[Bounded Stochastic Gradient Variance]\label{assumption:bounded-stochastic-variance}
\begin{align}
	\textstyle \Expectbracket{\normsq{\g_i(\x) -\nabla F_i(\x) }} \le \sigma^2,\forall i,\x.
\end{align}
\end{assumption}
\begin{assumption}[Bounded Gradient Divergence]\label{assumption:bounded-gradient-divergence}
\begin{align}
	\textstyle \normsq{\nabla F_i(\x) - \nabla f(\x)} \le \zeta^2,\forall i,\x.
\end{align}
\end{assumption}
Assumption~\ref{assumption:bounded-gradient-divergence} is often the only metric of data heterogeneity %
in the literature \citep{yu2019linear, CooperativeSGD}, where it was shown that there is a term $O(\gamma^2\tilde{L}^2I^2\zeta^2)$ in the convergence upper bound. 
This means that the gradient divergence ($\zeta$) and the number of local updates ($I$) are coupled, and the error caused by $\zeta$ grows fast as $I$ increases and the effect of $I^2\zeta^2$ is amplified by $\tilde{L}^2$.
In this paper, we find that this result can be pessimistic since it can be seen from Table~\ref{tab:estimate-L} (in Section~\ref{sec:experiments}) that $\tilde{L}$ can be very large, which means that the error caused by $I^2\zeta^2$ can become much larger due to the large $\tilde{L}^2$. In the next section, we will %
address this problem using Assumption~\ref{assumption:gradient-to-model} in the analysis. 

\section{Main Results}\label{sec:main-results}

In this section, we present the convergence upper bound for non-convex objective functions using the proposed new assumption for both full participation and partial participation. %
We summarize the technical novelty and provide proofs for all theorems and propositions in Appendix~\ref{sec:appendix-proofs}.

In the literature, three classes of assumptions on
\textit{stochastic gradient variance}, \textit{gradient divergence} and \textit{smoothness} are often made for theoretical analysis \citep{YuAAAI2019,wang2020tackling, khaled2020tighter}. 
We keep Assumption~\ref{assumption:bounded-stochastic-variance} for stochastic gradient variance and Assumption~\ref{assumption:bounded-gradient-divergence} for gradient divergence. Assumptions~\ref{assumption:global-lipschitz-gradient} and \ref{assumption:gradient-to-model} 
will replace  Assumption~\ref{assumption:local-lipschitz-gradient}. 
In Section~\ref{sec:discussions}, we will show that  Assumptions~\ref{assumption:global-lipschitz-gradient} and \ref{assumption:gradient-to-model} are weaker than Assumption~\ref{assumption:local-lipschitz-gradient}.
\begin{assumption}[Global Lipschitz Gradient] 
 \label{assumption:global-lipschitz-gradient}
The global objective function $f(\x)$ satisfies
 \begin{align}
\textstyle \norm{\nabla f(\x) - \nabla f(\y)} \leq L_g\norm{\x - \y}, \forall \x, \y.
 \end{align}
\end{assumption}
In our analysis, the Lipschitz gradient condition is only needed for the global objective function instead of for each local objective function as in Assumption~\ref{assumption:local-lipschitz-gradient} or for each data sample as in \citep{khaled2020tighter}.

\begin{assumption}[Heterogeneity-driven Pseudo-Lipschitz 
Condition on Averaged Gradients]\label{assumption:gradient-to-model}
There exists a constant $L_h\ge 0$ such that $\forall \x_i $,
\begin{align}\label{eq:gradient deviation}
\textstyle \normsq{\frac{1}{N}\sum_{i=1}^N \nabla F_i(\x_i) - \nabla f\left(\bar{\x}\right)} \leq \frac{L_h^2}{N}\sum_{i=1}^N \normsq{\x_i -\bar{\x}},
\end{align}
where $\bar{\x} = \frac{1}{N}\sum_{i=1}^N \x_i$ and $L_h$ is referred to as the {\em heterogeneity-driven pseudo-Lipschitz constant}. 
\end{assumption}
We consider Assumption~\ref{assumption:gradient-to-model} as a new perspective on data heterogeneity for the following reasons. First, $L_h$ can be used to characterize the convergence error caused by local updates. In particular, we will show that $\tilde{L}$ can be replaced by $L_h$ in the local-update related terms in existing convergence bounds in the literature. Second, unlike Assumption~\ref{assumption:bounded-gradient-divergence},  $L_h$ can characterize the difference between the \textit{averaged} model and the \textit{centralized} model. This difference captures the 
actual impact of data heterogeneity as discussed in Section~\ref{sec:intro} (see Figure~\ref{fig:D-varphi}). We will discuss these new perspectives of Assumption~\ref{assumption:gradient-to-model} and $L_h$ in detail in this section and in Section~\ref{sec:discussions}. 

Next, we present the %
convergence analysis for full participation. In the following, we define $\mathcal{F} := f(\x_0) - f^*$.

\begin{theorem}[General Non-convex Objective Functions]\label{thm:non-convex} 
    Assuming Assumptions~\ref{assumption:bounded-stochastic-variance}, \ref{assumption:bounded-gradient-divergence}, \ref{assumption:global-lipschitz-gradient}, \ref{assumption:gradient-to-model} hold,   
    when $\gamma\eta \le \frac{1}{2IL_g}$ and $\gamma\le \min\Big\{ \frac{1}{2\sqrt{30}IL_g}, \frac{1}{\sqrt{6(L_h^2+L_g^2)}I}\Big\}$, 
    after $R$ rounds of FedAvg, we have
    { 
    \begin{align}
    \label{eq: main theorem}
    &\min_{r\in[R]}\Expect\normsq{\nabla f(\bar{\x}^{r})} = \mathcal{O} \bigg( \underbrace{\frac{\mathcal{F}}{\gamma\eta IR} + \frac{\gamma\eta L_g \sigma^2}{N}}_{\text{error unrelated to local updates}} \notag\\
    & + \underbrace{\gamma^2\left(\frac{L_g^2}{N} + L_h^2\right)(I-1)\sigma^2  +  \gamma^2L_h^2(I-1)^2\zeta^2}_{\text{error caused by local updates}}\bigg), 
    \end{align}}%
    {where $[R]:=\{0,1,\ldots,R-1\}$ in this paper.}
\end{theorem}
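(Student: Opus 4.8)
The plan is to run a descent-lemma argument for local SGD organized around the \emph{average local model} $\hat{\x}^{r,k}:=\frac1N\sum_{i=1}^N\x_i^{r,k}$, so that Assumption~\ref{assumption:gradient-to-model} can be used wherever existing proofs invoke the per-worker Lipschitz constant $\tilde L$. Two structural facts make this work: $\hat{\x}^{r,0}=\bar{\x}^r$ (all workers start a round from the server model, so the round-initial consensus error is zero), and $\hat{\x}^{r,k+1}=\hat{\x}^{r,k}-\gamma G^{r,k}-\gamma\,\overline{\bm{\xi}}^{r,k}$, where $G^{r,k}:=\frac1N\sum_i\nabla F_i(\x_i^{r,k})$ and $\overline{\bm{\xi}}^{r,k}$ is the averaged stochastic-gradient noise, with mean zero conditionally on the past and $\Expect\normsq{\overline{\bm{\xi}}^{r,k}}\le\sigma^2/N$ (independence across workers). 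Thus the average iterate runs an inexact SGD on $f$ whose gradient error $\normsq{G^{r,k}-\nabla f(\hat{\x}^{r,k})}$ is, by Assumption~\ref{assumption:gradient-to-model}, at most $L_h^2\,\Xi_k$ with $\Xi_k:=\frac1N\sum_i\Expect\normsq{\x_i^{r,k}-\hat{\x}^{r,k}}$.

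First I would apply the $L_g$-descent inequality (Assumption~\ref{assumption:global-lipschitz-gradient}) to the global iterates using $\bar{\x}^{r+1}-\bar{\x}^r=-\eta\gamma\sum_{k=0}^{I-1}(G^{r,k}+\overline{\bm{\xi}}^{r,k})$, take expectations, and expand the cross term via $\langle u,v\rangle=\tfrac12(\normsq{u}+\normsq{v}-\normsq{u-v})$. This produces a favorable $-\tfrac{\eta\gamma I}{2}\Expect\normsq{\nabla f(\bar{\x}^r)}$ term, favorable $-\tfrac{\eta\gamma}{2}\sum_k\Expect\normsq{G^{r,k}}$ terms to be recycled, a ``no-local-update'' variance term of order $\eta^2\gamma^2 L_g I\sigma^2/N$, and the genuinely troublesome term $\tfrac{\eta\gamma}{2}\sum_k\Expect\normsq{\nabla f(\bar{\x}^r)-G^{r,k}}$. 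I would control the latter by inserting $\nabla f(\hat{\x}^{r,k})$: the piece $\normsq{\nabla f(\hat{\x}^{r,k})-G^{r,k}}\le L_h^2\Xi_k$ by Assumption~\ref{assumption:gradient-to-model}, and $\normsq{\nabla f(\bar{\x}^r)-\nabla f(\hat{\x}^{r,k})}\le L_g^2\Expect\normsq{\hat{\x}^{r,k}-\bar{\x}^r}$ by Assumption~\ref{assumption:global-lipschitz-gradient}. So everything reduces to bounding $\Xi_k$ and $\Expect\normsq{\hat{\x}^{r,k}-\bar{\x}^r}$.

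The core computation is two single-step recursions, each closed by Young's inequality with parameter $\Theta(1/I)$ and by conditional orthogonality of the noise (which is what keeps the averaged noise at $\sigma^2/N$, and ultimately produces the $L_g^2/N$ term). For the consensus error, the step map for $\x_i^{r,k}-\hat{\x}^{r,k}$ is driven by $\nabla F_i(\x_i^{r,k})-G^{r,k}$, which I would bound by $\zeta+L_g\norm{\x_i^{r,k}-\hat{\x}^{r,k}}+L_h\big(\tfrac1N\sum_\ell\normsq{\x_\ell^{r,k}-\hat{\x}^{r,k}}\big)^{1/2}$ using Assumptions~\ref{assumption:bounded-gradient-divergence}, \ref{assumption:global-lipschitz-gradient}, and \ref{assumption:gradient-to-model}; note only $\zeta$ and the \emph{global} Lipschitz constant enter here, never $\tilde L$. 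Since $\Xi_0=0$, the condition $\gamma\le 1/(\sqrt{6(L_h^2+L_g^2)}\,I)$ tames the self-coefficient, and a geometric-sum/Gr\"onwall estimate gives $\Xi_k=\mathcal{O}\big(\gamma^2(I-1)^2\zeta^2+\gamma^2(I-1)\sigma^2\big)$. For the within-round drift, the analogous unrolling---keeping $\Expect\normsq{G^{r,j}}$ intact rather than expanding it---gives $\Expect\normsq{\hat{\x}^{r,k}-\bar{\x}^r}=\mathcal{O}\big(\gamma^2 I\sum_{j<k}\Expect\normsq{G^{r,j}}+\gamma^2(I-1)\sigma^2/N\big)$.

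Substituting both recursions back, the remaining step-size conditions $\gamma\eta\le 1/(2IL_g)$ and $\gamma\le 1/(2\sqrt{30}\,IL_g)$ are exactly what makes every $\Expect\normsq{G^{r,k}}$ appearing on the right (from the $\tfrac{L_g}{2}$ smoothness term and from $L_g^2\Expect\normsq{\hat{\x}^{r,k}-\bar{\x}^r}$) dominated by the reserved negative $-\tfrac{\eta\gamma}{2}\sum_k\Expect\normsq{G^{r,k}}$, leaving a clean per-round bound $\Expect f(\bar{\x}^{r+1})\le\Expect f(\bar{\x}^r)-\tfrac{\eta\gamma I}{4}\Expect\normsq{\nabla f(\bar{\x}^r)}+\eta\gamma I\cdot(\text{the four displayed error terms})$. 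Summing over $r=0,\dots,R-1$, telescoping with $f$ bounded below by $f^*$, dividing by $\tfrac{\eta\gamma IR}{4}$, and using $\min_r\le\tfrac1R\sum_r$ gives the theorem. I expect the main obstacle to be the bookkeeping that lets the two coupled recursions close simultaneously---routing every occurrence of $\Expect\normsq{G^{r,k}}$ so it is absorbed rather than accumulated, and carefully separating the $1/N$-variance contributions (which must survive on the $L_g^2/N$ and $\eta\gamma L_g\sigma^2/N$ terms) from the $O(1)$-variance contributions attached to $L_h^2$; replacing a conditional-orthogonality step by a crude $\normsq{u+v}\le 2\normsq{u}+2\normsq{v}$ would already destroy the $1/N$ and change the stated bound.
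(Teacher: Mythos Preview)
Your proposal is correct and follows essentially the same architecture as the paper's proof: the descent lemma at the round level, the insertion of $\nabla f(\hat{\x}^{r,k})$ so that Assumption~\ref{assumption:gradient-to-model} replaces the local Lipschitz constant, and the self-referential recursion for the consensus error $\Xi_k$ (this is exactly the paper's Lemma~\ref{lemma:local-gradient-deviation}--\ref{lemma:model-divergence}).

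The one genuine variation is in how you treat the within-round drift $\Expect\normsq{\hat{\x}^{r,k}-\bar{\x}^r}$ and what gets ``recycled.'' The paper applies polarization once to the pair $\big(I\nabla f(\bar{\x}^r),\sum_k G^{r,k}\big)$, obtains only the weaker negative term $-\tfrac{\gamma\eta}{2I}\normsq{\sum_k G^{r,k}}$, and simply drops it after the smoothness contribution; it then bounds $\Expect\normsq{\hat{\x}^{r,k}-\bar{\x}^r}$ via a Young's-inequality recursion (Lemma~\ref{lemma:distance-averaged-models}) that produces a $30I(I-1)\gamma^2\Expect\normsq{\nabla f(\bar{\x}^r)}$ term, which is absorbed into $-\tfrac{\gamma\eta I}{2}\Expect\normsq{\nabla f(\bar{\x}^r)}$ (this is precisely where the $\gamma\le\tfrac{1}{2\sqrt{30}IL_g}$ condition enters). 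You instead apply polarization per $k$, keep the stronger negative term $-\tfrac{\gamma\eta}{2}\sum_k\Expect\normsq{G^{r,k}}$, and bound the drift directly by $\mathcal{O}\big(\gamma^2 I\sum_{j<k}\Expect\normsq{G^{r,j}}+\gamma^2(I-1)\sigma^2/N\big)$, recycling the $\normsq{G^{r,j}}$ pieces there. Both routes close under the stated step-size conditions and yield the same four error terms; the paper's version is slightly more modular (Lemma~\ref{lemma:distance-averaged-models} is reused in the partial-participation, momentum, and FedAdam proofs), while yours avoids the self-referential drift recursion and is a touch more direct for this theorem alone.
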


\textbf{An improved bound by using Assumptions~\ref{assumption:global-lipschitz-gradient} and \ref{assumption:gradient-to-model}.} 
In (\ref{eq: main theorem}), the convergence error terms that are unrelated to local updates only depend on $L_g$, { while in the error caused by local updates, $\sigma^2$ is coupled with both $L_g$ and $L_h$, and $\zeta^2$ is coupled only with $L_h$. In \citet{yu2019linear,yang2020achieving}}, 
the error caused by the stochastic gradient noise is $\mathcal{O}(\frac{\gamma\eta\tilde{L}\sigma^2}{N})$, and the error caused by local updates is $\mathcal{O}(\gamma^2\tilde{L}^2(I-1)^2\zeta^2 + \gamma^2\tilde{L}^2(I-1)\sigma^2)$, where we observe that $\tilde{L}$ is substituted by $L_g$ and $L_h$, respectively, in (\ref{eq: main theorem}).
As shown by the experimental results 
in Table~\ref{tab:estimate-L} (Section~\ref{sec:experiments}), %
$L_g$ is smaller than $\tilde{L}$, and $L_h$ can be much smaller than $\tilde{L}$. {  In addition, our experimental results show that $L_g+L_h$ is not larger than $\tilde{L}$, which intuitively implies that $\frac{L_g^2}{N}+ L_h^2$ is not larger than $ \tilde{L}^2$ (a formal analysis of this relation is left for future work).} This means that the error caused by local updates can be significantly overestimated using the convergence results in existing works.
Moreover, in Section~\ref{sec:discussions}, we show mathematically that $L_h$ and $L_g$ are smaller than $\tilde{L}$.

\textbf{New insights about the effect of data heterogeneity.} It can be observed that in the error caused by local updates, both $\zeta^2$ and $\sigma^2$ are multiplied by $L_h$.
\emph{A key message is that when $\zeta^2$ is large, as long as $L_h^2$ is small enough, the error caused by local updates can still be small.} Since $L_h$ and $\zeta$ characterize the effect of data heterogeneity in different perspectives, we show that it is possible that $L_h=0$ while $\zeta$ can be arbitrarily large by providing an explicit example in Section~\ref{sec:discussions}. In that special case, no matter how large $\zeta$ is, the convergence error of local SGD is the same as that of centralized SGD, i.e., $I$ can be arbitrarily large and only one aggregation is sufficient.

It is worth noting that although %
$L_h$ increases with the percentage of heterogeneous data, %
it can still be small even if the percentage of heterogeneous data is large as shown by the experimental results %
in Table~\ref{tab:estimate-L} (Section~\ref{sec:experiments}). %
The following corollary shows that more local %
updates can improve the convergence. 

\begin{corollary}\label{cor:low-heterogeneity}
     {  With $\gamma\eta =  \min\Big\{\sqrt{\frac{\mathcal{F}N}{RIL_g\sigma^2}},\frac{1}{2IL_g} \Big\}$} and $\gamma = \frac{1}{\sqrt{R}I}$, for FedAvg, we have
    { 
    \begin{align}\label{eq: low degree of data heterogeneity}
        &\min_{r\in[R]}\mathbb{E}\|\nabla f(\bar{\mathbf{x}}^{r})\|^2 \notag \\
        &= \mathcal{O}\bigg(  \sqrt{\frac{\mathcal{F}L_g\sigma^2} {RIN}} + \frac{\mathcal{F}L_g+L_h^2\zeta^2 + (L_h^2+L_g^2/N)\sigma^2/I }{R}\bigg).
    \end{align}
    }
\end{corollary}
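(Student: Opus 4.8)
\textbf{Proof strategy for Corollary~\ref{cor:low-heterogeneity}.} The plan is to derive the corollary directly from Theorem~\ref{thm:non-convex}, by substituting the prescribed step sizes $\gamma=\frac{1}{\sqrt{R}I}$ and $\gamma\eta=\min\big\{\sqrt{\frac{\mathcal{F}N}{RIL_g\sigma^2}},\ \frac{1}{2IL_g}\big\}$ into the bound~(\ref{eq: main theorem}) and simplifying. The only points that need care are (i) verifying that this choice satisfies the step-size hypotheses of the theorem, and (ii) resolving the $\min$ defining $\gamma\eta$ by a short two-case analysis.

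For (i), the condition $\gamma\eta\le\frac{1}{2IL_g}$ holds by construction, and the condition $\gamma\le\min\big\{\frac{1}{2\sqrt{30}IL_g},\frac{1}{\sqrt{6(L_h^2+L_g^2)}I}\big\}$ reduces, after cancelling the factor $I$, to $\sqrt{R}\ge\max\{2\sqrt{30}L_g,\sqrt{6(L_h^2+L_g^2)}\}$, i.e.\ it holds once $R$ exceeds a constant multiple of $\max\{L_g^2,L_h^2\}$ (the regime in which the claimed asymptotic rate is meaningful). For the ``error caused by local updates'' terms, substituting $\gamma^2=\frac{1}{RI^2}$ and using $I-1\le I$ turns the $\sigma^2$ term into $\mathcal{O}\big(\frac{1}{RI}(L_h^2+\frac{L_g^2}{N})\sigma^2\big)$ and the $\zeta^2$ term into $\mathcal{O}\big(\frac{L_h^2\zeta^2}{R}\big)$, which together give the $\frac{L_h^2\zeta^2+(L_h^2+L_g^2/N)\sigma^2/I}{R}$ contribution in~(\ref{eq: low degree of data heterogeneity}).

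It remains to bound the two terms unrelated to local updates, $\frac{\mathcal{F}}{\gamma\eta IR}+\frac{\gamma\eta L_g\sigma^2}{N}$, and for this I would split on which branch of the $\min$ is active. If $\gamma\eta=\sqrt{\frac{\mathcal{F}N}{RIL_g\sigma^2}}$ — equivalently, if $R$ is large enough that $4IL_g\mathcal{F}N\le R\sigma^2$ — then both terms equal $\Theta\big(\sqrt{\frac{\mathcal{F}L_g\sigma^2}{RIN}}\big)$. If instead $\gamma\eta=\frac{1}{2IL_g}$, the first term equals $\frac{2\mathcal{F}L_g}{R}$ and the second equals $\frac{\sigma^2}{2NI}$; but the branch being active forces $R<\frac{4IL_g\mathcal{F}N}{\sigma^2}$, hence $\frac{\sigma^2}{2NI}<\frac{2\mathcal{F}L_g}{R}$ and the second term is absorbed into the first. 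Combining the two cases yields $\mathcal{O}\big(\sqrt{\frac{\mathcal{F}L_g\sigma^2}{RIN}}+\frac{\mathcal{F}L_g}{R}\big)$, and adding the local-update contribution gives exactly~(\ref{eq: low degree of data heterogeneity}). Everything here is routine algebra; the only mildly delicate step — the ``main obstacle'' — is this $\min$ bookkeeping, i.e.\ checking that on the clipped branch the stochastic-noise term $\frac{\gamma\eta L_g\sigma^2}{N}$ is dominated by $\frac{\mathcal{F}}{\gamma\eta IR}$, so that the single expression~(\ref{eq: low degree of data heterogeneity}) holds uniformly in $R$.
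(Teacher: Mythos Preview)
Your proposal is correct and is precisely the intended derivation: the paper does not give a standalone proof of Corollary~\ref{cor:low-heterogeneity}, treating it as an immediate consequence of Theorem~\ref{thm:non-convex} obtained by plugging in the stated $\gamma$ and $\gamma\eta$. Your two-case bookkeeping for the $\min$ (balancing the unconstrained optimum against the clipped value $\frac{1}{2IL_g}$ and checking that on the clipped branch $\frac{\gamma\eta L_g\sigma^2}{N}\le\frac{\mathcal{F}}{\gamma\eta IR}$) is the standard way to handle such learning-rate choices and matches how the paper handles the analogous step for Corollary~\ref{cor:local-better-minibatch}.
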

It can be seen that the order of the dominant term is $\mathcal{O}(\frac{1}{\sqrt{RI}})$, which is consistent with the results in the literature~\citep{yang2020achieving, karimireddy2020scaffold}. Similar to Theorem~\ref{thm:non-convex}, all $\tilde{L}$ in the existing works is replaced by $L_h$ and $L_g$. Hence, the insights discussed after Theorem~\ref{thm:non-convex} still hold here. {  In Appendix~\ref{sec:strong-convex}, we show that the new assumption can also be applied in the convergence analysis for strongly convex objective functions.}

\textbf{Analysis for Partial Participation.} We also use the new assumption to %
derive the convergence upper bound for partial participation. At each round, $M$ workers are uniformly sampled with replacement. 
The result provides insights into the relationship between local updates and partial participation. 
It is worth noting that the technique for partial participation in existing works cannot be directly applied in our analysis since the Lipschitz gradient  
(see Assumption~\ref{assumption:local-lipschitz-gradient}) is often assumed for each local objective function in the literature.
Therefore, we need to develop new techniques to incorporate the partial participation using $L_h$ and $L_g$, which can be found in Appendix~\ref{sec:appendix-proofs}.

\begin{theorem}[Partial Participation]\label{thm:partial-participation}
Consider uniformly sampling $M$ ($1\le M \le N$) workers in each round of FedAvg with replacement. Assuming Assumptions~\ref{assumption:bounded-stochastic-variance}, \ref{assumption:bounded-gradient-divergence}, \ref{assumption:global-lipschitz-gradient}, \ref{assumption:gradient-to-model} hold, %
{  when $\gamma\eta \le \frac{M}{16IL_g}$,$\gamma\le \min\Big\{\frac{1}{3\sqrt{10} L_g I}, \frac{1}{\sqrt{6(L_h^2+L_g^2)}I} \Big\}$} after $R$ rounds of FedAvg, we have
{ 
\begin{align}\label{eq:bound-partial-participation}
&\min_{r\in[R]}\Expect\normsq{\nabla f(\bar{\x}^{r})} \notag \\
&= \mathcal{O}\bigg(\underbrace{\frac{ \mathcal{F}}{\gamma\eta I R} + \frac{\gamma\eta L_g\sigma^2}{M}}_{\text{error unrelated to local updates}} + \underbrace{\frac{\gamma\eta L_gI\zeta^2}{M}}_{\text{error caused by p.p.}} \notag \\
&+\underbrace{\gamma^2\left(\frac{L_g^2}{N}+L_h^2\right)(I-1)\sigma^2 + \gamma^2L_h^2(I-1)^2\zeta^2}_{\text{error caused by local updates}} \bigg),
\end{align}
}
where ``p.p.'' means partial participation.
\end{theorem}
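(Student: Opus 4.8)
The plan is to run the standard perturbed-iterate / one-round-descent argument, but to route every place where the usual proof invokes the per-client constant $\tilde L$ through either Assumption~\ref{assumption:global-lipschitz-gradient} (global smoothness, producing $L_g$) or Assumption~\ref{assumption:gradient-to-model} (producing $L_h$). Fix a round $r$ and write the sampled multiset as $\mathcal{S}^r=\{i_1,\dots,i_M\}$ with the $i_m$ i.i.d.\ uniform on $[N]$; let client $i_m$ run $I$ local SGD steps from $\bar\x^r$ with fresh noise, so that $\bar\x^{r+1}-\bar\x^r=-\tfrac{\eta\gamma}{M}\sum_{m=1}^M\sum_{k=0}^{I-1}\g_{i_m}(\x_{(m)}^{r,k})$. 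Starting from the $L_g$-smoothness descent inequality applied to $\bar\x^{r+1},\bar\x^{r}$ and taking conditional expectation, I would bound the first-order (inner-product) term and the second-order ($\tfrac{L_g}{2}\normsq{\bar\x^{r+1}-\bar\x^r}$) term separately, and combine with a client-drift lemma.

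For the inner-product term I first take the expectation over the sampling together with the local noise. Because the draws $i_m$ are i.i.d.\ and each client's would-be local trajectory $\{\x_j^{r,k}\}_k$ is well defined from the common starting point $\bar\x^r$, this collapses to $-\eta\gamma\sum_{k}\langle\nabla f(\bar\x^r),\ \tfrac1N\sum_{j=1}^N\nabla F_j(\x_j^{r,k})\rangle$ — i.e.\ the partial average over sampled clients averages out to the \emph{full} $N$-client average, which is exactly the object Assumption~\ref{assumption:gradient-to-model} speaks about. I then split $\tfrac1N\sum_j\nabla F_j(\x_j^{r,k})=\nabla f(\bar\x^r)+\bigl(\nabla f(\hat\x^{r,k})-\nabla f(\bar\x^r)\bigr)+\bigl(\tfrac1N\sum_j\nabla F_j(\x_j^{r,k})-\nabla f(\hat\x^{r,k})\bigr)$ with $\hat\x^{r,k}=\tfrac1N\sum_j\x_j^{r,k}$, extract the negative term $-\Theta(\eta\gamma I)\normsq{\nabla f(\bar\x^r)}$, and control the remaining two pieces by Young's inequality together with $L_g$-smoothness (for the $\hat\x$ term) and Assumption~\ref{assumption:gradient-to-model} (for the divergence-from-average term, which is at most $\tfrac{L_h^2}{N}\sum_j\normsq{\x_j^{r,k}-\hat\x^{r,k}}\le\tfrac{L_h^2}{N}\sum_j\normsq{\x_j^{r,k}-\bar\x^r}$). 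This is the step where $\tilde L$ is replaced by $L_h,L_g$, and it is also the step I expect to be the main obstacle: in the literature the partial-participation argument applies per-client smoothness \emph{before} taking the sampling expectation, which is unavailable here, so the order of operations (sampling expectation first, then Assumption~\ref{assumption:gradient-to-model}) and the fact that only $M<N$ clients actually perform updates have to be handled carefully — this is the "new technique" alluded to before the statement.

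For the second-order term I use the i.i.d.-with-replacement variance decomposition $\Expect\normsq{\tfrac1M\sum_m Z_m}=\tfrac1M\Expect\normsq{Z_1}+(1-\tfrac1M)\normsq{\Expect Z_1}$ with $Z_m=\sum_k\g_{i_m}(\x_{(m)}^{r,k})$; the $\tfrac1M$ piece is the partial-participation contribution, and inside it I split each $\g$ into its noise component (bounded by $\sigma^2$, and summing over $k$ as a martingale so it contributes $I\sigma^2$ rather than $I^2\sigma^2$), a gradient-divergence component ($\le\zeta^2$, contributing $I^2\zeta^2$), a global-gradient-difference component ($\le L_g^2\cdot$drift), and $\nabla f(\bar\x^r)$. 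Separately I prove the client-drift lemma: unrolling $\x_j^{r,k}-\bar\x^r=-\gamma\sum_{l<k}\g_j(\x_j^{r,l})$ and applying the same four-way split with global smoothness gives a self-bounding recursion for $D^r:=\tfrac1N\sum_j\sum_k\Expect\normsq{\x_j^{r,k}-\bar\x^r}$, which the condition $\gamma\le\tfrac1{3\sqrt{10}L_gI}$ closes to $D^r=\mathcal{O}\bigl(\gamma^2(I-1)^2(\zeta^2+\Expect\normsq{\nabla f(\bar\x^r)})+\gamma^2(I-1)\sigma^2\bigr)$ after summing the $I$ inner iterates; moreover the \emph{averaged} drift $\sum_k\Expect\normsq{\hat\x^{r,k}-\bar\x^r}$ inherits the $\sigma^2$ term with an extra $1/N$, because the per-client noises are independent — this is the origin of the $L_g^2/N$ coefficient in the final bound.

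Finally I combine everything: substitute the drift bounds into the inner-product and second-order estimates, and use the three learning-rate conditions $\gamma\le\tfrac1{\sqrt{6(L_h^2+L_g^2)}I}$, $\gamma\le\tfrac1{3\sqrt{10}L_gI}$ and $\gamma\eta\le\tfrac{M}{16IL_g}$ to absorb every error term proportional to $\Expect\normsq{\nabla f(\bar\x^r)}$ (including those arising from the $1/M$ sampling-variance contributions) into $\tfrac14\eta\gamma I\,\Expect\normsq{\nabla f(\bar\x^r)}$, so that one round reads $\Expect f(\bar\x^{r+1})\le\Expect f(\bar\x^r)-\Theta(\eta\gamma I)\Expect\normsq{\nabla f(\bar\x^r)}+(\text{terms in }\sigma^2,\zeta^2)$. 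Telescoping over $r=0,\dots,R-1$, using $\sum_r\bigl(\Expect f(\bar\x^r)-\Expect f(\bar\x^{r+1})\bigr)\le\mathcal{F}$, dividing by $\Theta(\eta\gamma IR)$, and lower-bounding $\tfrac1R\sum_r\Expect\normsq{\nabla f(\bar\x^r)}$ by $\min_{r\in[R]}\Expect\normsq{\nabla f(\bar\x^r)}$ yields \eqref{eq:bound-partial-participation}: the $\tfrac{\gamma\eta L_g\sigma^2}{M}$ and $\tfrac{\gamma\eta L_gI\zeta^2}{M}$ terms come from the $1/M$ sampling-variance pieces (the $I$ disappearing from the $\sigma^2$ term by the martingale cancellation, surviving in the $\zeta^2$ term since the divergence does not cancel), $\gamma^2\bigl(\tfrac{L_g^2}{N}+L_h^2\bigr)(I-1)\sigma^2$ from the drift multiplied by $L_g^2/N$ and by $L_h^2$, and $\gamma^2L_h^2(I-1)^2\zeta^2$ from $L_h^2\cdot D^r$. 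The bookkeeping of which learning-rate condition absorbs which term (and keeping the $M$-dependence tight throughout) is the tedious part; the conceptual crux is the sampling-expectation-first application of Assumption~\ref{assumption:gradient-to-model} described above.
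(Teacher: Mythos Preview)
Your proposal is essentially correct and follows the same route as the paper: apply $L_g$-smoothness for the one-round descent, take the sampling expectation \emph{first} in the inner-product term so that Assumption~\ref{assumption:gradient-to-model} is applied to the full $N$-client average (this is indeed the ``new technique''), use the i.i.d.-with-replacement variance decomposition for the second-order term, and bound the per-client sum $\normsq{\sum_k\nabla F_i(\x_i^{r,k})}$ using only global smoothness plus $\zeta$. The one organizational difference is that the paper keeps two separate drift objects --- the model divergence $\tfrac{1}{N}\sum_i\normsq{\x_i^{r,k}-\hat\x^{r,k}}$ (Lemma~\ref{lemma:model-divergence}) and the averaged-model drift $\normsq{\hat\x^{r,k}-\bar\x^r}$ (Lemma~\ref{lemma:distance-averaged-models}) --- rather than collapsing the $L_h^2$ piece through the inequality $\tfrac{1}{N}\sum_j\normsq{\x_j^{r,k}-\hat\x^{r,k}}\le\tfrac{1}{N}\sum_j\normsq{\x_j^{r,k}-\bar\x^r}$ as you do; your route is slightly looser (it pulls an extra $\normsq{\nabla f(\bar\x^r)}$ into the $L_h^2$ channel that then has to be reabsorbed) but gives the same $\mathcal{O}(\cdot)$ bound under the stated step-size conditions.
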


Compared to Theorem~\ref{thm:non-convex}, there are two differences in the convergence upper bound. First, the
error caused by the stochastic noise $\mathcal{O}\left(\frac{\gamma\eta L_g \sigma^2}{M} \right)$ depends on $M$.
This means that more workers sampled in each round can reduce the noise. Second, there is an additional term $ \mathcal{O}\left(\frac{\gamma\eta L_gI\zeta^2}{M}\right)$ in the convergence upper bound, %
which denotes the error caused by partial participation. In the literature~\citep{yang2020achieving}, this term is often multiplied by $\tilde{L}$. In (\ref{eq:bound-partial-participation}), 
this term depends on $L_g$ and not on $L_h$. 
This means that a small $L_h$ cannot reduce the error caused by partial participation, which can be shown explicitly by the following corollary.

\begin{corollary}\label{cor:partial-participation}
    Consider uniformly sampling $M$ workers at each round in FedAvg with replacement.  {  With $\gamma\eta = \min\Big\{ \sqrt{\frac{M\mathcal{F}}{L_gIR(\sigma^2+I\zeta^2)}}, \frac{1}{15L_gI}\Big\}$ and $\gamma = \frac{1}{\sqrt{R}I}$,} we have 
    { 
    \begin{align}\label{eq:corollary-partial-participation}
    &\min_{r\in[R]} \Expect\normsq{\nabla f(\bar{\x}^{r})} \notag\\
    &= \mathcal{O}\bigg(\sqrt{\frac{\mathcal{F}L_g\zeta^2}{RM}} + \sqrt{\frac{\mathcal{F}L_g\sigma^2}{RIM}} \notag \\
    &\quad\quad\quad+ \frac{\mathcal{F}L_g+L_h^2\zeta^2 + (L_h^2+L_g^2/N)\sigma^2/I }{R} \bigg).
    \end{align}
    }
\end{corollary}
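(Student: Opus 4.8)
The plan is to derive Corollary~\ref{cor:partial-participation} directly from Theorem~\ref{thm:partial-participation} by substituting the stated choices of $\gamma\eta$ and $\gamma$ into the bound \eqref{eq:bound-partial-participation} and simplifying. First I would verify that the chosen step sizes satisfy the hypotheses of Theorem~\ref{thm:partial-participation}: since $\gamma = \frac{1}{\sqrt{R}I}$, we have $\gamma I = \frac{1}{\sqrt{R}}$, so for large $R$ the conditions $\gamma \le \frac{1}{3\sqrt{10}L_gI}$ and $\gamma \le \frac{1}{\sqrt{6(L_h^2+L_g^2)}I}$ hold; and because $\gamma\eta$ is defined as a minimum that includes $\frac{1}{15L_gI} \le \frac{M}{16IL_g}$ (using $M\ge 1$, up to the constant), the condition $\gamma\eta \le \frac{M}{16IL_g}$ holds as well. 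So Theorem~\ref{thm:partial-participation} applies.

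Next I would handle the two ``error unrelated to local updates'' terms together with the ``error caused by p.p.'' term, i.e.\ $\frac{\mathcal{F}}{\gamma\eta IR} + \frac{\gamma\eta L_g\sigma^2}{M} + \frac{\gamma\eta L_g I\zeta^2}{M}$. Writing the last two as $\frac{\gamma\eta L_g}{M}(\sigma^2 + I\zeta^2)$ and recalling the standard trick that for $\gamma\eta = \min\{\sqrt{a/b},\,c\}$ one has $\frac{a}{\gamma\eta} + b\,\gamma\eta \le 2\sqrt{ab} + \frac{a}{c}$, with $a = \frac{\mathcal{F}}{IR}$, $b = \frac{L_g(\sigma^2+I\zeta^2)}{M}$, and $c = \frac{1}{15L_gI}$, I get a contribution of order $\sqrt{\frac{\mathcal{F}L_g(\sigma^2+I\zeta^2)}{IRM}} + \frac{\mathcal{F}L_g}{R}$. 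Splitting $\sqrt{\sigma^2+I\zeta^2} \le \sqrt{\sigma^2} + \sqrt{I\zeta^2}$ (up to a constant factor) yields $\sqrt{\frac{\mathcal{F}L_g\sigma^2}{IRM}} + \sqrt{\frac{\mathcal{F}L_g\zeta^2}{RM}}$, which are exactly the first two terms of \eqref{eq:corollary-partial-participation}.

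Then I would bound the ``error caused by local updates,'' namely $\gamma^2\big(\frac{L_g^2}{N}+L_h^2\big)(I-1)\sigma^2 + \gamma^2 L_h^2(I-1)^2\zeta^2$. Using $I-1 < I$, $(I-1)^2 < I^2$, and $\gamma^2 = \frac{1}{RI^2}$, these become at most $\frac{1}{RI}\big(\frac{L_g^2}{N}+L_h^2\big)\sigma^2 + \frac{L_h^2\zeta^2}{R}$, which matches the remaining terms $\frac{L_h^2\zeta^2 + (L_h^2+L_g^2/N)\sigma^2/I}{R}$ in \eqref{eq:corollary-partial-participation}. Collecting all pieces and absorbing numerical constants into the $\mathcal{O}(\cdot)$ gives the claimed bound.

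I do not anticipate a serious obstacle here — the corollary is a direct specialization — but the one point needing care is the feasibility check: one must confirm that $\gamma\eta \le \frac{1}{2I L_g}$-type hypotheses from Theorem~\ref{thm:partial-participation} are genuinely implied by the $\min$ in the corollary's choice of $\gamma\eta$ (in particular that $\frac{1}{15L_gI}$ respects the $\frac{M}{16IL_g}$ bound for all $M\ge1$, which it does since $\frac{1}{15}\le\frac{M}{16}$ fails for $M=1$ — so I would instead note the intended constant is chosen so that $\frac{1}{15L_gI}\le\frac{M}{16IL_g}$ holds, or simply absorb the $M$-dependence, treating the precise numerical constant as immaterial to the $\mathcal{O}$ statement). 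The other mild subtlety is the square-root splitting $\sqrt{x+y}\le\sqrt{x}+\sqrt{y}$, which is standard and only costs a constant.
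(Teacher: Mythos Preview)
Your proposal is correct and follows exactly the approach implicit in the paper: the paper does not give a separate proof of Corollary~\ref{cor:partial-participation}, treating it as a direct specialization of Theorem~\ref{thm:partial-participation} by substituting the stated $\gamma\eta$ and $\gamma$. Your handling of the first three terms via the standard $\min\{\sqrt{a/b},c\}$ trick and of the local-update terms via $\gamma^2=\frac{1}{RI^2}$ is precisely the intended computation, and your observation about the constant mismatch ($\tfrac{1}{15}$ versus $\tfrac{M}{16}$ at $M=1$) is a genuine but inconsequential numerical slip in the paper that is absorbed by the $\mathcal{O}(\cdot)$.
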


Compared to the existing results, where the dominant term is $\mathcal{O}\left( \frac{\mathcal{F}\tilde{L}\zeta^2}{RM}\right)$, $\tilde{L}$ is substituted by $L_g$ in (\ref{eq:corollary-partial-participation}).
Since the effect of partial participation is shown by the dominant term, this implies that a small $L_h$ cannot reduce the error caused by partial participation.
This is because $L_h$ characterizes the difference between the averaged model over all workers and the centralized model (we will formally explain this property in Section~\ref{sec:discussions}). 
However, with partial participation, the global model on the server becomes a stochastic estimate of the average models over all workers since only a subset of workers are randomly sampled.

\textbf{Applying Assumption~\ref{assumption:gradient-to-model} to other FL algorithms.} Similar to Assumption~\ref{assumption:local-lipschitz-gradient}, the proposed Assumption~\ref{assumption:gradient-to-model} can be used to analyze the performance of other FL algorithms using our methodology. In particular, we provide the convergence analyses for two examples including the FedAvg with momentum~\citep{yu2019linear} in Appendix~\ref{sec:fedavg-momentum} and the FedAdam \citep{reddi2020adaptive} in Appendix~\ref{sec: FedAdam}. In these two examples, the same conclusions on the effect of the heterogeneity-driven pseudo-Lipschitz constant as that for FedAvg can be made.

\section{Discussions}\label{sec:discussions}
In this section, we discuss the properties and advantages of our proposed $L_h$ metric.
First, we show that Assumption~\ref{assumption:global-lipschitz-gradient} (global Lipschitz gradient) and Assumption~\ref{assumption:gradient-to-model} (heterogeneity-driven pseudo-Lipschitz gradient) used in this paper are weaker than the commonly used Assumption~\ref{assumption:local-lipschitz-gradient} (local Lipschitz gradient). Then, we explain the significance of $L_h$ by showing its ability to characterize the difference between the ``virtual'' averaged model (defined in (\ref{eq:virtual-averaged-model})) and the centralized model. Afterwards, we illustrate some nice properties of $L_h$ by considering an exemplar case of quadratic objective functions. By applying $L_h$ in the convergence analysis of quadratic objective functions, we identify a region where local SGD can be better than mini-batch SGD.

\subsection{Properties and Advantages of $L_h$}

\textbf{Additional definition.}
For the exposition of useful insights in our discussion, we define $\hat{\x}^{r,k}$ %
as the ``virtual'' averaged model during the local update phase and 
\begin{align}\label{eq:virtual-averaged-model}
\textstyle \hat{\x}^{r,k+1} := \frac{1}{N} \sum_{i=1}^N \x_i^{r,k+1} = \hat{\x}^{r,k} -\gamma\!\cdot\! \frac{1}{N}\!\sum_{i=1}^N \g_i(\x_i^{r,k}),\!
\end{align}
where $k \in \{0,1,2,\ldots,I-1 \}$.
Note that this virtual model $\hat{\x}^{r,k}$ may not be observed in the system, and is mainly used for the theoretical analysis. In addition, we define $\x_c^{r,k}$ as the model that would have been obtained by applying centralized updates at the $k$th iteration of the $r$th round given the averaged model $\hat{\x}^{r,k}$, which means that the gradient is sampled from the global data distribution $\mathcal{D}$.\footnote{Note that the model $\x_c^{r,k}$ is different from the model obtained by applying the centralized updates from the beginning of the algorithm. We use this for the purpose of illustration only while not affecting the convergence bound results.}  Specifically, 
\begin{align}\label{eq:centralized-update}
\x_c^{r,k+1} := \hat{\x}^{r,k} - \gamma \bar{\g} (\hat{\x}^{r,k}),  
\end{align}
where $\Expect\left[\bar{\g} (\hat{\x}^{r,k}) \right] = \nabla f(\hat{\x}^{r,k})$.

\textbf{Assumptions in this paper are weaker.} In the following proposition, we show that Assumptions~\ref{assumption:global-lipschitz-gradient} and \ref{assumption:gradient-to-model} are weaker than %
Assumption~\ref{assumption:local-lipschitz-gradient}.%

\begin{proposition}\label{lemma:local-assump}
If Assumption~\ref{assumption:local-lipschitz-gradient} holds, then Assumption~\ref{assumption:global-lipschitz-gradient} holds by choosing $L_g=\tilde{L}$ and Assumption~\ref{assumption:gradient-to-model} holds by choosing $L_h=\tilde{L}$.
\end{proposition}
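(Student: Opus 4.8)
The plan is to derive both claims directly from the fact that $f$ is the uniform average of the local objectives, using nothing beyond the triangle inequality and Jensen's inequality for the (squared) Euclidean norm.

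\textbf{Global Lipschitz gradient.} Differentiating the definition of $f$ in (\ref{eq: objective function}) gives $\nabla f(\x) - \nabla f(\y) = \frac{1}{N}\sum_{i=1}^N\big(\nabla F_i(\x) - \nabla F_i(\y)\big)$. I would then apply the triangle inequality followed by Assumption~\ref{assumption:local-lipschitz-gradient} to each summand, obtaining $\norm{\nabla f(\x) - \nabla f(\y)} \le \frac{1}{N}\sum_{i=1}^N \norm{\nabla F_i(\x) - \nabla F_i(\y)} \le \tilde{L}\norm{\x-\y}$ for all $\x,\y$. Hence Assumption~\ref{assumption:global-lipschitz-gradient} holds with $L_g=\tilde{L}$.

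\textbf{Heterogeneity-driven pseudo-Lipschitz condition.} The key observation is that, because $f=\frac{1}{N}\sum_i F_i$, we also have $\nabla f(\bar{\x}) = \frac{1}{N}\sum_{i=1}^N \nabla F_i(\bar{\x})$, so the left-hand side of (\ref{eq:gradient deviation}) equals $\normsq{\frac{1}{N}\sum_{i=1}^N\big(\nabla F_i(\x_i) - \nabla F_i(\bar{\x})\big)}$. I would bound this by convexity of $\norm{\cdot}^2$, i.e. $\normsq{\frac{1}{N}\sum_{i=1}^N \va_i}\le \frac{1}{N}\sum_{i=1}^N\normsq{\va_i}$ with $\va_i:=\nabla F_i(\x_i) - \nabla F_i(\bar{\x})$, and then apply Assumption~\ref{assumption:local-lipschitz-gradient} to each worker with the pair $(\x_i,\bar{\x})$. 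This yields the upper bound $\frac{1}{N}\sum_{i=1}^N \tilde{L}^2\normsq{\x_i - \bar{\x}}$, which is precisely the right-hand side of (\ref{eq:gradient deviation}) with $L_h=\tilde{L}$.

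\textbf{Main obstacle.} There is essentially none: the proposition is a soft consequence of averaging, expressing that the global objective inherits smoothness from the local ones and that $L_h$ is a genuinely weaker quantity than $\tilde{L}$. The only point worth a moment's care is to pass from $\normsq{\frac{1}{N}\sum_i\va_i}$ to $\frac{1}{N}\sum_i\normsq{\va_i}$ via Jensen's inequality rather than via a loose squared triangle inequality, so that the constant stays exactly $\tilde{L}^2$; this also makes transparent why the converse fails in general, which is the content of the explicit example promised later in Section~\ref{sec:discussions}.
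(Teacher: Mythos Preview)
Your proposal is correct and matches the paper's proof essentially step for step: the paper also writes $\nabla f(\bar{\x})=\frac{1}{N}\sum_i\nabla F_i(\bar{\x})$, applies the Jensen-type bound $\normsq{\frac{1}{N}\sum_i \va_i}\le \frac{1}{N}\sum_i\normsq{\va_i}$, and finishes with Assumption~\ref{assumption:local-lipschitz-gradient} on each summand. The global Lipschitz part is likewise identical.
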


\textbf{Explanation of $L_h$.}
Assumption~\ref{assumption:gradient-to-model} captures the difference between the averaged model and centralized model, which can be seen from the following proposition. Recall that the virtual averaged model $\hat{\x}^{r,k}$ is defined in (\ref{eq:virtual-averaged-model}) and the centralized model $\x_c^{r,k}$ is defined in (\ref{eq:centralized-update}).
\begin{proposition}\label{pro:explain-D}
    Given the virtual averaged model at the $r$th round and $k$th iteration $\hat{\x}^{r,k}$, we have
\begin{align}
&\normsq{\Expect[\hat{\x}^{r,k+1}|\hat{\x}^{r,k}]-\Expect[\x_c^{r,k+1}|\hat{\x}^{r,k}]} \notag \\
&\le \gamma^2\cdot \frac{L_h^2}{N}\sum_{i=1}^N \normsq{\x_i^{r,k} -\hat{\x}^{r,k}}.
\end{align}
\end{proposition}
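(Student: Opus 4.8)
\textbf{Proof proposal for Proposition~\ref{pro:explain-D}.}
The plan is to expand both conditional expectations using the defining recursions \eqref{eq:virtual-averaged-model} and \eqref{eq:centralized-update}, subtract, and recognize the resulting quantity as exactly the object that Assumption~\ref{assumption:gradient-to-model} bounds. First I would take the conditional expectation of the virtual averaged update \eqref{eq:virtual-averaged-model}: since $\Expect[\g_i(\x_i^{r,k})\mid \x_i^{r,k}] = \nabla F_i(\x_i^{r,k})$ and the local models $\x_i^{r,k}$ are determined by the history up to and including $\hat{\x}^{r,k}$ (they are not random given $\hat{\x}^{r,k}$ in the relevant conditioning, or more carefully, one conditions on the full collection $\{\x_i^{r,k}\}$ and uses that $\hat{\x}^{r,k}$ is their average), we get $\Expect[\hat{\x}^{r,k+1}\mid \hat{\x}^{r,k}] = \hat{\x}^{r,k} - \gamma\cdot\frac{1}{N}\sum_{i=1}^N \nabla F_i(\x_i^{r,k})$. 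Similarly, from \eqref{eq:centralized-update} and $\Expect[\bar{\g}(\hat{\x}^{r,k})] = \nabla f(\hat{\x}^{r,k})$, we get $\Expect[\x_c^{r,k+1}\mid \hat{\x}^{r,k}] = \hat{\x}^{r,k} - \gamma \nabla f(\hat{\x}^{r,k})$.

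Subtracting these two expressions, the $\hat{\x}^{r,k}$ terms cancel and we are left with
\begin{align}
\Expect[\hat{\x}^{r,k+1}\mid\hat{\x}^{r,k}] - \Expect[\x_c^{r,k+1}\mid\hat{\x}^{r,k}] = -\gamma\left(\frac{1}{N}\sum_{i=1}^N \nabla F_i(\x_i^{r,k}) - \nabla f(\hat{\x}^{r,k})\right).
\end{align}
Taking squared norms gives $\gamma^2\normsq{\frac{1}{N}\sum_{i=1}^N \nabla F_i(\x_i^{r,k}) - \nabla f(\hat{\x}^{r,k})}$. Now I would invoke Assumption~\ref{assumption:gradient-to-model} with the substitution $\x_i \leftarrow \x_i^{r,k}$, noting that $\hat{\x}^{r,k} = \frac{1}{N}\sum_{i=1}^N \x_i^{r,k}$ plays the role of $\bar{\x}$ in that assumption; this bounds the squared norm by $\frac{L_h^2}{N}\sum_{i=1}^N \normsq{\x_i^{r,k} - \hat{\x}^{r,k}}$, and multiplying by $\gamma^2$ yields exactly the claimed inequality.

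The argument is essentially a one-line unrolling once the bookkeeping is right, so there is no serious obstacle; the only point requiring care is the conditioning structure. One must be precise about what "given $\hat{\x}^{r,k}$" means — strictly, the clean identities for the conditional expectations hold when conditioning on the entire tuple $\{\x_i^{r,k}\}_{i=1}^N$ (equivalently the sigma-algebra generated by the local trajectories up to iteration $k$), and the statement is then phrased in terms of $\hat{\x}^{r,k}$ because that is the quantity of interest and the local models are measurable with respect to that history. I would either adopt the convention that the conditioning is on $\{\x_i^{r,k}\}$ throughout (as is standard in this literature), or add a sentence making the filtration explicit, so that the unbiasedness of $\g_i$ and $\bar{\g}$ can be applied termwise without ambiguity. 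Everything else is a direct substitution into Assumption~\ref{assumption:gradient-to-model}.
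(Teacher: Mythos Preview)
Your proposal is correct and follows essentially the same approach as the paper: compute the two conditional expectations from the update rules to get $\hat{\x}^{r,k} - \gamma\cdot\frac{1}{N}\sum_i \nabla F_i(\x_i^{r,k})$ and $\hat{\x}^{r,k} - \gamma\nabla f(\hat{\x}^{r,k})$, subtract, take the squared norm, and apply Assumption~\ref{assumption:gradient-to-model}. Your added remark about the conditioning sigma-algebra is a nice clarification that the paper leaves implicit, but the argument is otherwise identical.
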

Proposition~\ref{pro:explain-D} shows that although the difference among local models, captured by $\big\|\x_i^{r,k} -\hat{\x}^{r,k}\big\|^2$ (which depends on both $\zeta$ and $\sigma$ as shown in Lemma~\ref{lemma:model-divergence}), can be large after multiple local iterations, 
the difference between the averaged model and centralized model can still be small if $L_h$ is small.

\subsection{Analysis for Quadratic Objective Functions}
In order to obtain an explicit relationship among $L_h$, $\tilde{L}$ and $\zeta$, and demonstrate the benefit of using $L_h$, we consider the following quadratic objective function,\footnote{Here we do not assume the Hessian matrix is positive definite so that the quadratic objective function can be non-convex.}
\begin{align}\label{eq:quadratic-objective}
   \textstyle F_i(\x) = \frac{1}{2}\x^\T\A_i\x + \bb_i^\T\x + c_i.
\end{align}
Using (\ref{eq: objective function}), %
the global objective function is given by $f(\x) = \frac{1}{2}\x^\T\A\x + \bb^\T\x + c
$, where $\A := \frac{1}{N}\sum_{i=1}^N \A_i$, $\bb := \frac{1}{N}\sum_{i=1}^N \bb_i$ and $c := \frac{1}{N}\sum_{i=1}^N c_i$.

In Proposition~\ref{lemma:local-assump}, it is implied that $L_h\le \tilde{L}$. Further, as shown in Table~\ref{tab:estimate-L} (Section~\ref{sec:experiments}), $L_h$ can be %
much smaller than $\tilde{L}$. In general, the explicit relationship between $L_h$ and $\tilde{L}$ is challenging to derive. However, for quadratic objective functions, this relationship can be shown in the following proposition.

\begin{proposition}
    \label{lemma:DQuadratic}
    For quadratic objective functions defined in (\ref{eq:quadratic-objective}),
    Assumptions~\ref{assumption:local-lipschitz-gradient} and \ref{assumption:gradient-to-model} hold with $\tilde{L} = \max_{i \in [N]}\left\|\A_i\right\|_2$ and
    {  $L_h = \max_{i \in [N]} \left\|\A_i - \A\right\|_2
    $, }respectively, where $\|\cdot\|_2 $ is the spectral norm.
\end{proposition}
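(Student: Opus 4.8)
The plan is to work directly with the closed-form gradients of the quadratic model. Assuming without loss of generality that each $\A_i$ is symmetric (otherwise replace $\A_i$ by $\tfrac{1}{2}(\A_i+\A_i^\T)$, which alters neither $F_i$ nor its gradient), we have $\nabla F_i(\x) = \A_i\x + \bb_i$ and, by the definition in (\ref{eq: objective function}), $\nabla f(\x) = \A\x + \bb$ with $\A = \tfrac{1}{N}\sum_{i=1}^N\A_i$ and $\bb = \tfrac{1}{N}\sum_{i=1}^N\bb_i$. For Assumption~\ref{assumption:local-lipschitz-gradient} I would simply note that $\nabla F_i(\x) - \nabla F_i(\y) = \A_i(\x - \y)$, so $\norm{\nabla F_i(\x) - \nabla F_i(\y)} \le \norm{\A_i}_2\norm{\x - \y}$ by the definition of the spectral norm; taking the maximum over $i\in[N]$ yields $\tilde{L} = \max_{i\in[N]}\norm{\A_i}_2$ (and choosing $\x - \y$ along a top singular vector of the maximizing $\A_i$ shows this value is the smallest admissible constant).

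For Assumption~\ref{assumption:gradient-to-model}, the crux is to rewrite the left-hand side of (\ref{eq:gradient deviation}) so that it is governed by the model deviations $\x_i - \bar{\x}$. Substituting the gradient expressions, the constant terms cancel because $\tfrac{1}{N}\sum_{i=1}^N\bb_i = \bb$, and we obtain
\[
\frac{1}{N}\sum_{i=1}^N\nabla F_i(\x_i) - \nabla f(\bar{\x}) = \frac{1}{N}\sum_{i=1}^N\A_i\x_i - \A\bar{\x} = \frac{1}{N}\sum_{i=1}^N(\A_i - \A)\x_i = \frac{1}{N}\sum_{i=1}^N(\A_i - \A)(\x_i - \bar{\x}),
\]
where the second equality uses $\A\bar{\x} = \tfrac{1}{N}\sum_{i=1}^N\A\x_i$, and the last equality is the key step: since the deviation matrices average to zero, $\tfrac{1}{N}\sum_{i=1}^N(\A_i - \A) = \A - \A = \vzero$, we may subtract $\big(\tfrac{1}{N}\sum_{i=1}^N(\A_i - \A)\big)\bar{\x} = \vzero$ and thereby re-center every $\x_i$ to $\x_i - \bar{\x}$.

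Given this identity, the stated bound follows from two elementary facts: convexity of $\norm{\cdot}^2$ (Jensen's inequality for the uniform average over $i$), and submultiplicativity of the spectral norm together with $\norm{\A_i - \A}_2 \le \max_{j\in[N]}\norm{\A_j - \A}_2$. Chaining them,
\[
\normsq{\frac{1}{N}\sum_{i=1}^N\nabla F_i(\x_i) - \nabla f(\bar{\x})} \le \frac{1}{N}\sum_{i=1}^N\normsq{(\A_i - \A)(\x_i - \bar{\x})} \le \Big(\max_{i\in[N]}\norm{\A_i - \A}_2^2\Big)\frac{1}{N}\sum_{i=1}^N\normsq{\x_i - \bar{\x}},
\]
which is exactly (\ref{eq:gradient deviation}) with $L_h = \max_{i\in[N]}\norm{\A_i - \A}_2$.

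The only step needing an idea rather than bookkeeping is the re-centering: recognizing that $\sum_{i=1}^N(\A_i - \A) = \vzero$ is what converts the raw iterates $\x_i$ into the deviations $\x_i - \bar{\x}$ appearing on the right-hand side of Assumption~\ref{assumption:gradient-to-model}; everything else is a single line of matrix-norm inequalities. I would close with the remark that this makes the relationship between $L_h$ and $\tilde{L}$ explicit: $L_h = \max_i\norm{\A_i - \A}_2 \le 2\tilde{L}$ always, yet $L_h = 0$ whenever all $\A_i$ coincide, a case in which $\tilde{L}$ — and, through differing $\bb_i$, the gradient divergence $\zeta$ — can be arbitrarily large. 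This is precisely the quadratic instance of the $L_h = 0$ example discussed after Theorem~\ref{thm:non-convex}.
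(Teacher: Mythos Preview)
Your proof is correct and follows essentially the same route as the paper: both establish the key identity $\frac{1}{N}\sum_{i=1}^N\nabla F_i(\x_i) - \nabla f(\bar{\x}) = \frac{1}{N}\sum_{i=1}^N(\A_i - \A)(\x_i - \bar{\x})$ and then apply Jensen's inequality plus the spectral-norm bound. Your re-centering step (using $\sum_i(\A_i-\A)=\vzero$ directly) is slightly more streamlined than the paper's add-and-subtract manipulation, but the argument is the same.
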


From Proposition~\ref{lemma:DQuadratic}, it can be seen that both $L_h$ and $\tilde{L}$ capture the properties of Hessian matrices for quadratic objective functions.
The heterogeneity-driven pseudo-Lipschitz constant $L_h$ characterizes the largest absolute eigenvalue of the ``deviation'' of $\{\A_i\}$ from the global Hessian matrix $\A$, while $\tilde{L}$ characterizes the largest absolute eigenvalue of $\{\A_i\}$. %
We observe that when $\A_i = \A, \forall i$, which means that the difference of local Hessian matrices is zero, Assumption~\ref{assumption:gradient-to-model} holds with $L_h=0$. 
Note that, at the same time, we can pick an $\A_i$ such that $\tilde{L} = \max_{i \in [N]}\left\| \A_i\right\|_2$ is much larger than zero. 
Hence, in this example, we explicitly show that $L_h$ can be arbitrarily smaller than $\tilde{L}$.

In Proposition~\ref{pro:explain-D}, it has been shown that even when $\zeta$ is large, the difference between the averaged model and the centralized model can still be small as long as $L_h$ is small. In the following proposition, we explicitly show that for quadratic functions, $L_h$ can be zero when $\zeta$ is large.

\begin{proposition}\label{proposition:quadratic-D-epsilon}
For quadratic objective functions defined in (\ref{eq:quadratic-objective}), when $\zeta=0$,  {Assumption~\ref{assumption:gradient-to-model} holds with} $L_h=0$, while when $L_h=0$, $\zeta$ can be arbitrarily large.
\end{proposition}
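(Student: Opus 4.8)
The plan is to reduce both halves of the claim to Proposition~\ref{lemma:DQuadratic}, which already identifies the tightest constants $\tilde{L}=\max_{i\in[N]}\|\A_i\|_2$ and $L_h=\max_{i\in[N]}\|\A_i-\A\|_2$ for the quadratic model~(\ref{eq:quadratic-objective}), together with the elementary observation that here the gradient divergence reflects the linear terms $\{\bb_i\}$ while $L_h$ reflects only the Hessians $\{\A_i\}$. Throughout I will use $\nabla F_i(\x)=\A_i\x+\bb_i$ and $\nabla f(\x)=\A\x+\bb$ with $\A=\frac{1}{N}\sum_{i=1}^N\A_i$ and $\bb=\frac{1}{N}\sum_{i=1}^N\bb_i$.

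For the first claim, I would read ``$\zeta=0$'' as the statement that the smallest constant admissible in Assumption~\ref{assumption:bounded-gradient-divergence} is $0$, i.e. $\nabla F_i(\x)=\nabla f(\x)$ for all $\x$ and all $i$. Expanding, $(\A_i-\A)\x+(\bb_i-\bb)=\vzero$ for every $\x$; setting $\x=\vzero$ forces $\bb_i=\bb$, and then $(\A_i-\A)\x=\vzero$ for all $\x$ forces $\A_i=\A$. Hence $L_h=\max_{i}\|\A_i-\A\|_2=0$ by Proposition~\ref{lemma:DQuadratic}; equivalently, substituting $\A_i=\A$ and $\bb_i=\bb$ into the left side of (\ref{eq:gradient deviation}) gives $\frac{1}{N}\sum_{i=1}^N(\A\x_i+\bb)-(\A\bar{\x}+\bb)=\A\big(\frac{1}{N}\sum_{i=1}^N\x_i-\bar{\x}\big)=\vzero$ for every $\{\x_i\}$, so Assumption~\ref{assumption:gradient-to-model} holds with $L_h=0$. (In fact the same computation shows that any \emph{finite} $\zeta$ already forces $\A_i=\A$, hence $L_h=0$, in the quadratic case.)

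For the second claim, I would exhibit an explicit instance of~(\ref{eq:quadratic-objective}): take $N=2$ with $\A_1=\A_2=\A$ for an arbitrary fixed symmetric matrix $\A$, and $\bb_1=t\bu$, $\bb_2=-t\bu$ for a fixed unit vector $\bu$ and a scalar $t>0$. Since the local Hessians coincide, the same computation as above shows the left side of (\ref{eq:gradient deviation}) is identically $\vzero$, so $L_h=0$ is admissible. On the other hand $\nabla F_i(\x)-\nabla f(\x)=\bb_i-\bb=\pm t\bu$ is independent of $\x$, so the smallest admissible divergence constant is $\zeta=t$, which can be made arbitrarily large with $L_h$ remaining $0$.

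I do not anticipate a real obstacle here; the only point requiring care is the interpretation of ``$\zeta=0$'' as $\nabla F_i\equiv\nabla f$ (equivalently, that $\zeta$ denotes the tightest constant in Assumption~\ref{assumption:bounded-gradient-divergence}), after which both directions are one-line computations. The conceptual content — that $\zeta$ and $L_h$ genuinely decouple because $\zeta$ also reflects the offsets $\bb_i$ whereas $L_h$ does not — is exactly what the example in the second part is built to display, and it is consistent with Proposition~\ref{pro:explain-D}, since with $L_h=0$ the virtual averaged model and the centralized model agree in expectation regardless of the size of $\zeta$.
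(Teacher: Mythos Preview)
Your proposal is correct and follows exactly the line the paper intends: the paper does not give a separate formal proof of this proposition but relies on Proposition~\ref{lemma:DQuadratic} together with the observation (stated in the paragraph after the proposition) that $L_h$ depends only on the Hessian differences $\A_i-\A$ while $\zeta$ also depends on $\bb_i-\bb$. Your explicit construction with $\A_1=\A_2$ and $\bb_1=-\bb_2=t\bu$ is precisely the kind of example the discussion alludes to, and your parenthetical remark that any finite $\zeta$ already forces $\A_i=\A$ in the quadratic case is a nice sharpening that the paper does not spell out.
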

Proposition~\ref{proposition:quadratic-D-epsilon} shows that $L_h=0$ is not a sufficient condition for $\zeta=0$, which implies that only using $\zeta$ can overestimate the effect of the data heterogeneity.
This is because, as we have seen in Proposition~\ref{lemma:DQuadratic}, for quadratic objective functions, the key effect of heterogeneity on the local updates is due to the difference between $\A$ and $\A_i$, while $\zeta$ depends not only on the difference between $\A$ and $\A_i$ but also on the difference between $\mathbf{b}$ and $\mathbf{b}_i$. In addition, we notice that in multi-label learning~\citep{6471714}, when $\A=\A_i$, $\mathbf{b}$ can be very different from $\mathbf{b}_i$ since data examples sharing the same feature can have different labels. This means that $L_h=0$ but $\zeta>0$ is possible in practice.

\textbf{On local SGD v.s. mini-batch SGD.} In the following theorem, we consider the special case of $L_h=0$, by which we show that local SGD can 
outperform mini-batch SGD even when $\zeta$ is arbitrarily large analytically.
The extended discussion on the comparison between local SGD and mini-batch SGD for quadratic objective functions with $L_h>0 $ can be found in Appendix~\ref{sec:dis-quad-mini-local}, where similar conclusions still hold. 
Instead of directly applying $L_h=0$ to Theorem~\ref{thm:non-convex}, 
we develop a 
new proof technique for Theorem~\ref{thm:quadratic} below.
The difference in the techniques can be shown by the 
requirement of the local learning rate $\gamma$, which no longer depends on $I$ in Theorem~\ref{thm:quadratic} while Theorem~\ref{thm:non-convex} requires {$\gamma\le \min\Big\{ \frac{1}{2\sqrt{30}IL_g}, \frac{1}{\sqrt{6(L_h^2+L_g^2)}I}\Big\}$.}

In the following, we use $t$ to denote the index of the total number of iterations, where $t\in [RI] := \{0, 1, \ldots, RI-1\}$. For some given $r$ and $k$, we define $\hat{\x}^t$ as 
\begin{align}
\hat{\x}^t =\bigg\{
\begin{aligned}
&\hat{\x}^{r,k}, &&\textrm{if } t = rI+k \textrm{ and } k\neq 0, \\
&\bar{\x}^r, &&\textrm{if } t = rI.
\end{aligned}
\label{eq:t-index}    
\end{align}

\begin{theorem}[Special Case of $L_h=0$]\label{thm:quadratic}
For quadratic objective functions defined in (\ref{eq:quadratic-objective}), with a common Hessian $\A = \A_i, \forall i$, when $\gamma \le \frac{1}{L_g}$ and $\eta=1$, for local SGD with $I$ local iterations, we have
\begin{align}\label{eq:quadratic-local-bound}
&\min_{t\in [RI]} \E\normsq{\nabla f(\hat{\x}^t)}= \mathcal{O}\left( \frac{\mathcal{F}}{\gamma RI}+\frac{\gamma L_g}{N}\sigma^2 \right);
\end{align}
and for mini-batch SGD with batch size $I$ and learning rate $\gamma\le \frac{1}{L_g}$, we have
\begin{align}\label{eq:quadratic-mini-batch}
&\min_{t\in [RI]} \E\normsq{\nabla f(\hat{\x}^t)}= \mathcal{O}\left( \frac{\mathcal{F}}{\gamma R}+\frac{\gamma L_g}{NI}\sigma^2 \right).
\end{align}
\end{theorem}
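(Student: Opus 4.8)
The plan is to treat the two bounds essentially independently, since the quadratic structure with a common Hessian $\A=\A_i$ makes the "local-update drift'' collapse entirely. The key observation is that when $\A_i=\A$ for all $i$, the local stochastic gradient at worker $i$ is $\g_i(\x) = \A\x + \bb_i + (\text{noise})$, so the \emph{average} of local gradients, evaluated at possibly different local points $\x_i^{r,k}$, satisfies $\frac{1}{N}\sum_i (\A\x_i^{r,k} + \bb_i) = \A\hat{\x}^{r,k} + \bb = \nabla f(\hat{\x}^{r,k})$ exactly — this is precisely the statement that $L_h=0$ in Assumption~\ref{assumption:gradient-to-model}, but now used as an identity rather than an inequality. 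Consequently, the virtual averaged model recursion in (\ref{eq:virtual-averaged-model}) reads $\hat{\x}^{r,k+1} = \hat{\x}^{r,k} - \gamma\big(\nabla f(\hat{\x}^{r,k}) + \bar{\rvepsilon}^{r,k}\big)$, where $\bar{\rvepsilon}^{r,k} := \frac{1}{N}\sum_i \g_i(\x_i^{r,k}) - \nabla f(\hat{\x}^{r,k})$ is a zero-mean noise term with $\Expect\normsq{\bar{\rvepsilon}^{r,k}} \le \sigma^2/N$ by Assumption~\ref{assumption:bounded-stochastic-variance} and independence across workers. In other words, the entire local-SGD trajectory over all $RI$ iterations is exactly a centralized SGD trajectory with step size $\gamma$ and per-step noise variance $\sigma^2/N$ on the $L_g$-smooth function $f$.

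For the local-SGD bound (\ref{eq:quadratic-local-bound}), I would therefore just run the standard descent-lemma argument for smooth nonconvex SGD on the sequence $\{\hat{\x}^t\}_{t\in[RI]}$ defined in (\ref{eq:t-index}): using $L_g$-smoothness (Assumption~\ref{assumption:global-lipschitz-gradient}), $\Expect f(\hat{\x}^{t+1}) \le \Expect f(\hat{\x}^t) - \gamma(1 - \tfrac{\gamma L_g}{2})\Expect\normsq{\nabla f(\hat{\x}^t)} + \tfrac{\gamma^2 L_g}{2}\cdot\tfrac{\sigma^2}{N}$, so with $\gamma\le 1/L_g$ the coefficient $1-\gamma L_g/2 \ge 1/2$; telescoping over $t=0,\dots,RI-1$ and dividing by $\gamma RI/2$ gives $\min_t \Expect\normsq{\nabla f(\hat{\x}^t)} = \mathcal{O}\big(\tfrac{\mathcal{F}}{\gamma RI} + \tfrac{\gamma L_g \sigma^2}{N}\big)$, which is (\ref{eq:quadratic-local-bound}). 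One subtlety worth a line: at the aggregation boundaries $t=rI$ we have $\hat{\x}^t = \bar{\x}^r$, and since $\eta=1$ the server update $\bar{\x}^{r+1} = \bar{\x}^r - \frac{1}{N}\sum_i(\bar{\x}^r - \x_i^{r,I}) = \hat{\x}^{r,I}$, so the indexed sequence $\{\hat{\x}^t\}$ really is one continuous SGD path with no jumps — this is why $\eta=1$ is assumed.

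For the mini-batch bound (\ref{eq:quadratic-mini-batch}), mini-batch SGD with batch size $I$ across $N$ workers performs $R$ global steps, each using a gradient estimate averaged over $NI$ samples, hence noise variance $\sigma^2/(NI)$; the same descent lemma with $R$ steps gives $\mathcal{O}\big(\tfrac{\mathcal{F}}{\gamma R} + \tfrac{\gamma L_g \sigma^2}{NI}\big)$, which is (\ref{eq:quadratic-mini-batch}). I expect the main obstacle to be purely expository rather than technical: carefully setting up the noise accounting for $\bar{\rvepsilon}^{r,k}$ so that cross terms between workers vanish (they are conditionally independent given $\{\x_i^{r,k}\}$, each mean zero), and verifying that the common-Hessian assumption genuinely removes \emph{all} dependence on $I$ in the step-size constraint — i.e., that there is no hidden drift term requiring $\gamma = O(1/(IL_g))$ as in Theorem~\ref{thm:non-convex}. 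The reason it disappears is that the "client drift'' $\x_i^{r,k} - \hat{\x}^{r,k}$ enters the analysis of general $f$ only through $\frac{1}{N}\sum_i\nabla F_i(\x_i^{r,k}) - \nabla f(\hat{\x}^{r,k})$, which vanishes identically here; so the usual term $\gamma^2 L^2 I^2 \zeta^2$ is simply absent, and no $I$-dependent restriction on $\gamma$ is needed.
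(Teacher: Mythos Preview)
Your proposal is correct and matches the paper's own proof almost exactly: the paper also works iteration-by-iteration on $\{\hat{\x}^t\}$, applies the $L_g$-smoothness descent inequality, uses that $\tfrac{1}{N}\sum_i\nabla F_i(\x_i^t)=\nabla f(\hat{\x}^t)$ (i.e., $L_h=0$) to eliminate the drift term so that only $\tfrac{\gamma^2 L_g\sigma^2}{2N}$ survives, drops the remaining nonpositive term under $\gamma\le 1/L_g$, and telescopes. Your observation that $\eta=1$ makes $\bar{\x}^{r+1}=\hat{\x}^{r,I}$ so the sequence has no jumps is exactly the reason the paper's iteration-indexed argument goes through, and your mini-batch bound is the standard one the paper simply quotes.
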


In Theorem~\ref{thm:quadratic}, the cost of communication and computation is the same for both local SGD and mini-batch SGD when $R$ is fixed, since the number of aggregations is $R$ and the total number of gradients sampled is $NRI$ for both algorithms.
The upper bound for $\gamma$ is also the same. 
Comparing (\ref{eq:quadratic-local-bound}) with (\ref{eq:quadratic-mini-batch}), we see that %
for local SGD, $I$ is in the first term of (\ref{eq:quadratic-local-bound}), which means that local SGD uses more computation to reduce the error caused by initialization, since %
$\mathcal{F} = f(\bar{\x}^0) - f(\x^*)$. 
For mini-batch SGD, $I$ is in the second term of (\ref{eq:quadratic-mini-batch}), which means that mini-batch SGD uses more computation to reduce the error caused by the variance $\sigma^2$.
Based on the above insights, we identify a region where local SGD can be better than mini-batch SGD in the following corollary.

\begin{corollary}\label{cor:local-better-minibatch}
Consider the quadratic objective function in Theorem~\ref{thm:quadratic}. When $\sigma \le \sqrt{\frac{\mathcal{F}NL_g}{RI}}$ and with appropriately chosen learning rates, for local SGD, we have
\begin{align}
    \min_{t\in [RI]} \Expectbracket{\normsq{\nabla f(\hat{\x}^t)}} = \mathcal{O}\left( \frac{\mathcal{F}L_g}{RI}\right);
\end{align}
for mini-batch SGD, we have
{ 
\begin{align}
    \min_{t\in [RI]} \Expectbracket{\normsq{\nabla f(\hat{\x}^t)}} = \mathcal{O}\left( \frac{\mathcal{F}L_g}{R} \right).
\end{align}
}
\end{corollary}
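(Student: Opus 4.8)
The plan is to instantiate Theorem~\ref{thm:quadratic} at the boundary learning rate $\gamma = 1/L_g$ (with $\eta = 1$, as required there) and then verify, under the hypothesis on $\sigma$, that the stochastic-noise term is absorbed into the initialization term for each algorithm. Since $\gamma = 1/L_g$ satisfies the precondition $\gamma \le 1/L_g$ of Theorem~\ref{thm:quadratic}, both \eqref{eq:quadratic-local-bound} and \eqref{eq:quadratic-mini-batch} are available with this choice. All the real work is already contained in Theorem~\ref{thm:quadratic}; the corollary is just the extraction of the local-SGD-versus-mini-batch-SGD comparison in the low-noise regime.

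First, for local SGD, substituting $\gamma = 1/L_g$ into \eqref{eq:quadratic-local-bound} gives $\min_{t\in[RI]}\E\normsq{\nabla f(\hat{\x}^t)} = \mathcal{O}\big(\frac{\mathcal{F}L_g}{RI} + \frac{\sigma^2}{N}\big)$. The hypothesis $\sigma \le \sqrt{\mathcal{F}NL_g/(RI)}$ is equivalent to $\sigma^2/N \le \mathcal{F}L_g/(RI)$, so the noise term is dominated by the first term and the bound collapses to $\mathcal{O}\big(\frac{\mathcal{F}L_g}{RI}\big)$. For mini-batch SGD, substituting $\gamma = 1/L_g$ into \eqref{eq:quadratic-mini-batch} gives $\min_{t\in[RI]}\E\normsq{\nabla f(\hat{\x}^t)} = \mathcal{O}\big(\frac{\mathcal{F}L_g}{R} + \frac{\sigma^2}{NI}\big)$; since $I \ge 1$ we have $\sigma^2 \le \mathcal{F}NL_g/(RI) \le \mathcal{F}NL_gI/R$, hence $\sigma^2/(NI) \le \mathcal{F}L_g/R$, and the bound collapses to $\mathcal{O}\big(\frac{\mathcal{F}L_g}{R}\big)$. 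This yields the two claimed rates with $\Expectbracket{\normsq{\nabla f(\hat{\x}^t)}}$ in place of $\E\normsq{\nabla f(\hat{\x}^t)}$, which are the same quantity.

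Finally, I would add a short remark justifying why $\gamma = 1/L_g$ is the ``appropriately chosen'' rate rather than an arbitrary admissible value: balancing the two terms in \eqref{eq:quadratic-local-bound} (resp.\ \eqref{eq:quadratic-mini-batch}) over $\gamma$ yields the unconstrained minimizer $\sqrt{\mathcal{F}N/(RIL_g\sigma^2)}$ (resp.\ $\sqrt{\mathcal{F}NI/(RL_g\sigma^2)}$), which exceeds $1/L_g$ precisely when $\sigma^2 \le \mathcal{F}NL_g/(RI)$ (resp.\ $\sigma^2 \le \mathcal{F}NL_gI/R$); thus in the stated regime the constraint $\gamma \le 1/L_g$ is active and the clipped choice $\gamma = 1/L_g$ is in fact rate-optimal for both algorithms, which is also why the threshold on $\sigma$ appears exactly as it does. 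There is essentially no obstacle here; the only points requiring care are checking that $\gamma = 1/L_g$ meets the precondition of Theorem~\ref{thm:quadratic} (immediate, since it is the boundary value) and using $I \ge 1$ to dominate the mini-batch noise term.
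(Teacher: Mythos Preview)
Your proposal is correct and matches the paper's own proof essentially step for step: both choose $\gamma = 1/L_g$, plug into the two bounds of Theorem~\ref{thm:quadratic}, use the hypothesis $\sigma^2 \le \mathcal{F}NL_g/(RI)$ to absorb the noise term, and justify rate-optimality of this clipped $\gamma$ by computing the unconstrained minimizer and observing it lies beyond the feasible boundary in this regime. The paper additionally notes for mini-batch SGD that the unconstrained minimizer is at least $I/L_g$ (so $h$ is decreasing on the entire feasible interval), giving the slightly sharper intermediate bound $\sigma^2/(NI)\le \mathcal{F}L_g/(RI^2)$, but this does not affect the final $\mathcal{O}(\mathcal{F}L_g/R)$ rate and your argument using $I\ge 1$ is equally valid.
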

First, it can be seen that the order of the dominant term for local SGD is $\mathcal{O}(\frac{1}{RI})$ while for mini-batch SGD, it is $\mathcal{O}(\frac{1}{R})$. This means that when $I$ is large, local SGD can be much faster than mini-batch SGD. Second, the condition confirms the intuition that when the error caused by initialization is large ($\sigma \le \sqrt{\frac{\mathcal{F}NL_g}{RI}}$), we should choose local SGD. %
Similar insights can also be shown in the results for $L_h>0$ in Theorem~\ref{thm:quad-localvsmini}.
It is worth noting that this result shows that the advantage of local SGD can be achieved even when $\zeta $ is arbitrarily large, while in the literature~\citep{woodworth2020minibatch}, local SGD has been proved to be better than mini-batch SGD only when $\zeta$ is small.

A limitation of the result in Theorem~\ref{thm:quadratic} and Corollary~\ref{cor:local-better-minibatch} is that the left-hand side (LHS) of the convergence bound includes $\hat{\x}^t$, which can be either the virtual (non-observable) average model $\hat{\x}^{r,k}$, when $k\neq 0$ in (\ref{eq:t-index}), or the observable average model $\bar{\x}^r$, when $k= 0$ in (\ref{eq:t-index}). An extension to considering only the errors related to $\bar{\x}^r$ is left for future work.

\begin{figure*}[h!]
  \centering
  \begin{subfigure}{0.49\columnwidth}
  \centering
  \includegraphics[width=3.2cm, height=2.5cm]{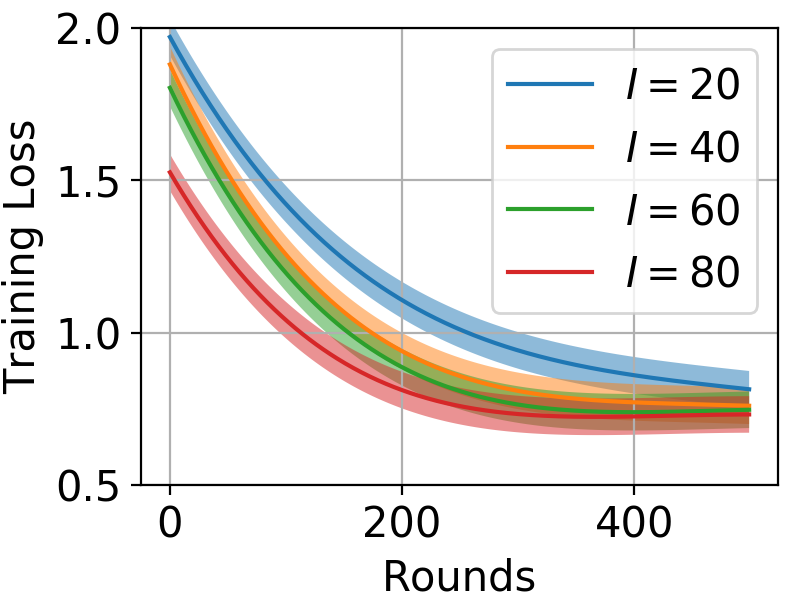}
  \caption{CNN ($50\%$ non-IID).}
  \label{fig:cifar-50-training}
  \end{subfigure}
  \begin{subfigure}{0.49\columnwidth}
  \centering
  \includegraphics[width=3.2cm, height=2.5cm]{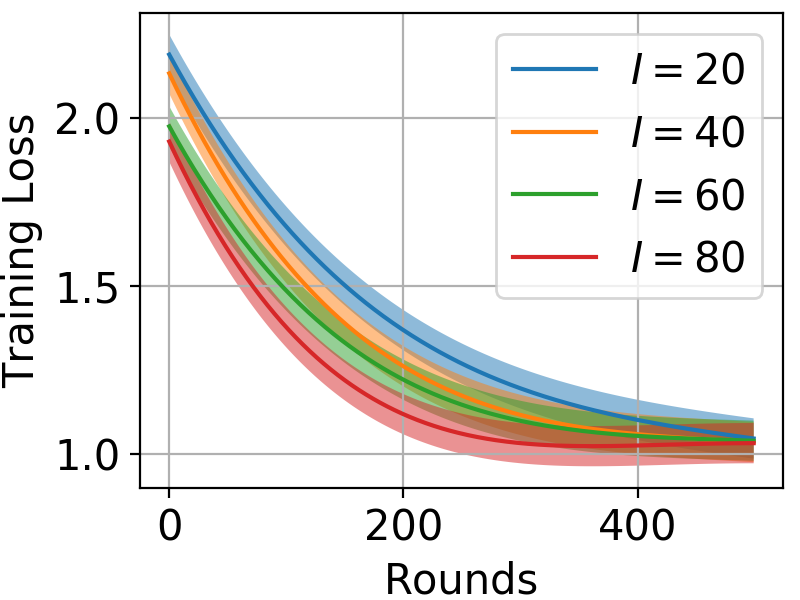}
  \caption{CNN ($75\%$ non-IID).}
  \label{fig:cifar-50-acc}
  \end{subfigure}
  \begin{subfigure}{0.49\columnwidth}
  \centering
  \includegraphics[width=3.2cm, height=2.5cm]{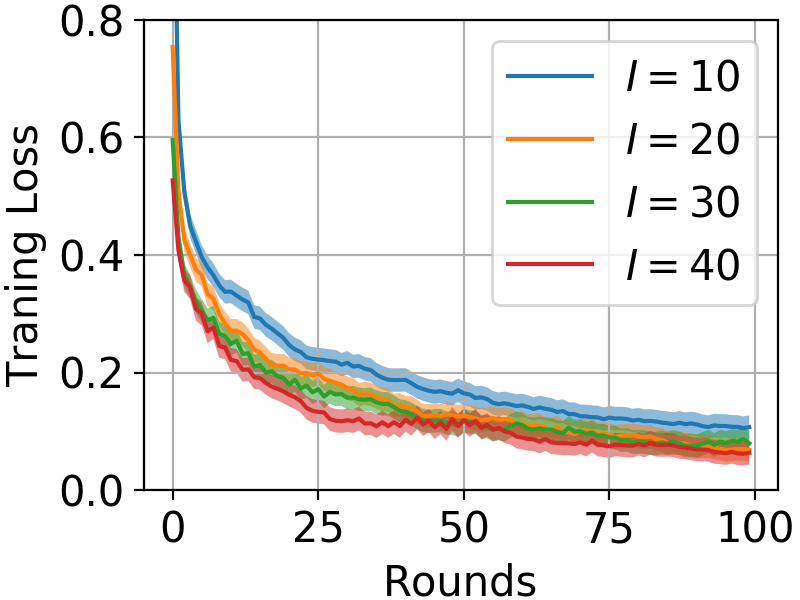}
  \caption{MLP ($50\%$ non-IID).}
  \label{fig:cifar-75-training}
  \end{subfigure}
  \begin{subfigure}{0.49\columnwidth}
  \centering
  \includegraphics[width=3.2cm, height=2.5cm]{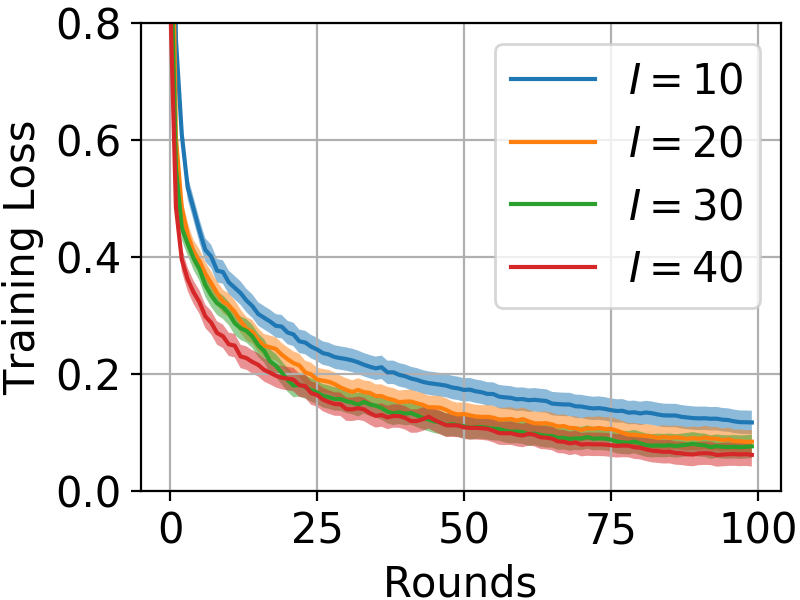}
  \caption{MLP ($75\%$ non-IID).}
  \label{fig:cifar-75-acc}
  \end{subfigure}
  \caption{Results for CNN with CIFAR-10 and MLP with MNIST. For CNN, the learning rates are chosen as $\eta=2$ and $\gamma=0.05$. For MLP, the learning rates are chosen as $\eta=2$ and $\gamma=0.1$. Results for CNN are shown in (a) and (b). Results for $75\%$ of MNIST are shown in (c) and (d).}\label{fig:cnn-mlp}
\end{figure*}

\section{Experiments}\label{sec:experiments}

In this section, we present experimental results obtained from various datasets and models to validate our theoretical findings. In particular, we estimate $\tilde{L}$, $L_h$ and $L_g$ on MNIST \citep{lecun1998gradient} with multilayer perceptron (MLP), CIFAR-10 \citep{CIFAR10} with CNN and VGG-11, CIFAR-100 with VGG-16. Then we provide the results for FedAvg on MNIST and CIFAR-10 to verify the theoretical results in Theorem~\ref{thm:non-convex} and Theorem~\ref{thm:partial-participation}. Results with synthetic data for quadratic objective functions are also provided to verify the insights shown by Theorem~\ref{thm:quadratic}.

The experimental setting is as follows.
For training CNN with CIFAR-10, we partition the training dataset into $100$ workers, and we uniformly sample $10$ workers in each round. For other datasets and models, we partition the training dataset into $10$ workers and use full participation.
For the non-IID setting, the data on each worker is sampled in two steps. First, $X\%$ of the data on one worker is sampled from a single label, and we say that the percentage of heterogeneous data on this worker is $X\%$. Then, we uniformly partition the remaining data into all workers. 
Additional experimental details and results can be found in Appendix~\ref{sec:appendix-experiments}. 

\begin{table}[tb]
\centering
\caption{Estimated $L_h$, $\tilde{L}$, $L_g$ for  MLP with MNIST, CNN and VGG-11 with CIFAR-10 and VGG-16 with CIFAR-100. Since $L_g$ only depends on the global dataset, $L_g$ does not change with the percentage of non-IID (NIID) data. }\label{tab:estimate-L}
\scriptsize
\begin{NiceTabular}{p{0.7cm}|p{0.5cm}|p{1.8cm}|p{1.8cm}|p{0.7cm}}[hlines]
Obj. & NIID & $\tilde{L}$ & $L_h$ & $L_g$ \\
\Block{4-1}{MLP} & $25\%$  & $130.97\pm 11.67$&$0.82 \pm 0.11$& \Block{4-1}{$122.23$\\ $\pm 9.75$} \\
&$50\%$ & $130.97\pm 11.67$&$0.82 \pm 0.11$& \\
&$75\%$ &  $134.24 \pm 12.23$ & $1.66\pm 0.23$& \\
&$100\%$ & $141.92 \pm 12.78$ & $2.36\pm 0.29$& \\
\Block{4-1}{CNN} & $25\%$ & $447.59 \pm 22.27$ &  $0.96\pm 0.13$& \Block{4-1}{$323.35$\\ $\pm 15.36$} \\
& $50\%$ & $898.49 \pm 38.57$ & $1.21\pm 0.19$& \\
& $75\%$ & $1131.36 \pm 47.82$ & $1.63\pm 0.26$& \\
& $100\%$ & $1662.24\pm 62.18$ & $2.15\pm 0.34$& \\
\Block{4-1}{VGG-\\11}&  $25\%$ & $161.07\pm 3.07$  & $9.14 \pm 0.32$ &  \Block{4-1}{$151.95$\\$\pm 5.42$} \\
& $50\%$ & $246.68.\pm 7.67$  & $10.47\pm 0.35$ & \\
&$75\%$ & $338.07\pm 8.43$ & $12.49 \pm 0.41$  &\\
& $100\%$ & $536.92\pm 10.43$  & $17.83\pm 1.12$ & \\
\Block{4-1}{VGG-\\16} & $25\%$ & $363.11\pm 3.87$ & $2.53\pm 0.12$ & \Block{4-1}{$360.85$\\ $\pm 1.86$} \\
& $50\%$ & $365.39\pm 1.1$ &  $3.26\pm 0.12$ & \\
& $75\%$ & $379.91\pm 8.82$ & $3.96 \pm 0.19 $ & \\
& $100\%$ &  $431.07\pm 25.10$ & $4.67\pm 0.27$ & 
\end{NiceTabular}
\end{table}

\textbf{The properties of $\tilde{L}$, $L_h$ and $L_g$ are verified.} The experimental results for estimating $\tilde{L}$, $L_h$ and $L_g$ are shown in Table~\ref{tab:estimate-L}. First, it can be seen that $\tilde{L}$ grows fast as the percentage of the data heterogeneity increases, which implies that $\tilde{L}$ %
is related to the data heterogeneity in addition to the smoothness of local objective function. Second, we observe that both $L_g$ and $L_h$ are smaller than $\tilde{L}$. This verifies the theoretical results in Proposition~\ref{lemma:local-assump}. Furthermore, the results in Table~\ref{tab:estimate-L} show that $L_h$ can be much smaller than $\tilde{L}$. This means that when characterizing the error caused by local updates, substituting $\tilde{L}$ by $L_h$ can reduce the convergence upper bound. %

\textbf{The theoretical results in Theorem~\ref{thm:non-convex} and Theorem~\ref{thm:partial-participation} are verified.} In Figure~\ref{fig:cnn-mlp}, the convergence results for CNN with partial participation and MLP with full participation are provided.
In Table~\ref{tab:estimate-L}, we see that $L_h$ is relatively small in these cases. According to Theorem~\ref{thm:non-convex} and Theorem~\ref{thm:partial-participation}, for both full participation and partial participation, when $L_h$ is small, the error caused by local updates is small, so a large $I$ can still improve convergence. The experimental results in Figure~\ref{fig:cnn-mlp} verify the theoretical results, because even when the percentage of heterogeneous data is more than $50\%$, the largest $I$ ($I=80$ for CNN and $I=40$ for MLP) can still achieve the smallest training loss when $R$ is fixed.

\begin{table}[t]
    \centering
    \caption{Special case of $L_h=0$ with the quadratic objective functions. $I=1$ is equivalent to mini-batch SGD. The number of rounds is the communication rounds needed to achieve the target of $f(\x) = 0.8$. 
    For varying $(\eta,\gamma)$, we fix $I=10$ and for varying $(I,s)$, we fix $\eta=1$, $\gamma=0.005$.}
    \label{tab:quadratic-bs}
    \scriptsize
    \begin{tabular}{c|c|c|c|c}
         \hline
         $(\eta, \gamma)$& ($1,0.005)$&$(2, 0.0025)$&$(5, 0.001)$ &$(10,0.0005)$ \\
         \hline
         \# Rounds& $86\pm 1.6$ & $86\pm 1.6$ &$86\pm 1.6$ & $86\pm 1.6$ \\
         \hline
         \hline
         $(I,s)$& $ (1,1) \& (1,5)$& $(1,10)$ & $(5,1)$ & $(10,1)$ \\
         \hline
         \# Rounds & $927\pm 3.4$ & $925 \pm 1.7$ & $187 \pm 2.3$ & $95\pm 2.4$\\
         \hline
    \end{tabular}
\end{table}

\textbf{The insights gained from the analysis for quadratic objective functions are verified.} %
We construct quadratic examples to verify Theorem~\ref{thm:quadratic}. %
We consider $F_i(\x) = \frac{1}{2}\normsq{\mathbf{U} \x - \mathbf{v}_i}$, where $\mathbf{U}\in \mathbb{R}^{100\times 100}$, $\mathbf{v}_i \in \mathbb{R}^{100}$. Each column of $\mathbf{U}$ 
and $\mathbf{v}_i$ is sampled from a normal distribution $\mathcal{N}(\mathbf{0},\mathbf{I})$. In this case, the gradient divergence is $\normsq{\mathbf{U}(\mathbf{v}_i-\mathbf{v})}>0$. 
We set the stochastic gradient variance as $\sigma^2 = 0.01$.
To distinguish the number of local updates from the mini-batch size in the experiments, we use a separate variable $s$ to indicate the mini-batch size. 
Theorem~\ref{thm:non-convex} shows that when $L_h=0$, using two-sided learning rates does not have advantage over a single learning rate. This is validated by the experiments shown in Table~\ref{tab:quadratic-bs}, where there is no difference among the results with different learning rates when keeping the product of $\gamma$ and $\eta$. %
When $s=1$, comparing the results with $I=1$, $I=5$, and $I=10$ in Table~\ref{tab:quadratic-bs}, we see that more local updates can reduce the number of rounds to achieve $f(\x) = 0.8$ (an arbitrarily chosen target value),
which validates the results in Theorem~\ref{thm:quadratic}. 
By the comparison between the results of $I=1,s=5$ and $I=5, s=1$ and the comparison between the results of $I=1,s=10$ and $I=10, s=1$, we can see that keeping the number of gradients sampled in one round the same, local SGD ($I>1$) converges faster than mini-batch SGD ($I=1$) when $\sigma^2$ is small (since %
$\sigma^2 = 0.01$ in this case),
which validates the discussion for Theorem~\ref{thm:quadratic}.

\section{Conclusion}%

In this paper, we bridged the gap between the pessimistic theoretical results and the good experimental performance for FL by introducing a new theoretical perspective of the data heterogeneity, %
named the heterogeneity-driven pseudo-Lipschitz assumption, which can characterize the difference between the averaged model and the centralized model. This is the key to explain the benefit of local updates, especially when the gradient divergence is large. %
Using this assumption, we developed new analytical approaches to derive convergence upper bounds for FedAvg and its extensions, and for both non-convex and quadratic functions. These bounds can be much smaller than those in the literature and can better explain the effect of data heterogeneity using the heterogeneity-driven pseudo-Lipschitz constant. As a by-product, our approach can identify a region where local SGD can outperform mini-batch SGD without any constraint on the gradient divergence. All theoretical findings were also validated using experiments.

\section*{Impact Statement}
This paper presents work whose goal is to advance the field of machine learning. There are many potential societal consequences of our work, none of which we feel must be specifically highlighted here.

\section*{Acknowledgment}
This work was supported in part by NSF CAREER Award
2145835 and NSF Award 2312227. 

\bibliography{references}
\bibliographystyle{icml2024}

\newpage
\appendix
\onecolumn

\numberwithin{equation}{section}
\counterwithin{figure}{section}
\counterwithin{theorem}{section}

\counterwithin{table}{section}

\section*{Appendix}

\startcontents[sections]
\printcontents[sections]{l}{1}{\setcounter{tocdepth}{2}}

\clearpage

This Appendix is composed of three sections. 
Appendix~\ref{sec:add-related-works} provides more details of related work and additional theoretical results are presented. 
Appendix~\ref{sec:appendix-proofs} provides all the proofs for theorems, corollaries and propositions in this paper. Appendix~\ref{sec:appendix-experiments} provides additional details and results of experiments. 

\section{Additional Discussions}\label{sec:add-related-works}
In this section, first we provide more discussions on related work. Second, we present the convergence analysis for the quadratic objective functions with $L_h>0$, where we can also identify a parameter region where local SGD can be better than mini-batch SGD. Then we show that $L_h$ and $L_g$ can be successfully applied in the analysis for FedAvg with momentum~\citep{yu2019linear} and FedAdam~\citep{reddi2020adaptive}.

\subsection{Additional Details of Related Work}

There has been considerable work analyzing the convergence rate of federated learning algorithms (not limited to FedAvg), with non-convex objective functions~\citep{NEURIPS2019_c17028c9,YuAAAI2019,CooperativeSGD,karimireddy2020scaffold,reddi2020adaptive}. A key step shared by these analyses is to relate the difference of gradients, 
\begin{align*}
\left\| \frac{1}{N}\sum_{i=1}^N \nabla F_i(\x_i) - \nabla f(\bar{\x})\right\|^2,
\end{align*}
to the model divergence, 
\begin{align}
\label{eq: model divergence}
\frac{1}{N}\sum_{i=1}^N\left\|  \x_i - \bar{\x} \right\|^2,
\end{align}
which can be found, for example, in inequality (10) in the supplementary of \citet{YuAAAI2019}, the inequality (6) in the supplementary of \citet{reddi2020adaptive}, and the proof of Lemma~19 in~\citet{karimireddy2020scaffold}. In this step, the local Lipschitz gradient assumption (Assumption~\ref{assumption:local-lipschitz-gradient}) is often applied, which amplifies the effect of data heterogeneity. In this paper, the pseudo-Lipschitz constant $L_h$ is applied in this step so that the convergence error can be much smaller than that based on the local Lipshictz constant $\tilde{L}$, since it can be seen in Table~\ref{tab:estimate-L} that $L_h$ is often much smaller than $\tilde{L}$. Therefore, we believe that our techniques can also be applied to %
federated learning algorithms to improve the convergence analysis.

There are two papers \citep{wang2022unreasonable,das2022faster} closely related %
to our work. Both works assume the Lipschitz gradient for each local objective function while we only assume it for the global objective function. 
\citet{wang2022unreasonable} %
aim to re-characterize the data heterogeneity by extending the single gradient divergence assumption ((4) in \citet{wang2022unreasonable}) to the averaged gradient divergence assumption ((15) in \citet{wang2022unreasonable}).  
In addition, \citet{wang2022unreasonable} consider the convex objective function and their analysis cannot guarantee convergence to a stationary point while we consider general non-convex objective function and our results can guarantee convergence to a stationary point. 

In the following, we provide more details about the difference between \citet{wang2022unreasonable} and our paper. In \citet{wang2022unreasonable}, a new metric for data heterogeneity, $\rho$, the average drift at optimum, is proposed. The definition of $\rho$ is
\begin{align}
\rho=\left\|\frac{1}{\gamma I}\left(\frac{1}{N}\sum_{i=1}^N\mathbf{x}_i^{r,I} - \bar{\mathbf{x}}^r \right)\right\|.
\end{align}

We discuss the difference between \citet{wang2022unreasonable} and our paper in the following three aspects.

First, the new metric $\rho$ in \citet{wang2022unreasonable} focuses on the difference between models while in our paper, we still focus on the difference between the gradients. The key insight in \citet{wang2022unreasonable} is that %
if $\rho$ is small, when the global model is $\mathbf{x}^*$, after multiple local updates, the averaged model does not change significantly. 
In our paper, the key insight is that since $L_h$ can be small, the difference between the current global gradient and the current averaged local gradients can be small.
In \citet{wang2022unreasonable}, the gradient divergence (Assumption~\ref{assumption:bounded-gradient-divergence}) is not used in the analysis.
In our analysis, we still use the gradient divergence jointly with the proposed $L_h$ to characterize the data heterogeneity.

Second, in \citet{wang2022unreasonable}, it is only empirically shown that $\rho$ can be small. In our paper, we not only empirically %
demonstrate that $L_h$ can be small, but also mathematically proved that $L_h$ is smaller than or equal to $\tilde{L}$ and provide an analytical example to show the exact values of $L_h$ and $\tilde{L}$. Our quadratic example can be non-convex, a case which $\rho$ cannot cover.

Third, one weakness of using $\rho$ is that in the convergence upper bound in \citet{wang2022unreasonable}, the convergence error shown by $\rho$ may not vanish. This means that by choosing $\gamma = \frac{1}{\sqrt{R}}$, when $R$ goes to infinity, the convergence upper bound cannot guarantee that FedAvg can converge to the local minima of the global objective function. On the contrary, the convergence upper bound proved in this paper can guarantee the convergence to the local minima of the global objective function, which is shown by Corollary~\ref{cor:low-heterogeneity}.

\citet{das2022faster} introduce a parameter $\alpha$ to characterize the relationship between the difference of gradients to the model divergence shown in (\ref{eq: model divergence}).
This can also be done using the Assumption~\ref{assumption:gradient-to-model} and $L_h$. 
However, $\alpha$ cannot %
characterize the impact of $L_h$ %
and \citet{das2022faster} still assume Lipschitz gradient for each local objective function.
They only use $\alpha$ as an intermediate step instead of theoretically analyzing the effect of data heterogeneity.
In their theoretical results, the convergence error increases with $I$ even when $\alpha=0$.

\subsection{Additional Results for Quadratic Objective Functions}\label{sec:dis-quad-mini-local}
In this section, we provide a comparison between mini-batch SGD and local SGD for quadratic objective functions with $L_h>0$, i.e., $\mA_i \neq \mA, \forall i$.

Recall that we use $t$ to denote the index of the total number of iterations, where $0\le t \le RI-1$, and the averaged model $\hat{\x}^t$ is defined in (\ref{eq:t-index}).
In the appendix, we also define the local model on worker $i$ at the $t$th iteration as
\begin{align}
\x_i^t = \x_i^{r,k}, t=rI+k.
\end{align}

In the following, we introduce $\kappa$ to characterize the difference between eigenvalues of the Hessian matrices $\{\mA_i\}$. That is,
\begin{align}
\kappa := \max_{i,j} 1 - \frac{\lambda_j(\mA_i)}{\left\| \A_i\right\|_2},
\end{align}
where $\lambda_j(\mA_i)$ is the $j$th eigenvalue of $\mA_i$ and $0< \kappa\le 2$.

It can be seen that $\kappa$ is determined by the smallest eigenvalues of $\{\mA_i\}$. { 
Only when $\lambda_j(\mA_i)<0$ and $|\lambda_j(\mA_i)| = \left\| \A_i\right\|_2 $, }$\kappa$ is maximized and then we have $\kappa=2$.

When analyzing the quadratic objective functions, it is worth noting that the gradient divergence is given by
\begin{align}
\normsq{\nabla F_i(\x) - \nabla f(\x)} = \normsq{(\mA - \mA_i)\x+ \bb - \bb_i},\forall i.
\end{align}
It can be seen that in this case, the gradient divergence cannot be bounded for all $\x\in\R^d$. Therefore, we apply the following assumption in the analysis for the quadratic objective functions.
\begin{assumption}[Weak Gradient Divergence]\label{assumption:weaker-gradient-divergence}
For FedAvg, with quadratic objective functions, for the global model $\bar{\x}\in \{\bar{\x}^0, \bar{\x}^1,\ldots, \bar{\x}^R \}$, we have
\begin{align}
\normsq{\nabla F_i(\bar{\x}) - \nabla f(\bar{\x})} \le \zeta_q^2.
\end{align}
\end{assumption}

\begin{theorem}[Quadratic Objective Functions with $L_h>0$]\label{thm:quad-localvsmini}
With $\gamma\le \min\left\{\frac{1}{\lambda_{\max}},\frac{1}{2L_h}\cdot \min\left\{\frac{1}{I}, \frac{([\varphi(\kappa)]^2-1)^3}{[\varphi(\kappa)]^{2(I+2)}} \right\} \right\}$, for local SGD with quadratic objective functions that satisfy Assumptions~\ref{assumption:bounded-stochastic-variance} and \ref{assumption:weaker-gradient-divergence}, we have 
\begin{align}
\min_{t\in [T]} \E\normsq{\nabla f(\hat{\x}^{t})}\le  \frac{4\mathcal{F}}{\gamma T}+ \frac{2\gamma L_g \sigma^2}{N} + 16\gamma^2L_h^2 I \cdot\phi(\kappa,I)\cdot\zeta_q^2 + 4\gamma^2L_h^2 \cdot\phi(\kappa,I)\cdot\sigma^2,
\end{align}
where $\lambda_{\max} = \max_i \left\| \A_i\right\|_2$,
\begin{align}
\phi(\kappa,k) =
\begin{cases}
      k & 0\le \kappa < 1\\
      \frac{\kappa^{2k}-1}{\kappa^2-1} & 1\le \kappa \le 2.
    \end{cases}
\end{align}
and 
\begin{align}
\varphi(\kappa)=
\begin{cases}
1 & 0\le \kappa <1\\
\kappa & 1\le \kappa \le 2.
\end{cases}
\end{align}
\end{theorem}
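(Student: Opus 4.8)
\textbf{Proof proposal for Theorem~\ref{thm:quad-localvsmini}.}

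The plan is to track the evolution of the ``virtual'' averaged iterate $\hat{\x}^t$ directly, rather than rounds-at-a-time, exploiting the quadratic structure so that gradients are affine. First I would write the averaged update as $\hat{\x}^{t+1} = \hat{\x}^t - \gamma\cdot\frac{1}{N}\sum_{i=1}^N \g_i(\x_i^t)$ and decompose $\frac{1}{N}\sum_i \g_i(\x_i^t) = \nabla f(\hat{\x}^t) + \big(\frac{1}{N}\sum_i \nabla F_i(\x_i^t) - \nabla f(\hat{\x}^t)\big) + \text{noise}$. Using the global Lipschitz/descent lemma on $f$ with step $\gamma\le 1/L_g$ gives the standard $\frac{\mathcal{F}}{\gamma T}$ and $\frac{\gamma L_g\sigma^2}{N}$ terms, plus a cross term governed by $\big\|\frac{1}{N}\sum_i \nabla F_i(\x_i^t) - \nabla f(\hat{\x}^t)\big\|^2$, which by Assumption~\ref{assumption:gradient-to-model} is at most $\frac{L_h^2}{N}\sum_i \|\x_i^t - \hat{\x}^t\|^2$, the model divergence at iteration $t$.

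The heart of the argument is a refined bound on the accumulated model divergence $\sum_{t} \frac{1}{N}\sum_i \|\x_i^t - \hat{\x}^t\|^2$ that is \emph{sharper} than the usual one in the non-convex case (which pays a factor $\tilde L^2$ here replaced by $L_h^2$ but still loses an $I^2$). The key observation I would use is that, for quadratics with heterogeneous Hessians, the drift $\x_i^{r,k} - \hat{\x}^{r,k}$ obeys a linear recursion: subtracting the averaged update from the local update, the difference $\ve_i^{r,k} := \x_i^{r,k} - \hat{\x}^{r,k}$ satisfies $\ve_i^{r,k+1} = (\mI - \gamma\mA_i)\ve_i^{r,k} - \gamma(\mA_i - \mA)\hat{\x}^{r,k} - \gamma(\text{noise difference})$, using $\nabla F_i(\x) - \frac{1}{N}\sum_j\nabla F_j(\x_j) = \mA_i\x_i + \vb_i - \frac{1}{N}\sum_j(\mA_j\x_j+\vb_j)$ and the fact $\ve_i$ starts at $0$ at $k=0$. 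Unrolling this recursion, the growth of $\|\mI-\gamma\mA_i\|$ over $k\le I$ iterations is controlled by $\varphi(\kappa)$: when all eigenvalues of $\gamma\mA_i$ lie in $[0,1]$ (i.e. $\kappa<1$) the operator is a contraction so the accumulation is merely linear in $k$, giving $\phi(\kappa,k)=k$; when some eigenvalue is negative with $|\lambda_j(\mA_i)| = \|\mA_i\|_2$ (so $\kappa$ up to $2$), $\|\mI-\gamma\mA_i\|$ can be as large as $\kappa$ per step, producing the geometric sum $\frac{\kappa^{2k}-1}{\kappa^2-1}$. This is exactly where the piecewise definitions of $\phi$ and $\varphi$ come from, and bounding the $\hat{\x}^{r,k}$ factor inside the drift recursion via Assumption~\ref{assumption:weaker-gradient-divergence} (which replaces the non-boundable $\zeta^2$ by $\zeta_q^2$ evaluated only at the global iterates) is what lets $\zeta_q^2$ appear multiplied by $L_h^2 I\cdot\phi(\kappa,I)$ rather than by $\tilde L^2 I^2$.

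I would then assemble the pieces: feed the model-divergence bound $\frac{1}{N}\sum_i\|\ve_i^{r,k}\|^2 = \mathcal{O}\big(\gamma^2\phi(\kappa,I)(\zeta_q^2 + \sigma^2)\big)$ (after also relating $\|\hat{\x}^{r,k}\|$-type quantities back to $\zeta_q$ and absorbing lower-order $\gamma$ terms) into the descent inequality, sum over $t\in[T]$, and divide by $T$. The step-size conditions $\gamma\le\frac{1}{2L_h}\min\{\frac{1}{I},\frac{([\varphi(\kappa)]^2-1)^3}{[\varphi(\kappa)]^{2(I+2)}}\}$ are precisely what is needed to ensure the drift-feedback term (model divergence contributing back to $\|\nabla f(\hat{\x}^t)\|^2$ via the $L_h^2$ coupling) is dominated and can be moved to the left-hand side, and to keep the geometric factor $\varphi(\kappa)^{2(I+2)}$ from blowing up the bound — the cubic power and the $I+2$ exponent come from needing the product $\gamma L_h\varphi(\kappa)^{I}$ to be small enough through a telescoping/Young's-inequality argument.

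The main obstacle I anticipate is the drift recursion analysis in the non-contractive regime $1\le\kappa\le 2$: one must carefully bound $\big\|\sum_{j=0}^{k-1}\prod_{\ell=j+1}^{k-1}(\mI-\gamma\mA_i)\cdot(\mA_i-\mA)\hat{\x}^{r,j}\big\|$ without the operators being contractions, which forces the geometric sum $\phi(\kappa,I)$ and requires the delicate step-size restriction to close the loop; handling the stochastic noise terms inside this recursion (cross terms between noise at different local steps vanish in expectation, but their accumulated variance still scales with $\phi(\kappa,I)$) adds bookkeeping. Everything else — the descent lemma on $f$, applying Assumptions~\ref{assumption:gradient-to-model} and \ref{assumption:weaker-gradient-divergence}, and the final telescoping — is routine by comparison.
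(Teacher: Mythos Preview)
Your overall architecture --- descent lemma on $f$ at the virtual average $\hat{\x}^t$, control of the bias via Assumption~\ref{assumption:gradient-to-model}, then a quadratic-specific bound on the model drift --- matches the paper. But there is a concrete gap in your drift analysis. You write the recursion $\ve_i^{r,k+1}=(\mI-\gamma\mA_i)\ve_i^{r,k}-\gamma(\mA_i-\mA)\hat{\x}^{r,k}-\gamma(\text{noise})$ and propose to bound the forcing $(\mA_i-\mA)\hat{\x}^{r,k}$ using Assumption~\ref{assumption:weaker-gradient-divergence}. This is the step that fails: Assumption~\ref{assumption:weaker-gradient-divergence} only bounds $\|\nabla F_i(\bar{\x})-\nabla f(\bar{\x})\|$ at the \emph{global} iterates $\bar{\x}\in\{\bar{\x}^0,\ldots,\bar{\x}^R\}$, not at the intermediate virtual averages $\hat{\x}^{r,k}$. (The relevant forcing is actually $(\mA_i-\mA)\hat{\x}^{r,k}+(\bb_i-\bb)=\nabla F_i(\hat{\x}^{r,k})-\nabla f(\hat{\x}^{r,k})$; your recursion omits the $\bb_i-\bb$ piece and the cross-coupling $\frac{\gamma}{N}\sum_j\mA_j\ve_j^{r,k}$.) Without a bound at $\hat{\x}^{r,k}$ you cannot close the loop as written.

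The paper sidesteps this by unrolling the local iteration in closed form back to the \emph{start} of the round,
\[
\x_i^{r,k}=\bar{\x}^r-\gamma\sum_{l=0}^{k-1}(\mI-\gamma\mA_i)^l\,\nabla F_i(\bar{\x}^r)-\gamma\sum_{l=0}^{k-1}(\mI-\gamma\mA_i)^l\,\nn_i^{r,k-1-l},
\]
so every deterministic quantity is anchored at $\bar{\x}^r$. The drift $\x_i^{r,k}-\hat{\x}^{r,k}$ is then split into (i) $\gamma\sum_l(\mI-\gamma\mA_i)^l[\nabla F_i(\bar{\x}^r)-\nabla f(\bar{\x}^r)]$, which \emph{is} controlled by $\zeta_q$, and (ii) $\gamma\sum_l[(\mI-\gamma\mA_i)^l-(\mI-\gamma\mA)^l]\nabla f(\bar{\x}^r)$, where $L_h$ enters via $\|(\mI-\gamma\mA_i)^l-(\mI-\gamma\mA)^l\|\le l\,[\varphi(\kappa)]^{l-1}\gamma L_h$. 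Piece (i) yields $\gamma^2 k\,\phi(\kappa,k)\,\zeta_q^2$; piece (ii) yields a feedback term proportional to $\gamma^4 L_h^2\big(\sum_l l^2[\varphi(\kappa)]^{2l}\big)\|\nabla f(\bar{\x}^r)\|^2$. The cubic $([\varphi(\kappa)]^2-1)^3$ and the exponent $2(I+2)$ in the step-size condition do not come from a Young/telescoping device as you suggest, but from the elementary bound $\sum_{l=0}^{k-1}l^2 x^l\le k^2 x^{k+2}/(x-1)^3$ (obtained by differentiating the geometric series twice) with $x=[\varphi(\kappa)]^2$; the step-size condition is exactly what makes $4\gamma^5 L_h^4 I^3[\varphi(\kappa)]^{2(I+2)}/([\varphi(\kappa)]^2-1)^3\le\gamma/4$, so the feedback can be absorbed into the negative $-\tfrac{\gamma}{2}\|\nabla f\|^2$ term.
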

The proof can be found in Appendix~\ref{sec:proof-quad-vs}.
Compared to the theoretical results for general non-convex objective functions, the main improvement is on the choice of learning rate. We develop new techniques in the proof to achieve the improvement on the learning rate, which %
takes the advantage of the properties of quadratic objective functions. For the ease of comparison, the convergence bound for mini-batch SGD is %
as follows. With learning rate $\gamma \le \frac{1}{L_g}$,  { for mini-batch SGD  \citep{bottou2018optimization}}, we have 
\begin{align}
    \min_{t\in [T]} \E\normsq{\nabla f(\hat{\x}^{t})}=\mathcal{O}\left(\frac{\mathcal{F}}{\gamma R}+ \frac{\gamma L_g\sigma^2}{2NI} \right),
\end{align}
where $L_g = \|\A \|_2$.

Next, we will present the convergence rate for different values of $\sigma$. The following corollaries can be obtained directly from Theorem~\ref{thm:quad-localvsmini} by plugging in the corresponding learning rate $\gamma$. 

First, we consider the simplest case, that is $\sigma=0$.
\begin{corollary}[$\sigma=0$ for Quadratic Objective Functions] When $\sigma=0$, with $\gamma = \frac{1}{(RI)^{\frac{1}{3}}} $, for local SGD,  we have 
\begin{align}
    \min_{t\in [T]} \E\normsq{\nabla f(\hat{\x}^{t})}=\mathcal{O}\left( \frac{\mathcal{F}+L_h^2 I \cdot\phi(\kappa,I)\cdot\zeta_q^2}{(RI)^{\frac{2}{3}}} \right),
\end{align}
while for mini-batch SGD, we have
\begin{align}
\min_{t\in [T]} \E\normsq{\nabla f(\hat{\x}^{t})}= \mathcal{O}\left( \frac{\mathcal{F}L_g}{R}\right).
\end{align}
\end{corollary}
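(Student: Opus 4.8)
The plan is to simply specialize Theorem~\ref{thm:quad-localvsmini} to $\sigma = 0$ and then choose $\gamma$ to balance the two surviving error terms. When $\sigma=0$, the bound in Theorem~\ref{thm:quad-localvsmini} collapses to $\min_{t\in[T]}\E\normsq{\nabla f(\hat{\x}^t)} \le \frac{4\mathcal{F}}{\gamma T} + 16\gamma^2 L_h^2 I\cdot\phi(\kappa,I)\cdot\zeta_q^2$, since the two $\sigma^2$-terms vanish. Recalling $T = RI$, the first term is $\Theta(\frac{\mathcal{F}}{\gamma RI})$ and the second is $\Theta(\gamma^2 L_h^2 I\phi(\kappa,I)\zeta_q^2)$. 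Setting $\gamma = (RI)^{-1/3}$ makes the first term $\Theta(\mathcal{F}(RI)^{-2/3})$ and the second term $\Theta((RI)^{-2/3}L_h^2 I\phi(\kappa,I)\zeta_q^2)$, so both are of the same order $(RI)^{-2/3}$ and the claimed bound $\mathcal{O}\big(\frac{\mathcal{F} + L_h^2 I\phi(\kappa,I)\zeta_q^2}{(RI)^{2/3}}\big)$ follows.

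First I would check that the choice $\gamma = (RI)^{-1/3}$ is admissible, i.e., that it satisfies the hypothesis $\gamma \le \min\big\{\frac{1}{\lambda_{\max}}, \frac{1}{2L_h}\min\{\frac{1}{I}, \frac{([\varphi(\kappa)]^2-1)^3}{[\varphi(\kappa)]^{2(I+2)}}\}\big\}$ of Theorem~\ref{thm:quad-localvsmini}. For fixed problem parameters ($\lambda_{\max}$, $L_h$, $I$, $\kappa$), the right-hand side is a positive constant, while $(RI)^{-1/3}\to 0$ as $R\to\infty$; hence for all sufficiently large $R$ the constraint holds, which is the regime the corollary is about (it is an asymptotic-in-$R$ statement). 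For the mini-batch SGD comparison, I would invoke the mini-batch bound already recorded just below Theorem~\ref{thm:quad-localvsmini}, namely $\min_{t\in[T]}\E\normsq{\nabla f(\hat{\x}^t)} = \mathcal{O}(\frac{\mathcal{F}}{\gamma R} + \frac{\gamma L_g\sigma^2}{2NI})$; setting $\sigma = 0$ kills the second term, leaving $\mathcal{O}(\frac{\mathcal{F}}{\gamma R})$, and taking the largest allowed step $\gamma = \frac{1}{L_g}$ gives $\mathcal{O}(\frac{\mathcal{F}L_g}{R})$, as claimed.

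There is essentially no technical obstacle here: the corollary is a direct plug-in of a specific $\sigma$ value and a specific learning-rate schedule into Theorem~\ref{thm:quad-localvsmini} and the accompanying mini-batch bound, followed by elementary algebra on the exponents of $R$ and $I$. The only point requiring a word of care — and the closest thing to a ``hard part'' — is the feasibility check for $\gamma$: one must note that $\phi(\kappa,I)$ and the denominator factor $[\varphi(\kappa)]^{2(I+2)}$ depend on $I$ and $\kappa$ but not on $R$, so they are absorbed into the $\mathcal{O}(\cdot)$ constant and do not interfere with the $R\to\infty$ asymptotics, and that the step-size constraint is eventually satisfied. Everything else is routine substitution.
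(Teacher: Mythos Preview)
Your proposal is correct and follows essentially the same approach as the paper: the paper explicitly states that this corollary is ``obtained directly from Theorem~\ref{thm:quad-localvsmini} by plugging in the corresponding learning rate $\gamma$,'' which is precisely what you do for the local SGD part, and your mini-batch argument (set $\sigma=0$, take $\gamma=1/L_g$) matches the paper's treatment as well. Your explicit feasibility check for $\gamma$ is a welcome addition that the paper leaves implicit.
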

In this case, when $\frac{R^{\frac{1}{3}}}{I^{\frac{2}{3}}} < \frac{\mathcal{F}L_g}{\mathcal{F}+L_h^2I\phi(\kappa,I)\zeta_q^2}$, %
the convergence rate of local SGD is better than that of mini-batch SGD.

Second, we consider the case %
when $\sigma^2\le \frac{2N\mathcal{F}}{ L_g RI}$.
\begin{corollary}%
When $\sigma^2\le \frac{N\mathcal{F}}{\gamma^2 L_g RI}$, with $\gamma = \frac{1}{(RI)^\frac{1}{3}}$, for local SGD, we have 
\begin{align}
    \min_{t\in [T]} \E\normsq{\nabla f(\hat{\x}^{t})}=\mathcal{O}\left(\frac{\mathcal{F}+L_h^2 I \cdot\phi(\kappa,I)\cdot\zeta_q^2}{(RI)^{\frac{2}{3}}} + \frac{\mathcal{F}NL_h^2\phi(\kappa,I)}{L_gR^{\frac{5}{3}}I^{\frac{5}{3}}}\right),
\end{align}
while for mini-batch SGD, we have
\begin{align}
    \min_{t\in [T]} \E\normsq{\nabla f(\hat{\x}^{t})}=\mathcal{O}\left( \frac{\mathcal{F}L_g}{R}+\frac{\mathcal{F}L_g}{RI^2} \right).
\end{align}
\end{corollary}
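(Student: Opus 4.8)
The plan is to obtain both bounds as direct specializations of guarantees already established: Theorem~\ref{thm:quad-localvsmini} for local SGD and the quoted mini-batch SGD bound of \citet{bottou2018optimization} for the mini-batch case. In each case I would plug in the prescribed learning rate, set $T=RI$, and then invoke the noise assumption $\sigma^2 \le \frac{2N\mathcal{F}}{L_g RI}$ to collapse the stochastic-gradient terms. No new inequality is needed; the work is entirely substitution and rate bookkeeping.

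For local SGD, first I would verify that $\gamma = (RI)^{-1/3}$ is admissible, i.e.\ that it lies below $\min\{\frac{1}{\lambda_{\max}}, \frac{1}{2L_h}\min\{\frac{1}{I}, \frac{([\varphi(\kappa)]^2-1)^3}{[\varphi(\kappa)]^{2(I+2)}}\}\}$. For fixed $I$, $\kappa$, $L_h$, $\lambda_{\max}$ the right-hand side is a strictly positive constant while $(RI)^{-1/3}\to 0$ as $R\to\infty$, so the constraint holds for all sufficiently large $R$, which is the regime in which the $\mathcal{O}$-statement is read. Substituting $\gamma=(RI)^{-1/3}$ and $T=RI$ into the four-term bound of Theorem~\ref{thm:quad-localvsmini}, the initialization term $\frac{4\mathcal{F}}{\gamma T}$ becomes $4\mathcal{F}(RI)^{-2/3}$ and the gradient-divergence term $16\gamma^2 L_h^2 I\phi(\kappa,I)\zeta_q^2$ becomes $16 L_h^2 I\phi(\kappa,I)\zeta_q^2\,(RI)^{-2/3}$; these merge into the first bracket $\frac{\mathcal{F}+L_h^2 I\phi(\kappa,I)\zeta_q^2}{(RI)^{2/3}}$.

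The two remaining terms carry $\sigma^2$, and here I would apply the noise assumption. The dominant-noise term $\frac{2\gamma L_g\sigma^2}{N}$ becomes at most $4\mathcal{F}(RI)^{-4/3}$, which is absorbed by the $(RI)^{-2/3}$ first bracket, while the $L_h^2$-weighted term $4\gamma^2 L_h^2\phi(\kappa,I)\sigma^2$ becomes at most $\frac{8N\mathcal{F}L_h^2\phi(\kappa,I)}{L_g}(RI)^{-5/3}$, producing exactly the second bracket $\frac{\mathcal{F}NL_h^2\phi(\kappa,I)}{L_g R^{5/3}I^{5/3}}$. The mini-batch claim is even shorter: setting $\gamma=\frac{1}{L_g}$ in $\mathcal{O}\!\left(\frac{\mathcal{F}}{\gamma R}+\frac{\gamma L_g\sigma^2}{2NI}\right)$ gives $\frac{\mathcal{F}L_g}{R}+\frac{\sigma^2}{2NI}$, and bounding the second term by the same noise assumption yields $\frac{\mathcal{F}L_g}{RI^2}$.

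The step requiring the most care is the bookkeeping that separates the two $\sigma^2$ contributions: the assumption must be applied so that the leading-order noise term is subsumed into the $(RI)^{-2/3}$ bracket while the $L_h^2$-weighted term is retained at order $(RI)^{-5/3}$. Getting these exponents right hinges on reading the threshold as $\sigma^2\lesssim \frac{N\mathcal{F}}{L_g RI}$ rather than through a learning-rate-dependent rescaling, which would instead leave the weaker order $(RI)^{-1}$; this is the only place where a miscalibration of the noise bound would alter the stated rate. The admissibility check of the learning rate against the $\kappa$- and $I$-dependent constraint is otherwise routine once $R$ is taken large, and no part of the argument is a genuine obstacle beyond this accounting.
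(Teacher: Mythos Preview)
Your proposal is correct and matches the paper's approach exactly: the paper states that these corollaries ``can be obtained directly from Theorem~\ref{thm:quad-localvsmini} by plugging in the corresponding learning rate $\gamma$,'' and provides no further detail. Your observation that the noise threshold must be read as $\sigma^2\lesssim \frac{N\mathcal{F}}{L_g RI}$ (as the paper writes just before the corollary) rather than with the literal $\gamma^2$ in the denominator is exactly right---using the latter with $\gamma=(RI)^{-1/3}$ would indeed weaken the second bracket to order $(RI)^{-1}$, so the stated $(RI)^{-5/3}$ rate only follows under the $\gamma$-free reading.
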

Similarly, when $\frac{R^{\frac{1}{3}}}{I^{\frac{2}{3}}} < \frac{\mathcal{F}L_g}{\mathcal{F}+L_h^2I\phi(\kappa,I)\zeta_q^2}$, dominant term of local SGD is better than that of mini-batch SGD.

\subsection{Applying $L_h$ and $L_g$ in the Analysis for FedAvg with Momentum}\label{sec:fedavg-momentum}
In this section, we apply Assumptions~\ref{assumption:global-lipschitz-gradient} and \ref{assumption:gradient-to-model} in the analysis for FedAvg with Momentum in \citet{yu2019linear}. First, we introduce the notations and the algorithm for clarification. We summarize the FedAvg with momentum in Algorithm~\ref{algorithm:fed-momentum}. 

\begin{algorithm}[htb]
\caption{FedAvg with momentum (Algorithm 1 in \citet{yu2019linear})}
\label{algorithm:fed-momentum}
\SetNoFillComment
\KwIn{$\gamma$, $\hat{\x}^0$, $I$, $\beta$, $\hat{\bu}^0 = \mathbf{0}, \forall i$}
\KwOut{Global averaged model $\hat{\x}^{RI}$ }
\For{$t=0$ to $RI-1$}{
\For{Each worker $i$, in parallel}{
\If{$t = aI, 0\le a \le R-1$ }{
$\bu_i^t \leftarrow \hat{\bu}^t $;

$\x_i^t \leftarrow  \hat{\x}^t$;
}
Sample the stochastic gradient $\g_i(\x_i^t)$;

$\bu_i^{t+1} \leftarrow \beta \bu_i^t +\g_i(\x_i^t)$;

$\x_i^{t+1} \leftarrow \x_i^t - \gamma \bu_i^{t+1}$;
}
\If{$t = aI+I-1, 0\le a \le R-1$}{
$\hat{\bu}^{t+1} \leftarrow \frac{1}{N}\sum_{i=1}^N \bu_i^{t+1}$;

$\hat{\x}^{t+1} \leftarrow  \frac{1}{N}\sum_{i=1}^N \x_i^{t+1}$;
}
}
\end{algorithm}

The momentum of worker $i$ at $t$th iteration is denoted by $\bu_i^t\in \R^d$, where $t$ is the index of the total number of iterations. That is, $t = aI + b, a,b\in \mathbb{N}$ and $0\le a \le R-1, 0\le b\le I-1$. At the start of the algorithm, the momentum is initialized as zero. That is, $\bu_i^0 = \mathbf{0}, \forall i$. During local updates, we have 
\begin{align}
    \x_i^{t+1} = \x_i^t - \gamma \bu_i^{t+1},
\end{align}
and
\begin{align}
    \bu_i^{t+1} = \beta \bu_i^t +\g_i(\x_i^t),
\end{align}
where $\beta\in[0,1)$. After local updates, the momentum is reset as 
{ 
\begin{align}
    \bu_i^t = \frac{1}{N}\sum_{j=1}^N \bu_j^t, \forall t = (a+1)I.
\end{align}
Also, the local models are aggregated then updated as 
\begin{align}
    \x_i^t = \frac{1}{N}\sum_{j=1}^N \x_j^t, \forall t = (a+1)I.
\end{align}
}
Now we provide the theoretical results for the new analysis of FedAvg with momentum.
\begin{theorem}[FedAvg with Momentum]\label{thm:fedavg-momentum}
By $\gamma\le \min\{ \frac{(1-\beta)^2}{L_g(1+\beta)}, \frac{1-\beta}{\sqrt{18(L_g^2 + L_h^2)}I}
\}$ , for FedAvg with momentum in Algorithm~\ref{algorithm:fed-momentum}, with Assumptions~\ref{assumption:bounded-stochastic-variance},\ref{assumption:bounded-gradient-divergence}, \ref{assumption:global-lipschitz-gradient} and \ref{assumption:gradient-to-model}, we have
\begin{align}
		\frac{1}{T}\sum_{t=0}^{T-1}\E\normsq{\nabla f(\hat{\x}^t)}\le  \frac{2(1-\beta)(f_0-f_*)}{\gamma T}+\frac{\gamma L_g\sigma^2}{N(1-\beta)^2} + \frac{3\gamma^2L_h^2I\sigma^2}{(1-\beta)^2} + \frac{9\gamma^2L_h^2 I^2\zeta^2}{(1-\beta)^2}.
\end{align}
\end{theorem}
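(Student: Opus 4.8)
# Proof Proposal for Theorem~\ref{thm:fedavg-momentum}

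The plan is to adapt the classical descent-lemma argument for momentum SGD to the federated setting, replacing every use of the local Lipschitz constant $\tilde{L}$ by $L_g$ (for terms arising from the global smoothness) and $L_h$ (for the terms arising from model divergence during local updates). First I would introduce an auxiliary sequence that absorbs the momentum, namely $\z_i^t := \x_i^t - \frac{\beta}{1-\beta}\gamma\bu_i^t$ (or, in aggregated form, $\hat{\z}^t$), which is the standard trick so that the update on $\hat{\z}^t$ looks like a plain gradient step with step size $\frac{\gamma}{1-\beta}$ plus a perturbation. Applying Assumption~\ref{assumption:global-lipschitz-gradient} (global $L_g$-smoothness of $f$) to the sequence $\hat{\z}^t$ gives a one-step descent inequality of the form $\E f(\hat{\z}^{t+1}) \le \E f(\hat{\z}^t) - \frac{\gamma}{2(1-\beta)}\E\|\nabla f(\hat{\x}^t)\|^2 + (\text{error terms})$, where the error terms involve (i) the stochastic gradient variance, controlled by Assumption~\ref{assumption:bounded-stochastic-variance} and the $\frac{1}{N}$ averaging, (ii) the drift between $\hat{\z}^t$ and $\hat{\x}^t$, and (iii) the crucial discrepancy $\big\|\frac{1}{N}\sum_i \nabla F_i(\x_i^t) - \nabla f(\hat{\x}^t)\big\|^2$, which is exactly the quantity bounded by Assumption~\ref{assumption:gradient-to-model} in terms of $\frac{L_h^2}{N}\sum_i\|\x_i^t - \hat{\x}^t\|^2$.

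The second ingredient I would establish is a bound on the model-divergence term $\frac{1}{N}\sum_i \E\|\x_i^t - \hat{\x}^t\|^2$ accumulated over a communication round. Because the momentum buffers are synchronized at the start of each round, I would unroll the recursion $\x_i^{t+1} = \x_i^t - \gamma\bu_i^{t+1}$ together with $\bu_i^{t+1} = \beta\bu_i^t + \g_i(\x_i^t)$ from the last synchronization point, peel off the effect of $\beta$ via a geometric-series factor $\frac{1}{1-\beta}$, and bound the resulting sums using Assumption~\ref{assumption:bounded-stochastic-variance} for the noise part and Assumption~\ref{assumption:bounded-gradient-divergence} for the deterministic divergence part. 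This is the step that produces the $(I-1)$ and $(I-1)^2$ (here written as $I$ and $I^2$) factors: the $\sigma^2$ contribution scales linearly in the number of local steps, while the $\zeta^2$ contribution scales quadratically, each divided by $(1-\beta)^2$. The condition $\gamma \le \frac{1-\beta}{\sqrt{18(L_g^2+L_h^2)}I}$ is precisely what is needed to keep the self-referential term (the divergence bound depends, through $\nabla F_i(\x_i^t)$ and Assumption~\ref{assumption:gradient-to-model}, on the gradient norms and hence recursively on the divergence) small enough to be absorbed — so I would set this up as a "bootstrap" where the divergence bound feeds back into the descent inequality and the step-size restriction lets me move the feedback term to the left-hand side.

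Finally I would combine the per-step descent inequality with the accumulated divergence bound, sum over $t = 0, \ldots, T-1$, telescope $f(\hat{\z}^0) - f(\hat{\z}^T) \le f_0 - f_*$ (noting $\hat{\z}^0 = \hat{\x}^0$ since $\bu^0 = \mathbf{0}$), and divide through by $\frac{\gamma T}{2(1-\beta)}$ to isolate $\frac{1}{T}\sum_t \E\|\nabla f(\hat{\x}^t)\|^2$. The condition $\gamma \le \frac{(1-\beta)^2}{L_g(1+\beta)}$ handles the coefficient of the $\frac{\gamma L_g \sigma^2}{N(1-\beta)^2}$ variance term and ensures the $\hat{\z}$-to-$\hat{\x}$ drift does not overwhelm the negative gradient term. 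I expect the main obstacle to be the bootstrap in the second step: carefully tracking how the momentum buffer couples successive local iterations, showing that the $\beta$-weighted history only inflates the bounds by an explicit $\frac{1}{1-\beta}$ (or $\frac{1}{(1-\beta)^2}$) factor rather than something worse, and verifying that the stated step-size cap is exactly sufficient to close the recursion. The descent-lemma bookkeeping and the telescoping are routine once those two lemmas — the one-step descent and the accumulated model divergence — are in place.
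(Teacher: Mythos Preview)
Your proposal is correct and follows essentially the same route as the paper: the auxiliary sequence $\hat{\z}^t$ (which the paper writes equivalently as $\frac{1}{1-\beta}\hat{\x}^t-\frac{\beta}{1-\beta}\hat{\x}^{t-1}$), the global-$L_g$ descent inequality on $\hat{\z}^t$, the use of Assumption~\ref{assumption:gradient-to-model} to control $\big\|\frac{1}{N}\sum_i\nabla F_i(\x_i^t)-\nabla f(\hat{\x}^t)\big\|^2$, and a model-divergence lemma (the paper's Lemma~\ref{lemma:model-divergence-momentum}, built on Lemma~5 of \citet{yu2019linear} plus Lemma~\ref{lemma:local-gradient-deviation}) whose self-referential term is absorbed exactly via $\gamma\le\frac{1-\beta}{\sqrt{18(L_g^2+L_h^2)}I}$. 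The only small clarification is that the condition $\gamma\le\frac{(1-\beta)^2}{L_g(1+\beta)}$ is used in the paper to make the coefficient of $\big\|\frac{1}{N}\sum_i\nabla F_i(\x_i^t)\big\|^2$ non-positive so that term can be dropped, rather than to tune the variance coefficient.
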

{  The proof can be found in Section~\ref{sec:proof-fed-momentum}.}
For the ease of comparison, we present the convergence bound in Theorem 1 by \citet{yu2019linear} as follows.
\begin{align}
    \frac{1}{T}\sum_{t=0}^{T-1}\E\normsq{\nabla f(\hat{\x}^t)}\le  \frac{2(1-\beta)(f_0-f_*)}{\gamma T}+\frac{\gamma \tilde{L}\sigma^2}{N(1-\beta)^2} + \frac{3\gamma^2\tilde{L}^2I\sigma^2}{(1-\beta)^2} + \frac{9\gamma^2\tilde{L}^2 I^2\zeta^2}{(1-\beta)^2}.
\end{align}
It can be seen that the difference is that in Theorem~\ref{thm:fedavg-momentum}, $\tilde{L}$ is  substituted by $L_g$ and $L_h$. It has been shown in Proposition~\ref{lemma:local-assump} that $L_g\le \tilde{L}$ and $L_h \le \tilde{L}$. Therefore, by applying $L_h$ and $L_g$ in the analysis for FedAvg with momentum, we obtain a tighter convergence upper bound. Similar to Theorem~\ref{thm:non-convex}, the insights of $L_h$ can also be applied to FedAvg with momentum. That is, when $L_h$ is small, the error caused by local updates can still be small.

\subsection{Applying $L_h$ and $L_g$ in the Analysis for FedAdam}
\label{sec: FedAdam}

\begin{algorithm}[htb]
\caption{FedAdam in \cite{reddi2020adaptive}}
\label{alg:fedadam}
\SetNoFillComment
\KwIn{$\gamma$, $\eta$, $\bar{\x}^0$, $I$, $\tau$, $\beta_1$, $\beta_2$ }
\KwOut{Global averaged model $\bar{\x}^R$}
\For{$r=0$ to $R-1$}{
\For{Each worker $i$ in parallel}{
$\x_i^{r,0} \leftarrow \bar{\x}^r$;

\For{$k=0$ to $I-1$}{
Sample a gradient $\g_i(\x_i^{r,k})$;

$\x_i^{r,k+1}\leftarrow \x_i^{r,k}-\gamma \g_i(\x_i^{r,k})$;
}
$\Delta_r^i \leftarrow \x_i^{r,0} - \x_i^{r,I} $;
}
$\Delta_r = \frac{1}{N}\sum_{i=1}^N\Delta^i_r  $;

$\mathbf{m}^r\leftarrow \beta_1\mathbf{m}^{r-1} + (1-\beta_1)\Delta_r   $;

$\bv^r \leftarrow \beta_2\bv^{r-1} + (1-\beta_2)\Delta_r^2 $;

$\bar{\x}^{r+1}\leftarrow \bar{\x}^r - \eta \frac{\mathbf{m}^r}{\sqrt{\bv^r}+\tau}$;
}
\end{algorithm}

In this section, we present the theoretical results for the convergence analysis of FedAdam \citep{reddi2020adaptive}. 
We use $x_{i,j}$ to denote the $j$th element of the model from the $i$th worker, and $[\g_i(\x_i)]_j$ %
to denote the $j$th element of the gradient. In addition to Assumptions~\ref{assumption:global-lipschitz-gradient} and \ref{assumption:gradient-to-model}, we use the following assumption in \citet{reddi2020adaptive} for the analysis. { The algorithm can be found in Algorithm~\ref{alg:fedadam}.}

\begin{assumption}[Bounded Gradients (Assumption 3 in \citet{reddi2020adaptive})]\label{assumption:bounded-gradient}
\begin{align}
\left|[\g_i(\x)]_j\right| \le G, \forall \x\in \R^d, \forall i,j,
\end{align}
where $G$ is a positive constant.
\end{assumption}
Now we present the theoretical results. The proof can be found in Appendix~\ref{sec:proof-fedadam}.
\begin{theorem}\label{thm:fedadam}
Assuming Assumptions~\ref{assumption:bounded-stochastic-variance},\ref{assumption:bounded-gradient-divergence},\ref{assumption:global-lipschitz-gradient}, \ref{assumption:gradient-to-model} and \ref{assumption:bounded-gradient} hold, for FedAdam, with 
\begin{align*}
\gamma\le \min  \left\{\frac{1}{16L_gI}, \frac{1}{\sqrt{6(L_h^2+L_g^2)}I}, \frac{\tau^{\frac{1}{3}}}{16K(120L_g^2G)^{\frac{1}{3}}},\frac{\tau}{6(2G+\eta L_g)} \right\},
\end{align*}
we have
\begin{align}
\min_r\E\normsq{\nabla f(\bar{\x}^r)} \le &\left( \sqrt{\beta_2}\gamma IG +\tau \right) \left(  \frac{8(f_0-f_*)}{\gamma\eta I R} + \frac{\gamma L_g \sigma^2}{\tau N} + \frac{96\gamma^2I^2L_h^2\zeta^2}{\tau} + \frac{32\gamma^2L_hI\sigma^2}{\tau} \right) \notag \\
&+ \left( \sqrt{\beta_2}\gamma IG +\tau \right) \left(\sqrt{1-\beta_2}G + \frac{\eta L_g}{2}\right)\left(\frac{32\gamma }{N\tau^2}
    \sigma^2+ \frac{768\gamma^3 L_h^2 I^3\zeta^2}{\tau^2} + \frac{256\gamma^3L_h^2 I^2\sigma^2}{\tau^2}   \right).
\end{align}
\end{theorem}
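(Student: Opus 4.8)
\textbf{Proof proposal for Theorem~\ref{thm:fedadam}.}

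The plan is to adapt the descent-lemma analysis of \citet{reddi2020adaptive} for FedAdam, but to replace every invocation of the local Lipschitz constant $\tilde{L}$ by either $L_g$ (via Assumption~\ref{assumption:global-lipschitz-gradient}) or $L_h$ (via Assumption~\ref{assumption:gradient-to-model}), depending on whether the step controls the smoothness of the global objective or the deviation of averaged local gradients from the global gradient at the averaged model. First I would write the one-round change $f(\bar{\x}^{r+1}) - f(\bar{\x}^r)$ using $L_g$-smoothness of $f$, expand $\bar{\x}^{r+1} - \bar{\x}^r = -\eta \mathbf{m}^r/(\sqrt{\bv^r}+\tau)$, and split into the ``ideal'' update direction (proportional to $\frac{1}{N}\sum_i \sum_k \g_i(\x_i^{r,k})$) plus error terms. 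The preconditioner $\mathbf{m}^r/(\sqrt{\bv^r}+\tau)$ is handled exactly as in \citet{reddi2020adaptive}: using Assumption~\ref{assumption:bounded-gradient} one shows each coordinate of $\sqrt{\bv^r}+\tau$ lies between $\tau$ and $\sqrt{\beta_2}\gamma I G + \tau$ (roughly), so the preconditioner is sandwiched and the $1/\tau$ and $(\sqrt{\beta_2}\gamma I G + \tau)$ factors that appear multiplicatively in the final bound come out of this sandwiching, together with an additional term from $\|\mathbf{m}^r - \Delta_r\|$ controlled by $\sqrt{1-\beta_2}G$ and $\eta L_g/2$ (the second line of the claimed bound).

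The heart of the argument — and where $L_h$ enters — is bounding the inner product between $\nabla f(\bar{\x}^r)$ and $\frac{1}{I}\sum_{k=0}^{I-1}\frac{1}{N}\sum_i \nabla F_i(\x_i^{r,k})$. Here I would write $\frac{1}{N}\sum_i \nabla F_i(\x_i^{r,k}) = \nabla f(\hat{\x}^{r,k}) + \big(\frac{1}{N}\sum_i \nabla F_i(\x_i^{r,k}) - \nabla f(\hat{\x}^{r,k})\big)$, bound the first piece against $\nabla f(\bar{\x}^r)$ using $L_g$-smoothness and $\|\hat{\x}^{r,k} - \bar{\x}^r\|$, and bound the second piece in norm by $\frac{L_h^2}{N}\sum_i \|\x_i^{r,k} - \hat{\x}^{r,k}\|^2$ via Assumption~\ref{assumption:gradient-to-model}. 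This is precisely the substitution advertised in the introduction: the step that the literature handles with $\tilde{L}$ is handled here with $L_h$. Then I need the client-drift bound: a lemma (analogous to Lemma~\ref{lemma:model-divergence}) showing $\E\frac{1}{N}\sum_i \|\x_i^{r,k} - \hat{\x}^{r,k}\|^2 = \mathcal{O}(\gamma^2 (k)\sigma^2 + \gamma^2 k^2 \zeta^2)$ for $k \le I$, which follows by unrolling the local SGD recursion, using Assumptions~\ref{assumption:bounded-stochastic-variance} and \ref{assumption:bounded-gradient-divergence}, and the stepsize restriction $\gamma \le \frac{1}{\sqrt{6(L_h^2+L_g^2)}I}$ to absorb the recursive term; this yields the $\gamma^2 I^2\zeta^2$ and $\gamma^2 I\sigma^2$ contributions (with $L_h^2$ prefactors) visible in the bound. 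Similarly $\|\hat{\x}^{r,k} - \bar{\x}^r\|$ is controlled by a one-sided drift bound.

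Assembling: summing the per-round descent inequality over $r = 0,\dots,R-1$, telescoping $f_0 - f_*$, dividing by the accumulated coefficient of $\E\|\nabla f(\bar{\x}^r)\|^2$ (which is $\Theta(\gamma\eta I R/(\sqrt{\beta_2}\gamma I G+\tau))$ after the preconditioner sandwiching, explaining the leading $\frac{8(f_0-f_*)}{\gamma\eta IR}$ and the overall $(\sqrt{\beta_2}\gamma IG+\tau)$ prefactor), and collecting the noise and drift terms gives the stated result; the extra stepsize conditions $\gamma \le \tau^{1/3}/(16K(120L_g^2G)^{1/3})$ and $\gamma \le \tau/(6(2G+\eta L_g))$ are exactly what is needed to ensure the negative $-\frac{1}{2}\E\|\nabla f\|^2$ term dominates the positive higher-order-in-$\gamma$ terms generated by the preconditioner perturbation. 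I expect the main obstacle to be the careful bookkeeping of the adaptive preconditioner: unlike plain FedAvg, the effective stepsize $\eta/(\sqrt{\bv^r}+\tau)$ is random and coordinate-dependent, so one must control both its deviation from a deterministic surrogate and the bias it introduces in the descent inner product, and it is in reconciling this with the $L_h$-based drift bounds (rather than $\tilde L$-based ones, which the original proof used more freely) that the delicate part lies; everything else is routine unrolling and Young's-inequality splitting.
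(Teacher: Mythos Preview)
Your proposal is correct and follows essentially the same route as the paper's proof: start from the $L_g$-smoothness descent inequality, handle the adaptive denominator by the surrogate $\sqrt{\beta_2\bv^{r-1}}+\tau$ and the sandwiching from \citet{reddi2020adaptive}, and at the key step split $\frac{1}{N}\sum_i\nabla F_i(\x_i^{r,k})-\nabla f(\bar\x^r)$ through $\nabla f(\hat\x^{r,k})$ so that Assumption~\ref{assumption:gradient-to-model} supplies $L_h$ and Assumption~\ref{assumption:global-lipschitz-gradient} supplies $L_g$, then invoke Lemmas~\ref{lemma:model-divergence} and~\ref{lemma:distance-averaged-models} and telescope. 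One small misattribution: the $\sqrt{1-\beta_2}\,G$ factor in the second line of the bound does not come from $\|\mathbf m^r-\Delta_r\|$ (the paper's proof effectively takes $\beta_1=0$, so $\mathbf m^r=\Delta_r$) but from the discrepancy $\frac{\Delta_r}{\sqrt{\bv^r}+\tau}-\frac{\Delta_r}{\sqrt{\beta_2\bv^{r-1}}+\tau}$; this does not affect your overall plan.
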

From Theorem~\ref{thm:fedadam}, it can be seen that after applying $L_h$ and $L_g$ in the analysis for FedAdam, the error caused by local updates such as $\mathcal{O}(\gamma^2L_h^2I^2\zeta^2)$ and $\mathcal{O}(\gamma^2L_h^2I\sigma^2)$ are related to $L_h$. 
In Theorem~\ref{thm:non-convex}, it has been shown that when $L_h$ is small, the error caused by local updates can be small.
We see here that the insights shown by $L_h$ can also be applied to FedAdam.

{ 
\subsection{Applying $L_h$ and $L_g$ in the Analysis for Strongly Convex Objective Functions}\label{sec:strong-convex}
In this section, we provide the theoretical results for the convergence analysis of FedAvg with strongly convex objective functions. For strongly convex objective functions, we have the following assumption.

\begin{assumption}\label{assump:strong-convex}
The local objective function $F_i(\x)$ is $\mu$-convex for $\mu>0$ and satisfies
\begin{align}
    \left\langle \nabla F_i(\x), \y-\x \right\rangle \le -\left( F_i(\x) - F_i(\y) + \frac{\mu}{2} \normsq{\x-\y} \right), \forall i, \x,\y. 
\end{align}
\end{assumption}
Assumption~\ref{assump:strong-convex} implies that the global objective function $f(\x)$ is also $\mu$-convex, since we have
\begin{align}
    &\left\langle \nabla f(\x), \y-\x \right\rangle \notag \\
    &= \frac{1}{N}\sum_{i=1}^N \left\langle \nabla F_i(\x), \y-\x \right\rangle \notag \\
    &\le -\frac{1}{N}\sum_{i=1}^N \left( F_i(\x) - F_i(\y) + \frac{\mu}{2} \normsq{\x-\y} \right) \notag \\
    &= - \left( f(\x) - f(\y) + \frac{\mu}{2} \normsq{\x-\y} \right). 
\end{align}
The theoretical results for strongly convex objective functions are as follows.

\begin{theorem}[$\mu$-convex]\label{thm:mu-convex}
For $\mu$-strongly convex objective functions, which satisfy Assumption~\ref{assump:strong-convex},  
with Assumptions~\ref{assumption:bounded-stochastic-variance}, \ref{assumption:bounded-gradient-divergence}, \ref{assumption:global-lipschitz-gradient}, \ref{assumption:gradient-to-model},  $\gamma\eta \le \min\{\frac{1}{16L_gI}, \frac{1}{4L_hI} \}$ and $\gamma \le \min \{\frac{1}{24L_gI}\sqrt{\frac{\mu}{L_g}}, \frac{1}{\sqrt{6(L_h^2+L_g^2)}I} \}$,  we have the following convergence upper bound.
\begin{align}
\mathbb{E}[f(\bar{\x}^R)] - f^* = \mathcal{O}\left( \mu\|\bar{\mathbf{x}}^0 - \mathbf{x}^* \|^2 \exp(-\mu\gamma\eta IR) +  \frac{\gamma\eta\sigma^2 }{N} +  \frac{\gamma^2(L_g^2/N+L_h^2 ) I\sigma^2}{\mu} + \frac{\gamma^2L_h^2I^2\zeta^2}{\mu}\right).
\end{align}
\end{theorem}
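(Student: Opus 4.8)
The plan is to follow the standard descent-lemma route for FedAvg with two-sided learning rates, but carefully track where the Lipschitz constant enters so that $\tilde L$ can be split into $L_g$ (smoothness of $f$) and $L_h$ (the heterogeneity-driven pseudo-Lipschitz constant). First I would establish the per-round descent inequality: using Assumption~\ref{assumption:global-lipschitz-gradient} on $f$, write
\begin{align*}
\E[f(\bar{\x}^{r+1})] \le \E[f(\bar{\x}^r)] - \frac{\gamma\eta I}{2}\E\normsq{\nabla f(\bar{\x}^r)} + \text{(drift terms)} + \frac{L_g(\gamma\eta I)^2}{2}\E\normsq{\tfrac{1}{N}\sum_i \tfrac{1}{I}\sum_k \g_i(\x_i^{r,k})},
\end{align*}
after using the update rule $\bar{\x}^{r+1} = \bar{\x}^r - \gamma\eta\cdot\frac1N\sum_i\sum_k \g_i(\x_i^{r,k})$. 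The cross term between $\nabla f(\bar{\x}^r)$ and the averaged local gradients is handled by adding and subtracting $\nabla f(\bar{\x}^r)$: this is exactly where Assumption~\ref{assumption:gradient-to-model} gets used, converting $\normsq{\frac1N\sum_i\nabla F_i(\x_i^{r,k}) - \nabla f(\bar{\x}^r)}$ into $\frac{L_h^2}{N}\sum_i\normsq{\x_i^{r,k}-\bar{\x}^r}$, i.e. into model divergence. For strong convexity, I would additionally invoke Assumption~\ref{assump:strong-convex} (and its consequence for $f$ shown in the excerpt) to extract the contraction factor, replacing the $-\frac{\gamma\eta I}{2}\E\normsq{\nabla f}$ descent with a term of the form $-\mu\gamma\eta I\,(\E[f(\bar{\x}^r)]-f^*)$ plus controllable residuals, yielding a recursion $a_{r+1} \le (1-\mu\gamma\eta I)a_r + (\text{noise/divergence})$.

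Second I would bound the model divergence $\frac1N\sum_i\E\normsq{\x_i^{r,k}-\bar{\x}^r}$, which is the content of a lemma like Lemma~\ref{lemma:model-divergence} (referenced but not shown). The standard argument: unrolling $k$ local steps, the divergence accumulates $\gamma^2$ times $k$ (or $I$) copies of the stochastic variance $\sigma^2$ (via Assumption~\ref{assumption:bounded-stochastic-variance}) plus $\gamma^2 k^2$ times the gradient divergence $\zeta^2$ (via Assumption~\ref{assumption:bounded-gradient-divergence}), with the expansion of $\nabla F_i$ around $\bar{\x}^r$ controlled by $\tilde L$ — but here we want only $L_g$ and $L_h$; this is the delicate accounting, since the local trajectories genuinely need smoothness of each $F_i$. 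The resolution (and the technical novelty the paper advertises) is that the divergence only feeds back into the bound through the averaged quantity in Assumption~\ref{assumption:gradient-to-model}, so the self-referential $\tilde L$ term can be replaced by $L_h$ and $L_g$ after a Jensen/Young's-inequality bookkeeping step, using the learning-rate constraints $\gamma\le \frac{1}{\sqrt{6(L_h^2+L_g^2)}I}$ to absorb the feedback term with coefficient $<1$. Roughly, one gets $\frac1N\sum_i\E\normsq{\x_i^{r,k}-\bar{\x}^r} = \mathcal{O}\big(\gamma^2 I\sigma^2 + \gamma^2 I^2\zeta^2\big)$ after this absorption.

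Third I would combine: substitute the divergence bound back into the descent recursion, and telescope/unroll the geometric recursion $a_{r+1}\le(1-\mu\gamma\eta I)a_r + B$ over $R$ rounds to get $a_R \le (1-\mu\gamma\eta I)^R a_0 + \frac{B}{\mu\gamma\eta I} \le \exp(-\mu\gamma\eta IR)a_0 + \frac{B}{\mu\gamma\eta I}$. The constant $B$ collects: the global-noise term $\mathcal{O}(\gamma\eta L_g\sigma^2/N)$ contributing $\mathcal{O}(\gamma\eta\sigma^2/N)$ after dividing by $\mu\gamma\eta I$ (wait — one should keep $I$ in denominators carefully, matching the claimed $\frac{\gamma\eta\sigma^2}{N}$), the local-drift noise $\mathcal{O}(\gamma^2(L_g^2/N+L_h^2)I\sigma^2)$, and the heterogeneity term $\mathcal{O}(\gamma^2 L_h^2 I^2\zeta^2)$, both divided by $\mu$ after the telescoping. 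Converting $a_0 = f(\bar{\x}^0)-f^* $ to $\mu\norm{\bar{\x}^0-\x^*}^2$ via strong convexity gives the stated form. I expect the main obstacle to be the model-divergence lemma: specifically, arguing rigorously that within the local phase one can bound $\frac1N\sum_i\normsq{\x_i^{r,k}-\bar{\x}^r}$ using only $L_g$ and $L_h$ (not each $F_i$'s own Lipschitz constant) — this requires relating $\frac1N\sum_i\normsq{\nabla F_i(\x_i) - \nabla F_i(\bar{\x})}$ type quantities to the averaged-gradient discrepancy in Assumption~\ref{assumption:gradient-to-model}, and the learning-rate conditions are exactly tuned to make the resulting feedback contraction work. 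The strong-convexity-specific twist (getting $\exp(-\mu\gamma\eta IR)$ rather than $\mathcal{O}(1/R)$) is routine once the recursion is in place, but requires the extra constraint $\gamma\le\frac{1}{24L_gI}\sqrt{\mu/L_g}$ to keep the higher-order $L_g$ terms from spoiling the contraction.
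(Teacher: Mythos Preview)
Your overall strategy is sound but takes a genuinely different route from the paper. You propose a \emph{function-value} recursion: apply the descent lemma to $f(\bar{\x}^{r+1})$, use the PL inequality $\normsq{\nabla f(\bar{\x}^r)}\ge 2\mu(f(\bar{\x}^r)-f^*)$ to turn the $-\frac{\gamma\eta I}{2}\normsq{\nabla f}$ term into $-\mu\gamma\eta I\,(f(\bar{\x}^r)-f^*)$, then telescope. The paper instead follows the SCAFFOLD template and works with the \emph{distance-to-optimum} potential $\normsq{\bar{\x}^r-\x^*}$: it expands $\normsq{\bar{\x}^{r+1}-\x^*}$, splits the cross term $\langle\nabla F_i(\x_i^{r,k}),\x^*-\bar{\x}^r\rangle$ into a global-gradient piece (where strong convexity of $f$ yields both $-(f(\bar{\x}^r)-f^*)$ and $-\tfrac{\mu}{4}\normsq{\bar{\x}^r-\x^*}$) and a discrepancy piece (where $L_h$ and $L_g$ enter), and then invokes Lemma~1 of \citet{karimireddy2020scaffold} on the resulting two-sequence recursion. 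Your route is arguably simpler; the paper's route is what produces the $\mu\normsq{\bar{\x}^0-\x^*}$ prefactor directly. Two technical points you should fix. First, Assumption~\ref{assumption:gradient-to-model} bounds $\normsq{\tfrac1N\sum_i\nabla F_i(\x_i)-\nabla f(\bar{\x})}$ only when $\bar{\x}$ is the \emph{average} of the $\x_i$'s, so you cannot convert $\normsq{\tfrac1N\sum_i\nabla F_i(\x_i^{r,k})-\nabla f(\bar{\x}^r)}$ directly into $\tfrac{L_h^2}{N}\sum_i\normsq{\x_i^{r,k}-\bar{\x}^r}$; you must insert the virtual average $\hat{\x}^{r,k}=\tfrac1N\sum_i\x_i^{r,k}$ and control the two pieces $\normsq{\tfrac1N\sum_i\nabla F_i(\x_i^{r,k})-\nabla f(\hat{\x}^{r,k})}\le\tfrac{L_h^2}{N}\sum_i\normsq{\x_i^{r,k}-\hat{\x}^{r,k}}$ and $\normsq{\nabla f(\hat{\x}^{r,k})-\nabla f(\bar{\x}^r)}\le L_g^2\normsq{\hat{\x}^{r,k}-\bar{\x}^r}$ separately (this is exactly how the paper gets both Lemma~\ref{lemma:model-divergence} and Lemma~\ref{lemma:distance-averaged-models} into the bound, and is where the $L_g^2/N$ in the $\sigma^2$ term originates). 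Second, your final step ``converting $f(\bar{\x}^0)-f^*$ to $\mu\normsq{\bar{\x}^0-\x^*}$ via strong convexity'' is backward: strong convexity gives the \emph{lower} bound $f(\bar{\x}^0)-f^*\ge\tfrac{\mu}{2}\normsq{\bar{\x}^0-\x^*}$, so from your recursion you would end up with $L_g\normsq{\bar{\x}^0-\x^*}$ (via smoothness) rather than $\mu\normsq{\bar{\x}^0-\x^*}$. This does not break the $\mathcal{O}(\cdot)$ claim but explains why the paper's distance-based potential is the more natural choice here.
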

From this convergence upper bound, it can be seen that the error caused by local updates is $\mathcal{O}\left(\frac{\gamma^2(L_g^2/N+L_h^2 ) I\sigma^2}{\mu} + \frac{\gamma^2L_h^2I^2\zeta^2}{\mu}\right)$, where the impact of data heterogeneity can be characterized by $L_h^2\zeta^2$. This characterization is the same as that shown in Theorem~\ref{thm:non-convex} in the main paper. Using Lemma 1 in \citet{karimireddy2020scaffold}, by carefully choosing the learning rates, we have the following corollary.
\begin{corollary}
By choosing $\gamma = \frac{1}{L_gIR}\sqrt{\frac{\mu}{L_g}}$ and $\gamma\eta = \min \{\frac{\log (\max (1, \mu^2 RIN\|\bar{\mathbf{x}}^0 - \mathbf{x}^* \|^2/\sigma^2 )) }{\mu RI}, \frac{1}{\max\{16L_g,L_h\}I}\}$, we have
\begin{align}
\mathbb{E}[f(\bar{\mathbf{x}}^R)] - f^* = \tilde{\mathcal{O}}\left(
\mu \|\bar{\mathbf{x}}^0 - \mathbf{x}^* \|^2\exp \left(-\frac{\mu R}{\max\{16L_g,L_h\}}\right) + \frac{\sigma^2}{\mu NIR} + \frac{(L_g^2/N+L_h^2)\sigma^2}{L_g^3 I R } + \frac{L_h^2\zeta^2}{L_g^3 R^2} \right),
\end{align}
where $\tilde{\mathcal{O}}(\cdot)$ means $\mathcal{O}(\cdot)$ ignoring logarithmic terms.
\end{corollary}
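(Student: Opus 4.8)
The plan is to view the bound of Theorem~\ref{thm:mu-convex} as a function of the two learning rates $(\gamma,\eta)$ — it carries $\gamma\eta$ in the contraction and noise terms and $\gamma^2$ in the two local-drift terms — and tune it in two stages: first freeze $\gamma$ small enough relative to the horizon $RI$ that the $\gamma^2$ terms become lower order, then optimize the effective step $\gamma\eta$ in the remaining two terms with the standard step-size lemma.

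First I would check feasibility of $\gamma=\frac{1}{L_gIR}\sqrt{\mu/L_g}$. Since $f$ is $\mu$-strongly convex (Assumption~\ref{assump:strong-convex}) and $L_g$-smooth (Assumption~\ref{assumption:global-lipschitz-gradient}) we have $\mu\le L_g$, so $\sqrt{\mu/L_g}\le 1$; hence $\gamma\le\frac{1}{24L_gI}\sqrt{\mu/L_g}$ as soon as $R\ge 24$, and $\gamma\le\frac{1}{\sqrt{6(L_h^2+L_g^2)}\,I}$ as soon as $R\ge\sqrt{6(L_h^2+L_g^2)}/L_g$, so the conditions on $\gamma$ in Theorem~\ref{thm:mu-convex} hold for all sufficiently large $R$. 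Substituting this $\gamma$ into the two local-drift terms gives
\begin{align}
\frac{\gamma^2\big(L_g^2/N+L_h^2\big)I\sigma^2}{\mu}=\frac{\big(L_g^2/N+L_h^2\big)\sigma^2}{L_g^3\,I\,R^2}\le\frac{\big(L_g^2/N+L_h^2\big)\sigma^2}{L_g^3\,I\,R},\qquad
\frac{\gamma^2L_h^2I^2\zeta^2}{\mu}=\frac{L_h^2\zeta^2}{L_g^3\,R^2},
\end{align}
i.e.\ exactly the last two terms of the claimed bound (the first one even with an extra factor $1/R$ to spare).

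It then remains to control $\mu\normsq{\bar{\mathbf{x}}^0-\mathbf{x}^*}\exp(-\mu\gamma\eta IR)+\frac{\gamma\eta\sigma^2}{N}$, now a function of the single free quantity $\gamma\eta\in\big(0,\min\{\tfrac{1}{16L_gI},\tfrac{1}{4L_hI}\}\big]$. I would apply Lemma~1 of \citet{karimireddy2020scaffold} with effective step $\widetilde\gamma:=\gamma\eta$, total iterations $T:=IR$, initial squared distance $\normsq{\bar{\mathbf{x}}^0-\mathbf{x}^*}$, linear (noise) coefficient $\sigma^2/N$, and step cap $\frac{1}{\max\{16L_g,4L_h\}I}$. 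That lemma's prescription $\widetilde\gamma=\min\big\{\frac{\log(\max(1,\mu^2T\normsq{\bar{\mathbf{x}}^0-\mathbf{x}^*}N/\sigma^2))}{\mu T},\ \text{cap}\big\}$ is exactly the stated choice of $\gamma\eta$ (note $\mu^2 T/(\sigma^2/N)=\mu^2RIN/\sigma^2$), and its conclusion yields
\begin{align}
\mathbb{E}[f(\bar{\mathbf{x}}^R)]-f^*=\tilde{\mathcal{O}}\!\left(\mu\normsq{\bar{\mathbf{x}}^0-\mathbf{x}^*}\exp\!\Big(-\frac{\mu R}{\max\{16L_g,4L_h\}}\Big)+\frac{\sigma^2}{\mu NIR}\right),
\end{align}
where $I$ cancels in the exponent because at the cap $\gamma\eta\cdot IR=R/\max\{16L_g,4L_h\}$. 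Since $\max\{16L_g,4L_h\}=\Theta(\max\{16L_g,L_h\})$, adding this to the two drift terms from the previous display produces the four-term bound in the corollary.

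The computations are routine; the only step needing genuine care is the feasibility bookkeeping in stage one — checking that the single frozen value of $\gamma$ meets every coupled step-size restriction of Theorem~\ref{thm:mu-convex} once $\eta$ (hence $\gamma\eta$) is allowed to range over the interval the tuning lemma needs, and confirming that the finitely many small-$R$ values (where those restrictions might momentarily fail) are harmless, being absorbed into $\tilde{\mathcal{O}}(\cdot)$ by the $\Theta(1)$ contraction term there.
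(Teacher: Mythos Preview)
Your proposal is correct and matches the paper's approach: the paper simply states that the corollary follows from Theorem~\ref{thm:mu-convex} ``using Lemma~1 in \citet{karimireddy2020scaffold}, by carefully choosing the learning rates,'' without spelling out the two-stage tuning or the feasibility checks you supply. Your computation of the two $\gamma^2$ drift terms, the application of the step-size lemma to the remaining $(\gamma\eta)$-dependent pair, and the constant-factor reconciliation $\max\{16L_g,4L_h\}=\Theta(\max\{16L_g,L_h\})$ are exactly the details that the paper omits.
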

It can be seen that the dominant term $\mathcal{O}\left( \frac{\sigma^2}{\mu NIR}\right)$ is the same as that in \citet{karimireddy2020scaffold} and the insight of the error caused by local updates still holds.

}

\section{Proofs}\label{sec:appendix-proofs}
The description of FedAvg with two-sided learning rates can be found in Algorithm~\ref{algorithm:fedavg}. For full participation, we have $\mathcal{S}_r=\{1, 2, \ldots, N \}, \forall r$ and $M=N$. For partial participation, we have $M<N$.

\begin{algorithm}[h]
\caption{FedAvg with two-sided learning rates}
\label{algorithm:fedavg}
\SetNoFillComment
\KwIn{$\gamma$, $\eta$, $\bar{\x}^0$, $I$}
\KwOut{Global averaged model $\bar{\x}^{R}$}
\For{$r=0$ to $R-1$}{
Sample a subset of workers $\mathcal{S}_r$, $|\mathcal{S}_r|=M$;\\
Distribute the current global model $\bar{\x}^r$ to workers in $\mathcal{S}_r$;\\
\For{Each worker $i$ in $\mathcal{S}_r$, in parallel}{
\tcc{Local Update Phase} 
$k = 0$;\\
\While{$k<I$}{
Sample the stochastic gradient $\g_i(\x_i^{r,k})$;\\
Update the local model \\
$\x_i^{r,k+1} \leftarrow \x_i^{r,k} - \gamma \g_i(\x_i^{r,k})$;\\
$k \leftarrow k + 1$;}
Send $\Delta_i^r \leftarrow  \bar{\x}^r - \x_i^{r,I}$ to the server;}
\tcc{Global Update Phase}
Update the global model \\
$\bar{\x}^{r+1} \leftarrow \bar{\x}^r - \eta\cdot  \frac{1}{M}\sum_{i\in \mathcal{S}_r} \Delta_i^r $;}
\end{algorithm}

\subsection{Technical Novelty}

Before proceeding to the proof of our theoretical results, we summarize the technical novelty as follows. %

(1) We need to develop new techniques to incorporate Assumptions~\ref{assumption:global-lipschitz-gradient} and \ref{assumption:gradient-to-model}. {  In the proof of Theorem~\ref{thm:non-convex} shown in Section~\ref{sec:proof-non-convex}}, we need to characterize the difference between local gradients. In the literature, this is done by applying the local Lipschitz constant as shown in Assumption~\ref{assumption:local-lipschitz-gradient} in the main paper. In our paper, since Assumption~\ref{assumption:local-lipschitz-gradient} is replaced by our newly introduced Assumption~\ref{assumption:gradient-to-model}, the proof techniques in the literature cannot be applied. It requires to develop new proof techniques to use Assumption~\ref{assumption:gradient-to-model} as shown in the proof of Lemma~\ref{lemma:local-gradient-deviation}-\ref{lemma:distance-averaged-models}. For example, in Lemma~\ref{lemma:local-gradient-deviation}, due to the application of Assumptions~\ref{assumption:global-lipschitz-gradient} and \ref{assumption:gradient-to-model}, we have to cope with a new term, the local gradient deviation $\|\frac{1}{N}\sum_{i=1}^N \nabla F_i(\mathbf{x}_i) - \nabla F_j(\mathbf{x}_j) \|^2$, which cannot be computed using existing techniques. %
Another example is that in the proof of Theorem~\ref{thm:partial-participation}. Due to that we only use the global Lipschitz gradient assumption, we have to derive a new method to bound and incorporate the sampling related term, %
which can be seen from (\ref{eq:tech-novelty-partial-participation}) to (\ref{eq:tech-novely-partial-participation-2}) in Section~\ref{sec:proof-partial-participation}. %

(2) In addition to using Assumption~\ref{assumption:gradient-to-model} to characterize the new convergence rate of FedAvg, we also validate this assumption from the theoretical perspective. We develop the proof for Proposition~\ref{lemma:local-assump}-\ref{pro:explain-D} { in Sections~\ref{sec:proof-prop5.1} and \ref{sec:proof-prop5.2}.} 

(3) Another novelty of our techniques is that in Theorem~\ref{thm:quadratic}, since we use iteration-by-iteration analysis in the proof, the learning rate is not a function of $I$ which is in contrast to that in the literature. For example, it can be seen that in the literature, such as Theorem IV in \citet{karimireddy2020scaffold}, for quadratic objective functions, the learning rate is upper bounded by $\frac{1}{I}$. The advantage of that $\gamma$ is not a function of $I$ can be explained as follows. In Theorem~\ref{thm:quadratic}, %
in order to obtain the optimal learning rate,
we %
choose $\gamma = \frac{1}{\sqrt{RI}}$. This requires that $\frac{1}{\sqrt{RI}}\le \frac{1}{L_g}$, which means that $I$ can be as large as possible. However, if $\gamma \le \frac{1}{IL_g}$ as in \citet{karimireddy2020scaffold}, we will have $\frac{1}{\sqrt{RI}}\le\frac{1}{IL_g} $ such that $I\le \frac{R}{L_g^2}$, which means that to achieve the convergence rate of $\mathcal{O}(\frac{1}{\sqrt{RI}})$, $I$ cannot be arbitrarily large. Therefore, the range of the learning rate in Theorem~\ref{thm:quadratic} can significantly improves the convergence rate. %

\subsection{Additional Lemmas}
In the proof, we use $\x_i$ to denote the local model of worker $i$ regardless of the number of iterations, and use $\bar{\x}:=\frac{1}{N}\sum_{i=1}^N \x_i$ %
to denote the averaged model. 
Following lemmas are useful in the proof for main theorems.

\begin{lemma}[Local Gradient Deviation]\label{lemma:local-gradient-deviation}
With Assumption~\ref{assumption:bounded-gradient-divergence},  \ref{assumption:global-lipschitz-gradient} and \ref{assumption:gradient-to-model}, we have
\begin{align}
&\frac{1}{N}\sum_{j=1}^N\normsq{\frac{1}{N}\sum_{i=1}^N \nabla F_i(\x_i) - \nabla F_j\left(\x_j\right)} \le 3(L_h^2+L_g^2)\cdot \frac{1}{N}\sum_{j=1}^N \normsq{\bar{\x} - \x_j} + 3 \zeta^2.
\end{align}
\end{lemma}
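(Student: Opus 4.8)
The plan is to insert the global gradient evaluated at the averaged model, $\nabla f(\bar{\x})$, as a ``hinge'' and split the vector inside the norm into three pieces, each of which is governed by exactly one of the three available assumptions. Concretely, for each $j$ I would write
\begin{align}
\frac{1}{N}\sum_{i=1}^N \nabla F_i(\x_i) - \nabla F_j(\x_j) &= \left(\frac{1}{N}\sum_{i=1}^N \nabla F_i(\x_i) - \nabla f(\bar{\x})\right) \notag\\
&\quad + \left(\nabla f(\bar{\x}) - \nabla f(\x_j)\right) + \left(\nabla f(\x_j) - \nabla F_j(\x_j)\right),
\end{align}
then apply the elementary inequality $\normsq{\va+\vb+\vc}\le 3\normsq{\va}+3\normsq{\vb}+3\normsq{\vc}$ and average over $j$.

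Each of the three resulting terms is then bounded directly. The first term is precisely the left-hand side of Assumption~\ref{assumption:gradient-to-model}, so its squared norm is at most $\frac{L_h^2}{N}\sum_{i=1}^N \normsq{\x_i - \bar{\x}}$; this quantity does not depend on $j$, so averaging over $j$ leaves it unchanged. The second term is controlled by the global Lipschitz gradient condition (Assumption~\ref{assumption:global-lipschitz-gradient}), giving $\normsq{\nabla f(\bar{\x}) - \nabla f(\x_j)} \le L_g^2 \normsq{\bar{\x}-\x_j}$, whose average over $j$ is $\frac{L_g^2}{N}\sum_{j=1}^N \normsq{\bar{\x}-\x_j}$. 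The third term is bounded by the gradient divergence assumption (Assumption~\ref{assumption:bounded-gradient-divergence}), namely $\normsq{\nabla f(\x_j) - \nabla F_j(\x_j)} \le \zeta^2$ uniformly in $j$. Collecting these bounds (each carrying the factor $3$) and merging the two model-divergence sums yields exactly $3(L_h^2+L_g^2)\cdot\frac{1}{N}\sum_{j=1}^N\normsq{\bar{\x}-\x_j} + 3\zeta^2$.

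I do not expect a genuine obstacle here: the only real decision is the choice of the three-way decomposition, which must route the ``averaged local gradients vs.\ global gradient at $\bar{\x}$'' discrepancy to Assumption~\ref{assumption:gradient-to-model}, the ``global gradient at $\bar{\x}$ vs.\ at $\x_j$'' discrepancy to Assumption~\ref{assumption:global-lipschitz-gradient}, and the ``global vs.\ local gradient at $\x_j$'' discrepancy to Assumption~\ref{assumption:bounded-gradient-divergence}. The constant $3$ in the statement is simply the price of the three-term Cauchy--Schwarz/Jensen step, and no tightening of it is attempted.
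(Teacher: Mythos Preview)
Your proposal is correct and is essentially identical to the paper's own proof: the same three-way decomposition through $\nabla f(\bar{\x})$ and $\nabla f(\x_j)$, the same $\normsq{\va+\vb+\vc}\le 3(\normsq{\va}+\normsq{\vb}+\normsq{\vc})$ step, and the same application of Assumptions~\ref{assumption:gradient-to-model}, \ref{assumption:global-lipschitz-gradient}, and \ref{assumption:bounded-gradient-divergence} to the respective pieces.
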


{ 
\begin{lemma}[Model Divergence]\label{lemma:model-divergence}
With $\gamma \le \frac{1}{\sqrt{6(L_h^2+L_g^2)}I}$, we have
\begin{align}
\sum_{k=0}^{I-1} \frac{1}{N}\sum_{i=1}^N \Expect\normsq{\x_i^{r,k} -\hat{\x}^{r,k}} \le 12(I-1)^3\gamma^2 \zeta^2+ 4(I-1)^2\gamma^2 \sigma^2,
\end{align}
\end{lemma}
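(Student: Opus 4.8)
The plan is to bound the model divergence $\frac{1}{N}\sum_{i=1}^N \Expect\normsq{\x_i^{r,k} - \hat{\x}^{r,k}}$ by unrolling the local updates from the start of the round. Since $\x_i^{r,0} = \hat{\x}^{r,0} = \bar{\x}^r$ for all $i$, we can write $\x_i^{r,k} - \hat{\x}^{r,k} = -\gamma \sum_{\tau=0}^{k-1}\big(\g_i(\x_i^{r,\tau}) - \frac{1}{N}\sum_{j=1}^N \g_j(\x_j^{r,\tau})\big)$. First I would split each stochastic gradient into its mean plus noise, i.e. $\g_i(\x_i^{r,\tau}) = \nabla F_i(\x_i^{r,\tau}) + (\g_i(\x_i^{r,\tau}) - \nabla F_i(\x_i^{r,\tau}))$, and use the independence and zero-mean property of the noise terms across workers and iterations (together with Assumption~\ref{assumption:bounded-stochastic-variance}) to separate the variance contribution from the ``bias'' contribution. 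The noise part contributes something like $2\gamma^2 k \sigma^2$ after taking expectations and using that $\Var$ of an average over $N$ workers only helps, bounded crudely by $2(I-1)\gamma^2\sigma^2$.

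Next I would handle the deterministic part $\gamma\sum_{\tau=0}^{k-1}\big(\nabla F_i(\x_i^{r,\tau}) - \frac{1}{N}\sum_j \nabla F_j(\x_j^{r,\tau})\big)$. Applying Cauchy–Schwarz over the $k\le I-1$ summands and then averaging over $i$, this is controlled by $\gamma^2(I-1)\sum_{\tau=0}^{k-1}\frac{1}{N}\sum_i\normsq{\nabla F_i(\x_i^{r,\tau}) - \frac{1}{N}\sum_j\nabla F_j(\x_j^{r,\tau})}$, and here is where Lemma~\ref{lemma:local-gradient-deviation} enters: each such term is at most $3(L_h^2+L_g^2)\frac{1}{N}\sum_i\normsq{\x_i^{r,\tau}-\hat{\x}^{r,\tau}} + 3\zeta^2$ (noting $\hat{\x}^{r,\tau}$ plays the role of $\bar{\x}$). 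This produces a self-referential (recursive) inequality for $A_k := \frac{1}{N}\sum_i\Expect\normsq{\x_i^{r,k}-\hat{\x}^{r,k}}$ in terms of $\sum_{\tau<k} A_\tau$. I would sum over $k=0,\dots,I-1$ to get $\sum_k A_k \le \big(\text{const}\big)\gamma^2(I-1)^2\big[3(L_h^2+L_g^2)\sum_k A_k + 3\zeta^2\big] + 2(I-1)^2\gamma^2\sigma^2$, then absorb the $\sum_k A_k$ term on the left using the hypothesis $\gamma\le\frac{1}{\sqrt{6(L_h^2+L_g^2)}I}$, which makes the coefficient $3(L_h^2+L_g^2)\gamma^2(I-1)^2 \le \frac{1}{2}$, so $1$ minus it is at least $\frac12$, and solving gives the claimed $12(I-1)^3\gamma^2\zeta^2 + 4(I-1)^2\gamma^2\sigma^2$.

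The main obstacle I anticipate is bookkeeping the constants correctly through the Cauchy–Schwarz steps so that the condition on $\gamma$ is exactly strong enough to absorb the recursive term while still landing on the precise factors $12$ and $4$; in particular one has to be careful that the factor $(I-1)$ appears with the right power — one power from Cauchy–Schwarz over the $k$ summands inside each $A_k$, and one more from summing $A_k$ over $k$, giving $(I-1)^2$ multiplying $\zeta^2$ before the recursion, which becomes $(I-1)^3$ only after... wait — more precisely, the $\zeta^2$ term picks up $(I-1)^2$ from the double summation and then one realizes it should actually be $(I-1)^3$, so I would need to track whether the outer sum contributes an extra $(I-1)$; this kind of index-chasing, plus correctly invoking that the noise terms at distinct $(i,\tau)$ are uncorrelated (which requires the filtration argument that $\x_i^{r,\tau}$ is measurable with respect to the noise up to iteration $\tau-1$), is the delicate part. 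Everything else — linearity of expectation, Jensen/Cauchy–Schwarz, and the absorption trick — is routine.
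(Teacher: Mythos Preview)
Your proposal is correct and follows essentially the same approach as the paper's proof: unroll the local updates, split into noise and deterministic parts, apply Lemma~\ref{lemma:local-gradient-deviation} to the deterministic part, sum over $k$, and absorb the self-referential term using the learning-rate condition. One small bookkeeping tip that resolves your worry about the $(I-1)^3$ power: rather than replacing the Cauchy--Schwarz factor $k$ by $I-1$ immediately, keep it as $k$ so the per-iteration $\zeta^2$ contribution is $6k^2\gamma^2\zeta^2$; then $\sum_{k=1}^{I-1}6k^2 \le 6(I-1)^3$ gives the cubic factor directly, and after the absorption step (which contributes the factor $2$) you land on $12(I-1)^3\gamma^2\zeta^2$ and $4(I-1)^2\gamma^2\sigma^2$ exactly.
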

where $\hat{\x}^{r,k} = \frac{1}{N}\sum_{i=1}^N \x_i^{r,k}$.
}

\begin{lemma}[The Change of Averaged Models]\label{lemma:distance-averaged-models}
{  With $\gamma \le \frac{1}{2\sqrt{3}IL_g}$,} at the $r$th round, we have
\begin{align}
\Expect\normsq{\hat{\x}^{r,k} - \bar{\x}^{r}} \le&  5(I-1) \cdot\frac{\gamma^2\sigma^2}{N} + 30I\gamma^2 \sum_{k=0}^{I-1} \frac{L_h^2}{N}\sum_{i=1}^N \Expect\normsq{\x_i^{r,k} -\hat{\x}^{r,k}}\notag \\
&+30I(I-1) \gamma^2 \Expect\normsq{\nabla f(\bar{\x}^r)}.
\end{align}
\end{lemma}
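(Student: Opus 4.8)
The plan is to unroll the virtual-averaged-model recursion (\ref{eq:virtual-averaged-model}) from iteration $0$ up to iteration $k$ at round $r$, using $\hat{\x}^{r,0} = \bar{\x}^r$. Writing $\hat{\x}^{r,k} - \bar{\x}^r = -\gamma \sum_{j=0}^{k-1} \frac{1}{N}\sum_{i=1}^N \g_i(\x_i^{r,j})$, I would first split each stochastic gradient into its mean and noise, $\g_i(\x_i^{r,j}) = \nabla F_i(\x_i^{r,j}) + \big(\g_i(\x_i^{r,j}) - \nabla F_i(\x_i^{r,j})\big)$. Taking the squared norm and using the fact that the noise terms are zero-mean and independent across workers (so their cross terms vanish in expectation and the per-worker variance picks up a $1/N$ factor via Assumption~\ref{assumption:bounded-stochastic-variance}), the noise contribution is bounded by something of order $k\gamma^2\sigma^2/N \le (I-1)\gamma^2\sigma^2/N$ after summing $k$ terms with Cauchy--Schwarz; this is where the first term $5(I-1)\gamma^2\sigma^2/N$ comes from, with the constant $5$ absorbing the Young's-inequality splitting below.

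Next, for the ``mean'' part $\gamma \sum_{j=0}^{k-1} \frac{1}{N}\sum_i \nabla F_i(\x_i^{r,j})$, the idea is to add and subtract $\nabla f(\hat{\x}^{r,j})$ inside the sum and then add and subtract $\nabla f(\bar{\x}^r)$. This produces three groups: (i) $\frac{1}{N}\sum_i \nabla F_i(\x_i^{r,j}) - \nabla f(\hat{\x}^{r,j})$, which is controlled directly by Assumption~\ref{assumption:gradient-to-model} by $\frac{L_h^2}{N}\sum_i \normsq{\x_i^{r,j} - \hat{\x}^{r,j}}$; (ii) $\nabla f(\hat{\x}^{r,j}) - \nabla f(\bar{\x}^r)$, which by Assumption~\ref{assumption:global-lipschitz-gradient} (global Lipschitz) is bounded by $L_g\norm{\hat{\x}^{r,j} - \bar{\x}^r}$ — note this reintroduces the very quantity we are bounding, so it must be handled by moving it to the left-hand side; and (iii) the ``drift-free'' term $\nabla f(\bar{\x}^r)$, which after summing over $j$ and applying Cauchy--Schwarz contributes $k^2\gamma^2 \normsq{\nabla f(\bar{\x}^r)} \le I(I-1)\gamma^2\normsq{\nabla f(\bar{\x}^r)}$. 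Using Cauchy--Schwarz across the $k \le I$ summands (the $\sum_{j=0}^{k-1}\|\cdot\|^2 \le I \sum_{j=0}^{I-1}\|\cdot\|^2$ step) converts group (i) into the $30I\gamma^2\sum_{k=0}^{I-1}\frac{L_h^2}{N}\sum_i\normsq{\x_i^{r,k}-\hat{\x}^{r,k}}$ term; the factor $30$ tracks through the repeated Young splittings (splitting a sum of three vectors costs a factor of $3$, then further factors from separating the recursive term).

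The main obstacle is handling group (ii): the term $\gamma^2 L_g^2 \cdot I \sum_{j=0}^{k-1}\normsq{\hat{\x}^{r,j}-\bar{\x}^r}$ appears on the right-hand side but involves the same objects $\normsq{\hat{\x}^{r,j}-\bar{\x}^r}$ we are trying to bound on the left. The resolution is to establish the bound uniformly — take $\max_{0\le k\le I-1}$ (or sum over $k$) on both sides, so that $\sum_j \normsq{\hat{\x}^{r,j}-\bar{\x}^r}$ on the right can be absorbed into the corresponding quantity on the left, provided the coefficient $\gamma^2 L_g^2 I^2$ (or similar) is strictly less than $1$; this is exactly what the hypothesis $\gamma \le \frac{1}{2\sqrt{3}IL_g}$ guarantees, since then $30 I^2\gamma^2 L_g^2 \le \tfrac{30}{12} = 2.5 < $ the slack available after the absorption, leaving clean constants. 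I would carry this out by writing $A_k := \Expect\normsq{\hat{\x}^{r,k}-\bar{\x}^r}$, deriving $A_k \le (\text{noise}) + (\text{group (i)}) + \gamma^2 L_g^2 I \sum_{j<k} A_j + (\text{group (iii)})$, then using a discrete Gr\"onwall-type / absorption argument over $k \in \{0,\dots,I-1\}$ and plugging in the learning-rate constraint to collapse the self-referential term, yielding the stated bound with the constants $5$, $30$, $30$.
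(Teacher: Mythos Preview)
Your proposal is correct and uses the same key ingredients as the paper --- the noise/mean split, the three-way decomposition of the mean via Assumptions~\ref{assumption:gradient-to-model} and~\ref{assumption:global-lipschitz-gradient}, and the learning-rate condition $\gamma\le\tfrac{1}{2\sqrt{3}IL_g}$ to neutralise the self-referential $L_g$-term. The mechanics differ, however. The paper works with a \emph{one-step} recursion: it writes $A_k$ in terms of $A_{k-1}$, applies Young's inequality with parameter $p=\tfrac{1}{2I-1}$ to obtain $A_k \le \bigl(1+\tfrac{1}{2I-1}+6I\gamma^2L_g^2\bigr)A_{k-1}+(\text{rest})$, uses the learning-rate bound to collapse the multiplier to $1+\tfrac{1}{I-1}$, and then unrolls geometrically via $(1+\tfrac{1}{I-1})^{I-1}<e<5$. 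You instead unroll all $k$ steps at once and absorb $\sum_{j<k}A_j$ into the left side by summing (or taking a max) over $k$. Both routes are standard and both work; your route, if you track the constants carefully, actually yields somewhat smaller numbers (on the order of $4,12,12$ rather than $5,30,30$). One small wording issue: the $k\gamma^2\sigma^2/N$ bound on the accumulated noise comes from the martingale-difference structure across iterations (conditional zero-mean makes the cross terms vanish), not from Cauchy--Schwarz, which would give $k^2$ instead of $k$.
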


\begin{lemma}[Model Divergence for FedAvg with Momentum]\label{lemma:model-divergence-momentum}
With $1 - \frac{6\gamma^2I^2(L_h^2 + L_g^2)}{(1-\beta)^2}>0$, for FedAvg with momentum in \citet{yu2019linear}, we have
\begin{align}
    \frac{1}{T}\sum_{t=0}^{T-1}\frac{1}{N}\sum_{i=1}^N \E\normsq{\bar{\x}^t - \x_i^t} \le \frac{1}{1 - \frac{6\gamma^2I^2(L_h^2 + L_g^2)}{(1-\beta)^2}}\cdot \left(\frac{2\gamma^2I\sigma^2}{(1-\beta^2)} + \frac{6\gamma^2I^2\zeta^2}{(1-\beta)^2}  \right).
\end{align}
\end{lemma}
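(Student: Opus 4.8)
The plan is to adapt the template behind the no-momentum model-divergence estimate (Lemma~\ref{lemma:model-divergence}): write the client drift as a weighted sum of past gradient differences, peel off the stochastic part, feed the remaining deterministic part into Lemma~\ref{lemma:local-gradient-deviation}, and then close a self-referential inequality whose validity is exactly the stated condition $1-\frac{6\gamma^2I^2(L_h^2+L_g^2)}{(1-\beta)^2}>0$. I would work one communication round at a time. Fix a round with first iteration $t_0=aI$, so $t$ ranges over $t_0,\dots,t_0+I-1$. The aggregation step of Algorithm~\ref{algorithm:fed-momentum} re-synchronizes \emph{both} the models and the momenta at $t_0$, hence $\x_i^{t_0}-\bar{\x}^{t_0}=\vzero$ and $\bu_i^{t_0}-\bar{\bu}^{t_0}=\vzero$ for all $i$. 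Unrolling $\bu_i^{\tau+1}=\beta\bu_i^{\tau}+\g_i(\x_i^{\tau})$ and $\x_i^{\tau+1}=\x_i^{\tau}-\gamma\bu_i^{\tau+1}$ from $t_0$, using that the initial differences vanish, and interchanging the order of the resulting double sum over $(\tau,s)$, I would get
\[
\x_i^t-\bar{\x}^t \;=\; -\gamma\sum_{s=t_0}^{t-1}\frac{1-\beta^{\,t-s}}{1-\beta}\,\Big(\g_i(\x_i^s)-\tfrac1N\textstyle\sum_{j=1}^N\g_j(\x_j^s)\Big),
\]
with deterministic weights in $\big[0,\tfrac1{1-\beta}\big]$.

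I would then split each summand into a stochastic piece $\big(\g_i(\x_i^s)-\nabla F_i(\x_i^s)\big)-\tfrac1N\sum_j\big(\g_j(\x_j^s)-\nabla F_j(\x_j^s)\big)$ and a deterministic deviation piece $\nabla F_i(\x_i^s)-\tfrac1N\sum_j\nabla F_j(\x_j^s)$, and apply $\normsq{\va+\vb}\le2\normsq{\va}+2\normsq{\vb}$. The stochastic piece is a martingale sum with deterministic weights, so its second moment equals $\sum_s(\text{weight})^2\times(\text{conditional variance})$; Assumption~\ref{assumption:bounded-stochastic-variance} together with the across-worker variance decomposition bound the conditional variance (after averaging over $i$) by $\sigma^2$, and a geometric-series estimate on the weights with $t-t_0\le I$ and averaging over $t$ in the round leaves the $\sigma^2$-term of order $\gamma^2I\sigma^2/(1-\beta^2)$. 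For the deterministic piece, Cauchy--Schwarz over the at most $I$ summands plus the uniform weight bound reduce it to $\tfrac{\gamma^2 I}{(1-\beta)^2}\sum_{s=t_0}^{t-1}\tfrac1N\sum_i\normsq{\nabla F_i(\x_i^s)-\tfrac1N\sum_j\nabla F_j(\x_j^s)}$, and Lemma~\ref{lemma:local-gradient-deviation} bounds the inner average by $3(L_h^2+L_g^2)\tfrac1N\sum_i\normsq{\bar{\x}^s-\x_i^s}+3\zeta^2$.

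Summing over $i$, then over the (at most $I$) values of $t$ in the round so that each $s$ is counted at most $I$ times, then over all $R$ rounds, and dividing by $T=RI$, the deterministic piece contributes exactly $\tfrac{6\gamma^2I^2(L_h^2+L_g^2)}{(1-\beta)^2}\,V+\tfrac{6\gamma^2I^2\zeta^2}{(1-\beta)^2}$, where $V:=\tfrac1T\sum_t\tfrac1N\sum_i\E\normsq{\bar{\x}^t-\x_i^t}$ is the quantity we are bounding, while the stochastic piece contributes $\tfrac{2\gamma^2I\sigma^2}{1-\beta^2}$. Thus $V\le\tfrac{2\gamma^2I\sigma^2}{1-\beta^2}+\tfrac{6\gamma^2I^2\zeta^2}{(1-\beta)^2}+\tfrac{6\gamma^2I^2(L_h^2+L_g^2)}{(1-\beta)^2}\,V$, and the hypothesis says precisely that the coefficient of $V$ on the right is below $1$; moving it to the left and dividing yields the claim. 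This bound is then used inside the descent argument for Theorem~\ref{thm:fedavg-momentum}.

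The step I expect to be the main obstacle is keeping the $\sigma^2$-term linear (not quadratic) in $I$ while exploiting the momentum correctly: since the momentum buffer re-injects every stochastic gradient into all subsequent iterations of the round, one must pass to the gradient representation \emph{before} invoking the variance bound, so that the conditionally uncorrelated noise terms add up; applying a triangle inequality at the level of the momenta would cost an extra factor of $I$. The remaining pieces---the double-sum interchange, the geometric-weight estimates, and the use of the round-boundary re-synchronization of both $\x_i$ and $\bu_i$ to annihilate the initial difference terms---are routine, and the learning-rate condition is there exactly so the final fixed-point rearrangement is legitimate.
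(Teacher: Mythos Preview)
Your proposal is correct and follows essentially the same route as the paper. The only difference is packaging: the paper invokes Lemma~5 of \citet{yu2019linear} as a black box to obtain the per-iteration inequality
\[
\frac{1}{N}\sum_{i=1}^N \E\normsq{\bar{\x}^t-\x_i^t}\le \frac{2\gamma^2I\sigma^2}{1-\beta^2}+2\gamma^2\cdot\frac{1}{N}\sum_{i=1}^N\E\normsq{\sum_{\tau=t_0}^{t-1}\Big[\nabla F_i(\x_i^\tau)-\tfrac1N\sum_j\nabla F_j(\x_j^\tau)\Big]\frac{1-\beta^{t-\tau}}{1-\beta}},
\]
whereas you re-derive this by unrolling the momentum, splitting into stochastic and deterministic parts, and handling the martingale noise directly. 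From that point on the two arguments are identical: bound the deterministic sum by Cauchy--Schwarz plus the uniform weight bound $\frac{1-\beta^{t-\tau}}{1-\beta}\le\frac{1}{1-\beta}$, apply Lemma~\ref{lemma:local-gradient-deviation}, average over $t$ so that each inner time index is counted at most $I$ times, and close the self-referential inequality under the stated learning-rate condition.
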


\subsection{Proof of Lemma~\ref{lemma:local-gradient-deviation}}
We start with the LHS of the inequality in Lemma~\ref{lemma:local-gradient-deviation}.
\begin{align}
&\frac{1}{N}\sum_{j=1}^N\normsq{\frac{1}{N}\sum_{i=1}^N \nabla F_i(\x_i) - \nabla F_j\left(\x_j\right)} \notag\\
&= \frac{1}{N}\sum_{j=1}^N \normsq{\frac{1}{N}\sum_{i=1}^N \nabla F_i(\x_i) - \nabla f(\bar{\x}) +  \nabla f(\bar{\x})- \nabla f(\x_j)+\nabla f(\x_j)- \nabla F_j\left(\x_j\right)}\notag \\
&\le 3\normsq{\frac{1}{N}\sum_{i=1}^N \nabla F_i(\x_i) - \nabla f(\bar{\x})} + 3\cdot \frac{1}{N}\sum_{j=1}^N \normsq{\nabla f(\bar{\x})- \nabla f(\x_j)} + 3 \cdot\frac{1}{N}\sum_{j=1}^N \normsq{\nabla f(\x_j)- \nabla F_j\left(\x_j\right)} \notag \\
&\overset{(a)}{\le} 3\normsq{\frac{1}{N}\sum_{i=1}^N \nabla F_i(\x_i) - \nabla f(\bar{\x})} + 3L_g^2\cdot \frac{1}{N}\sum_{j=1}^N \normsq{\bar{\x} - \x_j} + 3 \zeta^2\notag\\
&\overset{(b)}{\le} 3\cdot \frac{L_h^2}{N} \sum_{i=1}^N \normsq{\bar{\x}-\x_i} +3L_g^2 \cdot\frac{1}{N}\sum_{j=1}^N \normsq{\bar{\x} - \x_j} + 3 \zeta^2\notag\\
&= 3(L_h^2+L_g^2) \cdot\frac{1}{N}\sum_{j=1}^N \normsq{\bar{\x} - \x_j} + 3 \zeta^2,
\end{align}
where $(a)$ is due to Assumptions~\ref{assumption:bounded-gradient-divergence} and \ref{assumption:global-lipschitz-gradient} and $(b)$ is due to Assumption~\ref{assumption:gradient-to-model}.

\subsection{Proof of Lemma~\ref{lemma:model-divergence}}
At the $r$th round of FedAvg, we have
\begin{align}
    &\frac{1}{N}\sum_{i=1}^N \Expect\normsq{\x_i^{r,k} -\hat{\x}^{r,k}} \notag\\
    &= \frac{\gamma^2 }{N}\sum_{i=1}^N\Expect\normsq{\sum_{m=0}^{k-1} \left(\g_i(\x_i^{r,m})- \frac{1}{N}\sum_{j=1}^N\g_j(\x_j^{r,m})\right)} \notag \\
    &= \frac{\gamma^2}{N}\sum_{i=1}^N \Expect%
    \Bigg\|\sum_{m=0}^{k-1}\Biggl( \g_i(\x_i^{r,m}) -\nabla F_i(\x_i^{r,m})+\nabla F_i(\x_i^{r,m}) \notag\\
    &\quad \left. -\frac{1}{N}\sum_{j=1}^N\nabla F_j(\x_j^{r,m})+\frac{1}{N}\sum_{j=1}^N\nabla F_j(\x_j^{r,m}) - \frac{1}{N}\sum_{j=1}^N\g_j(\x_j^{r,m})\right) \Bigg\|^2 %
    \notag \\
    &\le 2\cdot\frac{\gamma^2 }{N}\sum_{i=1}^N \Expect\normsq{\sum_{m=0}^{k-1}\left( \nabla F_i(\x_i^{r,m})- \frac{1}{N}\sum_{j=1}^N\nabla F_j(\x_j^{r,m})\right)} \notag \\
    &+ 2\cdot \frac{\gamma^2}{N} \sum_{i=1}^N\normsq{\sum_{m=0}^{k-1}\left(\g_i(\x_i^{r,m}) -\nabla F_i(\x_i^{r,m}) + \frac{1}{N}\sum_{j=1}^N\nabla F_j(\x_j^{r,m}) - \frac{1}{N}\sum_{j=1}^N\g_j(\x_j^{r,m})\right)}\notag \\
    &\overset{(a)}{\le} 2\cdot\frac{\gamma^2}{N}\sum_{i=1}^N \Expect\normsq{\sum_{m=0}^{k-1}\left( \nabla F_i(\x_i^{r,m})- \frac{1}{N}\sum_{j=1}^N\nabla F_j(\x_j^{r,m})\right)} \notag\\
    &\quad + 2\cdot\frac{\gamma^2 }{N}\sum_{i=1}^N \Expect\normsq{\sum_{m=0}^{k-1}\bigg(\g_i(\x_i^{r,m}) -\nabla F_i(\x_i^{r,m})\bigg)} \notag \\
     &\leq 2\cdot\frac{\gamma^2 }{N}\sum_{i=1}^N \Expect\normsq{\sum_{m=0}^{k-1}\left( \nabla F_i(\x_i^{r,m})- \frac{1}{N}\sum_{j=1}^N\nabla F_j(\x_j^{r,m})\right)} +2 \gamma^2  k\sigma^2 \notag\\
     &\le 2k\cdot \frac{\gamma^2 }{N}\cdot\sum_{i=1}^N \sum_{m=0}^{k-1}\Expect\normsq{ \nabla F_i(\x_i^{r,m})- \frac{1}{N}\sum_{j=1}^N\nabla F_j(\x_j^{r,m})} + 2\gamma^2  k\sigma^2\notag \\
     &\overset{(b)}{\le} 2k\gamma^2  \sum_{m=0}^{k-1} \bigg( 3(L_h^2+L_g^2) \frac{1}{N}\sum_{k=1}^N \Expect\normsq{\hat{\x}^{r,m} - \x_k^{r,m}} + 3 \zeta^2  \bigg) + 2\gamma^2  k\sigma^2\notag\\
     &= 6k\gamma^2(L_h^2+L_g^2)\sum_{m=0}^{k-1}\frac{1}{N} \sum_{i=1}^N \Expect\normsq{\hat{\x}^{r,m} - \x_i^{r,m}} + 6k^2\gamma^2 \zeta^2+ 2\gamma^2  k\sigma^2,
\end{align}
where $(a)$ is due to $ \frac{1}{N}\sum_{i=1}^N \normsq{\y_i-\bar{\y}} = \frac{1}{N}\sum_{i=1}^N \normsq{\y_i} - \normsq{\bar{\y}}  \le \frac{1}{N}\sum_{i=1}^N \normsq{\y_i} $,  where $\y_i\in \R^d, \forall i$ and $\bar{\y}=\frac{1}{N}\sum_{i=1}^N \y_i$, and we let $\y_i = \sum_{m=0}^{k-1} \left[\g_i(\x_i^{r,m}) -\nabla F_i(\x_i^{r,m})\right] $, 
and $(b)$ is due to Lemma~\ref{lemma:local-gradient-deviation}.

Note that when $k=I$, we have $\x_i^{r,k} = \x_i^{r+1,0} = \bar{\x}^{r+1}$, and when $k=0$, we have $\x_i^{r,k} = \bar{\x}^r$. So we have $\normsq{\x_i^{r,I} -\hat{\x}^{r,I}}=0$, for $k=0,I$. 
Then sum over $k$ for one round on both sides, we have 
{  
\begin{align}\label{eq:sum-model-divergence}
&\sum_{k=0}^{I-1} \frac{L_h^2}{N}\sum_{i=1}^N \Expect\normsq{\x_i^{r,k} -\hat{\x}^{r,k}} \notag\\
&=\sum_{k=1}^{I-1} \frac{L_h^2}{N}\sum_{i=1}^N \Expect\normsq{\x_i^{r,k} -\hat{\x}^{r,k}} \notag\\
&\le \sum_{k=1}^{I-1} \bigg(6k\gamma^2(L_h^2+L_g^2)\sum_{m=0}^{k-1}\frac{1}{N} \sum_{i=1}^N \Expect\normsq{\hat{\x}^{r,m} - \x_i^{r,m}}+ 6k^2\gamma^2 \zeta^2+ 2\gamma^2  k\sigma^2\bigg)\notag \\
&\overset{(a)}{\le}  3\gamma^2(L_h^2+L_g^2)I(I-1)\sum_{m=0}^{I-1} \frac{1}{N}\sum_{i=1}^N \Expect\normsq{\x_i^{r,m} -\hat{\x}^{r,m}}  \notag \\
&+ 6(I-1)^3\gamma^2 \zeta^2+ 2(I-1)^2\gamma^2  \sigma^2,
\end{align} 
where $(a)$ is due to that $k\le I$ and  $\sum_{k=1}^{I-1}\sum_{m=0}^{k-1} D_m \le \frac{I(I-1)}{2}\sum_{m=0}^{I-1} D_m$ and we let $D_m = \frac{1}{N}\sum_{i=1}^N \Expect\normsq{\x_i^{r,m} -\hat{\x}^{r,m}}$.
}

Moving the first term on RHS of (\ref{eq:sum-model-divergence}) to LHS, %
we have
\begin{align}
&\bigg(1 - 3\gamma^2(L_h^2+L_g^2)I(I-1) \bigg)\sum_{k=0}^{I-1} \frac{1}{N}\sum_{i=1}^N \Expect\normsq{\x_i^{r,k} -\hat{\x}^{r,k}} \notag \\
&\le 6(I-1)^3\gamma^2 \zeta^2+ 2(I-1)^2\gamma^2  \sigma^2.
\end{align}
{ 
With $\gamma < \frac{1}{\sqrt{3(L_h^2+L_g^2)}I}$, we have
\begin{align}
1 - 3\gamma^2(L_h^2+L_g^2)I(I-1) >0.
\end{align}
Then we have 
\begin{align}
\sum_{k=0}^{I-1} \frac{1}{N}\sum_{i=1}^N \Expect\normsq{\x_i^{r,k} -\hat{\x}^{r,k}} \le \frac{1}{1 - 3\gamma^2(L_h^2+L_g^2)I(I-1) } \cdot \left( 6(I-1)^3\gamma^2 \zeta^2+ 2(I-1)^2\gamma^2  \sigma^2 \right).
\end{align}

With $\gamma \le \frac{1}{\sqrt{6(L_h^2+L_g^2)}I}$, we have $\frac{1}{1 - 3\gamma^2(L_h^2+L_g^2)I(I-1) } \le 2$. Then we obtain
\begin{align}
\sum_{k=0}^{I-1} \frac{1}{N}\sum_{i=1}^N \Expect\normsq{\x_i^{r,k} -\hat{\x}^{r,k}} \le 12(I-1)^3\gamma^2 \zeta^2+ 4(I-1)^2\gamma^2  \sigma^2. 
\end{align}
}

\subsection{Proof of Lemma~\ref{lemma:distance-averaged-models}}
At $r$th round, for $k=0$, we have
\begin{align}
    \Expect\normsq{\hat{\x}^{r,k} - \bar{\x}^{r}}=0.
\end{align}

At $r$th round, for $1\le k \le I-1$, 
we have
{ 
\begin{align}
    &\Expect\normsq{\hat{\x}^{r,k} - \bar{\x}^{r}} \notag\\
    &= \Expect\normsq{\hat{\x}^{r,k-1} -\frac{\gamma}{N}\sum_{i=1}^N\g_i(\x_i^{r,k-1})  - \bar{\x}^{r}} \notag \\
    &= \Expect\Bigg\Vert\hat{\x}^{r,k-1}  - \bar{\x}^{r} -\gamma\Bigg(\frac{1}{N}\sum_{i=1}^N\g_i(\x_i^{r,k-1}) - \frac{1}{N}\sum_{i=1}^N\nabla F_i(\x_i^{r,k-1}) +\frac{1}{N}\sum_{i=1}^N\nabla F_i(\x_i^{r,k-1}) \notag\\
    &\quad\quad\quad - \nabla f(\hat{\x}^{r,k-1}) + \nabla f(\hat{\x}^{r,k-1}) - \nabla f(\bar{\x}^r) + \nabla f(\bar{\x}^r) \Bigg) \Bigg\Vert^2   \notag\\
    &\le \E \normsq{\hat{\x}^{r,k-1}  - \bar{\x}^{r}  - \gamma \left(\frac{1}{N}\sum_{i=1}^N\nabla F_i(\x_i^{r,k-1})- \nabla f(\hat{\x}^{r,k-1}) + \nabla f(\hat{\x}^{r,k-1}) - \nabla f(\bar{\x}^r) + \nabla f(\bar{\x}^r)   \right)} \notag\\
    &+ \frac{\gamma^2 \sigma^2}{N} \notag \\
    &\overset{(a)}{\leq} \left(1+ \frac{1}{2I-1} \right) \Expect{\normsq{\hat{\x}^{r,k-1}  - \bar{\x}^r}} + \frac{\gamma^2 \sigma^2}{N}\notag\\
    &+ \gamma^2(1+2I-1)\E\normsq{\frac{1}{N}\sum_{i=1}^N\nabla F_i(\x_i^{r,k-1})- \nabla f(\hat{\x}^{r,k-1}) + \nabla f(\hat{\x}^{r,k-1}) - \nabla f(\bar{\x}^r) + \nabla f(\bar{\x}^r) }  \notag \\
    &\le \left(1+\frac{1}{2I-1}\right) \Expect{\normsq{\hat{\x}^{r,k-1}  - \bar{\x}^r}} + \frac{\gamma^2 \sigma^2}{N} + 6I\gamma^2 \Expect{\normsq{\frac{1}{N}\sum_{i=1}^N\nabla F_i(\x_i^{r,k-1}) - \nabla f(\hat{\x}^{r,k-1})}} \notag \\
    &\quad\quad\quad + 6I\gamma^2 \Expect{\normsq{\nabla f(\hat{\x}^{r,k-1}) - \nabla f(\bar{\x}^r)}} + 6I\gamma^2 \Expect{\normsq{\nabla f(\bar{\x}^r)}} \notag \\
    & \\
    &\overset{(b)}{\leq} \left(1+\frac{1}{2I-1} + 6I\gamma^2 L_g^2\right) \Expect{\normsq{\hat{\x}^{r,k-1}  - \bar{\x}^r}} + \frac{\gamma^2 \sigma^2}{N} + \frac{6I\gamma^2 L_h^2}{N} \sum_{i=1}^N \Expect{\normsq{\x_i^{r,k-1} - \hat{\x}^{r,k-1}}} \notag \\
    &\quad\quad\quad + 6I\gamma^2 \Expect{\normsq{\nabla f(\bar{\x}^r)}} \notag\\
     & \\
    &\overset{(c)}{\le} \left(1+\frac{1}{I-1} \right) \Expect{\normsq{\hat{\x}^{r,k-1}  - \bar{\x}^r}} + \frac{\gamma^2 \sigma^2}{N} + \frac{6I\gamma^2 L_h^2}{N} \sum_{i=1}^N \Expect{\normsq{\x_i^{r,k-1} - \hat{\x}^{r,k-1}}}+ 6I\gamma^2 \Expect{\normsq{\nabla f(\bar{\x}^r)}} \notag \\
    &\overset{(d)}{\le} 5(I-1)\cdot \frac{\gamma^2\sigma^2}{N} + 30I\gamma^2 \sum_{k=0}^{I-1} \frac{L_h^2}{N}\sum_{i=1}^N \Expect\normsq{\x_i^{r,k} -\hat{\x}^{r,k}}   +30I(I-1) \gamma^2 \Expect\normsq{\nabla f(\bar{\x}^r)},
\end{align}
where $(a)$ is due to that $\normsq{\x+\y}\le (1+p)\normsq{\x}+(1+\frac{1}{p})\normsq{\y}, \forall p>0, \forall \x,\y \in \R^d$,
}

$(b)$ is due to Assumptions~\ref{assumption:global-lipschitz-gradient} and \ref{assumption:gradient-to-model}, $(c)$ is due to that by choosing $\gamma \le \frac{1}{2\sqrt{3}L_g I}$, we have
\begin{align}
1+\frac{1}{2I-1} + 6I\gamma^2 L_g^2 \le 1+ \frac{1}{2(I-1)} + \frac{1}{2I} \le 1+ \frac{1}{I-1},
\end{align}
and $(d)$ is due to $(1+\frac{1}{q})^q < e, \forall q>0$, where $e$ is the natural exponent.

\subsection{Proof of Lemma~\ref{lemma:model-divergence-momentum}}
By Lemma 5 in \citet{yu2019linear}, for FedAvg with momentum, we have 
\begin{align}\label{eq:model-divergence-momentum}
    &\frac{1}{N}\sum_{i=1}^N \E\normsq{\bar{\x}^t - \x_i^t}\notag \\
    &\le \frac{2\gamma^2I\sigma^2}{(1-\beta^2)} + 2\gamma^2\cdot \frac{1}{N}\sum_{i=1}^N \E  \normsq{\sum_{\tau =t_0}^{t-1} \left[ \nabla F_i(\x_i^\tau ) - \frac{1}{N}\sum_{j=1}^N \nabla F_j(\x_j^\tau )   \right]\frac{1-\beta^{t-\tau}}{1-\beta}}, 
\end{align}
{ where $t = aI+b, 1\le b\le I$ and $t_0 = aI$.}

{  Note that $t-t_0 \le I$.}
For the second term in the RHS of (\ref{eq:model-divergence-momentum}), we have
{ 
\begin{align}
    &\frac{1}{N}\sum_{i=1}^N \E  \normsq{\sum_{\tau =t_0}^{t-1} \left[ \nabla F_i(\x_i^\tau ) - \frac{1}{N}\sum_{j=1}^N \nabla F_j(\x_j^\tau )   \right]\frac{1-\beta^{t-\tau}}{1-\beta}}  \notag \\
    &\le (t-t_0)\sum_{\tau =t_0}^{t-1}\frac{1}{N}\sum_{i=1}^N \E\normsq{\left[ \nabla F_i(\x_i^\tau ) - \frac{1}{N}\sum_{j=1}^N \nabla F_j(\x_j^\tau )   \right]\frac{1-\beta^{t-\tau}}{1-\beta}} \notag \\
    &\le (t-t_0)\sum_{\tau =t_0}^{t-1}\frac{1}{N}\sum_{i=1}^N \E\normsq{ \nabla F_i(\x_i^\tau ) - \frac{1}{N}\sum_{j=1}^N \nabla F_j(\x_j^\tau ) }\left(\frac{1-\beta^{t-\tau}}{1-\beta}\right)^2 \notag \\
    &\le \frac{I}{(1-\beta)^2} \sum_{\tau =t_0}^{t-1}\frac{1}{N}\sum_{i=1}^N \E\normsq{\nabla F_i(\x_i^\tau ) - \frac{1}{N}\sum_{j=1}^N \nabla F_j(\x_j^\tau )}  \notag\\
    &\overset{(a)}{\le}\frac{I}{(1-\beta)^2} \sum_{\tau =t_0}^{t-1}\left(3(L_h^2+L_g^2)\cdot \frac{1}{N}\sum_{j=1}^N \normsq{\bar{\x} - \x_j} + 3 \zeta^2 \right) \notag \\
    &\le \frac{3I(L_h^2 + L_g^2)}{(1-\beta)^2}\sum_{\tau =t_0}^{t-1} \frac{1}{N}\sum_{i=1}^N \E\normsq{\x_i^\tau - \hat{\x}^\tau} + \frac{3I^2\zeta^2}{(1-\beta)^2},
\end{align}
}

where $(a)$ is due to Lemma~\ref{lemma:local-gradient-deviation}.
Substituting back to (\ref{eq:model-divergence-momentum}), we obtain
\begin{align}
    \frac{1}{N}\sum_{i=1}^N \E\normsq{\bar{\x}^t - \x_i^t}\le \frac{2\gamma^2I\sigma^2}{(1-\beta^2)} + \frac{6\gamma^2I(L_h^2 + L_g^2)}{(1-\beta)^2}\sum_{\tau =t_0}^{t-1} \frac{1}{N}\sum_{i=1}^N \E\normsq{\x_i^\tau - \hat{\x}^\tau} + \frac{6\gamma^2I^2\zeta^2}{(1-\beta)^2}.
\end{align}
{ Taking the average over $t$ on both sides, we obtain
\begin{align}
\frac{1}{T}\sum_{t=0}^{T-1}\frac{1}{N}\sum_{i=1}^N \E\normsq{\bar{\x}^t - \x_i^t}&\le \frac{2\gamma^2I\sigma^2}{(1-\beta^2)} + \frac{6\gamma^2I(L_h^2 + L_g^2)}{(1-\beta)^2}\frac{1}{T}\sum_{t=0}^{T-1}\sum_{\tau =t_0}^{t-1} \frac{1}{N}\sum_{i=1}^N \E\normsq{\x_i^\tau - \hat{\x}^\tau} + \frac{6\gamma^2I^2\zeta^2}{(1-\beta)^2}\notag \\
&\le \frac{2\gamma^2I\sigma^2}{(1-\beta^2)} + \frac{6\gamma^2I^2(L_h^2 + L_g^2)}{(1-\beta)^2}\frac{1}{T}\sum_{t=0}^{T-1} \frac{1}{N}\sum_{i=1}^N \E\normsq{\x_i^\tau - \hat{\x}^\tau} + \frac{6\gamma^2I^2\zeta^2}{(1-\beta)^2}
\end{align}
}
Rearranging the above inequality, with $1 - \frac{6\gamma^2I^2(L_h^2 + L_g^2)}{(1-\beta)^2}>0$,  we get
\begin{align}
    \frac{1}{T}\sum_{t=0}^{T-1}\frac{1}{N}\sum_{i=1}^N \E\normsq{\bar{\x}^t - \x_i^t} \le \frac{1}{1 - \frac{6\gamma^2I^2(L_h^2 + L_g^2)}{(1-\beta)^2}}\cdot \left(\frac{2\gamma^2I\sigma^2}{(1-\beta^2)} + \frac{6\gamma^2I^2\zeta^2}{(1-\beta)^2}  \right).
\end{align}

\subsection{Proof of Theorem~\ref{thm:non-convex}}\label{sec:proof-non-convex}
With Assumption~\ref{assumption:global-lipschitz-gradient}, we have 
\begin{align}\label{eq:Lip-first-step}
    \E\left[f(\bar{\x}^{r+1})\right]&\leq \E\left[f(\bar{\x}^{r}) \right] - \gamma\eta \E\innerprod{\nabla f(\bar{\x}^{r}), \frac{1}{N}\sum_{i=1}^N \sum_{k=0}^{I-1} \g_i(\x_i^{r,k})} + \frac{\gamma^2\eta^2 L_g}{2}\E{\normsq{\frac{1}{N}\sum_{i=1}^N \sum_{k=0}^{I-1} \g_i(\x_i^{r,k})}}\notag \\
    &= \E\left[f(\bar{\x}^{r}) \right] - \gamma\eta \E\innerprod{\nabla f(\bar{\x}^{r}), \frac{1}{N}\sum_{i=1}^N \sum_{k=0}^{I-1} \E_{\x_i^{r,k}}\left[\g_i(\x_i^{r,k})\right]} + \frac{\gamma^2\eta^2 L_g}{2}\E\normsq{\frac{1}{N}\sum_{i=1}^N \sum_{k=0}^{I-1} \g_i(\x_i^{r,k})} \notag \\
    &= \E\left[f(\bar{\x}^{r}) \right] - \gamma\eta \E\innerprod{\nabla f(\bar{\x}^{r}), \frac{1}{N}\sum_{i=1}^N \sum_{k=0}^{I-1} \nabla F_i(\x_i^{r,k})} + \frac{\gamma^2\eta^2 L_g}{2}\E\normsq{\frac{1}{N}\sum_{i=1}^N \sum_{k=0}^{I-1} \g_i(\x_i^{r,k})}. 
\end{align}

The second term in the RHS of (\ref{eq:Lip-first-step}) can be computed as follows. 
\begin{align}
    &- \gamma\eta \E\innerprod{\nabla f(\bar{\x}^{r}), \frac{1}{N}\sum_{i=1}^N \sum_{k=0}^{I-1} \nabla F_i(\x_i^{r,k})} \notag\\
    &=- \frac{\gamma\eta}{I} \E\innerprod{I \nabla f(\bar{\x}^{r}), \frac{1}{N}\sum_{i=1}^N \sum_{k=0}^{I-1} \nabla F_i(\x_i^{r,k})} \notag\\
    &= \frac{\gamma\eta}{2I}\left\{\E\normsq{\frac{1}{N}\sum_{i=1}^N \sum_{k=0}^{I-1} \left(\nabla F_i(\x_i^{r,k}) - \nabla f(\bar{\x}^{r})\right)} - I^2 \E\normsq{\nabla f(\bar{\x}^{r})} -\E \normsq{\frac{1}{N}\sum_{i=1}^N \sum_{k=0}^{I-1} \nabla F_i(\x_i^{r,k}) }\right\}  \notag\\
    &= \frac{\gamma\eta}{2I}\left\{\E\normsq{\sum_{k=0}^{I-1} \left(\frac{1}{N}\sum_{i=1}^N \nabla F_i(\x_i^{r,k}) - \nabla f(\hat{\x}^{r,k}) \right) + \sum_{k=0}^{I-1}\left(\nabla f(\hat{\x}^{r,k})  - \nabla f(\bar{\x}^{r})\right)} \right.\notag\\
    &\quad\quad\quad \left. - I^2 \E\normsq{\nabla f(\bar{\x}^{r})} - \E {\normsq{\frac{1}{N}\sum_{i=1}^N \sum_{k=0}^{I-1} \nabla F_i(\x_i^{r,k}) }} \right\}\notag\\
    &\leq \frac{\gamma\eta}{2I}\left\{2I \sum_{k=0}^{I-1} \E{\normsq{ \frac{1}{N}\sum_{i=1}^N \nabla F_i(\x_i^{r,k}) - \nabla f(\hat{\x}^{r,k})}} + 2I \sum_{k=0}^{I-1}\E{\normsq{\nabla f(\hat{\x}^{r,k})  - \nabla f(\bar{\x}^{r})}} \right. \notag\\
    &\quad\quad\quad \left. - I^2 \E\normsq{\nabla f(\bar{\x}^{r})} - \E{\normsq{\frac{1}{N}\sum_{i=1}^N \sum_{k=0}^{I-1} \nabla F_i(\x_i^{r,k}) }} \right\}\notag\\
    &\overset{(a)}{\leq} \frac{\gamma\eta}{2I}\left\{\frac{2IL_h^2}{N} \sum_{k=0}^{I-1} \sum_{i=1}^N \E{\normsq{ \x_i^{r,k} - \hat{\x}^{r,k}}} + 2IL_g^2 \sum_{k=0}^{I-1}\E{\normsq{\hat{\x}^{r,k} - \bar{\x}^{r}}} \right. \notag\\
    &\quad\quad\quad \left. - I^2 \E\normsq{\nabla f(\bar{\x}^{r})} - \E{\normsq{\frac{1}{N}\sum_{i=1}^N \sum_{k=0}^{I-1} \nabla F_i(\x_i^{r,k}) }} \right\}, \label{eq:inner-prod-expansion}
\end{align}
where $(a)$ is due to Assumption~\ref{assumption:global-lipschitz-gradient} and Assumption~\ref{assumption:gradient-to-model}.

The third term in the RHS of (\ref{eq:Lip-first-step}) can be computed as follows. 
{ 
\begin{align}
    &\frac{\gamma^2 \eta^2 L_g}{2}\E{\normsq{\frac{1}{N}\sum_{i=1}^N \sum_{k=0}^{I-1} \g_i(\x_i^{r,k})}} \notag \\
    &= \frac{\gamma^2 \eta^2 L_g}{2}\E{\normsq{\frac{1}{N}\sum_{i=1}^N \sum_{k=0}^{I-1} \g_i(\x_i^{r,k})-\frac{1}{N}\sum_{i=1}^N \sum_{k=0}^{I-1} \nabla F_i(\x_i^{r,k}) + \frac{1}{N}\sum_{i=1}^N \sum_{k=0}^{I-1} \nabla F_i(\x_i^{r,k})}}  \notag \\
    &\le \gamma^2 \eta^2 L_g\E \normsq{\frac{1}{N}\sum_{i=1}^N \sum_{k=0}^{I-1} \nabla F_i(\x_i^{r,k})}+\gamma^2 \eta^2 L_g\E{\normsq{\frac{1}{N}\sum_{i=1}^N \sum_{k=0}^{I-1} \left[\g_i(\x_i^{r,k}) - \nabla F_i(\x_i^{r,k})\right]}} \notag \\
    &\overset{(a)}{\le} \gamma^2 \eta^2 L_g\E\normsq{\frac{1}{N}\sum_{i=1}^N \sum_{k=0}^{I-1} \nabla F_i(\x_i^{r,k})} + \frac{\gamma^2 \eta^2 IL_g \sigma^2}{N}.\label{eq:third-term-expansion}
\end{align}
Now we explain $(a)$ in (\ref{eq:third-term-expansion}). We have
\begin{align}
&\E{\normsq{\frac{1}{N}\sum_{i=1}^N \sum_{k=0}^{I-1} \left[\g_i(\x_i^{r,k}) - \nabla F_i(\x_i^{r,k})\right]}} \notag \\
&=\frac{1}{N^2}\sum_{i=1}^N\sum_{i'=1}^N \sum_{k=0}^{I-1}\sum_{k'=0}^{I-1}\E \left\langle \g_i(\x_i^{r,k}) - \nabla F_i(\x_i^{r,k}), \g_{i'}(\x_{i'}^{r,k'}) - \nabla F_{i'}(\x_{i'}^{r,k'}) \right\rangle.
\end{align}
When $i\neq i'$, we have
\begin{align}
&\E \left\langle \g_i(\x_i^{r,k}) - \nabla F_i(\x_i^{r,k}), \g_{i'}(\x_{i'}^{r,k'}) - \nabla F_{i'}(\x_{i'}^{r,k'}) \right\rangle \notag \\
&= \E\left( \E\left[\left\langle \g_i(\x_i^{r,k}) - \nabla F_i(\x_i^{r,k}), \g_{i'}(\x_{i'}^{r,k'}) - \nabla F_{i'}(\x_{i'}^{r,k'}) \right\rangle  |\x_i^{r,k}, \x_{i'}^{r,k'}\right]\right)\notag \\
&=0.
\end{align}
When $i=i'$ but $k\neq k'$, suppose that $k\le k'$,
\begin{align}
&\E \left\langle \g_i(\x_i^{r,k}) - \nabla F_i(\x_i^{r,k}), \g_{i}(\x_{i}^{r,k'}) - \nabla F_{i}(\x_{i}^{r,k'}) \right\rangle \notag \\
&\E \left( \E \left[\left\langle \g_i(\x_i^{r,k}) - \nabla F_i(\x_i^{r,k}), \g_{i}(\x_{i}^{r,k'}) - \nabla F_{i}(\x_{i}^{r,k'}) \right\rangle |\x_{i}^{r,0},\x_{i}^{r,1},\x_{i}^{r,2},\ldots, \x_{i}^{r,k'} \right] \right)\notag \\
&= 0.
\end{align}

Therefore, we have
\begin{align}
&\E{\normsq{\frac{1}{N}\sum_{i=1}^N \sum_{k=0}^{I-1} \left[\g_i(\x_i^{r,k}) - \nabla F_i(\x_i^{r,k})\right]}} \notag \\
&=\frac{1}{N^2}\sum_{i=1}^N\sum_{k=0}^{I-1} \E \left(\E\left[\normsq{\g_i(\x_i^{r,k}) - \nabla F_i(\x_i^{r,k}) }|\x_i^{r,k}\right] \right) \notag \\
&\le \frac{I\sigma^2}{N}.
\end{align}

}

{ 
Substituting (\ref{eq:inner-prod-expansion}) and (\ref{eq:third-term-expansion}) 
to (\ref{eq:Lip-first-step}), 
we have

\begin{align}\label{eq:apply-lemmas}
    &\E[f(\bar{\x}^{r+1})] \notag\\
    &\leq \E\left[f(\bar{\x}^{r}) \right] + \frac{\gamma\eta L_h^2}{N} \sum_{k=0}^{I-1} \sum_{i=1}^N \E{\normsq{ \x_i^{r,k} - \hat{\x}^{r,k}}} + \gamma\eta L_g^2 \sum_{k=0}^{I-1}\E{\normsq{\hat{\x}^{r,k} - \bar{\x}^{r}}} \notag\\
    & - \frac{\gamma\eta I}{2} \E\normsq{\nabla f(\bar{\x}^{r})} - \gamma\eta\left(\frac{1}{2I} - \gamma\eta L_g \right) \E{\normsq{\frac{1}{N}\sum_{i=1}^N \sum_{k=0}^{I-1} \nabla F_i(\x_i^{r,k}) }} + \frac{\gamma^2 \eta^2 IL_g \sigma^2}{N} \notag \\
    &\overset{(a)}{\leq}  \E\left[f(\bar{\x}^{r}) \right] + \frac{\gamma\eta L_h^2}{N} \sum_{k=0}^{I-1} \sum_{i=1}^N \E\normsq{ \x_i^{r,k} - \hat{\x}^{r,k}} + \gamma\eta L_g^2 \sum_{k=0}^{I-1}\E{\normsq{\hat{\x}^{r,k} - \bar{\x}^{r}}}\notag \\
    &- \frac{\gamma\eta I}{2} \E\normsq{\nabla f(\bar{\x}^{r})} + \frac{\gamma^2 \eta^2 IL_g \sigma^2}{N} \notag \\
    &\overset{(b)}{\le} \E\left[f(\bar{\x}^{r}) \right] + \frac{\gamma\eta L_h^2}{N}\sum_{k=0}^{I-1} \sum_{i=1}^N \E\normsq{ \x_i^{r,k} - \hat{\x}^{r,k}}- \frac{\gamma\eta I}{2} \E\normsq{\nabla f(\bar{\x}^{r})} + \frac{\gamma^2 \eta^2 IL_g \sigma^2}{N}\notag \\
    &+ \gamma\eta L_g^2 I \left( 5(I-1) \frac{\gamma^2\sigma^2}{N} + 30I\gamma^2 \sum_{k=0}^{I-1} \frac{L_h^2}{N}\sum_{i=1}^N \E\normsq{\x_i^{r,k} -\hat{\x}^{r,k}}  
    +30I(I-1) \gamma^2 \normsq{\nabla f(\bar{\x}^r)} \right)\notag \\
    &\le \E\left[f(\bar{\x}^{r}) \right] - \left(\frac{\gamma\eta I}{2} - 30\gamma^3\eta L_g^2I^2(I-1) \right)\E\normsq{\nabla f(\bar{\x}^{r})} + \frac{\gamma^2 \eta^2 IL_g \sigma^2}{N} + 5\gamma^3\eta L_g^2 I(I-1) \frac{\sigma^2}{N} \notag \\
    &+ \left(\frac{\gamma\eta L_h^2}{N} + \frac{30\gamma^3\eta L_g^2L_h^2I^2 }{N} \right)\sum_{k=0}^{I-1} \sum_{i=1}^N \E\normsq{ \x_i^{r,k}- \hat{\x}^{r,k}}.
\end{align}
where $(a)$ is {  due to $\gamma\eta < \frac{1}{2IL_g }$, 
$(b)$ is due to Lemma~\ref{lemma:distance-averaged-models}.}
By $\gamma \le \frac{1}{2\sqrt{30}L_g I}$, we have
\begin{align}
    \frac{\gamma\eta I}{2} - 30\gamma^3\eta L_g^2I^2(I-1) \le \frac{\gamma\eta I}{4},
\end{align}
and 
\begin{align}
    \frac{\gamma\eta L_h^2}{N} + \frac{30\gamma^3\eta L_g^2L_h^2I^2 }{N} \le \frac{3\gamma\eta L_h^2}{2N}.
\end{align}
Substituting back to (\ref{eq:apply-lemmas}), we obtain
\begin{align}
    &\E[f(\bar{\x}^{r+1})] \notag\\
    &\le \E\left[f(\bar{\x}^{r}) \right] -\frac{\gamma\eta I}{4}\E\normsq{\nabla f(\bar{\x}^{r})} + \frac{\gamma^2 \eta^2 IL_g \sigma^2}{N} + 5\gamma^3\eta L_g^2 I(I-1) \frac{\sigma^2}{N} + \frac{3\gamma\eta L_h^2}{2N}\sum_{k=0}^{I-1} \sum_{i=1}^N \E\normsq{ \x_i^{r,k}- \hat{\x}^{r,k}}\notag \\
    &\overset{(a)}{\le} \E\left[f(\bar{\x}^{r}) \right] -\frac{\gamma\eta I}{4}\E\normsq{\nabla f(\bar{\x}^{r})} + \frac{\gamma^2 \eta^2 IL_g \sigma^2}{N} + 5\gamma^3\eta L_g^2 I(I-1) \frac{\sigma^2}{N} \notag \\
    &+\frac{3\gamma\eta L_h^2}{2}\left[12(I-1)^3\gamma^2 \zeta^2+ 4(I-1)^2\gamma^2 \sigma^2\right].
\end{align}
where $(a)$ is due to using Lemma~\ref{lemma:model-divergence}.
Moving $-\frac{\gamma\eta I}{4}\E\normsq{\nabla f(\bar{\x}^{r})}$ to left and taking the average over $r$, we obtain
\begin{align}
&\frac{1}{R}\sum_{r=0}^{R-1} \E\normsq{\nabla f(\bar{\x}^{r})} \le \frac{4(f(\bar{\x}^0) - f^*)}{\gamma\eta IR} + \frac{4\gamma\eta L_g \sigma^2}{N} + \frac{20\gamma^2 L_g^2 (I-1) \sigma^2}{N} + 24\gamma^2 L_h^2(I-1)\sigma^2  + 72 \gamma^2L_h^2(I-1)^2\zeta^2.
\end{align}

Then we have
\begin{align}
\min_{r\in[R]}\Expect\normsq{\nabla f(\bar{\x}^{r})} &\leq \frac{1}{R}\sum_{r=0}^{R-1} \E\normsq{\nabla f(\bar{\x}^{r})} \nonumber\\
&= \mathcal{O} \bigg(\frac{f(\x^0)- f^*}{\gamma\eta IR} + \frac{\gamma\eta L_g \sigma^2}{N}
+ \gamma^2\left(\frac{L_g^2}{N} + L_h^2\right)(I-1)\sigma^2  +  \gamma^2L_h^2(I-1)^2\zeta^2\bigg). 
\end{align}
}

\subsection{Proof of Theorem~\ref{thm:partial-participation}}
\label{sec:proof-partial-participation}

{ 
In this section, we define an identity random variable to indicate the participation of workers in the following.
In each round, the server performs $M$ times of sampling. Then $\forall j \in [M], i\in [N]$, we have 
\begin{align}
    \mathds{1}^r_{j,i} = 
    \begin{cases}
    1, \text{worker $i$ is chosen at $j$th sampling  of $r$th round,} \\
    0, \text{else},
    \end{cases}
\end{align}
where $\sum_{i=1}^N \mathds{1}^r_{j,i} = 1$. Since we consider uniform sampling with replacement, we have
\begin{align}
p(\mathds{1}^r_{j,i} = 1) = \frac{1}{N}, \forall i,j,r,
\end{align}
and
\begin{align}
\E_{\mathcal{S}_r}[\mathds{1}^r_{j,i}] = \frac{1}{N},
\end{align}
where $\E_{\mathcal{S}_r}[\cdot]$ means taking the expectation over sampling at $r$th round. In addition, we have
\begin{align}
    \E_{\mathcal{S}_r} \normsq{\mathds{1}^r_{j,i}\mathbf{z}} = \E_{\mathcal{S}_r} \mathds{1}^r_{j,i}\normsq{\mathbf{z}}=\frac{1}{N}\normsq{\mathbf{z}},
\end{align}
for any $\mathbf{z}$ that is independent of $\mathds{1}^r_{j,i}$, where the first equality is because $(\mathds{1}^r_{j,i})^2=\mathds{1}^r_{j,i}$;
and for $i\neq i'$,
\begin{align}
    \E_{\mathcal{S}_r} \left[\mathds{1}^r_{j,i}\mathds{1}^r_{j,i'}\right]= 0.
\end{align}

Here we assume that the sampling workers and sampling gradients are independent.
}

{  
With Assumption~\ref{assumption:global-lipschitz-gradient}, after one round of FedAvg, we have 
\begin{align}\label{eq:partial-participation}
    \Expect \left[f(\bar{\x}^{r+1})\right] &\leq \Expect \left[f(\bar{\x}^{r})\right] - \gamma\eta \Expect  \innerprod{\nabla f(\bar{\x}^{r}), \frac{1}{M}\sum_{j=1}^M\sum_{i=1}^N \sum_{k=0}^{I-1}\mathds{1}^r_{j,i}  \g_i(\x_i^{r,k})} + \frac{\gamma^2\eta^2 L_g}{2}\Expect \normsq{\frac{1}{M}\sum_{j=1}^M\sum_{i=1}^N \sum_{k=0}^{I-1}\mathds{1}^r_{j,i}  \g_i(\x_i^{r,k})}.
\end{align}

It can be seen that the inner-product term is the same as that in (\ref{eq:inner-prod-expansion}). So we have 
\begin{align} \label{eq: inner-prod pp}
&- \gamma\eta \Expect  \innerprod{\nabla f(\bar{\x}^{r}), \frac{1}{M}\sum_{j=1}^M\sum_{i=1}^N \sum_{k=0}^{I-1}\mathds{1}^r_{j,i}  \g_i(\x_i^{r,k})} \notag \\
& = - \gamma\eta \Expect  \innerprod{\nabla f(\bar{\x}^{r}), \frac{1}{M}\sum_{j=1}^M\sum_{i=1}^N \sum_{k=0}^{I-1}\E_{\mathcal{S}^r}[\mathds{1}^r_{j,i}]  \E_{\x_i^{r,k}}[\g_i(\x_i^{r,k})]} \notag \\
&= - \gamma\eta \Expect{\innerprod{\nabla f(\bar{\x}^{r}), \frac{1}{N}\sum_{i=1}^N \sum_{k=0}^{I-1} \nabla F_i(\x_i^{r,k})}} \notag \\
&\le \frac{\gamma\eta}{2I}\left\{\frac{2IL_h^2}{N} \sum_{k=0}^{I-1} \sum_{i=1}^N \Expect\normsq{ \x_i^{r,k} - \hat{\x}^{r,k}} + 2IL_g^2 \sum_{k=0}^{I-1}\Expect{\normsq{\hat{\x}^{r,k} - \bar{\x}^{r}}} \right. \notag\\
    & \left. - I^2\Expect \normsq{\nabla f(\bar{\x}^{r})} - \Expect{\normsq{\frac{1}{N}\sum_{i=1}^N \sum_{k=0}^{I-1} \nabla F_i(\x_i^{r,k}) }} \right\}.
\end{align}

}

In this case, %
we consider $\x_i^{r,k},i\notin \mathcal{S}_r$ as a virtual local model on worker $i$, which is not computed in the system. 
The virtual local model is mainly used for %
analysis.
Similar to (\ref{eq:third-term-expansion}), for the third term in the RHS of (\ref{eq:partial-participation}), we have
{ 
\begin{align}\label{eq:moment-partial-participation}
&\Expect \normsq{\frac{1}{M}\sum_{j=1}^M\sum_{i=1}^N \sum_{k=0}^{I-1}\mathds{1}^r_{j,i}  \g_i(\x_i^{r,k})} \notag \\
&=\Expect\normsq{\frac{1}{M}\sum_{j=1}^M\sum_{i=1}^N \sum_{k=0}^{I-1}\mathds{1}^r_{j,i}\left[ \g_i(\x_i^{r,k}) - \nabla F_i(\x_i^{r,k}) + \nabla F_i(\x_i^{r,k})\right]}  \notag \\
&\le 2\Expect\normsq{\frac{1}{M}\sum_{j=1}^M\sum_{i=1}^N \sum_{k=0}^{I-1}\mathds{1}^r_{j,i}\left[ \g_i(\x_i^{r,k}) - \nabla F_i(\x_i^{r,k})\right]} + 2\Expect\normsq{\frac{1}{M}\sum_{j=1}^M\sum_{i=1}^N \sum_{k=0}^{I-1}\mathds{1}^r_{j,i} \nabla F_i(\x_i^{r,k})}\notag\\
&\le \frac{2I\sigma^2}{M} + 2\Expect \left[\E_{\mathcal{S}_r}
\normsq{\frac{1}{M}\sum_{j=1}^M\sum_{i=1}^N \sum_{k=0}^{I-1}\mathds{1}^r_{j,i} \nabla F_i(\x_i^{r,k})}\right].
\end{align}

Now we consider the expectation on sampling.
}
Let $Q_i = \sum_{k=0}^{I-1} \nabla F_i(\x_i^{r,k})$, then for the second term in the RHS of (\ref{eq:moment-partial-participation}), we have 

{ 
\begin{align}
&\E_{\mathcal{S}_r}
\normsq{\frac{1}{M}\sum_{j=1}^M\sum_{i=1}^N \sum_{k=0}^{I-1}\mathds{1}^r_{j,i} \nabla F_i(\x_i^{r,k})} = \Expect_{\mathcal{S}_r}\normsq{\frac{1}{M}\sum_{j=1}^M\sum_{i=1}^N \mathds{1}^r_{j,i} Q_i} \notag \\
&= \Expect_{\mathcal{S}_r}\normsq{\frac{1}{M}\sum_{j=1}^M\sum_{i=1}^N \mathds{1}^r_{j,i} Q_i - \frac{1}{N}\sum_{i=1}^N Q_i + \frac{1}{N}\sum_{i=1}^N Q_i} \notag \\
&\overset{(a)}{=} \Expect_{\mathcal{S}_r}\normsq{\frac{1}{M}\sum_{j=1}^M\sum_{i=1}^N \mathds{1}^r_{j,i} Q_i - \frac{1}{N}\sum_{i=1}^N Q_i} + \normsq{\frac{1}{N}\sum_{i=1}^N Q_i}, 
\end{align}
} 
{ 
where $(a)$ is due to
\begin{align}
&\E_{\mathcal{S}^r}\left\langle \frac{1}{M}\sum_{j=1}^M\sum_{i=1}^N \mathds{1}^r_{j,i} Q_i - \frac{1}{N}\sum_{i=1}^N Q_i, \frac{1}{N}\sum_{i=1}^N Q_i\right\rangle =0.
\end{align}
}
{ 
Further, we have
\begin{align}
&\Expect_{\mathcal{S}_r}\normsq{\frac{1}{M}\sum_{j=1}^M\sum_{i=1}^N \mathds{1}^r_{j,i} Q_i - \frac{1}{N}\sum_{i=1}^N Q_i} \notag \\
&=\frac{1}{M^2}\Expect_{\mathcal{S}_r}\left[  \sum_{j=1}^M\normsq{\frac{1}{N}\sum_{i=1}^N \left(N\mathds{1}^r_{j,i} - 1 \right)Q_i} + \sum_{i\neq j'} \left\langle \frac{1}{N}\sum_{i=1}^N \left(N\mathds{1}^r_{j,i} - 1 \right)Q_i, \frac{1}{N}\sum_{i=1}^N \left(N\mathds{1}^r_{j',i} - 1 \right)Q_i \right\rangle \right] \notag \\
&= \frac{1}{M^2}\sum_{j=1}^M\Expect_{\mathcal{S}_r} \normsq{\frac{1}{N}\sum_{i=1}^N \left(N\mathds{1}^r_{j,i} - 1 \right)Q_i} \notag \\
&=\frac{1}{M^2}\sum_{j=1}^M \Expect_{\mathcal{S}_r} \normsq{\sum_{i=1}^N \mathds{1}^r_{j,i} Q_i -  \frac{1}{N}\sum_{i=1}^N Q_i }  \notag \\
&= \frac{1}{M^2}\sum_{j=1}^M \left[\Expect_{\mathcal{S}_r}\normsq{\sum_{i=1}^N \mathds{1}^r_{j,i} Q_i} - 2\Expect_{\mathcal{S}_r}\left\langle \sum_{i=1}^N \mathds{1}^r_{j,i} Q_i,\frac{1}{N}\sum_{i=1}^N Q_i \right\rangle + \normsq{\frac{1}{N}\sum_{i=1}^N Q_i} \right] \notag \\
&= \frac{1}{M^2}\sum_{j=1}^M \Expect_{\mathcal{S}_r}\normsq{\sum_{i=1}^N \mathds{1}^r_{j,i} Q_i} - \frac{1}{M}\normsq{\frac{1}{N}\sum_{i=1}^N Q_i} \notag \\
&= \frac{1}{M^2}\sum_{j=1}^M \left[ \sum_{i=1}^N\Expect_{\mathcal{S}_r}\normsq{\mathds{1}^r_{j,i}}\normsq{Q_i} + \sum_{i\neq i'}\Expect_{\mathcal{S}_r}\left\langle  \mathds{1}^r_{j,i} Q_i, \mathds{1}^r_{j,i'} Q_{i'} \right\rangle  \right]- \frac{1}{M}\normsq{\frac{1}{N}\sum_{i=1}^N Q_i} \notag \\
&= \frac{1}{MN}\sum_{i=1}^N\normsq{Q_i} - \frac{1}{M}\normsq{\frac{1}{N}\sum_{i=1}^N Q_i}. 
\end{align}
}

Substituting above results back to (\ref{eq:moment-partial-participation}), we obtain
{ 
\begin{align}\label{eq:moment-partial-participation-2}
\Expect \normsq{\frac{1}{M}\sum_{j=1}^M\sum_{i=1}^N \sum_{k=0}^{I-1}\mathds{1}^r_{j,i}  \g_i(\x_i^{r,k})} \le \frac{2I\sigma^2}{M} + \frac{2}{MN}\sum_{i=1}^N \Expect \normsq{\sum_{k=0}^{I-1} \nabla F_i(\x_i^{r,k})} + 2\cdot\frac{M-1}{M}\Expect \normsq{\frac{1}{N}\sum_{j=1}^N \sum_{k=0}^{I-1} \nabla F_j(\x_j^{r,k})}.
\end{align}
}
For the second term of (\ref{eq:moment-partial-participation-2}), we have 
\begin{align}\label{eq:tech-novelty-partial-participation}
&\Expect \normsq{\sum_{k=0}^{I-1} \nabla F_i(\x_i^{r,k})} \notag \\
&= \Expect \normsq{\sum_{k=0}^{I-1} \left[ \nabla F_i(\x_i^{r,k}) - \nabla f(\x_i^{r,k}) +\nabla f(\x_i^{r,k}) - \nabla f(\hat{\x}^{r,k}) + \nabla f(\hat{\x}^{r,k}) - \nabla f(\bar{\x}^r) +  \nabla f(\bar{\x}^r)\right]} \notag \\
&\overset{(a)}{\le}  4I^2\zeta^2 + 4L_g^2I\sum_{k=0}^{I-1}\Expect \normsq{\x_i^{r,k} - \hat{\x}^{r,k}} + 4L_g^2I \sum_{k=0}^{I-1}\Expect\normsq{\hat{\x}^{r,k} - \bar{\x}^r} + 4I^2\Expect\normsq{\nabla f(\bar{\x}^r)},
\end{align}
where $(a)$ is due to Assumption~\ref{assumption:bounded-gradient-divergence} and Assumption~\ref{assumption:global-lipschitz-gradient}.
Substituting back and rearranging, we have 
{ 
\begin{align} \label{eq: gradient pp}
\frac{\gamma^2\eta^2 L_g}{2}\Expect\normsq{\frac{1}{M}\sum_{j=1}^M\sum_{i=1}^N \sum_{k=0}^{I-1}\mathds{1}^r_{j,i}  \g_i(\x_i^{r,k})} &\le \frac{\gamma^2\eta^2L_gI\sigma^2}{M} + \frac{\gamma^2\eta^2L_g(M-1)}{M} \Expect\normsq{\frac{1}{N}\sum_{i=1}^N \sum_{k=0}^{I-1} \nabla F_i(\x_i^{r,k})} \notag \\
&+ \frac{4\gamma^2\eta^2L_gI^2\zeta^2}{M} + \frac{4\gamma^2\eta^2L_g^3I}{MN}\sum_{i=1}^N\sum_{k=0}^{I-1}\Expect \normsq{\x_i^{r,k} - \hat{\x}^{r,k}} \notag \\
&+ \frac{4\gamma^2\eta^2L_g^3I}{MN}\sum_{i=1}^N\sum_{k=0}^{I-1}\Expect\normsq{\hat{\x}^{r,k} - \bar{\x}^r} + \frac{4\gamma^2\eta^2L_gI^2}{M}\Expect\normsq{\nabla f(\bar{\x}^r)}.
\end{align}

Substituting (\ref{eq: inner-prod pp}) and (\ref{eq: gradient pp}) back to (\ref{eq:partial-participation}), we have
\begin{align}
\Expect \left[f(\bar{\x}^{r+1})\right] &\leq \Expect \left[f(\bar{\x}^{r})\right] - \left(\frac{\gamma\eta I}{2} - \frac{4\gamma^2\eta^2L_g I^2}{M}\right)\Expect \normsq{\nabla f(\bar{\x}^{r})} \notag\\
&- \left( \frac{\gamma\eta}{2I} - \frac{\gamma^2\eta^2L_g(M-1)}{M}\right)\Expect\normsq{\frac{1}{N}\sum_{i=1}^N \sum_{k=0}^{I-1} \nabla F_i(\x_i^{r,k})}\notag \\
&+ \frac{\gamma^2\eta^2L_gI\sigma^2}{M} + \frac{4\gamma^2\eta^2L_gI^2\zeta^2}{M} + \left( \gamma\eta L_h^2 + \frac{4\gamma^2 \eta^2 L_g^3 I}{M}\right)\cdot \frac{1}{N}\sum_{k=0}^{I-1} \sum_{i=1}^N\Expect\normsq{ \x_i^{r,k} - \hat{\x}^{r,k}} \notag \\
&+ \left(\gamma\eta L_g^2 + \frac{4\gamma^2\eta^2L_g^3 I}{M} \right) \sum_{k=0}^{I-1}\Expect{\normsq{\hat{\x}^{r,k} - \bar{\x}^{r}}}.
\end{align}

By Lemma~\ref{lemma:distance-averaged-models}, we have
\begin{align}
\Expect \left[f(\bar{\x}^{r+1})\right] &\leq \Expect \left[f(\bar{\x}^{r})\right] - \left(\frac{\gamma\eta I}{2} - \frac{4\gamma^2\eta^2L_g I^2}{M}\right)\Expect \normsq{\nabla f(\bar{\x}^{r})}+ \frac{\gamma^2\eta^2L_gI\sigma^2}{M} + \frac{4\gamma^2\eta^2L_gI^2\zeta^2}{M} \notag \\
&+ \left( \gamma\eta L_h^2 + \frac{4\gamma^2 \eta^2 L_g^3 I}{M}\right)\cdot \frac{1}{N}\sum_{k=0}^{I-1} \sum_{i=1}^N\Expect\normsq{ \x_i^{r,k} - \hat{\x}^{r,k}} \notag \\
&+ \left(\gamma\eta I L_g^2 + \frac{4\gamma^2\eta^2L_g^3 I^2}{M} \right) \notag\\
&\cdot \left( 5(I-1) \cdot\frac{\gamma^2\sigma^2}{N} + 30I\gamma^2 \sum_{k=0}^{I-1} \frac{L_h^2}{N}\sum_{i=1}^N \Expect\normsq{\x_i^{r,k} -\hat{\x}^{r,k}}   +30I(I-1) \gamma^2 \Expect\normsq{\nabla f(\bar{\x}^r)}\right).
\end{align}
By $\gamma\eta \le \frac{M}{16IL_g}$ and $\gamma \le \frac{1}{10\sqrt{3}IL_g}$, we have
\begin{align}
&-\left(\frac{\gamma\eta I}{2} - \frac{4\gamma^2\eta^2L_g I^2}{M}\right)\Expect \normsq{\nabla f(\bar{\x}^{r})} + \left(\gamma\eta I L_g^2 + \frac{4\gamma^2\eta^2L_g^3 I^2}{M} \right)\cdot 30I(I-1) \gamma^2 \Expect\normsq{\nabla f(\bar{\x}^r)} \notag \\
&\le -\left(\frac{\gamma\eta I}{2} - \frac{\gamma\eta I}{4}\right)\Expect \normsq{\nabla f(\bar{\x}^{r})} + \left( \gamma\eta I + \frac{\gamma \eta I}{4}\right) \cdot \frac{1}{10}\cdot \Expect\normsq{\nabla f(\bar{\x}^r)} \notag \\
&\le -\frac{\gamma\eta I}{8}\Expect\normsq{\nabla f(\bar{\x}^r)}.
\end{align}

By $\gamma\eta \le \frac{M}{4IL_g}$, we have
\begin{align}
\left(\gamma\eta I L_g^2 + \frac{4\gamma^2\eta^2L_g^3 I^2}{M} \right)\cdot 5(I-1) \cdot\frac{\gamma^2\sigma^2}{N} \le \gamma\eta I\cdot \frac{10 \gamma^2L_g^2(I-1)\sigma^2}{N}.
\end{align}

Then we have
\begin{align}
\Expect \left[f(\bar{\x}^{r+1})\right] &\leq \Expect \left[f(\bar{\x}^{r})\right] - \frac{\gamma\eta I}{8} \Expect \normsq{\nabla f(\bar{\x}^{r})}+ \frac{\gamma^2\eta^2L_gI\sigma^2}{M} + \frac{4\gamma^2\eta^2L_gI^2\zeta^2}{M} + \gamma\eta I\cdot \frac{10 \gamma^2L_g^2(I-1)\sigma^2}{N} \notag \\
&+ \left( \gamma\eta L_h^2 + \frac{4\gamma^2\eta^2L_g^3 I}{M} + 30\gamma^2IL_h^2\left(\gamma\eta I L_g^2 + \frac{4\gamma^2\eta^2L_g^3 I^2}{M}\right)\right)\cdot \frac{1}{N}\sum_{k=0}^{I-1} \sum_{i=1}^N\Expect\normsq{ \x_i^{r,k} - \hat{\x}^{r,k}}. 
\end{align}

With Lemma~\ref{lemma:model-divergence}, we have
\begin{align}
\Expect \left[f(\bar{\x}^{r+1})\right] &\leq \Expect \left[f(\bar{\x}^{r})\right] - \frac{\gamma\eta I}{8} \Expect \normsq{\nabla f(\bar{\x}^{r})}+ \frac{\gamma^2\eta^2L_gI\sigma^2}{M} + \frac{4\gamma^2\eta^2L_gI^2\zeta^2}{M} + \gamma\eta I\cdot \frac{10 \gamma^2L_g^2(I-1)\sigma^2}{N} \notag \\
&+ \left( \gamma\eta L_h^2 + \frac{4\gamma^2\eta^2L_g^3 I}{M} + 30\gamma^2IL_h^2\left(\gamma\eta I L_g^2 + \frac{4\gamma^2\eta^2L_g^3 I^2}{M}\right)\right)\cdot \left( 12(I-1)^3\gamma^2\zeta^2 + 4(I-1)^2\gamma^2\sigma^2 \right). 
\end{align}

Then we obtain
\begin{align}
\min_{r\in[R]}\Expect \normsq{\nabla f(\bar{\x}^{r})}&\le\frac{1}{R}\sum_{r=0}^{R-1}\Expect \normsq{\nabla f(\bar{\x}^{r})} \le \frac{8( f^0- f^*)}{\gamma\eta I R} + \frac{8\gamma\eta L_g\sigma^2}{M} + \frac{32\gamma\eta L_gI\zeta^2}{M} + \frac{80 \gamma^2L_g^2(I-1)\sigma^2}{N}\notag \\
&+ 8\left( L_h^2 + \frac{4\gamma\eta L_g^3 I}{M} + 30\gamma^2IL_h^2\left( I L_g^2 + \frac{4\gamma \eta L_g^3 I^2}{M}\right)\right)\cdot \left( 12(I-1)^2\gamma^2\zeta^2 + 4(I-1)\gamma^2\sigma^2 \right).
\end{align}

By $\gamma\eta \le \frac{M}{16IL_g}$ and $\gamma \le \frac{1}{10\sqrt{3}IL_g}$, we have
\begin{align}
\frac{32\gamma\eta L_g^3 I}{M}\cdot \left( 12(I-1)^2\gamma^2\zeta^2 + 4(I-1)\gamma^2\sigma^2 \right) \le \frac{\gamma\eta L_g }{M} \cdot \left(6I\zeta^2 + 2\sigma^2 \right),
\end{align}
and
\begin{align}
&30\gamma^2IL_h^2\left( I L_g^2 + \frac{4\gamma \eta L_g^3 I^2}{M}\right)\cdot \left( 12(I-1)^2\gamma^2\zeta^2 + 4(I-1)\gamma^2\sigma^2 \right) \notag \\
&\le \left( \frac{1}{10} + \frac{1}{40} \right)\cdot L_h^2\cdot \left( 12(I-1)^2\gamma^2\zeta^2 + 4(I-1)\gamma^2\sigma^2 \right) \notag\\
&\le \gamma^2L_h^2(I-1)^2\zeta^2 + \gamma^2L_h^2(I-1)\sigma^2.
\end{align}
Finally, we obtain
\begin{align}
\min_{r\in[R]}\Expect \normsq{\nabla f(\bar{\x}^{r})}&\le \frac{8( f^0- f^*)}{\gamma\eta I R} + \frac{10\gamma\eta L_g\sigma^2}{M} + \frac{38\gamma\eta L_gI\zeta^2}{M} + \frac{80 \gamma^2L_g^2(I-1)\sigma^2}{N} \notag \\
&+ 97\gamma^2L_h^2(I-1)^2\zeta^2 + 33\gamma^2L_h^2(I-1)\sigma^2.
\end{align}

}
Rearrange,
\begin{align}\label{eq:tech-novely-partial-participation-2}
\min_{r\in[R]}\Expect \normsq{\nabla f(\bar{\x}^{r})}= \mathcal{O}\left( \frac{ (f^0- f^*)}{\gamma\eta I R} + \frac{\gamma\eta L_g\sigma^2}{M} + \frac{\gamma\eta L_gI\zeta^2}{M} + \frac{\gamma^2L_g^2(I-1)\sigma^2}{N} + \gamma^2L_h^2(I-1)\sigma^2 + \gamma^2L_h^2(I-1)^2\zeta^2 \right).
\end{align}

\subsection{Proof of Proposition~\ref{lemma:local-assump}}\label{sec:proof-prop5.1}
First, using $\nabla f(\x) = \frac{1}{N}\sum_{i=1}^N\nabla F_i(\x)$, it is straightforward to show that Assumption~\ref{assumption:local-lipschitz-gradient} implies Assumption~\ref{assumption:global-lipschitz-gradient} holds by choosing $L_g=\tilde{L}$.

Second, we can see that 
\begin{align}
\normsq{\frac{1}{N}\sum_{i=1}^N \nabla F_i(\x_i) - \nabla f\left(\bar{\x}\right)}
= &\normsq{\frac{1}{N}\sum_{i=1}^N \left[ \nabla F_i(\x_i) - \nabla F_i\left(\bar{\x}\right)\right]} \notag \\
\le& \frac{1}{N}\sum_{i=1}^N \normsq{\nabla F_i(\x_i) - \nabla F_i\left(\bar{\x}\right)} \notag\\
\overset{(a)}{\le}& \frac{\tilde{L}^2}{N}\sum_{i=1}^N \normsq{\x_i-\bar{\x}},\label{eq:proof-lemma}
\end{align}
where $(a)$ is due to Assumption~\ref{assumption:local-lipschitz-gradient}.
By choosing $L_h=\tilde{L}$, Assumption~\ref{assumption:gradient-to-model} holds. 
\qed

\subsection{Proof of Proposition~\ref{pro:explain-D}}\label{sec:proof-prop5.2}

Recall that $\hat{\x}^{r,k}$ is the virtual averaged model defined in (\ref{eq:virtual-averaged-model}) in the main paper.
During one local iteration, we have
\begin{align}\label{eq:global-virtual}
	\Expect[\hat{\x}^{r,k+1}|\hat{\x}^{r,k}] = \hat{\x}^{r,k}- \gamma\cdot \frac{1}{N}\sum_{i=1}^N \nabla F_i(\x_i^{r,k}). 
\end{align}
Using (\ref{eq:centralized-update}), if we use centralized update at this iteration, we have
\begin{align}\label{eq:centralized}
	\Expect[\x_c^{r,k+1}|\hat{\x}^{r,k}] = \hat{\x}^{r,k} - \gamma \nabla f(\hat{\x}^{r,k}).
\end{align}
Using Assumption~\ref{assumption:gradient-to-model}, 
we obtain 
\begin{align}\label{eq:explanation-assumption}
\normsq{\Expect[\hat{\x}^{r,k+1}|\hat{\x}^{r,k}]-\Expect[\x_c^{r,k+1}|\hat{\x}^{r,k}]} 
&= \gamma^2 \normsq{\frac{1}{N}\sum_{i=1}^N \nabla F_i(\x_i^{r,k}) - \nabla f\left(\hat{\x}^{r,k}\right)} \notag\\
&\le \gamma^2\cdot\frac{L_h^2}{N}\sum_{i=1}^N \normsq{\x_i^{r,k} -\hat{\x}^{r,k}}.
\end{align}

\subsection{Proof of Proposition~\ref{lemma:DQuadratic}}
For quadratic functions, we have
\begin{align}
    \nabla F_i(\x) = \A_i\x + \bb_i, \x\in \mathbb{R}^d. 
\end{align}
{  Recall that $\A := \frac{1}{N}\sum_{i=1}^N \A_i$ and $\bb := \frac{1}{N}\sum_{i=1}^N \bb_i$. }
We have
\begin{align}
    &\normsq{\frac{1}{N}\sum_{i=1}^N \nabla F_i(\x_i) - \nabla f\left(\bar{\x}\right)}\notag \\
    &=\normsq{\frac{1}{N}\sum_{i=1}^N \left(\A_i\x_i + \bb_i\right) - \left(\A\bar{\x} + \bb\right)}\notag \\
    &=\normsq{\frac{1}{N}\sum_{i=1}^N \A_i\x_i - \A\bar{\x}}\notag \\
    &={\normsq{\frac{1}{N}\sum_{i=1}^N \A_i\x_i - 2\A\bar{\x} + \A\bar{\x}}}\notag \\
    &={\normsq{\frac{1}{N}\sum_{i=1}^N \A_i\x_i - \frac{1}{N}\sum_{i=1}^N\A_i\bar{\x} - \frac{1}{N}\sum_{i=1}^N\A\x_i + \A\bar{\x}}}\notag \\
    &= {  \normsq{\frac{1}{N}\sum_{i=1}^N\A_i(\x_i-\bar{\x}) - \frac{1}{N}\sum_{i=1}^N \A(\x_i - \bar{\x}) }} \notag \\
    &={\normsq{\frac{1}{N}\sum_{i=1}^N \left(\A_i - \A\right)(\x_i - \bar{\x})}}\notag \\
    &\leq{\frac{1}{N}\sum_{i=1}^N\normsq{ \left(\A_i - \A\right)(\x_i - \bar{\x})}}\notag \\
    &\leq{\frac{|\lambda_{\mathrm{diff}}|^2_\mathrm{max}}{N}\sum_{i=1}^N\normsq{ \x_i - \bar{\x}}}.
\end{align}  

{ 
For local Lipschitz gradient, we have
\begin{align}
    &\norm{\nabla F_i(\x) - \nabla F_i(\y)}\notag \\
    &= \norm{\A_i\x - \A_i\y} \notag \\
    &\le \norm{\A_i} \norm{\x-\y}.  
\end{align}
Since local Lipschitz gradient holds for each worker $i\in [N]$, we can choose $\tilde{L}$ as
\begin{align}
    \tilde{L} = \max_i \norm{\A_i}.
\end{align}
}

\subsection{Proof of Theorem~\ref{thm:quadratic}}
It %
can be observed that for quadratic objective functions when $\A_i = \A, \forall i$, we have $L_h=0$ and $L_g=|\lambda(\A)|$. 

With Assumption~\ref{assumption:global-lipschitz-gradient}, after one local iteration, we have
\begin{align}\label{eq:lipschitz-one-step}
    \Expect\left[f(\hat{\x}^{t+1})\right]  &\leq \Expect\left[ f(\hat{\x}^{t})\right] - \gamma \Expect{\innerprod{\nabla f(\hat{\x}^{t}), \frac{1}{N}\sum_{i=1}^N \g_i(\x_i^t)}} + \frac{\gamma^2 L_g}{2}\Expect{\normsq{\frac{1}{N}\sum_{i=1}^N \g_i(\x_i^t)}} \notag\\
    &= \Expect\left[ f(\hat{\x}^{t})\right] - \gamma \Expect \innerprod{\nabla f(\hat{\x}^{t}), \frac{1}{N}\sum_{i=1}^N \nabla F_i(\x_i^t)} + \frac{\gamma^2 L_g}{2}\Expect{\normsq{\frac{1}{N}\sum_{i=1}^N \g_i(\x_i^t)}}.
\end{align}
For the second term in the RHS of (\ref{eq:lipschitz-one-step}), we have
\begin{align}\label{eq:inner-product-one-step}
    &- \gamma\Expect \innerprod{\nabla f(\hat{\x}^{t}), \frac{1}{N}\sum_{i=1}^N \nabla F_i(\x_i^t)} \notag\\
    &= \frac{\gamma}{2}\left(\Expect\normsq{\frac{1}{N}\sum_{i=1}^N \nabla F_i(\x_i^t) - \nabla f(\hat{\x}^{t})} - \Expect \normsq{\nabla f(\hat{\x}^{t})} - \Expect\normsq{\frac{1}{N}\sum_{i=1}^N \nabla F_i(\x_i^t) } \right)\notag\\
    &\leq \frac{\gamma}{2}\left(\frac{L_h^2}{N}\sum_{i=1}^N \Expect\normsq{\x_i^t -\hat{\x}^t} - \Expect\normsq{\nabla f(\hat{\x}^{t})} - \Expect\normsq{\frac{1}{N}\sum_{i=1}^N \nabla F_i(\x_i^t) } \right).
\end{align}
For the third term of (\ref{eq:lipschitz-one-step}), we have 
{ 
\begin{align}\label{eq:noise-one-step}
    &\frac{\gamma^2 L_g}{2}\Expect{\normsq{\frac{1}{N}\sum_{i=1}^N \g_i(\x_i^t)}} \notag \\
    &=\frac{\gamma^2 L_g}{2}\E\left[\E_{\x_i^t}\normsq{\frac{1}{N}\sum_{i=1}^N \left(\g_i(\x_i^t) - \nabla F_i(\x_i^t) + \nabla F_i(\x_i^t)\right)}\right]   \notag \\
    &\overset{(a)}{=}  \frac{\gamma^2 L_g}{2} \E\normsq{\frac{1}{N}\sum_{i=1}^N \nabla F_i(\x_i^t)} + \frac{\gamma^2 L_g}{2}\E\left[\E_{\x_i^t}\normsq{\frac{1}{N}\sum_{i=1}^N \left(\g_i(\x_i^t) - \nabla F_i(\x_i^t) \right)}\right]  \notag \\
    &\leq \frac{\gamma^2 L_g}{2}\Expect\normsq{\frac{1}{N}\sum_{i=1}^N \nabla F_i(\x_i^t)} + \frac{\gamma^2 L_g \sigma^2}{2N},
\end{align}
where $\E_{\x_i^t}[\cdot] = \E[\cdot|\x_i^t]$ and $(a)$ is due to that $\E_{\x_i^t}\left\langle \nabla F_i(\x_i^t), \g_i(\x_i^t) - \nabla F_i(\x_i^t)\right\rangle =0$.
}

Substitute (\ref{eq:inner-product-one-step}) and (\ref{eq:noise-one-step}) back to (\ref{eq:lipschitz-one-step}), we obtain
\begin{align}\label{eq:quadratic-one-step}
    &\Expect\left[f(\hat{\x}^{t+1})\right]\notag \\
    &\leq \Expect\left[ f(\hat{\x}^{t})\right] + \frac{\gamma L_h^2}{2N}\sum_{i=1}^N \Expect \normsq{\x_i^t -\hat{\x}^t} - \frac{\gamma}{2}\Expect \normsq{\nabla f(\hat{\x}^{t})} \notag \\
    &- \left(\frac{\gamma}{2} - \frac{\gamma^2 L_g}{2}\right)\Expect \normsq{\frac{1}{N}\sum_{i=1}^N \nabla F_i(\x_i^t) }   + \frac{\gamma^2 L_g \sigma^2}{2N} \notag\\
    &\overset{(a)}{\leq} \Expect\left[ f(\hat{\x}^{t})\right] + \frac{\gamma L_h^2}{2N}\sum_{i=1}^N \Expect \normsq{\x_i^t -\hat{\x}^t} - \frac{\gamma}{2}\Expect \normsq{\nabla f(\hat{\x}^{t})}  + \frac{\gamma^2 L_g \sigma^2}{2N},
\end{align}
where $(a)$ is due to $\gamma < \frac{1}{L_g}$. Rearrange the above inequality with $L_h=0$, we have 
\begin{align}
\Expect \normsq{\nabla f(\hat{\x}^{t})} &\le \frac{2\Expect\left[ f(\hat{\x}^{t})\right]- 2\Expect f(\hat{\x}^{t+1})}{\gamma} + \frac{ L_h^2}{N}\sum_{i=1}^N \Expect \normsq{\x_i^t -\hat{\x}^t} + \frac{\gamma L_g \sigma^2}{N} \notag \\
&= \frac{2\Expect\left[ f(\hat{\x}^{t})\right]- 2\Expect f(\hat{\x}^{t+1})}{\gamma}+ \frac{\gamma L_g \sigma^2}{N}.
\end{align}
Take the average over $t$ on both sides, we obtain
\begin{align}
\min_{t\in{[T]}}\Expect \normsq{\nabla f(\hat{\x}^{t})} \le \frac{1}{T}\sum_{t=0}^{T-1} \Expect \normsq{\nabla f(\hat{\x}^{t})} \le \frac{2f(\hat{\x}^{t})- 2f^*}{\gamma T} + \frac{\gamma L_g \sigma^2}{N}.
\end{align}

\subsection{Proof of Corollary~\ref{cor:local-better-minibatch}}

{ 
In Corollary 5.6, for both local SGD and mini-batch SGD, we choose the learning rate as $\gamma = \frac{1}{L_g}$. 
The proof (order-wise, ignoring the constants) is as follows.

Let $h(\gamma)$ denote the order of the convergence upper bound in Theorem 5.5. 
For mini-batch SGD, we have 
\begin{equation}
\label{eq: h}
    h(\gamma) = \frac{\mathcal{F}}{\gamma R} + \frac{\gamma L_g\sigma^2}{NI}. 
\end{equation}
By minimizing $h(\gamma)$, we obtain $\gamma^* = \sqrt{\frac{\mathcal{F}NI}{RL_g\sigma^2}}$. Because $\sigma \le \sqrt{\frac{\mathcal{F}NL_g}{RI}}$ as specified in Corollary 5.6, we have
$$
\gamma^* \geq \sqrt{\frac{\mathcal{F}NI}{RL_g} \cdot \frac{RI}{\mathcal{F}NL_g}} = \frac{I}{L_g}.
$$
Therefore, when $\gamma \in \left(0, \frac{I}{L_g}\right]$, $h(\gamma)$ is monotonically decreasing.
Now, note that Theorem 5.5 requires $\gamma \in \Big(0,\frac{1}{L_g}\Big]$. When choosing $\gamma = \frac{1}{L_g} \leq \frac{I}{L_g}$ (since $I\geq 1$), $h(\gamma)$ is minimized under the condition of $\gamma \in \Big(0,\frac{1}{L_g}\Big]$ and we obtain
$$
h\left(\gamma\right) = h\left(\frac{1}{L_g}\right) = \frac{\mathcal{F}L_g}{R} + \frac{1}{L_g} \frac{L_g\sigma^2}{NI} \overset{(a)}{\le} \frac{\mathcal{F}L_g}{R} + \frac{\mathcal{F}L_g}{RI^2},
$$
where $(a)$ is due to $\sigma \le \sqrt{\frac{\mathcal{F}NL_g}{RI}}$. The case of local SGD can be proven similarly.
}

\subsection{Proof of Theorem~\ref{thm:quad-localvsmini}}\label{sec:proof-quad-vs}
{  First, we introduce a useful lemma, which is used in this section.
\begin{lemma}\label{lemma:geometric-square}
With $x> 1$ and $k\in \mathbb{N}^+$, we have
\begin{align}
    \sum_{l=0}^{k-1} l^2x^l \le \frac{x^{k-1}}{(x-1)^3} \cdot k^2x^2 \cdot x = \frac{k^2x^{k+2}}{(x-1)^3}.
\end{align}
\end{lemma}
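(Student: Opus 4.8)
The plan is to replace each summand by a crude monomial bound, sum the resulting finite geometric series, and then observe that the extra powers of $(x-1)$ in the target denominator only make the inequality easier.

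First I would note that every index in the sum satisfies $0 \le l \le k-1$, hence $l^2 \le (k-1)^2 \le k^2$; since each $x^l \ge 0$ this lets me pull the constant out, $\sum_{l=0}^{k-1} l^2 x^l \le k^2 \sum_{l=0}^{k-1} x^l$. Next, using $x>1$, the finite geometric sum is $\sum_{l=0}^{k-1} x^l = \frac{x^k-1}{x-1} \le \frac{x^k}{x-1}$, which gives the intermediate bound $\sum_{l=0}^{k-1} l^2 x^l \le \frac{k^2 x^k}{x-1}$. Finally, to reach the stated right-hand side I would verify $\frac{k^2 x^k}{x-1} \le \frac{k^2 x^{k+2}}{(x-1)^3}$; dividing through by the common positive factor $k^2 x^k/(x-1)$ reduces this to $(x-1)^2 \le x^2$, i.e. $2x \ge 1$, which is immediate from $x>1$. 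Chaining the three inequalities yields exactly $\sum_{l=0}^{k-1} l^2 x^l \le \frac{k^2 x^{k+2}}{(x-1)^3}$, which is the claim (the middle expression $\frac{x^{k-1}}{(x-1)^3}\cdot k^2 x^2 \cdot x$ is just this same quantity regrouped).

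I do not expect a genuine obstacle here: each step is a one-line elementary estimate. The only points to keep in mind are that the hypothesis $x>1$ is used twice (once to bound the geometric sum, once in $(x-1)^2 \le x^2$), and that the statement is deliberately looser than the sharper bound $\frac{k^2 x^k}{x-1}$ obtained along the way — the cubic-denominator form is the convenient one for the proof of Theorem~\ref{thm:quad-localvsmini}, where the factor $(\varphi(\kappa)^2-1)^3$ arises naturally after iterating the quadratic model-drift recursion with $x=\varphi(\kappa)^2$, so matching that algebraic shape keeps the later bookkeeping clean. If one preferred, the same inequality also follows by induction on $k$, using $(x-1)^3 \le x^3 - x^2$ for $x \ge 1$ in the inductive step, but the direct argument above is shorter.
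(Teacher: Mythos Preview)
Your proof is correct and considerably more elementary than the paper's. The paper proceeds by differentiating the finite geometric series $\sum_{l=0}^{k-1} x^l = \frac{x^k-1}{x-1}$ twice in $x$ to obtain an exact expression for $\sum_{l=0}^{k-1} l^2 x^{l-1}$, then bounds that expression term-by-term (dropping negative contributions and using $k\ge 2$) to arrive at $\frac{k^2 x^{k+1}}{(x-1)^3}$, and finally multiplies by $x$. Your approach replaces this calculus detour with the blunt pointwise bound $l^2 \le k^2$, reducing everything to a single geometric sum and the elementary check $(x-1)^2 \le x^2$. What the paper's route buys is an exact closed form along the way, which could in principle be sharpened further; what your route buys is brevity and no need to separate the case $k=1$. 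Since the lemma is only used downstream in the loose form $\frac{k^2 x^{k+2}}{(x-1)^3}$, your shortcut loses nothing for the application, and your side remark that the intermediate bound $\frac{k^2 x^k}{x-1}$ is already tighter is accurate.
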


\begin{proof}
For the geometric series, we have
\begin{align}
    \sum_{l=0}^{k-1} x^l = \frac{x^k-1}{x-1}.
\end{align}
When $k\ge 2$, taking the derivative over $x$ on both sides, we obtain
\begin{align}
    \sum_{l=0}^{k-1} lx^{l-1} = \frac{kx^{k-1}}{x-1} - \frac{x^k-1}{(x-1)^2}.
\end{align}
Multiplying $x$ on both sides, we obtain
\begin{align}
    \sum_{l=0}^{k-1} lx^l = \frac{kx^k}{x-1} - \frac{x^{k+1}-x}{(x-1)^2}.
\end{align}
Taking the derivative over $x$ on both sides again, we obtain
\begin{align}
    \sum_{l=0}^{k-1} l^2x^{l-1} &= \frac{k^2x^{k-1}}{x-1}-\frac{kx^k}{(x-1)^2} - \frac{(k+1)x^k-1}{(x-1)^2}+\frac{2x^{k+1}-2x}{(x-1)^3}\notag \\
    &= \frac{k^2x^{k-1}(x-1)^2 - (2k+1)x^k(x-1)+(x-1)+2x^{k+1}-2x}{(x-1)^3} \notag \\
    &\le \frac{k^2x^{k-1}(x-1)^2 - (2k+1)x^k(x-1)+2x^{k+1}}{(x-1)^3} \notag \\
    &= \frac{x^{k-1}}{(x-1)^3}\left[(k^2-2k+1)x^2-2k^2x+(2k+1)x+k^2 \right] \notag\\
    &\overset{(a)}{\le} \frac{x^{k-1}}{(x-1)^3}\left[(k^2-2k+1)x^2-k^2x+(2k+1)x \right] \notag \\
    &\overset{(b)}{\le}  \frac{x^{k-1}}{(x-1)^3} \cdot k^2x^2
\end{align}
where $(a)$ is due to $x> 1$ and $(b)$ is due to $k\ge 2$.

When $k=1$, we have
\begin{align}
\sum_{l=0}^{k-1} l^2x^{l-1} = 0 \le \frac{x^{k-1}}{(x-1)^3} \cdot k^2x^2.
\end{align}

Then we have
\begin{align}
\sum_{l=0}^{k-1} l^2x^l \le  \frac{x^{k-1}}{(x-1)^3} \cdot k^2x^2 \cdot x = \frac{k^2x^{k+2}}{(x-1)^3}.
\end{align}
\end{proof}

}

For quadratic objective functions, the global objective functions is 
\begin{align}
    f(\x) = \frac{1}{2}\x^T\A\x + \bb^T\x + c.
\end{align}
The local objective function of worker $i$ is 
\begin{align}
    F_i(\x) = \frac{1}{2}\x^T\A_i\x + \bb_i^T\x + c_i,
\end{align}
where $\A = \frac{1}{N}\sum_{i=1}^N \A_i$, $\bb = \frac{1}{N}\sum_{i=1}^N \bb_i$ and $c = \frac{1}{N}\sum_{i=1}^N c_i$.
The local stochastic gradient is 
\begin{align}
\g_i(\x) = \A_i\x + \bb_i + \nn_i,
\end{align}
{ 
where $\nn_i \in \R^d$ is the noise vector. 
Since we assume the stochastic gradient is unbiased, we have
\begin{align}
    \E[\g_i(\x)] = \A_i\x + \bb_i + \E[\nn_i] = \nabla F_i(\x) = \A_i\x + \bb_i, \forall \x.
\end{align}
Therefore, we can get $\E[\nn_i] = \mathbf{0}$.
By Assumption~\ref{assumption:bounded-stochastic-variance}, we have
\begin{align}
\E\normsq{\g_i(\x) -\nabla F_i(\x) } = \E\normsq{\nn_i} \le \sigma^2.
\end{align}
}

By Assumption~\ref{assumption:weaker-gradient-divergence}, we have
\begin{align}
\normsq{\nabla F_i(\x) - \nabla f(\x)} = \normsq{(\A_i-\A)\x + (\bb_i-\bb) } \le \zeta_q^2, \forall i.
\end{align}

By Proposition~\ref{lemma:DQuadratic}, $L_g = |\lambda(\A)|$ and $L_h = 2\max_i |\lambda(\A-\A_i)| $. In the following, we define $\lambda_{\max} := \max_i \left\| \A_i\right\|_2$.

{  In the following, we $\nn_i^{r,k}$ to denote the noise vector on worker $i$ at $k$th iteration of $r$th round.
During local updates, we have 
\begin{align}
    &\x_i^{r,k} = \x_i^{r,k-1} - \gamma \g_i(\x_i^{r,k-1}) \notag \\
    &= \x_i^{r,k-1} - \gamma(\A_i\x_i^{r,k-1} + \bb_i + \nn_i^{r,k-1}) \notag \\
    &= (\mI-\gamma\A_i)\x_i^{r,k-1} - \gamma\left(\bb_i +\nn_i^{r,k-1}\right) \notag \\
    &=(\mI-\gamma\A_i)\left[ (\mI-\gamma\A_i)\x_i^{r,k-2} - \gamma\left(\bb_i +\nn_i^{r,k-2}\right) \right] - \gamma\left(\bb_i +\nn_i^{r,k-1}\right)  \notag \\
    &= (\mI-\gamma\A_i)^2\x_i^{r,k-2} - \gamma(\mI-\gamma\A_i)\left(\bb_i +\nn_i^{r,k-2}\right) - \gamma\left(\bb_i +\nn_i^{r,k-1}\right)\notag \\
    &= \ldots \notag \\
    &= (\mI-\gamma \A_i)^k \bar{\x}^r - \gamma \sum_{l=0}^{k-1} (\mI-\gamma \A_i)^l (\bb_i+\nn_i^{r,k-1-l})\notag \\
    &= (\mI-\gamma \A_i)^k \bar{\x}^r - \gamma\sum_{l=0}^{k-1} (\mI-\gamma \A_i)^l\bb_i - \gamma\sum_{l=0}^{k-1} (\mI-\gamma \A_i)^l\nn_i^{r,k-1-l}\notag \\
    &\overset{(a)}{=} \left[\mI - \gamma \sum_{l=0}^{k-1} (\mI - \gamma \A_i)^l\A_i\right]\bar{\x}^r  - \gamma\sum_{l=0}^{k-1} (\mI-\gamma \A_i)^l\bb_i - \gamma\sum_{l=0}^{k-1} (\mI-\gamma \A_i)^l\nn_i^{r,k-1-l} \notag \\
    &= \bar{\x}^r - \gamma \sum_{l=0}^{k-1} (\mI-\gamma \A_i)^l \left[\A_i\bar{\x}^r + \bb_i \right] - \gamma\sum_{l=0}^{k-1} (\mI-\gamma \A_i)^l\nn_i^{r,l-1-l}\notag \\
    &= \bar{\x}^r - \gamma \sum_{l=0}^{k-1} (\mI-\gamma \A_i)^l  \nabla F_i(\bar{\x}^r) - \gamma\sum_{l=0}^{k-1} (\mI-\gamma \A_i)^l\nn_i^{r,k-1-l}.
\end{align}
Now we explain $(a)$. 
For the sum of geometric series~\citep{hubbard2015vector} of matrix $\mI-\gamma \A_i$, we have
\begin{align}
\sum_{l=0}^{k-1} \left(\mI-\gamma \A_i \right)^l = \left[\mI-\left(\mI-\gamma \A_i \right)  \right]^{-1} \left[\mI -\left(\mI-\gamma \A_i \right)^k  \right] = \frac{1}{\gamma}\A_i^{-1}\left[\mI -\left(\mI-\gamma \A_i \right)^k  \right].
\end{align}
Since $\A_i$ is symmetric, $\A_i^{-1}$ and $\mI -\left(\mI-\gamma \A_i \right)^k $ are also symmetric. Thus, we have
\begin{align}
\frac{1}{\gamma}\A_i^{-1}\left[\mI -\left(\mI-\gamma \A_i \right)^k  \right] = \frac{1}{\gamma}\left[\mI -\left(\mI-\gamma \A_i \right)^k  \right]\A_i^{-1} =\sum_{l=0}^{k-1} \left(\mI-\gamma \A_i \right)^l.
\end{align}
Multiplying $\A_i$ on both sides and rearranging, we obtain
\begin{align}
\left(\mI-\gamma \A_i \right)^k = \mI - \gamma \sum_{l=0}^{k-1} \left(\mI-\gamma \A_i \right)^l\A_i.
\end{align}

}

Then for the model divergence, we have 
{ 
\begin{align}\label{eq:quad-model-divergence}
&\E\normsq{\x_i^{r,k} - \hat{\x}^{r,k}} \notag \\
&= \E\bigg\|\gamma \sum_{l=0}^{k-1} (\mI-\gamma \A_i)^l  \nabla F_i(\bar{\x}^r) + \gamma\sum_{l=0}^{k-1} (\mI-\gamma \A_i)^l\nn_i^{r,k-1-l}- \gamma\cdot \frac{1}{N}\sum_{j=1}^N\sum_{l=0}^{k-1} (\mI-\gamma \A_j)^l  \nabla F_j(\bar{\x}^r)   \notag \\
&-\gamma \cdot \frac{1}{N}\sum_{j=1}^N\sum_{l=0}^{k-1} (\mI-\gamma \A_j)^l\nn_j^{r,k-1-l} \bigg\|^2\notag \\
&=\E\normsq{\gamma\sum_{l=0}^{k-1} (\mI-\gamma \A_i)^l  \nabla F_i(\bar{\x}^r)  - \gamma\cdot \frac{1}{N}\sum_{j=1}^N\sum_{l=0}^{k-1} (\mI-\gamma \A_j)^l  \nabla F_j(\bar{\x}^r)  } \notag \\
&+ \E \normsq{\gamma\sum_{l=0}^{k-1} (\mI-\gamma \A_i)^l\nn_i^{r,l} - \gamma \cdot \frac{1}{N}\sum_{j=1}^N\sum_{l=0}^{k-1} (\mI-\gamma \A_j)^l\nn_j^{r,k-1-l}} \notag \\
&+\E\bigg[  \E_{\bar{\x}^r}\!\bigg\langle  \gamma\sum_{l=0}^{k-1} (\mI-\gamma \A_i)^l  \nabla F_i(\bar{\x}^r)  - \gamma\cdot \frac{1}{N}\sum_{j=1}^N\sum_{l=0}^{k-1} (\mI-\gamma \A_j)^l  \nabla F_j(\bar{\x}^r), \notag \\
&\quad\quad\gamma\sum_{l=0}^{k-1} (\mI-\gamma \A_i)^l\nn_i^{r,l} - \gamma \cdot \frac{1}{N}\sum_{j=1}^N\sum_{l=0}^{k-1} (\mI-\gamma \A_j)^l\nn_j^{r,k-1-l}\bigg\rangle\bigg] \notag \\
&\overset{(a)}{=} \E\normsq{\gamma\sum_{l=0}^{k-1} (\mI-\gamma \A_i)^l  \nabla F_i(\bar{\x}^r)  - \gamma\cdot \frac{1}{N}\sum_{j=1}^N\sum_{l=0}^{k-1} (\mI-\gamma \A_j)^l  \nabla F_j(\bar{\x}^r)  }\notag \\
&+ \E \normsq{\gamma\sum_{l=0}^{k-1} (\mI-\gamma \A_i)^l\nn_i^{r,l} - \gamma \cdot \frac{1}{N}\sum_{j=1}^N\sum_{l=0}^{k-1} (\mI-\gamma \A_j)^l\nn_j^{r,k-1-l}},
\end{align}
where $(a)$ is due to $\E_{\bar{\x}^r}[\nn_i^{r,l}]=0$.
}

For the first term in the RHS of (\ref{eq:quad-model-divergence}), 
we have 
\begin{align}\label{eq:quad-model-divergence1}
&\E\normsq{\gamma\sum_{l=0}^{k-1} (\mI-\gamma \A_i)^l  \nabla F_i(\bar{\x}^r)  - \gamma\cdot \frac{1}{N}\sum_{j=1}^N\sum_{l=0}^{k-1} (\mI-\gamma \A_j)^l  \nabla F_j(\bar{\x}^r) } \notag \\
&= \E\bigg\|\gamma\sum_{l=0}^{k-1} (\mI-\gamma \A_i)^l  \nabla F_i(\bar{\x}^r) -  \gamma\sum_{l=0}^{k-1} (\mI-\gamma \A_i)^l  \nabla f(\bar{\x}^r) + \gamma\sum_{l=0}^{k-1} (\mI-\gamma \A_i)^l  \nabla f(\bar{\x}^r)\notag\\
&-\gamma\sum_{l=0}^{k-1} (\mI-\gamma \A)^l  \nabla f(\bar{\x}^r) + \gamma\sum_{l=0}^{k-1} (\mI-\gamma \A)^l  \nabla f(\bar{\x}^r)- \gamma  \cdot \frac{1}{N}\sum_{j=1}^N\sum_{l=0}^{k-1} (\mI-\gamma \A_j)^l  \nabla f(\bar{\x}^r) \notag \\
&+ \gamma  \cdot \frac{1}{N}\sum_{j=1}^N\sum_{l=0}^{k-1} (\mI-\gamma \A_j)^l  \nabla f(\bar{\x}^r) 
- \gamma\cdot \frac{1}{N}\sum_{j=1}^N\sum_{l=0}^{k-1} (\mI-\gamma \A_j)^l  \nabla F_j(\bar{\x}^r)  \bigg\|^2\notag \\
&\le 4\E\normsq{\gamma\sum_{l=0}^{k-1} (\mI-\gamma \A_i)^l  \nabla F_i(\bar{\x}^r) -  \gamma\sum_{l=0}^{k-1} (\mI-\gamma \A_i)^l  \nabla f(\bar{\x}^r)} \notag \\
&+ 4 \E\normsq{\gamma\sum_{l=0}^{k-1} (\mI-\gamma \A_i)^l  \nabla f(\bar{\x}^r)-\gamma\sum_{l=0}^{k-1} (\mI-\gamma \A)^l  \nabla f(\bar{\x}^r)} \notag \\
&+ 4 \cdot \frac{1}{N}\sum_{j=1}^N \E\normsq{\gamma\sum_{l=0}^{k-1} (\mI-\gamma \A)^l  \nabla f(\bar{\x}^r)- \gamma\sum_{l=0}^{k-1} (\mI-\gamma \A_j)^l  \nabla f(\bar{\x}^r)}\notag\\
& + 4 \cdot \frac{1}{N}\sum_{j=1}^N \E\normsq{\gamma  \sum_{l=0}^{k-1} (\mI-\gamma \A_j)^l  \nabla f(\bar{\x}^r) 
- \gamma\sum_{l=0}^{k-1} (\mI-\gamma \A_j)^l  \nabla F_j(\bar{\x}^r) }.
\end{align}

For the first term in RHS of (\ref{eq:quad-model-divergence1}), we have 
{ 
\begin{align}\label{eq:quad-grad-diff}
&\E\normsq{\gamma\sum_{l=0}^{k-1} (\mI-\gamma \A_i)^l  \nabla F_i(\bar{\x}^r) -  \gamma\sum_{l=0}^{k-1} (\mI-\gamma \A_i)^l  \nabla f(\bar{\x}^r)}\notag \\
&\le \gamma^2k  \sum_{l=0}^{k-1} \normsq{(\mI-\gamma \A_i)^l}\E\normsq{\nabla F_i(\bar{\x}^r)-\nabla f(\bar{\x}^r)} \notag \\
&\overset{(a)}{\le} \gamma^2k \cdot\phi(\kappa,k)\cdot\zeta_q^2,
\end{align}
where
\begin{align}
\phi(\kappa,k) =
\begin{cases}
      k & 0\le \kappa < 1\\
      \frac{\kappa^{2k}-1}{\kappa^2-1} & 1\le \kappa \le 2.
    \end{cases}
\end{align}

Now we explain $(a)$. We can rewrite $\norm{(\mI-\gamma \A_i)^l}$ as 
\begin{align}
\norm{(\mI-\gamma \A_i)^l} = \left[\max_j 1- \gamma \lambda_j(\A_i)\right]^l.
\end{align}

When $\A_i$ is positive definite, which means $\lambda_j(\A_i)>0, \forall i,j$, since $0<\gamma\le \frac{1}{\lambda_{\max}}$, we have
\begin{align}
    1>\max_j 1- \gamma \lambda_j(\A_i) >\max_j 1- \frac{\lambda_j(\A_i)}{\lambda_{\max}} >0.
\end{align}
Then we have
\begin{align}
\sum_{l=0}^{k-1}\normsq{(\mI-\gamma \A_i)^l} \le k.
\end{align}
In this case, we also have
\begin{align}
\kappa=\max_{i,j} 1 - \frac{\lambda_j(\mA_i)}{\left\| \A_i\right\|_2} < 1.
\end{align}

When $\A_i$ is not positive definite, which means that $\exists j\in [d]$, such that $\lambda_j(\A_i)\le 0$, we have
\begin{align}
    1 \le \max_j 1- \gamma \lambda_j(\A_i) \le \max_j 1- \frac{\lambda_j(\A_i)}{\lambda_{\max}}  \le \max_{i,j} 1 - \frac{\lambda_j(\mA_i)}{\left\| \A_i\right\|_2} = \kappa.
\end{align}

Then we have
\begin{align}
\sum_{l=0}^{k-1}\normsq{(\mI-\gamma \A_i)^l} \le \sum_{l=0}^{k-1} \kappa^{2l} = \frac{\kappa^{2k}-1}{\kappa^2-1}.
\end{align}

}

For the second term in RHS of (\ref{eq:quad-model-divergence1}), we have 
\begin{align}\label{eq:quad-hessian-diff}
& \E\normsq{\gamma\sum_{l=0}^{k-1} (\mI-\gamma \A_i)^l  \nabla f(\bar{\x}^r)-\gamma\sum_{l=0}^{k-1} (\mI-\gamma \A)^l  \nabla f(\bar{\x}^r)}\notag \\
&\le \gamma^2 k \sum_{l=0}^{k-1} \normsq{(\mI-\gamma \A_i)^l - (\mI-\gamma \A)^l}\E\normsq{\nabla f(\bar{\x}^r)}\notag\\
&\overset{(a)}{\le} \gamma^4L_h^2k\sum_{l=0}^{k-1} l^2[\varphi(\kappa)]^{2l} \E\normsq{\nabla f(\bar{\x}^r)} \notag \\
&\overset{(b)}{\le} \gamma^4L_h^2 \cdot  \frac{k^3[\varphi(\kappa)]^{2(k+2)}}{([\varphi(\kappa)]^2-1)^3}\cdot  \E\normsq{\nabla f(\bar{\x}^r)},
\end{align}
{ where
\begin{align}
\varphi(\kappa)=
\begin{cases}
1 & 0\le \kappa <1\\
\kappa & 1\le \kappa \le 2,
\end{cases}
\end{align}
$(a)$ is due to 
\begin{align}
\normsq{(\mI-\gamma \A_i)^l - (\mI-\gamma \A)^l}\overset{(c)}{\le} l^2[\varphi(\kappa)]^{2l}\normsq{\mI-\gamma \A_i -\mI+\gamma \A }= \gamma^2l^2[\varphi(\kappa)]^{2l} \normsq{\A_i - \A} \le \gamma^2 L_h^2l^2[\varphi(\kappa)]^{2l},
\end{align}
and
$(b)$ is due to Lemma~\ref{lemma:geometric-square} by letting $x=[\varphi(\kappa)]^2$.
}

Now we prove $(c)$. { Let $\B_i=\mI-\gamma \A_i$ and $\B = \mI-\gamma \A$. Then we have 
\begin{align}
\|\B_i^l \| \le 
\begin{cases}
1 & 0\le \kappa <1\\
\kappa^l & 1\le \kappa \le 2.
\end{cases}
\end{align}
Thus, we get $\|\B_i^l \|\le [\varphi(\kappa)]^l$.

}

{ 
Then we have 
\begin{align}
&\|\B_i^l -\B^l\| = \|(\B_i -\B)(\B_i^{l-1} + \B_i^{l-2}\B+\B_i^{l-3}\B^2+\ldots+ \B^{l-1}) \|\notag \\
&\le \| \B_i -\B\| \| \B_i^{l-1} + \B_i^{l-2}\B+\B_i^{l-3}\B^2+\ldots+ \B^{l-1}\|\notag \\
&\le  \| \B_i -\B\| (\| \B_i^{l-1}\|+ \|\B_i^{l-2}\|\|\B \|+ \| \B_i^{l-3}\|\|\B^2\|+\ldots +\|\B^{l-1} \|)\notag \\
&\le l[\varphi(\kappa)]^{l-1} \| \B_i -\B\|.
\end{align}

}

{ 
Taking (\ref{eq:quad-grad-diff}) and (\ref{eq:quad-hessian-diff}) back to (\ref{eq:quad-model-divergence1}), we can obtain
\begin{align}\label{eq:diff-hession-grad}
    &\E\normsq{\gamma\sum_{l=0}^{k-1} (\mI-\gamma \A_i)^l  \nabla F_i(\bar{\x}^r)  - \gamma\cdot \frac{1}{N}\sum_{j=1}^N\sum_{l=0}^{k-1} (\mI-\gamma \A_j)^l  \nabla F_j(\bar{\x}^r) } \notag \\
    &\le  8\gamma^2k \cdot\phi(\kappa,k)\cdot\zeta_q^2 + 8\gamma^4L_h^2 \cdot  \frac{k^3[\varphi(\kappa)]^{2(k+2)}}{([\varphi(\kappa)]^2-1)^3}\cdot  \E\normsq{\nabla f(\bar{\x}^r)}.
\end{align}
}

For the second term in RHS of (\ref{eq:quad-model-divergence}), we have 
\begin{align}\label{eq:diff-hessian-noise}
    &\E \normsq{\gamma\sum_{l=0}^{k-1} (\mI-\gamma \A_i)^l\nn_i^{r,k-1-l} - \gamma \cdot \frac{1}{N}\sum_{j=1}^N\sum_{l=0}^{k-1} (\mI-\gamma \A_j)^l\xi_j^{r,k-1-l}}\notag \\
    &= \E \normsq{\gamma\sum_{l=0}^{k-1} (\mI-\gamma \A_i)^l\nn_i^{r,l}} + \E\normsq{\gamma \cdot \frac{1}{N}\sum_{j=1}^N\sum_{l=0}^{k-1} (\mI-\gamma \A_j)^l\xi_j^{r,k-1-l}}\notag \\
    &\le \gamma^2 \sum_{l=0}^{k-1} \normsq{(\mI-\gamma \A_i)^l}\sigma^2 + \gamma^2 \sum_{l=0}^{k-1} \normsq{ (\mI-\gamma \A_j)^l}\sigma^2\notag \\
    &\overset{(a)}{\le} 2\gamma^2\cdot \phi(\kappa,k)\cdot \sigma^2,
\end{align}
where $(a)$ is due to $\sum_{l=0}^{k-1}\normsq{\mI-\gamma \A_i}\le \phi(\kappa,k)$.

{  Substituting (\ref{eq:quad-model-divergence}), (\ref{eq:diff-hession-grad}) and (\ref{eq:diff-hessian-noise}) back to (\ref{eq:quadratic-one-step}), we have
}
\begin{align}\label{eq:quad-final-func}
    &\Expect\left[f(\hat{\x}^{t+1})\right]\notag \\
    &\leq \Expect\left[ f(\hat{\x}^{t})\right] + \frac{\gamma L_h^2}{2N}\sum_{i=1}^N \Expect \normsq{\x_i^t -\hat{\x}^t} - \frac{\gamma}{2}\Expect \normsq{\nabla f(\hat{\x}^{t})} 
    - \left(\frac{\gamma}{2}- \frac{\gamma^2 L_g}{2}\right)\Expect \normsq{\frac{1}{N}\sum_{i=1}^N \nabla F_i(\x_i^t) }   + \frac{\gamma^2 L_g \sigma^2}{2N} \notag\\
    &\le \Expect\left[ f(\hat{\x}^{t})\right] + \frac{\gamma L_h^2}{2} \left( 8\gamma^2I \cdot\phi(\kappa,I)\cdot\zeta_q^2 + 8\gamma^4L_h^2 \cdot  \frac{I^3[\varphi(\kappa)]^{2(I+2)}}{([\varphi(\kappa)]^2-1)^3}\cdot  \E\normsq{\nabla f(\bar{\x}^r)} +2\gamma^2\cdot \phi(\kappa,I)\cdot \sigma^2 \right) \notag \\
    &- \frac{\gamma}{2}\Expect \normsq{\nabla f(\hat{\x}^{t})}  + \frac{\gamma^2 L_g \sigma^2}{2N}\notag \\
    &= \Expect\left[ f(\hat{\x}^{t})\right]+ \frac{\gamma^2 L_g \sigma^2}{2N} + \frac{\gamma}{2} \left(8\gamma^2L_h^2 I \cdot\phi(\kappa,I)\cdot\zeta_q^2 +2\gamma^2L_h^2\cdot \phi(\kappa,I)\cdot \sigma^2   \right) \notag\\
    &\quad - \left(\frac{\gamma}{2} - 4\gamma^5L_h^4 \cdot  \frac{I^3[\varphi(\kappa)]^{2(I+2)}}{([\varphi(\kappa)]^2-1)^3} \right)\E\normsq{\nabla f(\bar{\x}^r)}.
\end{align}
 
{ 
Let $\gamma \le \frac{1}{2L_h}\cdot \min\{\frac{1}{I}, \frac{([\varphi(\kappa)]^2-1)^3}{[\varphi(\kappa)]^{2(I+2)}} \}$, we have 
\begin{align}
\frac{\gamma}{2} - 4\gamma^5L_h^4 \cdot  \frac{I^3[\varphi(\kappa)]^{2(I+2)}}{([\varphi(\kappa)]^2-1)^3} \ge \frac{\gamma}{4}.
\end{align}
}
Rearranging (\ref{eq:quad-final-func}), we obtain
\begin{align}
    \min_{t\in [T]} \E\normsq{\nabla f(\hat{\x}^{t})}\le \frac{1}{T}\sum_{t=0}^{T-1}\E\normsq{\nabla f(\hat{\x}^{t})}\le \frac{4\mathcal{F}}{\gamma T}+ \frac{2\gamma L_g \sigma^2}{N} + 16\gamma^2L_h^2 I \cdot\phi(\kappa,I)\cdot\zeta_q^2 + 4\gamma^2L_h^2 \cdot\phi(\kappa,I)\cdot\sigma^2,
\end{align}
where $T=RI$.

\subsection{Proof of Theorem~\ref{thm:fedavg-momentum}}\label{sec:proof-fed-momentum}
In this section, we provide the proof for the convergence analysis of FedAvg with momentum. First, we %
apply the technique of auxiliary sequence used by \citet{yu2019linear} to construct the proof. Then, we apply %
Assumptions~\ref{assumption:global-lipschitz-gradient} and \ref{assumption:gradient-to-model} 
at key steps.

Before proceeding to the proof, we introduce the auxiliary sequence $\{\hat{\z}^t\}$. That is, 
\begin{align}
    \hat{\z}^t :=  
    \begin{cases}
      \hat{\x}^t & t=0 ,\\
      \frac{1}{1-\beta} \hat{\x}^t - \frac{\beta}{1-\beta} \hat{\x}^{t-1} & t>0.
    \end{cases}
\end{align}
{  Then during each iteration, we have
\begin{align}
\hat{\z}^{t+1} = \hat{\z}^t - \frac{\gamma}{1-\beta}\cdot\frac{1}{N}\sum_{i=1}^N \g_i(\x_i^t).
\end{align}
}

{  First, we incorporate Lemma 4 in \cite{yu2019linear} as follows to support our proof.
\begin{lemma}[Lemma 4 in \cite{yu2019linear}]\label{lemma:lemma4-yu} 
For FedAvg with momentum, we have
\begin{align}
\sum_{t=0}^{T-1} \normsq{\hat{\z}^t - \hat{\x}^t} \le \frac{\gamma^2\beta^2}{(1-\beta)^4} \sum_{t=0}^{T-1}\normsq{\frac{1}{N}\sum_{i=1}^N\g_i(\x_i^t)}.
\end{align}
\end{lemma}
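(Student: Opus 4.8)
Since this is precisely Lemma~4 of \citet{yu2019linear}, one could simply invoke it; for completeness I would reproduce the short argument, which unrolls the momentum recursion on the averaged iterates and then applies a weighted Jensen inequality. First, from the definition of the auxiliary sequence, $\hat{\z}^0-\hat{\x}^0=\vzero$, and for $t\ge 1$,
\begin{align*}
\hat{\z}^t-\hat{\x}^t=\frac{1}{1-\beta}\hat{\x}^t-\frac{\beta}{1-\beta}\hat{\x}^{t-1}-\hat{\x}^t=\frac{\beta}{1-\beta}\left(\hat{\x}^t-\hat{\x}^{t-1}\right),
\end{align*}
so it suffices to control $\hat{\x}^t-\hat{\x}^{t-1}$.

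Next I would introduce the averaged momentum $\bar{\bu}^t:=\frac{1}{N}\sum_{i=1}^N\bu_i^t$ and the averaged stochastic gradient $\g^t:=\frac{1}{N}\sum_{i=1}^N\g_i(\x_i^t)$. The key bookkeeping observation is that averaging commutes with the per-round reset $\bu_i^t\leftarrow\frac1N\sum_j\bu_j^t$ (the reset leaves $\bar{\bu}^t$ unchanged) and likewise with the aggregation of local models; hence, for every $t$, $\hat{\x}^{t+1}=\hat{\x}^t-\gamma\bar{\bu}^{t+1}$ and $\bar{\bu}^{t+1}=\beta\bar{\bu}^t+\g^t$ with $\bar{\bu}^0=\vzero$. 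Unrolling the latter gives $\bar{\bu}^t=\sum_{\tau=0}^{t-1}\beta^{\,t-1-\tau}\g^\tau$, so that $\hat{\x}^t-\hat{\x}^{t-1}=-\gamma\sum_{\tau=0}^{t-1}\beta^{\,t-1-\tau}\g^\tau$.

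Then, setting $S_t:=\sum_{\tau=0}^{t-1}\beta^{\,t-1-\tau}=\frac{1-\beta^t}{1-\beta}\le\frac{1}{1-\beta}$ and applying convexity of $\normsq{\cdot}$ with the probability weights $\beta^{\,t-1-\tau}/S_t$, one obtains $\normsq{\bar{\bu}^t}\le\frac{1}{1-\beta}\sum_{\tau=0}^{t-1}\beta^{\,t-1-\tau}\normsq{\g^\tau}$, hence $\normsq{\hat{\z}^t-\hat{\x}^t}\le\frac{\gamma^2\beta^2}{(1-\beta)^3}\sum_{\tau=0}^{t-1}\beta^{\,t-1-\tau}\normsq{\g^\tau}$. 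Summing over $t=0,\dots,T-1$ and swapping the order of summation,
\begin{align*}
\sum_{t=0}^{T-1}\sum_{\tau=0}^{t-1}\beta^{\,t-1-\tau}\normsq{\g^\tau}=\sum_{\tau=0}^{T-2}\normsq{\g^\tau}\sum_{t=\tau+1}^{T-1}\beta^{\,t-1-\tau}\le\frac{1}{1-\beta}\sum_{\tau=0}^{T-1}\normsq{\g^\tau},
\end{align*}
which produces the extra factor $\frac{1}{1-\beta}$ and yields the claimed bound with constant $\frac{\gamma^2\beta^2}{(1-\beta)^4}$. The only step requiring genuine care is the second one — verifying that the averaged momentum obeys a reset-free linear recursion despite the per-worker overwrites at aggregation rounds; everything after that is routine geometric-series manipulation.
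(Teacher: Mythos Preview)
Your argument is correct and is exactly the standard proof of Lemma~4 in \citet{yu2019linear}: write $\hat{\z}^t-\hat{\x}^t=\tfrac{\beta}{1-\beta}(\hat{\x}^t-\hat{\x}^{t-1})=-\tfrac{\gamma\beta}{1-\beta}\bar{\bu}^t$, unroll the averaged momentum, apply Jensen, then sum and swap. The paper itself does not prove this lemma at all; it simply imports it verbatim as a cited result, so your self-contained derivation is strictly more detailed than what the paper provides but follows the same route as the original reference.
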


}

Using Assumption~\ref{assumption:global-lipschitz-gradient}, we have 
\begin{align}\label{eq:iter-momentum-first}
	\E f(\hat{\z}^{t+1}) \le \E f(\hat{\z}^t) -\frac{\gamma}{1-\beta}\E \left\langle \nabla f(\hat{\z}^t), \frac{1}{N}\sum_{i=1}^N \g_i(\x_i^t)\right\rangle + \frac{\gamma^2 L_g}{2(1-\beta)^2} \E\normsq{\frac{1}{N}\sum_{i=1}^N \g_i(\x_i^t)}.
\end{align}
For the inner product in the RHS of (\ref{eq:iter-momentum-first}), we have
\begin{align}
		&-\frac{\gamma}{1-\beta}\E \left\langle \nabla f(\hat{\z}^t), \frac{1}{N}\sum_{i=1}^N \g_i(\x_i^t)\right\rangle \notag \\
		&= -\frac{\gamma}{1-\beta}\E \left[\E_{\x_i^t}\left\langle \nabla f(\hat{\z}^t) - \nabla f(\hat{\x}^t), \frac{1}{N}\sum_{i=1}^N \g_i(\x_i^t)\right\rangle\right] - \frac{\gamma}{1-\beta}\E\left[\E_{\x_i^t}\left\langle \nabla f(\hat{\x}^t), \frac{1}{N}\sum_{i=1}^N \g_i(\x_i^t)\right\rangle\right]\notag \\
        &= -\frac{\gamma}{1-\beta} \E \left\langle \nabla f(\hat{\z}^t) - \nabla f(\hat{\x}^t),  \frac{1}{N}\sum_{i=1}^N\nabla F_i(\x_i^t) \right\rangle- \frac{\gamma}{1-\beta}\E\left\langle \nabla f(\hat{\x}^t), \frac{1}{N}\sum_{i=1}^N \nabla F_i(\x_i^t)\right\rangle.
\end{align}
For the first inner-product term, we have
\begin{align}
	&-\frac{\gamma}{1-\beta} \E \left\langle \nabla f(\hat{\z}^t) - \nabla f(\hat{\x}^t),  \frac{1}{N}\sum_{i=1}^N\nabla F_i(\x_i^t) \right\rangle\notag \\
	&\overset{(a)}{\le} \frac{(1-\beta)}{2\beta L_g}\E\normsq{\nabla f(\hat{\z}^t) - \nabla f(\hat{\x}^t)}+\frac{\gamma^2\beta L_g}{2(1-\beta)^3}\E\normsq{\frac{1}{N}\sum_{i=1}^N\nabla F_i(\x_i^{r,k})},
\end{align}
where $(a)$ is due to $\langle  a,b  \rangle \le \frac{c}{2}\normsq{a}+\frac{1}{2c}\normsq{b}, c>0$. {Here, we let $a=\nabla f(\hat{\z}^t) - \nabla f(\hat{\x}^t)$, $b=\frac{1}{N}\sum_{i=1}^N\nabla F_i(\x_i^t)$ and $c=\frac{(1-\beta)^2}{\gamma \beta L_g}$}.

For the second inner-product term, we have
\begin{align}
	&- \frac{\gamma}{1-\beta} \E \left\langle \nabla f(\hat{\x}^t), \frac{1}{N}\sum_{i=1}^N \g_i(\x_i^t)\right\rangle \notag \\
	&=- \frac{\gamma }{1-\beta}\E \left\langle \nabla f(\hat{\x}^t),  \frac{1}{N}\sum_{i=1}^N\nabla F_i(\x_i^t) \right\rangle \notag \\
	&=\frac{\gamma }{2(1-\beta)}\E\normsq{\nabla f(\hat{\x}^t) -\frac{1}{N}\sum_{i=1}^N\nabla F_i(\x_i^t) } - \frac{\gamma }{2(1-\beta)}\E\normsq{\nabla f(\hat{\x}^t)}  \notag \\
	&- \frac{\gamma }{2(1-\beta)}\E \normsq{\frac{1}{N}\sum_{i=1}^N\nabla F_i(\x_i^t)}.
\end{align}
For the norm square in the RHS of (\ref{eq:iter-momentum-first}), we have
\begin{align}
	&\frac{\gamma^2 L_g}{2(1-\beta)^2} \E\normsq{\frac{1}{N}\sum_{i=1}^N \g_i(\x_i^t)}\notag \\
	&= \frac{\gamma^2 L_g}{2(1-\beta)^2} \E\normsq{\frac{1}{N}\sum_{i=1}^N \left( \g_i(\x_i^t) - \nabla F_i(\x_i^t) + \nabla F_i(\x_i^t) \right)} \notag \\
	&\le \frac{\gamma^2 L_g\sigma^2}{2N(1-\beta)^2} + \frac{\gamma^2 L_g}{2(1-\beta)^2} \E\normsq{\frac{1}{N}\sum_{i=1}^N\nabla F_i(\x_i^t)}.
\end{align}
Substituting back to (\ref{eq:iter-momentum-first}), we get
\begin{align}\label{eq:one-iter-fedavg-momentum}
	&\E f(\hat{\z}^{t+1}) \le \E f(\hat{\z}^t) - \frac{\gamma }{2(1-\beta)}\E\normsq{\nabla f(\hat{\x}^t)}+\frac{\gamma^2 L_g\sigma^2}{2N(1-\beta)^2}+\frac{\gamma }{2(1-\beta)}\E\normsq{\nabla f(\hat{\x}^t) -\frac{1}{N}\sum_{i=1}^N\nabla F_i(\x_i^t) } \notag \\
	&+ \frac{(1-\beta)}{2\beta L_g}\E\normsq{\nabla f(\hat{\z}^t) - \nabla f(\hat{\x}^t)} - \left(\frac{\gamma}{2(1-\beta)}  - \frac{\gamma^2 L_g}{2(1-\beta)^2}  - \frac{\gamma^2\beta L_g}{2(1-\beta)^3}\right)\E\normsq{\frac{1}{N}\sum_{i=1}^N\nabla F_i(\x_i^t)}\notag \\
	& \le \E f(\hat{\z}^t) - \frac{\gamma }{2(1-\beta)}\E\normsq{\nabla f(\hat{\x}^t)}+\frac{\gamma^2 L_g\sigma^2}{2N(1-\beta)^2}+\frac{\gamma L_h^2 }{2(1-\beta)}\frac{1}{N}\sum_{i=1}^N\E\normsq{\hat{\x}^t -\x_i^t } \notag \\
	&+ \frac{(1-\beta)L_g}{2\beta}\E\normsq{ \hat{\z}^t - \hat{\x}^t}- \left(\frac{\gamma}{2(1-\beta)}  - \frac{\gamma^2 L_g}{2(1-\beta)^2}  - \frac{\gamma^2\beta L_g}{2(1-\beta)^3}\right)\E\normsq{\frac{1}{N}\sum_{i=1}^N\nabla F_i(\x_i^t)}.
\end{align}

{ By Lemma~\ref{lemma:lemma4-yu}, }we obtain 
\begin{align}\label{eq:diff-auxi-model}
	&\frac{1}{T}\sum_{t=0}^{T-1} \E\normsq{\hat{\z}^t - \hat{\x}^t} \le \frac{\gamma^2\beta^2}{(1-\beta)^4}\cdot \frac{1}{T}\sum_{t=0}^{T-1} \E\normsq{\frac{1}{N}\sum_{i=1}^N\g_i(\x_i^t)} \notag\\
	&= 	\frac{\gamma^2\beta^2}{(1-\beta)^4}\cdot \frac{1}{T}\sum_{t=0}^{T-1} \E\normsq{\frac{1}{N}\sum_{i=1}^N\left[\g_i(\x_i^t) - \nabla F_i(\x_i^t) + \nabla F_i(\x_i^t)\right]} \notag \\
	&\le \frac{\gamma^2\beta^2\sigma^2}{N(1-\beta)^4} + \frac{\gamma^2\beta^2}{(1-\beta)^4}\cdot \frac{1}{T}\sum_{t=0}^{T-1}\E\normsq{\frac{1}{N}\sum_{i=1}^N \nabla F_i(\x_i^t)}
\end{align}

By Lemma~\ref{lemma:model-divergence-momentum}, with $\frac{1}{1 - \frac{6\gamma^2I^2(L_h^2 + L_g^2)}{(1-\beta)^2}} >0$, we get
\begin{align}\label{eq:diff-model-fedavg-momentum}
		\frac{1}{T}\sum_{t=0}^{T-1} \frac{1}{N}\sum_{i=1}^N\E\normsq{\hat{\x}^t -\x_i^t }\le \frac{1}{1 - \frac{6\gamma^2I^2(L_h^2 + L_g^2)}{(1-\beta)^2}}\cdot \left(\frac{2\gamma^2I\sigma^2}{(1-\beta^2)} + \frac{6\gamma^2I^2\zeta^2}{(1-\beta)^2}  \right).
\end{align}
{ Substituting (\ref{eq:diff-auxi-model}) and (\ref{eq:diff-model-fedavg-momentum}) back to (\ref{eq:one-iter-fedavg-momentum}),} we obtain
\begin{align}
		&\frac{\gamma }{2(1-\beta)}\cdot\frac{1}{T}\sum_{t=0}^{T-1}\E\normsq{\nabla f(\hat{\x}^t)}\le \frac{(f_0-f_*)}{T}+\frac{\gamma^2 L_g\sigma^2}{2N(1-\beta)^2} + \frac{(1-\beta)L_g}{2\beta}\cdot \frac{\gamma^2\beta^2\sigma^2}{N(1-\beta)^4} \notag \\
		&+\frac{\gamma L_h^2 }{2(1-\beta)} \cdot \frac{1}{1 - \frac{6\gamma^2I^2(L_h^2 + L_g^2)}{(1-\beta)^2}}\cdot \left(\frac{2\gamma^2I\sigma^2}{(1-\beta^2)} + \frac{6\gamma^2I^2\zeta^2}{(1-\beta)^2}  \right) \notag \\
		& - \left(\frac{\gamma}{2(1-\beta)}  - \frac{\gamma^2 L_g}{2(1-\beta)^2}  - \frac{\gamma^2\beta L_g}{2(1-\beta)^3} - \frac{(1-\beta)L_g}{2\beta}\cdot\frac{\gamma^2\beta^2}{(1-\beta)^4}\right)\frac{1}{T}\sum_{t=0}^{T-1}\E\normsq{\frac{1}{N}\sum_{i=1}^N\nabla F_i(\x_i^t)}.
\end{align}
{ 
Dividing both sides by $\frac{\gamma}{2(1-\beta)}$, we obtain
\begin{align}
&\frac{1}{T}\sum_{t=0}^{T-1}\E\normsq{\nabla f(\hat{\x}^t)}\notag \\
&\le \frac{2(1-\beta)(f_0-f_*)}{\gamma T}+\frac{\gamma L_g\sigma^2}{N(1-\beta)^2} + L_h^2 \cdot \frac{1}{1 - \frac{6\gamma^2I^2(L_h^2 + L_g^2)}{(1-\beta)^2}}\cdot \left(\frac{2\gamma^2I\sigma^2}{(1-\beta^2)} + \frac{6\gamma^2I^2\zeta^2}{(1-\beta)^2}  \right)\notag \\
&-\left(1  - \frac{\gamma L_g}{1-\beta}  - \frac{2\gamma\beta L_g}{(1-\beta)^2} \right)\frac{1}{T}\sum_{t=0}^{T-1}\E\normsq{\frac{1}{N}\sum_{i=1}^N\nabla F_i(\x_i^t)}.
\end{align}
By $\gamma \le \frac{1-\beta}{\sqrt{18(L_g^2 + L_h^2)}I}$, we have
\begin{align}
\frac{1}{1 - \frac{6\gamma^2I^2(L_h^2 + L_g^2)}{(1-\beta)^2}} \le \frac{3}{2}.
\end{align}
By $\gamma \le \frac{(1-\beta)^2}{L_g(1+\beta)}$, we have
\begin{align}
1  - \frac{\gamma L_g}{1-\beta}  - \frac{2\gamma\beta L_g}{(1-\beta)^2} \ge 1-\frac{1-\beta}{1+\beta} - \frac{2\beta}{1+\beta} = 0.
\end{align}
With $\gamma\le \min\{ \frac{(1-\beta)^2}{L_g(1+\beta)}, \frac{1-\beta}{\sqrt{18(L_g^2 + L_h^2)}I}\}$
we obtain
\begin{align}
\min_t\E\normsq{\nabla f(\hat{\x}^t)} \le \frac{1}{T}\sum_{t=0}^{T-1}\E\normsq{\nabla f(\hat{\x}^t)} \le \frac{2(1-\beta)(f_0-f_*)}{\gamma T}+\frac{\gamma L_g\sigma^2}{N(1-\beta)^2} + \frac{3\gamma^2L_h^2I\sigma^2}{(1-\beta)^2} + \frac{9\gamma^2L_h^2 I^2\zeta^2}{(1-\beta)^2}. 
\end{align}
}

\subsection{Proof of Theorem~\ref{thm:fedadam}}\label{sec:proof-fedadam}
In this section, we use the techniques of \citet{reddi2020adaptive} in the proof. 
{  We define the update at $r$th round $\Delta_r$ as
\begin{align}
    \Delta_r := \frac{1}{N}\sum_{i=1}^N \x_i^{r,I} - \bar{\x}^r.
\end{align}

}

In FedAdam, the global update is given by
\begin{align}
    \bar{\x}^{r+1} = \bar{\x}^r + \eta \frac{\Delta_r}{\sqrt{\bv^r}+ \tau},
\end{align}
where 
\begin{align}
\bv^r = \beta_2 \bv^{r-1} + (1-\beta_2) \Delta_r^2. 
\end{align}
We use $\Delta_{r,j}$ to denote the $j$th element of $\Delta_r$. We use $v^r_j$ to denote the $j$th element of $\bv^r$. The division is element-wise. $\frac{\Delta_r}{\sqrt{\bv^r}+ \tau}$ means that for each element $j\in [d]$, we perform $\frac{\Delta_{r,j}}{\sqrt{v^r_j}+\tau}$, and $\Delta_r^2$ means that for each element $j\in [d]$, we perform $\Delta_{r,j}^2$.

By Assumption~\ref{assumption:global-lipschitz-gradient}, we obtain
\begin{align}\label{eq:fedadam-first}
    f(\bar{\x}^{r+1}) \le f(\bar{\x}^r) + \eta \left\langle \nabla f(\bar{\x}^r),\frac{\Delta_r}{\sqrt{\bv^r}+ \tau} \right\rangle + \frac{\eta^2 L_g}{2}\sum_{j=1}^d \frac{\Delta_{r,j}^2}{(\sqrt{\bv_j^r}+ \tau)^2}.
\end{align}
For the inner-product term, we have
\begin{align}\label{eq:fedadam-inner-prod}
    &\eta \left\langle \nabla f(\bar{\x}^r),\frac{\Delta_r}{\sqrt{\bv^r}+ \tau} \right\rangle \notag \\
    &=\eta \left\langle \nabla f(\bar{\x}^r),\frac{\Delta_r}{\sqrt{\bv^r}+ \tau} -  \frac{\Delta_r}{\sqrt{\beta_2\bv^{r-1}}+ \tau} \right\rangle + \eta \left\langle \nabla f(\bar{\x}^r),\frac{\Delta_r}{\sqrt{\beta_2\bv^{r-1}}+ \tau} \right\rangle.
\end{align}
By (14)--(15) in \citet{reddi2020adaptive}, we have
\begin{align}
    &\eta \E \left\langle \nabla f(\bar{\x}^r),\frac{\Delta_r}{\sqrt{\bv^r}+ \tau} -  \frac{\Delta_r}{\sqrt{\beta_2\bv^{r-1}}+ \tau} \right\rangle  \notag \\
    & = \eta\sqrt{1-\beta_2}\E \sum_{j=1}^d \frac{G}{\tau} \cdot \frac{\Delta^2_{r,j}}{\sqrt{v^r_j}+\tau},
\end{align}
and
\begin{align}
   & \eta\E \left\langle \nabla f(\bar{\x}^r),\frac{\Delta_r}{\sqrt{\beta_2\bv^{r-1}}+ \tau} \right\rangle \notag\\
   &= -\gamma\eta I\E\sum_{j=1}^d \frac{[\nabla f(\bar{\x}^r)]_j^2}{\sqrt{\beta_2v_j^{r-1}}+\tau} + \eta\E \left\langle \frac{\nabla f(\bar{\x}^r)}{\sqrt{\beta_2\bv^{r-1}}+\tau}, \Delta_r + \gamma I \nabla f(\bar{\x}^r) \right\rangle.
\end{align}
For the second term in the RHS of above inequality, we have 
\begin{align}
    &\eta\E \left\langle \frac{\nabla f(\bar{\x}^r)}{\sqrt{\beta_2\bv^{r-1}}+\tau}, \Delta_r + \gamma I \nabla f(\bar{\x}^r) \right\rangle \notag\\
    &{  = -\eta\gamma \E\left[ \E_{\bar{\x}^r}\left\langle \frac{\nabla f(\bar{\x}^r)}{\sqrt{\beta_2\bv^{r-1}}+\tau}, \frac{1}{N}\sum_{i=1}^N\sum_{k=0}^{I-1} \g_i(\x_i^{r,k}) -  I \nabla f(\bar{\x}^r) \right\rangle \right]}\notag \\
    &= -\eta\gamma \E \left\langle \frac{\nabla f(\bar{\x}^r)}{\sqrt{\beta_2\bv^{r-1}}+\tau}, \frac{1}{N}\sum_{i=1}^N\sum_{k=0}^{I-1} \left(\nabla F_i(\x_i^{r,k}) -  \nabla f(\bar{\x}^r) \right)\right\rangle \notag\\
    &\le \frac{\eta\gamma I}{2}\E\sum_{j=1}^d \frac{[\nabla f(\bar{\x}^r)]_j}{\sqrt{\beta_2 v_j^{r-1}}+\tau} + \frac{\eta\gamma }{2(\sqrt{\beta_2\bv^{r-1}}+\tau)} \sum_{k=0}^{I-1} \E \normsq{\frac{1}{N}\sum_{i=1}^N \nabla F_i(\x_i^{r,k}) -  \nabla f(\bar{\x}^r)} \notag \\
    &\le \frac{\eta\gamma I}{2}\E\sum_{j=1}^d \frac{[\nabla f(\bar{\x}^r)]_j}{\sqrt{\beta_2 v_j^{r-1}}+\tau} + \frac{\eta\gamma}{2\tau}\sum_{k=0}^{I-1}\E \normsq{\frac{1}{N}\sum_{i=1}^N \nabla F_i(\x_i^{r,k}) - \nabla f(\hat{\x}^{r,k}) +\nabla f(\hat{\x}^{r,k})  -  \nabla f(\bar{\x}^r)} \notag\\
    &\le \frac{\eta\gamma I}{2}\E\sum_{j=1}^d \frac{[\nabla f(\bar{\x}^r)]_j}{\sqrt{\beta_2 v_j^{r-1}}+\tau} + \frac{\eta\gamma L_h^2}{2\tau}\sum_{k=0}^{I-1}\frac{1}{N}\sum_{i=1}^N\E\normsq{\x_i^{r,k}-\hat{\x}^{r,k}} + \frac{\eta\gamma I L_g^2}{2\tau}\E\normsq{\hat{\x}^{r,k}- \bar{\x}^r}.
\end{align}
Since the local updates of FedAdam are the same as that of FedAvg, we can apply Lemma~\ref{lemma:model-divergence} and Lemma~\ref{lemma:distance-averaged-models} in the above inequality. Then, by (\ref{eq:apply-lemmas}), 
with $\gamma\le \frac{1}{\sqrt{6(L_h^2+L_g^2)}I}$ %
and $\gamma \le \frac{1}{2\sqrt{3}IL_g}$, we obtain
\begin{align}\label{eq:sum-of-model}
    &\frac{\eta\gamma L_h^2}{2\tau}\sum_{k=0}^{I-1}\frac{1}{N}\sum_{i=1}^N\E\normsq{\x_i^{r,k}-\hat{\x}^r} + \frac{\eta\gamma I L_g^2}{2\tau}\E\normsq{\hat{\x}^{r,k}- \bar{\x}^r}\notag \\
    &\le  \frac{\gamma\eta L_g^2 I}{2\tau} \left( 5(I-1) \frac{\gamma^2\sigma^2}{N} + 30I\gamma^2 \sum_{k=0}^{I-1} \frac{L_h^2}{N}\sum_{i=1}^N \Expect\normsq{\x_i^{r,k} -\hat{\x}^{r,k}}  
    +30I(I-1) \gamma^2 \Expect\normsq{\nabla f(\bar{\x}^r)} \right) \notag \\
    &+ \frac{\eta\gamma L_h^2}{2\tau}\sum_{k=0}^{I-1}\frac{1}{N}\sum_{i=1}^N\E\normsq{\x_i^{r,k}-\hat{\x}^r} \notag \\
    &\le \frac{5\gamma^3\eta L_g^2 I^2\sigma^2 }{2\tau N} + \frac{15\gamma^3\eta L_g^2 I^3}{\tau}\Expect\normsq{\nabla f(\bar{\x}^r)} + \left(\frac{15\gamma^3\eta L_g^2L_h^2 I^2}{\tau} +  \frac{\eta\gamma L_h^2}{2\tau} \right) \sum_{k=0}^{I-1}\frac{1}{N}\sum_{i=1}^N\E\normsq{\x_i^{r,k}-\hat{\x}^r}  \notag \\
    & \le \frac{5\gamma^3\eta L_g^2 I^2\sigma^2 }{2\tau N} + \frac{15\gamma^3\eta L_g^2 I^3}{\tau}\Expect\normsq{\nabla f(\bar{\x}^r)} + \frac{\eta\gamma}{\tau}\left( 12(I-1)^3\gamma^2L_h^2 \zeta^2+ 4(I-1)^2\gamma^2 L_h^2 \sigma^2\right).
\end{align}

According to (16) and its proof in \citet{reddi2020adaptive}, with $\gamma\le \min  \left\{\frac{1}{16L_gI}, \frac{\tau^{\frac{1}{3}}}{16K(120L_g^2G)^{\frac{1}{3}}}\right\} $, we have
\begin{align}\label{eq:lr-moment}
\frac{15\gamma^2L_g^2I^2}{\tau} \E\normsq{\nabla f(\bar{\x})}  \le  \frac{1}{4}\E\sum_{j=1}^d\frac{[\nabla f(\bar{\x}^r)]_j^2}{\beta_2\sqrt{v^r_j}+\tau}.
\end{align}

Substituting (\ref{eq:fedadam-inner-prod})--(\ref{eq:sum-of-model}) back to (\ref{eq:fedadam-first}),  we can get
\begin{align}\label{eq:fedadam-last}
    &\E f(\bar{\x}^{r+1}) \le \E f(\bar{\x}^r) + \eta\sqrt{1-\beta_2}\E \sum_{j=1}^d \frac{G}{\tau} \cdot \frac{\Delta^2_{r,j}}{\sqrt{v^r_j}+\tau} -\gamma\eta I\E\sum_{j=1}^d \frac{[\nabla f(\bar{\x}^r)]_j^2}{\sqrt{\beta_2v_j^{r-1}}+\tau} + \frac{\eta\gamma I}{2}\E\sum_{j=1}^d \frac{[\nabla f(\bar{\x}^r)]^2_j}{\sqrt{\beta_2 v_j^{r-1}}+\tau}\notag \\ 
    &+  \frac{5\gamma^3\eta L_g^2 I^2\sigma^2 }{2\tau N} + \frac{15\gamma^3\eta L_g^2 I^3}{\tau}\Expect\normsq{\nabla f(\bar{\x}^r)} + \frac{\eta\gamma}{\tau}\left( 12(I-1)^3\gamma^2L_h^2 \zeta^2+ 4(I-1)^2\gamma^2 L_h^2 \sigma^2\right) + \frac{\eta^2 L_g}{2}\E\sum_{j=1}^d \frac{\Delta_{r,j}^2}{(\sqrt{v_j^r}+ \tau)^2} \notag \\
    &\overset{(a)}{\le} \E f(\bar{\x}^r) + \left(\frac{\eta\sqrt{1-\beta_2}G}{\tau} + \frac{\eta^2L_g}{2}\right)\E\sum_{j=1}^d\frac{\Delta^2_{r,j}}{\sqrt{v^r_j}+\tau}-\frac{\gamma\eta I}{4}\E\sum_{j=1}^d \frac{[\nabla f(\bar{\x}^r)]_j^2}{\sqrt{\beta_2v_j^{r-1}}+\tau} + \frac{5\gamma^3\eta L_g^2 I^2\sigma^2 }{2\tau N} \notag \\
    &+\frac{\eta\gamma}{\tau}\left( 12(I-1)^3\gamma^2L_h^2 \zeta^2+ 4(I-1)^2\gamma^2 L_h^2 \sigma^2\right),
\end{align}
where $(a)$ is due to (\ref{eq:lr-moment}).

Similar to the proof of Lemma 4 in \citet{reddi2020adaptive}, we obtain
\begin{align}\label{eq:fedadam-lemma4}
&\E\sum_{j=1}^d \frac{\Delta_{r,j}^2}{(\sqrt{v_j^r}+ \tau)^2} \notag \\
&\le \E\sum_{j=1}^d \frac{\Delta_{r,j}^2}{\tau^2} \notag \\
&\le 2\E \normsq{\frac{\Delta_r + \gamma I\nabla f(\bar{\x}^r)}{\tau}} + 2\gamma^2I^2\E\normsq{\frac{\nabla f(\bar{\x}^r)}{\tau}}.
\end{align}
Furthermore, by Lemma~\ref{lemma:local-gradient-deviation} and Lemma~\ref{lemma:model-divergence}, we have
\begin{align}\label{eq:fedadam-grad-norm}
    &2\E \normsq{\frac{\Delta_r + \gamma I\nabla f(\bar{\x}^r)}{\tau}} \notag \\
    &\le \frac{4\gamma^2 I}{N\tau^2}
    \sigma^2 + \frac{4\gamma^2I}{\tau^2}\E\sum_{k=0}^{I-1} \normsq{\frac{1}{N}\sum_{i=1}^N \nabla F_i(\x_i^{r,k}) - \nabla f(\hat{\x}^{r,k}) + \nabla f(\hat{\x}^{r,k}) - \nabla f(\bar{\x}^r)} \notag \\
    &\le \frac{4\gamma^2 I}{N\tau^2}
    \sigma^2 + \frac{4\gamma^2 I}{\tau^2}\left(\frac{5\gamma^2 L_g^2 I^2\sigma^2 }{ N} + 30\gamma^2 L_g^2 I^3\Expect\normsq{\nabla f(\bar{\x}^r)} + 24(I-1)^3\gamma^2L_h^2 \zeta^2+ 8(I-1)^2\gamma^2 L_h^2 \sigma^2\right) .
\end{align}
According to the proof for Theorem 2 in \citet{reddi2020adaptive}, with $\gamma \le \min\{\frac{1}{16IL_g},\frac{\tau}{6(2G+\eta L_g)}\}$, we have
\begin{align}\label{eq:fact}
\left(\sqrt{1-\beta_2}G + \frac{\eta L_g}{2} \right)\frac{2\gamma^2I^2+ 120\gamma^4L_g^2I^4}{\tau^2} \le \frac{\gamma I}{8} \frac{1}{\sqrt{\beta_2}\gamma IG + \tau}.
\end{align}

Substituting (\ref{eq:fedadam-lemma4}) and (\ref{eq:fedadam-grad-norm}) back to (\ref{eq:fedadam-last}),  we obtain
\begin{align}
&\E f(\bar{\x}^{r+1}) \le \E f(\bar{\x}^r) -\frac{\gamma\eta I}{4}\E\sum_{j=1}^d \frac{[\nabla f(\bar{\x}^r)]_j^2}{\sqrt{\beta_2v_j^{r-1}}+\tau}  + \frac{5\gamma^3\eta L_g^2 I^2\sigma^2 }{2\tau N} \notag \\
&+ \left(\frac{\eta\sqrt{1-\beta_2}G}{\tau} + \frac{\eta^2L_g}{2}\right)\left(\frac{4\gamma^2 I}{N\tau^2}
    \sigma^2+ \frac{2\gamma^2I^2+ 120\gamma^4L_g^2I^4}{\tau^2}\Expect\normsq{\nabla f(\bar{\x}^r)} + \frac{96\gamma^4L_h^2I^4\zeta^2}{\tau^2} + \frac{32\gamma^4L_h^2I^3\sigma^2}{\tau^2} \right) \notag \\
&+\frac{\eta\gamma}{\tau}\left( 12(I-1)^3\gamma^2L_h^2 \zeta^2+ 4(I-1)^2\gamma^2 L_h^2 \sigma^2\right)\notag \\
&\overset{(a)}{\le} \E f(\bar{\x}^r) -\frac{\gamma\eta I}{8}\E\sum_{j=1}^d \frac{[\nabla f(\bar{\x}^r)]_j^2}{\sqrt{\beta_2v_j^{r-1}}+\tau} +\frac{5\gamma^3\eta L_g^2 I^2\sigma^2 }{2\tau N}+\frac{\eta\gamma}{\tau}\left( 12(I-1)^3\gamma^2L_h^2 \zeta^2+ 4(I-1)^2\gamma^2 L_h^2 \sigma^2\right)  \notag \\
&+ \left(\eta\sqrt{1-\beta_2}G + \frac{\eta^2L_g}{2}\right)\left(\frac{4\gamma^2 I}{N\tau^2}
    \sigma^2+ \frac{96\gamma^4L_h^2I^4\zeta^2}{\tau^2} + \frac{32\gamma^4L_h^2I^3\sigma^2}{\tau^2}   \right) ,
\end{align}
where $(a)$ is due to (\ref{eq:fact}).

Rearranging the above inequality, we have
\begin{align}\label{eq:a7-avg}
&\frac{1}{R}\sum_{r=0}^{R-1} \E\sum_{j=1}^d \frac{[\nabla f(\bar{\x}^r)]_j^2}{\sqrt{\beta_2v_j^{r-1}}+\tau} \le \frac{8(f_0-f_*)}{\gamma\eta IR} +\frac{\gamma L_g I\sigma^2 }{\tau N}+\frac{96\gamma^2I^2L_h^2\zeta^2}{\tau} + \frac{32\gamma^2L_hI\sigma^2}{\tau} \notag \\
&+\left(\sqrt{1-\beta_2}G + \frac{\eta L_g}{2}\right)\left(\frac{32\gamma }{N\tau^2}
    \sigma^2+ \frac{768\gamma^3 L_h^2 I^3\zeta^2}{\tau^2} + \frac{256\gamma^3L_h^2 I^2\sigma^2}{\tau^2}   \right).
\end{align}
By the proof of Theorem 2 in \citet{reddi2020adaptive}, we have
\begin{align}\label{eq:avg-min-momentum}
\frac{1}{R}\sum_{r=0}^{R-1} \E\sum_{j=1}^d \frac{[\nabla f(\bar{\x}^r)]_j^2}{\sqrt{\beta_2v_j^{r-1}}+\tau} \ge \frac{1}{R}\sum_{r=0}^{R-1} \E\sum_{j=1}^d \frac{[\nabla f(\bar{\x}^r)]_j^2}{\sqrt{\beta_2}\gamma IG+\tau}\ge \frac{1}{\sqrt{\beta_2}\gamma IG+\tau} \min_r \E\normsq{\nabla f(\bar{\x}^r)}.
\end{align}
Substituting (\ref{eq:avg-min-momentum}) back to (\ref{eq:a7-avg}) and rearranging, we have
\begin{align}
\min_r\E\normsq{\nabla f(\bar{\x}^r)} \le &\left( \sqrt{\beta_2}\gamma IG +\tau \right) \left(  \frac{8(f_0-f_*)}{\gamma\eta I R} + \frac{\gamma L_g \sigma^2}{\tau N} + \frac{96\gamma^2I^2L_h^2\zeta^2}{\tau} + \frac{32\gamma^2L_hI\sigma^2}{\tau} \right) \notag \\
&+ \left( \sqrt{\beta_2}\gamma IG +\tau \right) \left(\sqrt{1-\beta_2}G + \frac{\eta L_g}{2}\right)\left(\frac{32\gamma }{N\tau^2}
    \sigma^2+ \frac{768\gamma^3 L_h^2 I^3\zeta^2}{\tau^2} + \frac{256\gamma^3L_h^2 I^2\sigma^2}{\tau^2}   \right).
\end{align}
{ 
\subsection{Proof of Theorem~\ref{thm:mu-convex}}
In this section, we apply Assumption~\ref{assumption:gradient-to-model} in the convergence analysis for strongly convex objective functions in \citet{karimireddy2020scaffold}. First, we bound the term $A_1$ in the proof of Lemma 7 in \citet{karimireddy2020scaffold} using techniques in our paper.
\begin{align}
    A_1 &= \frac{2\gamma\eta}{N}\sum_{k,i}\langle \nabla F_i(\x_i^{k-1}), \x^* - \x \rangle \notag \\
    &= \frac{2\gamma\eta}{N}\sum_{k,i}\langle \nabla F_i(\x_i^{k-1}) - \nabla f(\x) + \nabla f(\x), \x^* - \x \rangle \notag \\
    &= \underbrace{\frac{2\gamma\eta}{N}\sum_{k,i}\langle \nabla F_i(\x_i^{k-1}) - \nabla f(\x), \x^* - \x \rangle}_{T_1} + \underbrace{ \frac{2\gamma\eta}{N}\sum_{k,i}\langle  \nabla f(\x), \x^* - \x \rangle}_{T_2}.
\end{align}
For $T_1$, we have
\begin{align}
    T_1&=2\gamma\eta\left\langle \frac{1}{N} \sum_{k,i}\nabla F_i(\x_i^{k-1}) - I\nabla f(\x), \x^* - \x \right\rangle  \notag \\ 
    &\le \gamma\eta\left( a \underbrace{\left\| \frac{1}{N} \sum_{k,i}\nabla F_i(\x_i^{k-1}) - I\nabla f(\x) \right\|^2}_{T_3} + \frac{1}{a}\left\| \x^* - \x\right\|^2 \right),
\end{align}
where $a>0$ is a constant. Furthermore, for $T_3$, we have
\begin{align}
    T_3 &= \left\| \frac{1}{N} \sum_{k,i}\nabla F_i(\x_i^{k-1}) - \sum_{k} \nabla f(\hat{\x}^{k-1}) + \sum_{k} \nabla f(\hat{\x}^{k-1}) - I\nabla f(\x) \right\|^2 \notag \\
    &\le \frac{2L_h^2I}{N}\sum_{i,k}\|\x_i^{k-1} - \hat{\x}^{k-1} \|^2  +  2L_g^2I\sum_k\|\hat{\x}^{k-1} - \x \|^2.
\end{align}

For $T_2$, according to Lemma 5 in \citet{karimireddy2020scaffold}, we obtain
\begin{align}
T_2 &\le \frac{2\gamma\eta}{N}\sum_{k,i} \left( f(\x^*) - f(\x) - \frac{\mu}{4} \| \x-\x^*\|^2  \right)\notag \\
&= 2\gamma\eta I \left( f(\x^*) - f(\x) - \frac{\mu}{4} \| \x-\x^*\|^2  \right).
\end{align}

Substituting $T_1$, $T_2$ and $T_3$ back to $A_1$, we obtain
\begin{align} 
A_1 \le & a\gamma\eta \left(\frac{2L_h^2I}{N}\sum_{i,k}\|\x_i^{k-1} - \hat{\x}^{k-1} \|^2  +  2L_g^2I\sum_k\|\hat{\x}^{k-1} - \x \|^2 \right) + \frac{\gamma\eta}{a} \left\| \x^* - \x\right\|^2\notag \\
&+ 2\gamma\eta I \left( f(\x^*) - f(\x) - \frac{\mu}{4} \| \x-\x^*\|^2  \right).
\end{align}

Now we bound the term $A_2$ in the proof of Lemma 7 in \citet{karimireddy2020scaffold}. 
\begin{align}
A_2 &= \gamma^2\eta^2I^2  \left\| \frac{1}{NI} \sum_{k,i}\left(\nabla F_i(\x_i^{k-1}) -  \nabla f(\hat{\x}^{k-1}) + \nabla f(\hat{\x}^{k-1}) - \nabla f(\x) + \nabla f(\x)\right) \right\|^2 \notag \\
&\le  \frac{3\gamma^2\eta^2L_h^2I}{N}\sum_{i,k} \normsq{\x_i^{k-1} - \hat{\x}^{k-1}} + 3\gamma^2\eta^2L_g^2I\sum_k \normsq{\hat{\x}^{k-1} - \x} + 3\gamma^2\eta^2I^2 \normsq{ \nabla f(\x)}\notag\\
&\le \frac{3\gamma^2\eta^2L_h^2I}{N}\sum_{i,k} \normsq{\x_i^{k-1} - \hat{\x}^{k-1}} + 3\gamma^2\eta^2L_g^2I\sum_k \normsq{\hat{\x}^{k-1} - \x} + 3\gamma^2\eta^2L_gI^2(f(\x) - f(\x^*)).
\end{align}
By Lemma 7 in \citet{karimireddy2020scaffold}, we have
\begin{align}
\normsq{\bar{\x}^{r+1} - \x^*} \le \normsq{\bar{\x}^r - \x^*} + A_1 + A_2 + \frac{\gamma^2\eta^2 I \sigma^2}{N}.
\end{align}
Substituting $A_1$ and $A_2$ to the above inequality, we obtain
\begin{align}
\normsq{\bar{\x}^{r+1} - \x^*} \le& \left( 1 + \frac{\gamma\eta}{a} - \frac{\gamma\eta\mu I}{2}\right)\normsq{\bar{\x}^r - \x^*} - \left(2\gamma\eta I - 3\gamma^2\eta^2L_gI^2  \right) \left( f(\bar{\x}^r) - f^* \right) \notag \\
&+ \left(  \frac{2a\gamma\eta L_h^2I}{N}+ \frac{3\gamma^2\eta^2L_h^2I}{N} \right)\sum_{i,k} \|\x_i^{r,k-1} - \hat{\x}^{r,k-1} \|^2 \notag \\
&+ \left( 2a\gamma\eta L_g^2 I + 3\gamma^2\eta^2L_g^2I  \right)\sum_k \normsq{\hat{\x}^{r,k-1} - \bar{\x}^r} + \frac{\gamma^2\eta^2I\sigma^2 }{N}.
\end{align}
By choosing $a=\frac{4}{\mu I}$, we obtain
\begin{align}\label{eq:choosing-a}
\normsq{\bar{\x}^{r+1} - \x^*} \le& \left( 1 + \frac{\gamma\eta\mu I}{4} - \frac{\gamma\eta\mu I}{2}\right)\normsq{\bar{\x}^r - \x^*} - \left(2\gamma\eta I - 3\gamma^2\eta^2L_gI^2  \right) \left( f(\bar{\x}^r) - f^* \right) \notag \\
&+ \left(  \frac{8\gamma\eta L_h^2I}{\mu I N}+ \frac{3\gamma^2\eta^2L_h^2I}{N} \right)\sum_{i,k} \|\x_i^{r,k-1} - \hat{\x}^{r,k-1} \|^2 \notag \\
&+ \left( \frac{8\gamma\eta L_g^2 I}{\mu I} + 3\gamma^2\eta^2L_g^2I  \right)\sum_k \normsq{\hat{\x}^{r,k-1} - \bar{\x}^r}+\frac{\gamma^2\eta^2I\sigma^2 }{N}  \notag \\
\le &  \left( 1 -\frac{\gamma\eta\mu I}{4}  \right)\normsq{\bar{\x}^r - \x^*} - \left(2\gamma\eta I - 3\gamma^2\eta^2L_gI^2  \right) \left( f(\bar{\x}^r) - f^* \right)+\frac{\gamma^2\eta^2I\sigma^2 }{N} \notag \\
&+ \underbrace{\left(8\gamma\eta L_h \frac{L_h}{\mu}  + 3\gamma^2\eta^2L_h^2 I  \right)\cdot \frac{1}{N} \sum_{i,k} \|\x_i^{r,k-1} - \hat{\x}^{r,k-1} \|^2}_{T_4} \notag \\
&+ \underbrace{\left( 8\gamma\eta L_g \frac{L_g}{\mu} + 3\gamma^2\eta^2L_g^2I \right)\sum_k \normsq{\hat{\x}^{r,k-1} - \bar{\x}^r}}_{T_5}.
\end{align}
Applying Lemma~\ref{lemma:model-divergence} and Lemma~\ref{lemma:distance-averaged-models} to (\ref{eq:choosing-a}), we obtain
\begin{align}
T_4 \le \left(8\gamma\eta L_h \frac{L_h}{\mu}  + 3\gamma^2\eta^2L_h^2 I  \right) \cdot \left( 12(I-1)^3\gamma^2 \zeta^2+ 4(I-1)^2\gamma^2 \sigma^2 \right),
\end{align}
and 
\begin{align}
T_5 \le& \left( 8\gamma\eta L_g I \frac{L_g}{\mu}  + 3\gamma^2\eta^2L_g^2I^2 \right)\cdot \bigg( 5(I-1) \cdot\frac{\gamma^2\sigma^2}{N} +30I(I-1) \gamma^2L_g \left(f(\bar{\x}^r) - f^* \right) \notag \\
&+ 30I\gamma^2L_h^2 \left(12(I-1)^3\gamma^2 \zeta^2+ 4(I-1)^2\gamma^2 \sigma^2 \right) \bigg)\notag \\
\le& \left( 8\gamma\eta L_g I \frac{L_g}{\mu}  + 3\gamma^2\eta^2L_g^2I^2\cdot \frac{L_g}{\mu} \right)\cdot \bigg( 5(I-1) \cdot\frac{\gamma^2\sigma^2}{N} +30I(I-1) \gamma^2L_g \left(f(\bar{\x}^r) - f^* \right)\notag \\
&+ 30I\gamma^2L_h^2 \left(12(I-1)^3\gamma^2 \zeta^2+ 4(I-1)^2\gamma^2 \sigma^2 \right) \bigg) \notag \\
\le & \left( \gamma\eta I   + \gamma^2\eta^2L_gI^2 \right) \cdot 8L_g\frac{L_g}{\mu}\cdot \bigg( 5(I-1) \cdot\frac{\gamma^2\sigma^2}{N} +30I(I-1) \gamma^2L_g \left(f(\bar{\x}^r) - f^* \right) \notag \\
&+ 30I\gamma^2L_h^2 \left(12(I-1)^3\gamma^2 \zeta^2+ 4(I-1)^2\gamma^2 \sigma^2 \right) \bigg).
\end{align}
In particular, we have 
\begin{align}
&\left( \gamma\eta I   + \gamma^2\eta^2L_gI^2 \right) \cdot 8L_g\frac{L_g}{\mu}\cdot 30I(I-1) \gamma^2L_g \left(f(\bar{\x}^r) - f^* \right)\notag \\
&\overset{(a)}{\le}   \frac{1}{2}\left( \gamma\eta I   + \gamma^2\eta^2L_gI^2 \right)\left(f(\bar{\x}^r) - f^* \right),
\end{align}
where $(a)$ is due to $\gamma \le \frac{1}{24L_gI}\sqrt{\frac{\mu}{L_g}}$.
Substituting $T_4$ and $T_5$ back to (\ref{eq:choosing-a}), by $\gamma \le \frac{1}{24L_gI}\sqrt{\frac{\mu}{L_g}}$ and $\gamma\eta \le \frac{1}{16L_g I}$, we obtain
\begin{align}
&\normsq{\bar{\x}^{r+1} - \x^*} \notag \\
&\le\left( 1 -\frac{\gamma\eta\mu I}{4}  \right)\normsq{\bar{\x}^r - \x^*} - \left(\frac{1}{2}\gamma\eta I - 4\gamma^2\eta^2L_gI^2  \right) \left( f(\bar{\x}^r) - f^* \right)+\frac{\gamma^2\eta^2I\sigma^2 }{N}  \notag \\
&+ \left(8\gamma\eta L_h \frac{L_h}{\mu}  + 3\gamma^2\eta^2L_h^2 I  \right) \cdot \left( 3c(I-1)^3\gamma^2 \zeta^2+ c(I-1)^2\gamma^2 \sigma^2 \right)\notag \\
&+ \left(\gamma\eta I + \gamma^2\eta^2L_g I^2  \right) \bigg( 40\gamma^2L_g I\cdot \frac{L_g\sigma^2}{\mu N} + 8\frac{L_g^2}{\mu}\cdot 30I\gamma^2L_h^2 \left(12(I-1)^3\gamma^2 \zeta^2+ 4(I-1)^2\gamma^2 \sigma^2 \right)   \bigg) \notag \\
&\le \left( 1 -\frac{\gamma\eta\mu I}{4}  \right)\normsq{\bar{\x}^r - \x^*} - \frac{\gamma\eta I}{4} \left( f(\bar{\x}^r) - f^* \right)+\frac{\gamma^2\eta^2I\sigma^2 }{N}  \notag \\
&+ \left(8\gamma\eta L_h \frac{L_h}{\mu}  + 3\gamma^2\eta^2L_h^2 I  \right) \cdot \left( 12(I-1)^3\gamma^2 \zeta^2+ 4(I-1)^2\gamma^2 \sigma^2 \right)\notag \\
&+ \left(\gamma\eta I + \gamma^2\eta^2L_g I^2  \right) \bigg( 40\gamma^2L_g I\frac{L_g\sigma^2}{\mu N} + 8\frac{L_g^2}{\mu}\cdot 30I\gamma^2L_h^2 \left(12(I-1)^3\gamma^2 \zeta^2+ 4(I-1)^2\gamma^2 \sigma^2 \right)   \bigg).
\end{align}
Rearranging the above inequality, we have
\begin{align}
f(\bar{\x}^r) - f^* &\le \frac{4}{\gamma\eta I}\left( 1 -\frac{\gamma\eta\mu I}{4}  \right)\normsq{\bar{\x}^r - \x^*} - \frac{4}{\gamma\eta I}\normsq{\bar{\x}^{r+1} - \x^*}+\frac{4\gamma\eta\sigma^2 }{N}  \notag \\
+& \underbrace{\left(32 L_h \frac{L_h}{\mu}  + 12\gamma \eta L_h^2 I  \right)\cdot \left( 12(I-1)^2\gamma^2 \zeta^2+ 4(I-1)\gamma^2 \sigma^2 \right)}_{T_6} \notag \\
+& \underbrace{(4+4\gamma\eta L_g I)\bigg( 40\gamma^2L_g I\frac{L_g\sigma^2}{\mu N}+ 8\frac{L_g^2}{\mu}\cdot 30I\gamma^2L_h^2 \left(12(I-1)^3\gamma^2 \zeta^2+ 4(I-1)^2\gamma^2 \sigma^2 \right)   \bigg)}_{T_7}.
\end{align}
By $\gamma\eta \le \frac{1}{4L_h I}$, we have 
\begin{align}
T_6 \le 45\gamma^2 \frac{L_h^2}{\mu}I^2\zeta^2 + 15\gamma^2\frac{L_h^2}{\mu} I \sigma^2.
\end{align}
By $\gamma\eta \le \frac{1}{16L_g I}$ and $\gamma \le \frac{1}{24L_gI}\sqrt{\frac{L_g}{\mu}}$, we have 
\begin{align}
T_7 \le 80\gamma^2L_g I \frac{L_g\sigma^2}{\mu N} + 18\gamma^2\frac{L_h^2}{\mu}I^2\zeta^2 + 6\gamma^2\frac{L_h^2}{\mu}I\sigma^2.
\end{align}
Substituting $T_6$ and $T_7$ back, we have
\begin{align}
f(\bar{\x}^r) - f^* &\le \frac{4}{\gamma\eta I}\left( 1 -\frac{\gamma\eta\mu I}{4}  \right)\normsq{\bar{\x}^r - \x^*} - \frac{4}{\gamma\eta I}\normsq{\bar{\x}^{r+1} - \x^*}+\frac{4\gamma\eta\sigma^2 }{N} \notag \\
+& 80\gamma^2L_g I \frac{L_g\sigma^2}{\mu N} + 63\gamma^2\frac{L_h^2}{\mu}I^2\zeta^2 + 21\gamma^2\frac{L_h^2}{\mu}I\sigma^2.
\end{align}
By Lemma 1 in [2], using $\frac{1}{\mu R} \le \gamma \eta I \le \frac{1}{16 L_g}$, we obtain
\begin{align}
    &\mathbb{E}[f(\bar{\x}^R)] - f^* \le 4\mu\|\mathbf{x} - \mathbf{x}^* \|^2 \exp(-\frac{\mu\gamma\eta IR}{4}) + \frac{4\gamma\eta\sigma^2 }{N} \notag \\
    &+80\gamma^2 \frac{L_g^2}{\mu} I\frac{\sigma^2}{N} + 63\gamma^2\frac{L_h^2}{\mu}I^2\zeta^2 + 21\gamma^2\frac{L_h^2}{\mu}I\sigma^2.
\end{align}
}

\section{Additional Details and Results of Experiments}
\label{sec:appendix-experiments}
In this section, we provide additional details of our experiments.  More experimental results are provided for full participation with the MNIST dataset, CINIC-10 dataset~\citep{darlow2018cinic10} and CIFAR-100 dataset.

\begin{figure*}[tb]
  \centering
  \begin{subfigure}{0.4\textwidth}
  \centering
  \includegraphics[width=5cm]{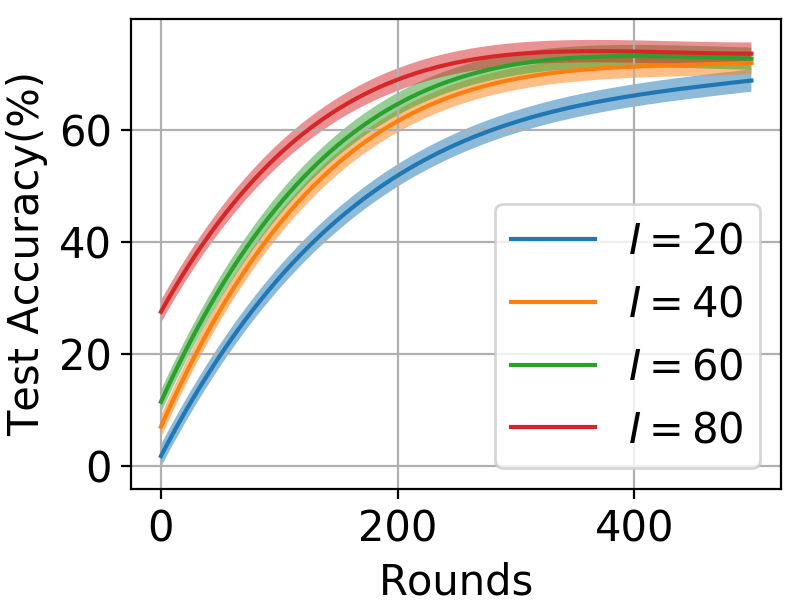}
  \caption{CNN ($50\%$ Non-IID).}
  \label{fig:cifar-cnn-50-test}
  \end{subfigure}
  \begin{subfigure}{0.4\textwidth}
  \centering
  \includegraphics[width=5cm]{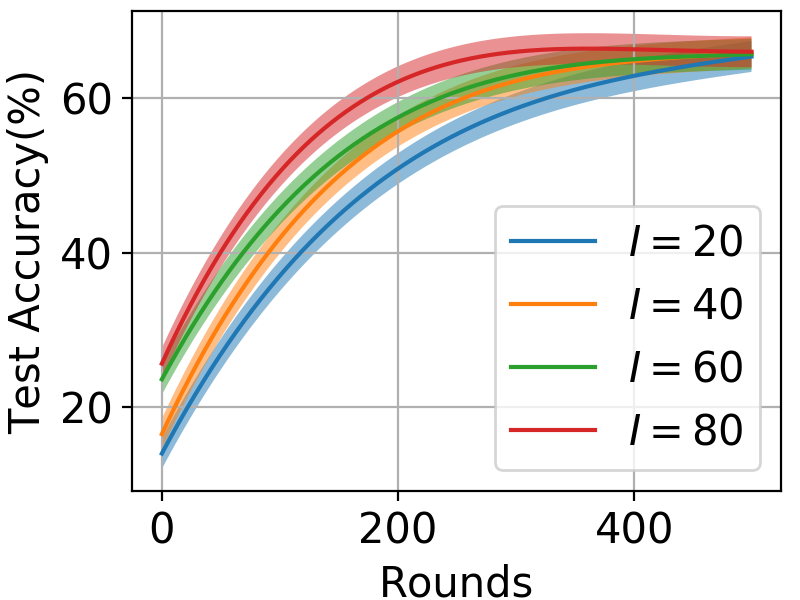}
  \caption{CNN ($75\%$ Non-IID).}
  \label{fig:cifar-cnn-75-test}
  \end{subfigure}
  \begin{subfigure}{0.4\textwidth}
  \centering
  \includegraphics[width=5cm]{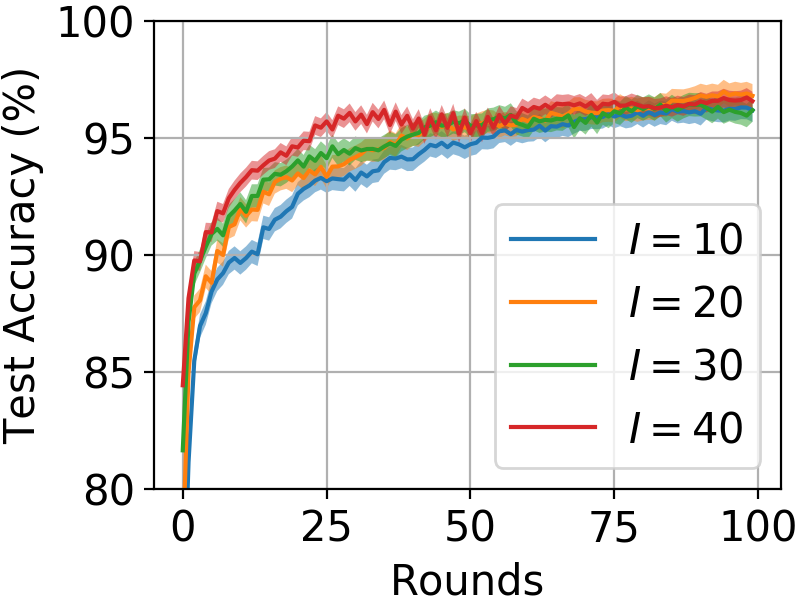}
  \caption{MLP ($50\%$ Non-IID).}
  \label{fig:mnist-mlp-50-test}
  \end{subfigure}
  \begin{subfigure}{0.4\textwidth}
  \centering
  \includegraphics[width=5cm]{figures/mnist-mlp-50-test.png}
  \caption{MLP ($75\%$ Non-IID).}
  \label{fig:mnist-mlp-75-test}
  \end{subfigure}
  \caption{The results of test accuracy for CNN with CIFAR-10 and MLP with MNIST, which are corresponding to the setting in Figure~\ref{fig:cnn-mlp}.}
  \label{fig:test-cnn-mlp}
\end{figure*}

\begin{figure*}[tb]
  \centering
  \begin{subfigure}{0.4\textwidth}
  \centering
  \includegraphics[width=5cm]{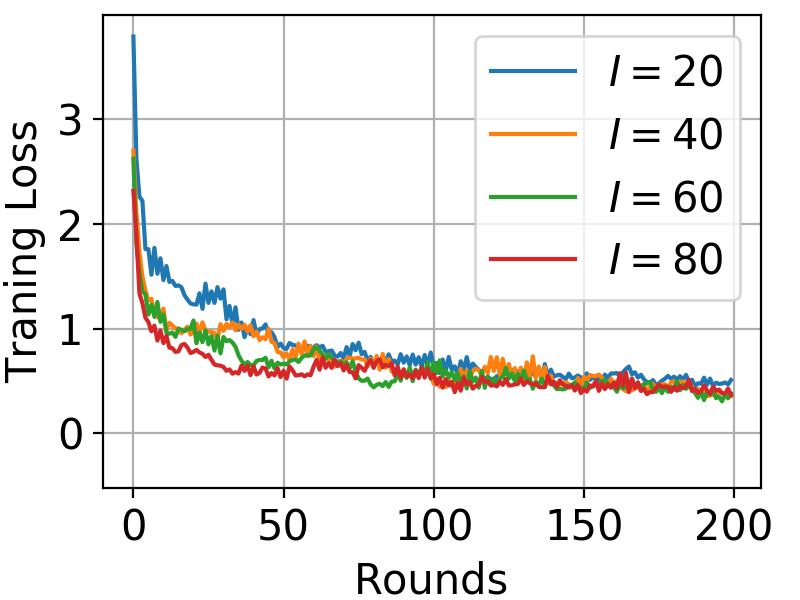}
  \caption{Training Loss.}
  \label{fig:vgg11-50-train}
  \end{subfigure}
  \begin{subfigure}{0.4\textwidth}
  \centering
  \includegraphics[width=5cm]{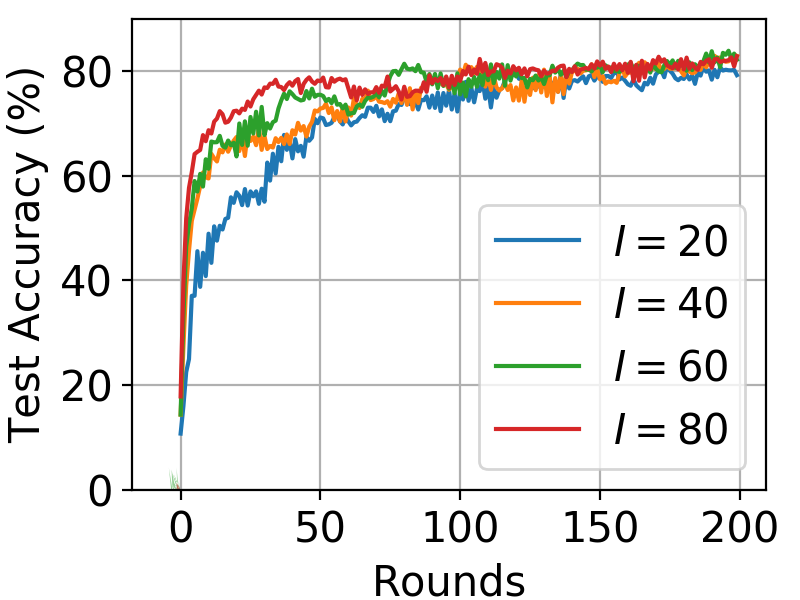}
  \caption{Test Accuracy.}
  \label{fig:vgg11-50-test}
  \end{subfigure}
  \caption{Results with CIFAR-10 dataset. The model is VGG-11. The percentage of heterogeneous data is $50\%$. The learning rates are chosen as $\eta=2$ and $\gamma= 0.01$.}
  \label{fig:vgg11-50}
\end{figure*}

\begin{figure*}[tb]
  \centering
  \begin{subfigure}{0.4\textwidth}
  \centering
  \includegraphics[width=5cm]{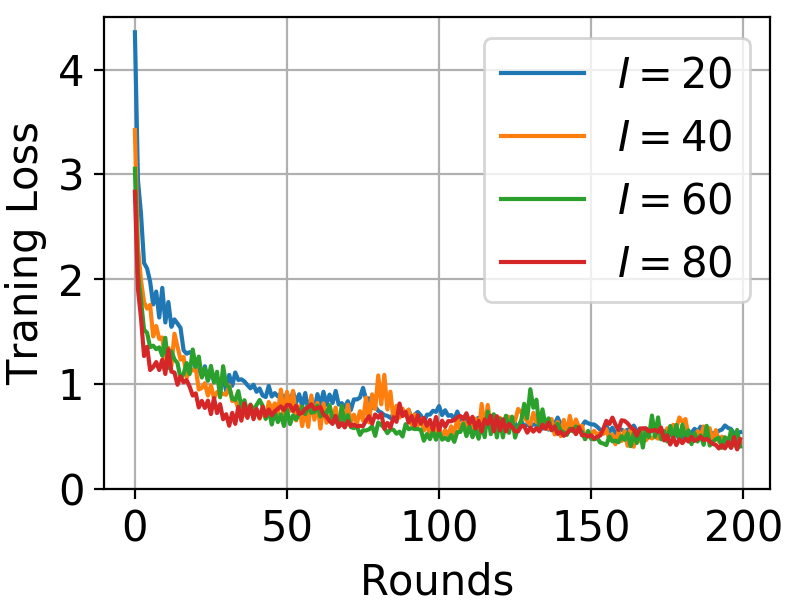}
  \caption{Training Loss.}
  \label{fig:vgg11-75-train}
  \end{subfigure}
  \begin{subfigure}{0.4\textwidth}
  \centering
  \includegraphics[width=5cm]{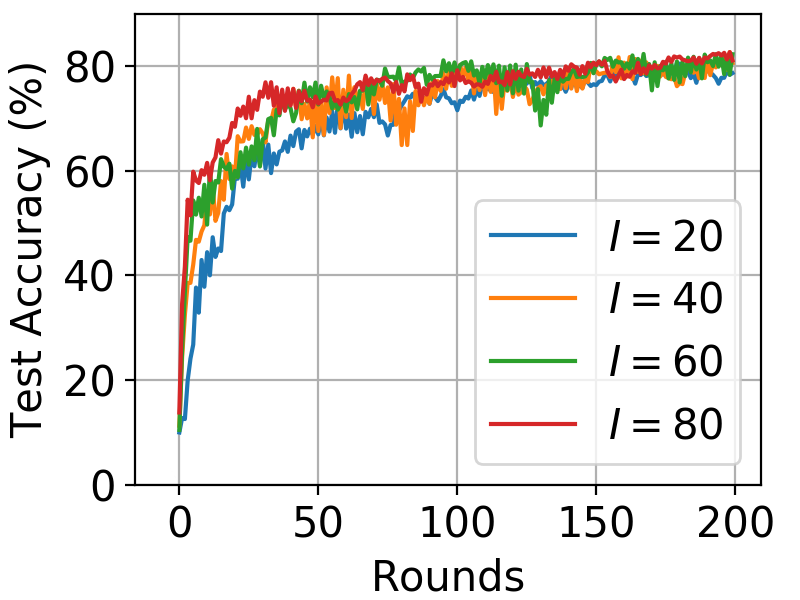}
  \caption{Test Accuracy.}
  \label{fig:vgg11-75-test}
  \end{subfigure}
  \caption{Results with CIFAR-10 dataset. The model is VGG-11. The percentage of heterogeneous data is $75\%$. The learning rates are chosen as $\eta=2$ and $\gamma= 0.01$.}
  \label{fig:vgg11-75}
\end{figure*}

\begin{figure*}[tb]
  \centering
  \begin{subfigure}{0.4\textwidth}
  \centering
  \includegraphics[width=5cm]{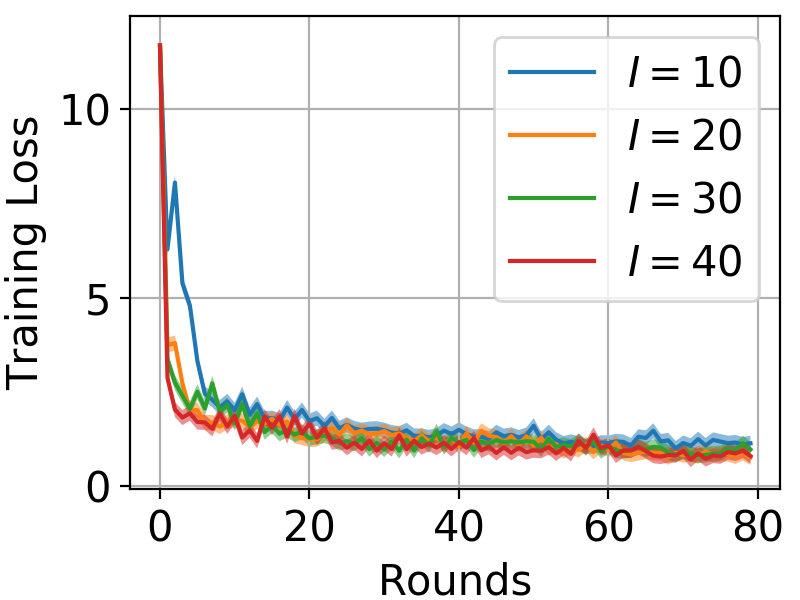}
  \caption{Training Loss.}
  \label{fig:cinic-50-train}
  \end{subfigure}
  \begin{subfigure}{0.4\textwidth}
  \centering
  \includegraphics[width=5cm]{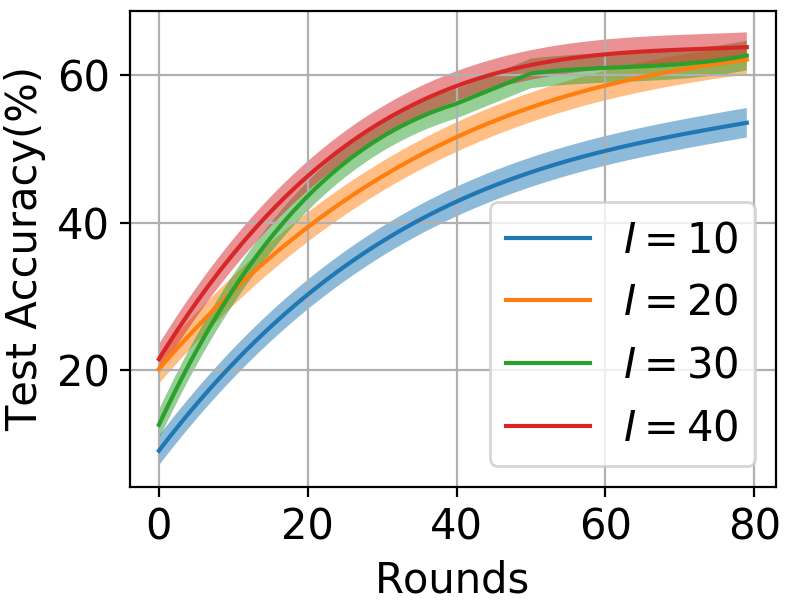}
  \caption{Test Accuracy.}
  \label{fig:cinic-50-test}
  \end{subfigure}
  \caption{Results with CINIC-10 dataset. The model is VGG-16. The percentage of heterogeneous data is $50\%$. The learning rates are chosen as $\eta=2$ and $\gamma= 0.01$.}
  \label{fig:cinic-50}
\end{figure*}
\begin{figure*}[tb]
  \centering
  \begin{subfigure}{0.4\textwidth}
  \centering
  \includegraphics[width=5cm]{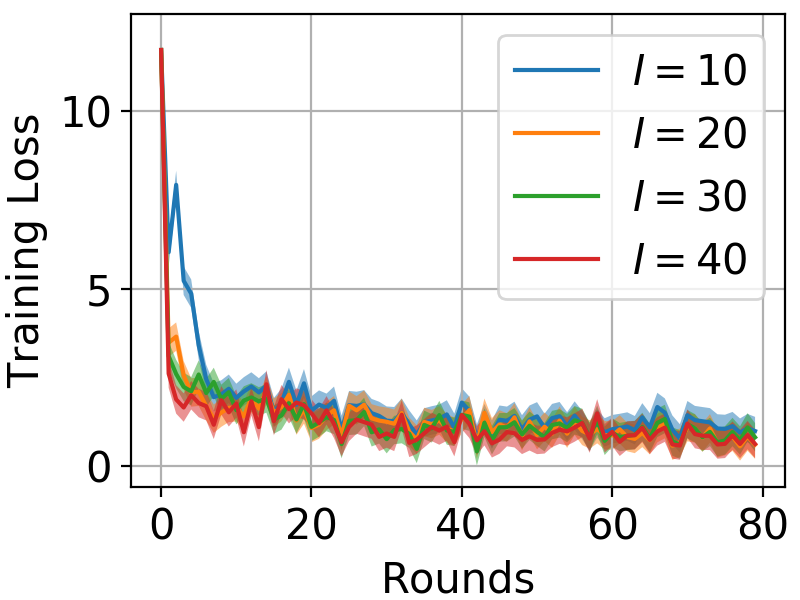}
  \caption{Training Loss.}
  \label{fig:cinic-75-train}
  \end{subfigure}
  \begin{subfigure}{0.4\textwidth}
  \centering
  \includegraphics[width=5cm]{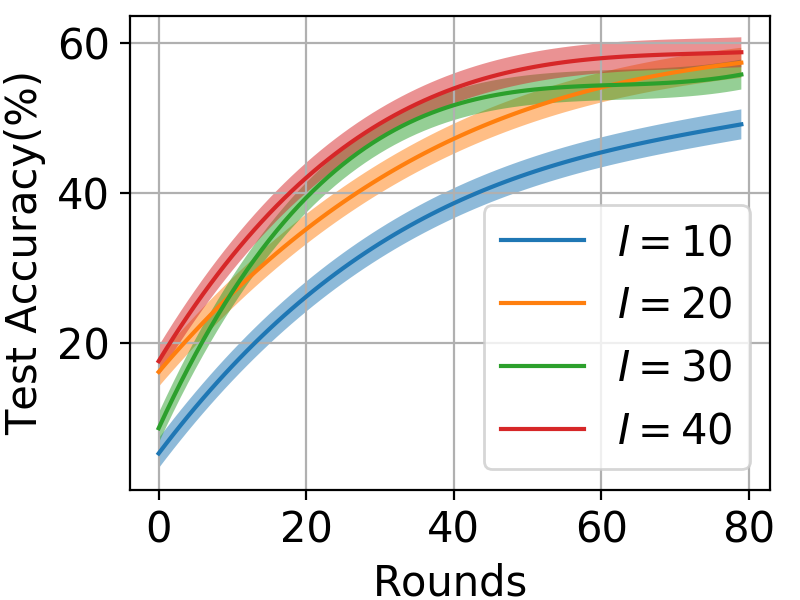}
  \caption{Test Accuracy.}
  \label{fig:cinic-75-test}
  \end{subfigure}
  \caption{ Results with CINIC-10 dataset. The model is VGG-16. The percentage of heterogeneous data is $75\%$. The learning rates are chosen as $\eta=2$ and $\gamma= 0.01$.}
  \label{fig:cinic-75}
\end{figure*}

\begin{figure*}[h!]
  \centering
  \begin{subfigure}{0.4\textwidth}
  \centering
  \includegraphics[width=5cm]{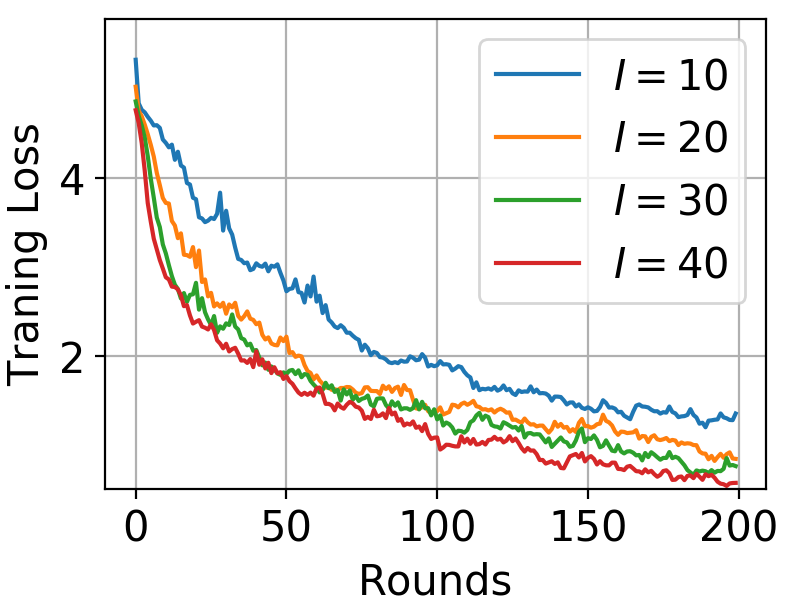}
  \caption{Training Loss.}
  \label{fig:vgg16-50-train}
  \end{subfigure}
  \begin{subfigure}{0.4\textwidth}
  \centering
  \includegraphics[width=5cm]{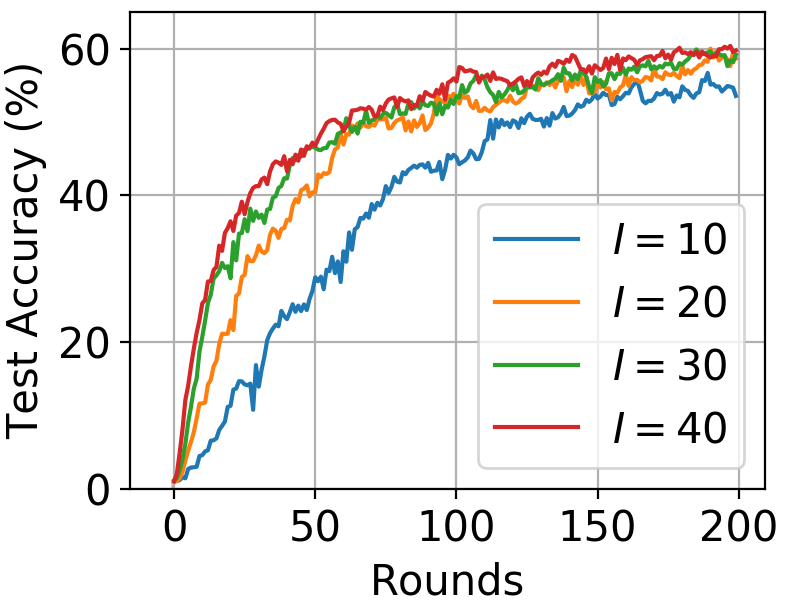}
  \caption{Test Accuracy.}
  \label{fig:vgg16-50-test}
  \end{subfigure}
  \caption{Results with CIFAR-100 dataset. The model is VGG-16. The percentage of heterogeneous data is $50\%$. The learning rates are chosen as $\eta=2$ and $\gamma= 0.02$.}
  \label{fig:vgg16-50}
\end{figure*}
\begin{figure*}[h!]
  \centering
  \begin{subfigure}{0.4\textwidth}
  \centering
  \includegraphics[width=5cm]{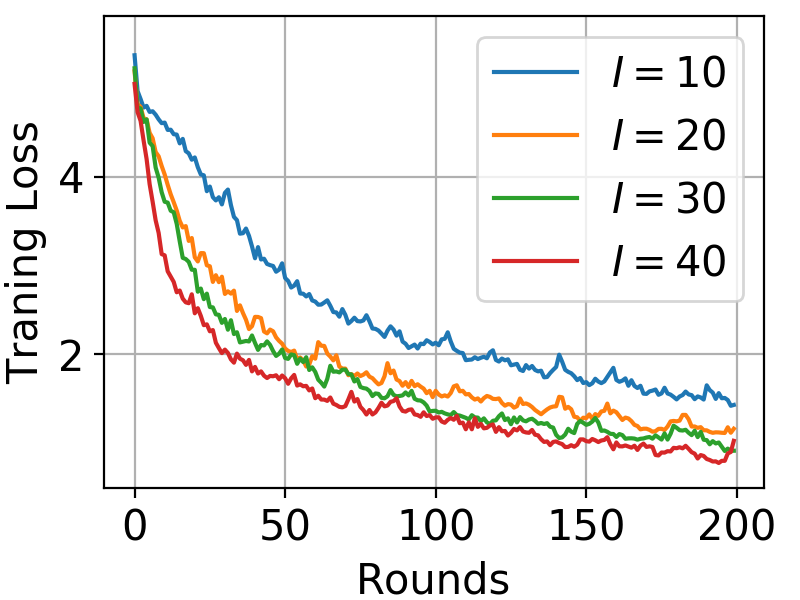}
  \caption{Training Loss.}
  \label{fig:vgg16-75-train}
  \end{subfigure}
  \begin{subfigure}{0.4\textwidth}
  \centering
  \includegraphics[width=5cm]{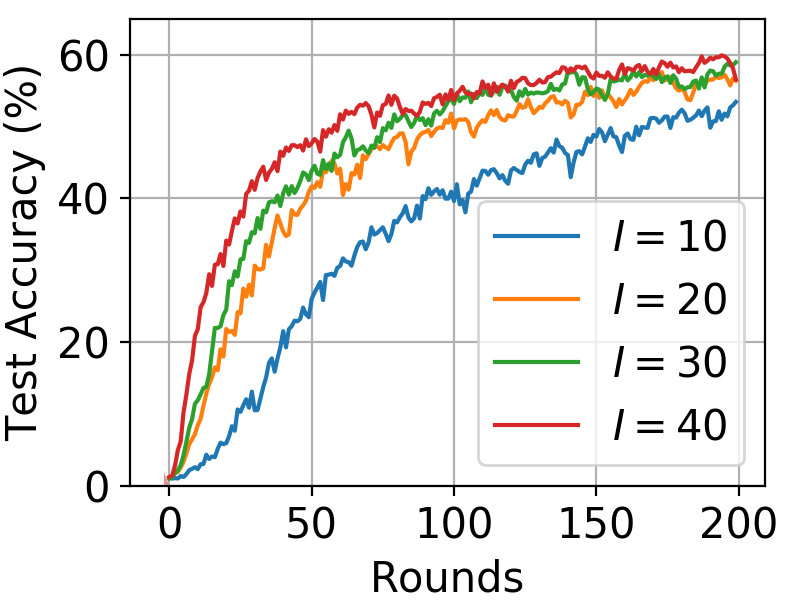}
  \caption{Test Accuracy.}
  \label{fig:vgg16-75-test}
  \end{subfigure}
  \caption{Results with CIFAR-100 dataset. The model is VGG-16. The percentage of heterogeneous data is $75\%$. The learning rates are chosen as $\eta=2$ and $\gamma= 0.02$.}
  \label{fig:vgg16-75}
\end{figure*}

\textbf{Environment.} All of our experiments are implemented in PyTorch and run on a server with four NVIDIA 2080Ti GPUs.  The mini-batch size of SGD for MNIST and CIFAR-10 is $20$. The mini-batch size of SGD for CIFAR-100 is $32$.
We run each experiment 5 times then plot their average and the standard deviation.

\textbf{Model.} For experiments with MLP model, we use a two-layer fully connected neural network, where the width of the networks is $100$.   For experimental results with CIFAR-10 dataset in the main paper, we use a CNN model. The structure of the CNN is $5\times 5\times 32$ Convolutional $\to$ $2 \times 2$ MaxPool $\to$ $5 \times 5 \times 32$ Convolutional  $\to$ $2\times2$ MaxPool $\to$ $4096\times 512$ Dense $\to$ $512 \times 128$ Dense $\to$ $128 \times 10$ Dense $\to$ Softmax. For experimental results with MNIST dataset, we use a two-layer neural network with cross-entropy loss and a linear regression model with MSE loss.  For experimental results with CINIC-10 dataset~\citep{darlow2018cinic10}, we use VGG-16 with the cross-entropy loss.

\textbf{Further explanation of the percentage of heterogeneous data.} For example, the percentage of heterogeneous data is $50\%$ means that $50\%$ of the data on each worker are with the same label, e.g., $50\%$ of the data on worker $1$ are with label $1$. Another $50\%$ of the data are sampled uniformly from the remaining dataset.

\textbf{The estimate of $L_h$.}
Let the global model be $\bar{\x}$ and the local models be $\x_i, i=1,2,\ldots,N$, %
in the beginning of a round, 
then we estimate $L_h$ %
using the following equations. 
\begin{align*}
    &L_h^2 \approx \frac{\normsq{\nabla f(\bar{\x}) - \frac{1}{N}\sum_{i=1}^N \nabla f_i(\x_i)}}{\frac{1}{N}\sum_{i=1}^N \normsq{\x_i- \bar{\x}}}.  
\end{align*}
Starting from a global model that is close to convergence, we perform FedAvg for 10 rounds and estimate $L_h^2$ in each round. 
Then we use the averaged $L_h^2$ over $10$ rounds as the estimate for $L_h^2$. 
The reason for starting from a global model that is close to convergence is that this can make the variance of the estimate smaller. Similarly, the methods of estimating $L_g$ and $\tilde{L}$ are given by
\begin{align*}
&L_g \approx \frac{\left\|\nabla f(\bar{\mathbf{x}})-\nabla f(\bar{\mathbf{y}})\right\|}{\left\|\mathbf{\bar{x}} - \mathbf{\bar{y}}\right\|},\\
&\tilde{L} \approx \max_i \frac{\left\|\nabla F_i(\bar{\mathbf{x}})-\nabla F_i(\mathbf{x}_i)\right\|}{\left\| \bar{\mathbf{x}} - \mathbf{x}_i \right\|}.
\end{align*}

\textbf{Additional Experimental Results.} We partition CIFAR-10, CINIC-10 and CIFAR-100 into $10$ workers. 
During each round, all workers will perform the local updates. 
The results of test accuracy for CNN with CIFAR-10 and MLP with MNIST are provided in Figure~\ref{fig:test-cnn-mlp}. The results of training loss can be found in Figure~\ref{fig:cnn-mlp} of the main paper. 
As shown in Table~\ref{tab:estimate-L} of the main paper, $L_h$ is very small in this case. 
In Corollary~\ref{cor:low-heterogeneity}, with full participation, it is shown that when $L_h$ is small, increasing $I$ can improve the convergence even when data are highly heterogeneous. 
As shown in both Figure~\ref{fig:test-cnn-mlp}, the curve with the largest number of local iterations, converges the fastest and achieves %
the highest accuracy, which validates Theorem~\ref{thm:non-convex} and Corollary~\ref{cor:low-heterogeneity}.

Results for CIFAR-10 with VGG-11 are shown in Figures~\ref{fig:vgg11-50} and \ref{fig:vgg11-75}.
Results for CINIC-10 with VGG-16 are shown in Figures~\ref{fig:cinic-50} and \ref{fig:cinic-75}.
Results for CIFAR-100 with VGG-16 are shown in Figure~\ref{fig:vgg16-50} and \ref{fig:vgg16-75}. 
It can be seen in both results, the curve with the largest number of local iterations converges the fastest and achieves the %
highest accuracy, which is consistent with our theoretical results in Theorem~\ref{thm:non-convex} and Corollary~\ref{cor:low-heterogeneity}.

\end{document}